\title{Revisiting Incremental Stochastic Majorization-Minimization Algorithms with Applications to Mixture of Experts}
\author{TrungKhang Tran$^{\star,1}$,~TrungTin Nguyen$^{\star ,2,3}$,~\textbf{Gersende Fort}$^{4}$,~Tung Doan$^{5}$,\\\\
	~\textbf{Hien Duy Nguyen}$^{6,7}$,~\textbf{Binh T. Nguyen}$^{8,9}$,~\textbf{Florence Forbes}$^{10}$,~\textbf{Christopher Drovandi}$^{2,3}$
	\\\\
	$^{1}$School of Computing, National University of Singapore, Singapore.\\
	$^{2}$ARC Centre of Excellence for the Mathematical Analysis of Cellular Systems. \\
	$^{3}$School of Mathematical Sciences, Queensland University of Technology, Brisbane, Australia.
	\\
	$^{4}$ Laboratoire d'Analyse et d'Architecture des Systèmes, CNRS, Toulouse, France.\\
	$^{5}$School of Medicine and Dentistry, Griffith University, Brisbane, Australia.\\
	$^{6}$Department of Mathematics and Physical Sciences, La Trobe University, Melbourne Australia.\\
	$^{7}$Institute of Mathematics for Industry, Kyushu University, Fukuoka, Japan.\\ 
	$^{8}$Faculty of Mathematics and Computer Science, University of Science, Ho Chi Minh City, Vietnam.\\
	$^{9}$Vietnam National University, Ho Chi Minh City, Vietnam.\\
	$^{10}$Univ. Grenoble Alpes, Inria, CNRS, Grenoble INP, LJK, 38000, Grenoble, France.\\
}
\newcommand{\md}{softmax-gated MoE }
\newcommand{\mdg}{softmax-gated Gaussian MoE }
\newcommand{\mdm}{softmax-gated multinomial logistic MoE }
\newcommand{\Md}{Softmax-gated MoE }
\newcommand{\Mdg}{Softmax-gated Gaussian MoE }
\newcommand{\Mdm}{Softmax-gated multinomial logistic MoE }
\newcommand\blfootnote[1]{%
  \begingroup
  \renewcommand\thefootnote{}\footnote{#1}%
  \addtocounter{footnote}{-1}%
  \endgroup
}
\definecolor{forestgreen}{rgb}{0.13, 0.55, 0.13}
\definecolor{frenchblue}{rgb}{0.0, 0.45, 0.73}
\definecolor{cherryblossompink}{rgb}{1.0, 0.72, 0.77}
\definecolor{bittersweet}{rgb}{1.0, 0.44, 0.37}
\definecolor{navyblue}{rgb}{0.0, 0.0, 0.5}
\renewcommand*{\backrefalt}[4]{%
	\ifcase #1 \footnotesize{(Not cited.)}%
	\or        \footnotesize{(Cited on page~#2.)}%
	\else      \footnotesize{(Cited on pages~#2.)}%
	\fi}
\newcounter{algsubstate}
\renewcommand{\thealgsubstate}{\alph{algsubstate}}
\def\ie{{\em i.e.,~}}
\def\eg{{\em e.g.,~}}
\newcommand{\zero}{\ensuremath{\mathbf{0}}}
\DeclareMathOperator*{\vect}{vec} 
\DeclareMathOperator*{\mat}{mat}
\DeclareMathOperator{\bdiag}{bdiag}
\newcommand{\sbm}{\cS^{\cB}_{(K,J)}} 
\let\inf\relax 
\DeclareMathOperator*\inf{\vphantom{p}inf}
\newcommand{\nn}{\nonumber} 
\newcommand{\I}{\mathbbm{1}}
\newcommand{\Ep}[1]{\mathbb{E}\left(#1\right)}
\newcommand{\Ebd}[2]{\mathbb{E}_{#1} \left[#2\right]}
\newcommand{\vertiii}[1]{{\left\vert\kern-0.25ex\left\vert\kern-0.25ex\left\vert #1 
		\right\vert\kern-0.25ex\right\vert\kern-0.25ex\right\vert}}
\newcommand{\vertii}[1]{{\left\vert\kern-0.25ex\left\vert#1\right\vert\kern-0.25ex\right\vert}}	
\newcommand{\cN}{\mathcal{N}}
\def\pib         {{\sbmm{\pi}}\XS}
\def\phib        {{\sbmm{\phi}}\XS}
\def\omegab      {{\sbmm{\omega}}\XS}      
\def\taub        {{\sbmm{\tau}}\XS}
\def\XS{\xspace}
\DeclareMathAlphabet{\mathb}{OML}{cmm}{b}{it}
\def\sbm#1{\ensuremath{\mathb{#1}}}
\def\sbmm#1{\ensuremath{\boldsymbol{#1}}}  
  \def\vb{{\sbm{v}}\XS}
  \def\xb{{\sbm{x}}\XS}
\newtheorem{remark}{Remark}
\newtheorem{assumption}{Assumption}
\newtheorem{assumptionP}{P}
\newtheorem{lemma}{Lemma}
\newtheorem{theorem}{Theorem}
\newtheorem{proposition}{Proposition}
\newtheorem{condition}{Condition}
\crefname{assumption}{Assumption}{Assumptions}
\crefname{assumptionP}{P}{P}
\crefname{ineq}{Inequality}{Inequalities}
\def\eqdef{:=}
\def\nset{{\mathbb{N}}}
\def\rset{\mathbb{R}}
\def\PP{\mathbb{P}_\pi} 
\def\PE{\mathbb{E}_\pi} 
\def\F{\mathcal{F}}
\def\param{\vtheta}
\def\barparam{\bar{\param}}
\def\Kset{\mathbb{K}}
\newcommand{\pscal}[2]{\left\langle#1,#2\right\rangle} 
\def\bars{\bar{\vS}}
\def\sfg{\mathsf{g}}
\def\sfm{\mathsf{m}}
\def\ex{\mathsf{e}}
\def\lyap{V}
  \newcommand{\ooint}[1]{\left(#1\right)}
\def\eqref#1{equation~\ref{#1}}
\def\1{\bm{1}}
\def\ry{{\textnormal{y}}}
\def\rvs{{\mathbf{s}}}
\def\rvx{{\mathbf{x}}}
\def\rvy{{\mathbf{y}}}
\def\rvz{{\mathbf{z}}}
\def\vmu{{\bm{\mu}}}
\def\veta{{\bm{\eta}}}
\def\vxi{{\bm{\xi}}}
\def\vtau{{\bm{\tau}}}
\def\valpha{{\bm{\alpha}}}
\def\vsigma{{\bm{\sigma}}}
\def\vkappa{{\bm{\kappa}}}
\def\vzeta{{\bm{\zeta}}}
\def\vSigma{{\bm{\Sigma}}}
\def\vomega{{\bm{\omega}}}
\def\evomega{\omega}
\def\vupsilon{{\bm{\upsilon}}}
\def\evupsilon{\upsilon}
\def\vtheta{{\bm{\theta}}}
\def\va{{\bm{a}}}
\def\vb{{\bm{b}}}
\def\vc{{\bm{c}}}
\def\vf{{\bm{f}}}
\def\vg{{\bm{g}}}
\def\vr{{\bm{r}}}
\def\vs{{\bm{s}}}
\def\vS{{\bm{S}}}
\def\vt{{\bm{t}}}
\def\vu{{\bm{u}}}
\def\vv{{\bm{v}}}
\def\vw{{\bm{w}}}
\def\vx{{\bm{x}}}
\def\vy{{\bm{y}}}
\def\vz{{\bm{z}}}
\def\evomega{{\omega}}
\def\evc{{c}}
\def\evu{{u}}
\def\evv{{v}}
\def\evw{{w}}
\def\evx{{x}}
\def\evy{{y}}
\def\mA{{\bm{A}}}
\def\mDelta{{\bm{\Delta}}}
\def\mB{{\bm{B}}}
\def\mC{{\bm{C}}}
\def\mD{{\bm{D}}}
\def\mI{{\bm{I}}}
\def\mM{{\bm{M}}}
\def\mS{{\bm{S}}}
\def\mSigma{{\bm{\Sigma}}}
\def\mUpsilon{{\bm{\Upsilon}}}
\DeclareMathAlphabet{\mathsfit}{\encodingdefault}{\sfdefault}{m}{sl}
\SetMathAlphabet{\mathsfit}{bold}{\encodingdefault}{\sfdefault}{bx}{n}
\def\sF{{\mathbb{F}}}
\def\sL{{\mathbb{L}}}
\def\sN{{\mathbb{N}}}
\def\sP{{\mathbb{P}}}
\def\sR{{\mathbb{R}}}
\def\sS{{\mathbb{S}}}
\def\sT{{\mathbb{T}}}
\def\sX{{\mathbb{X}}}
\def\sY{{\mathbb{Y}}}
\def\sZ{{\mathbb{Z}}}
\newcommand{\E}{\mathbb{E}}
\newcommand{\R}{\mathbb{R}}
\DeclareMathOperator*{\argmin}{arg\,min}
\begin{document}

\maketitle

\begin{abstract}
\blfootnote{$^{\star}$Co-first author.}

Processing high-volume, streaming data is increasingly common in modern statistics and machine learning, where batch-mode algorithms are often impractical because they require repeated passes over the full dataset. This has motivated incremental stochastic estimation methods, including the incremental stochastic Expectation-Maximization (EM) algorithm formulated via stochastic approximation. In this work, we revisit and analyze an incremental stochastic variant of the Majorization-Minimization (MM) algorithm, which generalizes incremental stochastic EM as a special case. Our approach relaxes key EM requirements, such as explicit latent-variable representations, enabling broader applicability and greater algorithmic flexibility. We establish theoretical guarantees for the incremental stochastic MM algorithm, proving consistency in the sense that the iterates converge to a stationary point characterized by a vanishing gradient of the objective. We demonstrate these advantages on a softmax-gated mixture of experts (MoE) regression problem, for which no stochastic EM algorithm is available. Empirically, our method consistently outperforms widely used stochastic optimizers, including stochastic gradient descent, root mean square propagation, adaptive moment estimation, and second-order clipped stochastic optimization. These results support the development of new incremental stochastic algorithms, given the central role of softmax-gated MoE architectures in contemporary deep neural networks for heterogeneous data modeling. Beyond synthetic experiments, we also validate practical effectiveness on two real-world datasets, including a bioinformatics study of dent maize genotypes under drought stress that integrates high-dimensional proteomics with ecophysiological traits, where incremental stochastic MM yields stable gains in predictive performance.

\end{abstract}

{\bf Keywords:} Mixture of experts, majorization-minimization, expectation-maximization, parameter estimation, incremental stochastic algorithms, online algorithms, stochastic approximation, regression, classification.

\tableofcontents

\section{Introduction}

\subsection{Mixture of experts models}

Mixture of Experts (MoE) models, originally introduced by \cite{jacobs1991adaptive,jordan1994hierarchical}, are extensively utilized to capture intricate non-linear relationships between input and output variables in heterogeneous datasets. These models effectively address the dual challenges of regression and clustering by decomposing the predictive framework into a combination of gating models and expert models, both of which depend on the input variables. As a notable example of conditional computation \citep{bengio_deep_2013,chen_towards_2022}, MoE models allocate distinct experts to specific regions within the input space. This architecture allows MoE models to substantially enhance model capacity while maintaining nearly constant training and inference costs by leveraging only a subset of parameters for each example \citep{shazeer_outrageously_2017,fedus_switch_2022,dai_deepseekmoe_2024}.

Moreover, MoE models have gained prominence due to their universal approximation capabilities and nearly optimal estimation rates in various contexts. These include mixture models \citep{genovese2000rates,rakhlin2005risk, nguyen2013convergence, ho_convergence_2016, ho_strong_2016, nguyen_approximation_2020, nguyen_approximation_2022,chong_risk_2024}, mixture of regression models \citep{do_strong_2022,ho_convergence_2022}, and more generally, mixture of experts models \citep{jiang_hierarchical_1999, norets_approximation_2010, nguyen2016universal,nguyen2019approximation,nguyen_approximations_2021,nguyen_non_asymptotic_2022,nguyen_demystifying_2023,nguyen_bayesian_2024,nguyen_general_2024,nguyen_towards_2024,le_mixture_2024,thai_model_2025,hai_dendrograms_2026}. For a comprehensive overview of MoE models and their applications, we direct readers to the texts of \cite{yuksel2012twenty,masoudnia2014mixture,nguyen2018practical,nguyen_model_2021,cai_survey_2025}.

\subsection{Majorization–Minimization algorithms}\label{sec_intro_onlineMM}
The Expectation-Maximization (EM) \citep{dempster1977maximum,meilijson_fast_1989,mclachlan_em_1997} algorithm has become a fundamental tool in computational statistics, boasting a wide range of statistical and signal processing applications. Its significance was quickly recognized and embraced by the international statistical community. In contrast, the body of literature on the more general Majorization–Minimization (MM) algorithm \citep{ortega_iterative_1970,leeuw_application_1977,de_leeuw_convergence_1977,lange2016mm,sun_majorization_minimization_2017,nguyen_introduction_2017,lange_nonconvex_2021,mairal_incremental_2015,mairal2013stochastic} has traditionally been smaller. A key strength of these algorithms is their ability to leverage computationally efficient surrogate functions, which effectively replace challenging optimization objectives, thereby simplifying the computational iterations. While the MM principle can be recognized as early as the work of \cite{ortega_iterative_1970} in the context of line search methods, its first statistical application appeared in  \cite{leeuw_application_1977} and \cite{de_leeuw_convergence_1977}, in the context of multidimensional scaling. The limited recognition of these pioneering papers arguably delayed the broader adoption and development of MM algorithms within computational statistics. They are well-established optimization frameworks that play a pivotal role in the development of estimation methodologies for a wide range of data analysis models. While these frameworks are commonly applied to finite mixture models, their application to the more general MoE models, such as \md models~\citep{jacobs1991adaptive,jordan1994hierarchical}, remains limited.

With the increasing volume of data and the incremental nature of data acquisition, significant advancements have been made in the development of incremental stochastic and mini-batch algorithms. These approaches enable model estimation without requiring simultaneous access to the entire dataset.
Many incremental and mini-batch extensions of EM algorithms can be interpreted as instances of the classical Stochastic Approximation framework (see, e.g., \cite{borkar_stochastic_2008,kushner_stochastic_2003,dieuleveut_stochastic_2023,fort_stochastic_2023,wang_spiderboost_2019}) and the online-EM formulations in \cite{cappe_Online_2009,cappe_online_2011,fort_fast_2021,fort_stochastic_2020,nguyen_mini_batch_2020,kuhn_properties_2020,karimi_global_2019,corff_online_2013,karimi_non_asymptotic_2019,oudoumanessah2024,oudoumanessah2025,chen_stochastic_2018,fort_geom_spider_em_2021,fort_perturbed_2021}. At the same time, there also exist incremental and sparse EM variants that are justified from a free-energy or variational viewpoint rather than from stochastic approximation, such as the framework of \cite{neal_view_1998}. Numerical evaluations across this body of work show that such online and incremental EM-type procedures are highly effective for estimation in mixture models and related machine-learning problems.
In contrast, incremental stochastic and mini-batch adaptations of MM algorithms have primarily been developed using techniques from incremental stochastic learning and convex optimization (see, \eg~\cite{chouzenoux2022sabrina,hazan_introduction_2016,jiang2024stochastic,lan_first_order_2020,lupu2024convergence,razaviyayn_stochastic_2016,shalev_shwartz_online_2012}), with illustrative examples outlined in \cite{mairal_incremental_2015,mensch_stochastic_2018,nguyen_online_2023,chouzenoux_stochastic_2017,lyu2024stochastic,lyu_online_2022,le_thi_online_2024,le_thi_online_2020,le_thi_dca_2021,fort_sequential_2024,dieuleveut_federated_2025}. These methods have similarly proven valuable for addressing the challenges of optimizing complex models in incremental data environments.

\subsection{Contributions and practical implications}\label{sec_intro_overall_contribution}
In this paper, we revisit the Stochastic Approximation formulation of the incremental stochastic MM algorithm introduced in~\cite{nguyen_online_2023} (see also Algorithm~1 in~\cite{dieuleveut_federated_2025}), developed within the framework of~\cite{cappe_Online_2009}, and we provide an incremental stable MM scheme for modern MoE models. This approach adheres more closely to the fundamental principles of the original batch-mode EM algorithm, ensuring consistency with its theoretical and methodological foundations. Our approach imposes regularity assumptions concerning the surrogates and independent and identically distributed assumptions on the examples (see \cref{assumptionA1} to \Cref{assumptionA4}). Similar to recent contributions on stochastic MM (see e.g. \cite{fort_stochastic_2020,dieuleveut_federated_2025}), the incremental stochastic MM algorithm  \cite{nguyen_online_2023,dieuleveut_federated_2025} accommodates general surrogate functions and therefore encompasses the incremental stochastic EM algorithm outlined in \cite{cappe_Online_2009,cappe_online_2011} which defines surrogate functions by using the latent variable stochastic representations. This feature is particularly advantageous when developing estimation algorithms for complex latent models such as MoE models (see, \eg~\cite{nguyen_laplace_2016}) or optimization problems beyond the scope of inference in latent data models (see examples in Section 2.3 of \cite{dieuleveut_federated_2025}).

Additionally, the proposed approach is supported by theoretical guarantees about consistency. In particular, we provide sufficient conditions such that the algorithm yields a consistent and efficient estimator in the sense that it generates a sequence which, when stable with probability one, converges to a stationary point, where the gradient of the objective function vanishes; see \cref{sec_convergence_issues} for further details. Such a theoretical analysis is notably absent in other widely-used stochastic online or incremental stochastic optimization algorithms, such as stochastic approximation or stochastic gradient descent (SGD)~\cite{robbins_stochastic_1951}, root mean square propagation (RMSProp)~\cite{hinton_neural_2012}, adaptive moment estimation (Adam)~\cite{kingma_adam_2015,dereich_ode_2025}, and second-order clipped stochastic optimization (Sophia) \cite{liu_sophia_2024}, which are commonly applied to MoE models.

To demonstrate the practicality and effectiveness of our proposed method, we apply it to the \md regression problem in \cref{sec_application_SGMoE,sec_experimental_studies}. This setting is particularly challenging, as existing variants of incremental stochastic EM and MM algorithms, such as those proposed in~\cite{cappe_Online_2009,cappe_online_2011,nguyen_mini_batch_2020,karimi_global_2019}, fail to be applicable; see \cref{sec_why_fails_MM_EM} for a detailed discussion. Notably, our proposed incremental stochastic MM algorithms outperform state-of-the-art optimization methods, including SGD, RMSProp, Adam, and Sophia \cite{liu_sophia_2024}, in the task of fitting \md~models, as evidenced in \cref{sec_experimental_studies}. This is a significant result, especially considering the growing prominence of MoE models in modern statistics and machine learning~\citep{dai_deepseekmoe_2024,zhang_moefication_2022,chen_sparse_2023,fedus_switch_2022,do_hyperrouter_2023,pham_competesmoeeffective_2024,guo_deepseek_r1_2025,jiang_mixtral_2024} for heterogeneous data analysis. MoE-based approaches have found widespread applicability across numerous domains, including robotics~\cite{jaquier2021tensor}, speech recognition~\cite{you_speechmoe2_2022}, natural language processing~\cite{do_hyperrouter_2023,pham_competesmoeeffective_2024,puigcerver_sparse_2024}, computer vision~\cite{lathuiliere_deep_2017}, medicine~\cite{li_drug_2019,boux2021}, planetary remote sensing~\cite{forbes_summary_2022,kugler2022,deleforge2015hyper}, bioinformatics~\cite{blein_nicolas_nonlinear_2024,nguyen_joint_2024}, and the physical sciences~\cite{kuusela_semi_supervised_2012}.

Last but not least, in this work, we revisit and correct the bound presented in Lemma 2 of~\cite{bohning1992multinomial}. We provide a more explicit and rigorous formulation of the bound and leverage it to construct a valid majorizer for the softmax-gated MoE optimization problem. This corrected bound may be of independent interest to researchers developing or applying MM algorithms in the context of multinomial logistic regression.

{\bf Notation.}
Throughout the paper, the set $\{1, 2, \ldots, N\}$ is abbreviated as $[N]$ for any positive integer $N \in \nset$; $\bm{1}_N$ and $\mI_{N}$ denote the $N$-dimensional vector of ones and $N\times N$ identity matrix, respectively. In accordance with standard conventions, $\mA^\top$ represents the transpose of a given vector or matrix $\mA$, and all vectors $\vv = (\evv_1,\ldots,\evv_N)^\top \equiv [\evv_1,\ldots,\evv_N]^\top$ are expressed as column vectors. The Loewner order between two matrices is denoted by $\mA \succeq \mB$, implying that the matrix $\mA - \mB$ is positive semi-definite (PSD). Additionally, the symbol $\mA \otimes \mB$ refers to the Kronecker product (also known as the matrix direct product) of the matrices $\mA$ and $\mB$.
Let $\pscal{\cdot}{\cdot}$ denote the Euclidean scalar product. 
Given any differentiable function $f = (f_1,\ldots,f_M)^\top : \rset^T \to \rset^M$, we denote by
$\nabla_{\vtheta} f 
= \left( \frac{\partial f}{\partial \vtheta_1}, \ldots, \frac{\partial f}{\partial \vtheta_T} \right)^\top$
the gradient of $f$ when $M=1$, and by $\nabla_{\vtheta} f^\top$ the $T \times M$ matrix whose columns are the gradients, that is,
$\nabla_{\vtheta} f^\top = \big(\nabla_{\vtheta} f_1, \ldots, \nabla_{\vtheta} f_M\big).$
When the variable of differentiation is clear from the context, we may omit the subscript and simply write $\nabla f$ instead of $\nabla_{\vtheta} f$.
Accordingly, the symbol $\nabla_\vtheta f^\top$ is to be understood as either a vector or a matrix, depending on whether the function $f$ is scalar or vector-valued. In the latter case, the usual Jacobian matrix is the transpose of $\nabla_\vtheta f^\top$.
When $f$ is twice differentiable, we use the notation $\nabla^2_\vtheta f$ as the Hessian matrix which is a $T\times T$ matrix with components given by $[\nabla^2_\vtheta f]_{i,j} =\partial^2 f/\partial \vtheta_i\partial \vtheta_j, i,j \in[T]$.
For any $\vr \in \rset^D$ and any set $\sS \subset \rset^D$, we define
$d(\vr,\sS) \coloneqq \inf_{\vs \in \sS} \|\vr-\vs\|,$
that is, the distance from $\vs$ to $\sS$, where $\|\cdot\|$ denotes the $L_2$–norm on $\rset^D$.
Finally, for any $d \in \mathbb{N}$ and any vector $\vx = (x_1, x_2, \dots, x_D)^\top \in \rset^D$, we define the element-wise $d$-th power of $\vx$ by $\vx^d \coloneqq (x_1^d, x_2^d, \dots, x_D^d)^\top.$

{\bf Paper organization.}
The remainder of this paper is organized as follows. In \cref{sec_online_MM_problem}, we provide an overview of the incremental stochastic MM algorithm, setting the foundation for subsequent discussions. In \cref{sec_convergence_issues}, we provide sufficient conditions for the consistency
of the incremental stochastic MM algorithm, which are further substantiated through empirical evaluations. Applications to \md models are presented in \cref{sec_application_SGMoE}, followed by an experimental study in \cref{sec_experimental_studies} to validate the theoretical findings. Then, we conclude the article and outline directions for future research in \cref{sec_conclusion_future}. Finally, all proofs of theoretical results are deferred to the supplementary material in \cref{sec_proofs_main_results,sec_technical_proof,sec_technical_results}.

\section{The incremental stochastic MM algorithm}\label{sec_online_MM_problem}

\subsection{Primitive optimization problem}\label{sec_original_optimization}

We begin by examining the following primitive optimization problem in a statistical framework:
\begin{equation}
\underset{\vtheta \in \sT}{\arg\min}~ \Ebd{\rvz \sim \PP}{f\left(\vtheta; \rvz \right)} 
\equiv\underset{\vtheta \in \sT}{\arg\min}~ \PE\left[f\left(\vtheta; \rvz \right)\right].\label{eq_main_problem}
\end{equation}
In this context, the function $f: \sT \times \sZ \rightarrow \rset$ is measurable and represents a family indexed by the parameter $\vtheta \in \sT$. The set $\sT$ is defined as a measurable open subset of $\rset^T$, and $\sZ \subseteq \rset^K$ is a Euclidean subspace endowed with its Borel sigma-algebra. Additionally, $\rvz$ is an $\sZ$-valued random variable on the probability space $(\Omega, \mathcal{A}, \sP)$, with its probability density function on $\sZ$ denoted by $\pi$. The symbols $\PP$ and $\E_\pi$ are used to represent the probability measure and the expected value, respectively, under $\pi$.

In this paper, we focus on the scenario where the expectation $\PE\left[|f\left(\vtheta; \rvz \right)|\right]$ is assumed to be finite on $\mathbb{T}$ but cannot be expressed in closed form, and the corresponding optimization problem is addressed using an MM-based algorithm.  According to the terminology introduced in \cite{lange2016mm}, the function $g:\left(\vtheta, \vz, \vtau\right) \mapsto g\left(\vtheta, \vz; \vtau\right)$, defined over the domain $\sT \times \sZ \times \sT$, is referred to as a {\em majorizer of $f$} if, for any $\vtau \in \sT$ and any $(\vtheta, \vz) \in \sT \times \sZ$, the following conditions are satisfied:
\begin{align}\label{eq_minorizer_fct}
f(\vtheta;\vz) - f(\vtau;\vz) \leq g(\vtheta,\vz; \vtau) - g(\vtau, \vz; \vtau). 
\end{align}
In this paper, we study the scenario in which the majorizer function $g$ satisfies the following conditions:
\begin{assumptionP}[Exponential family majorizer surrogate]\label{assumptionA1} The majorizer surrogate $g$ belongs to an exponential family:
\begin{equation} \label{eq_exponential_family}
g\left(\vtheta,\vz;\vtau\right)\eqdef - \psi\left(\vtheta\right)+ \pscal{\bars(\vtau;\vz)}{\phib(\vtheta)},
\end{equation}
where $\psi:\sT \to \rset$, $\phib: \sT \to \rset^D$ and $\bars: \sT \times \sZ \to \rset^D$ are measurable functions. In addition, $\phib$ and $\psi$ are continuously differentiable on $\sT$.
\end{assumptionP}
\begin{assumptionP}[Convex set]\label{assumptionA2} There exists a measurable, open, and convex set $\sS \subseteq \rset^D$ such that
\[
  \bars(\vtau; \vz) \in \sS \quad \text{for all } (\vtau, \vz) \in \sT \times \sZ.
\] 
\end{assumptionP}
\begin{assumptionP}[Incrementally independent and identically distributed data with finite expectation]\label{assumptionA3}
The expectation $\PE[\bars(\vtheta; \rvz)]$ exists, lies within $\sS$, for any $\vtheta \in \sT$, but it not have a closed-form expression. 

Nevertheless, an incremental sequence of independent examples $\{\rvz_n, n \geq 0\}$, distributed under $\pi$, is available. We assume that $\{\rvz_n, n \geq 0\}$ are random variables on $(\Omega, \mathcal{A}, \mathbb P)$.
\end{assumptionP}
    \begin{inequality} \label{eq: min inq}
        \PE\left[f(\vtheta; \rvz)\right] - \PE\left[f(\vtau; \rvz)\right]   
        &\le \psi(\vtau) - \psi(\vtheta) + \pscal{\PE\left[\bars(\vtau; \rvz)\right]}{\phib(\vtheta) - \phib(\vtau)},  
    \end{inequality}  
    thereby we establish a valid majorizer function for the variation of the objective function $\vtheta \mapsto \PE\left[f(\vtheta; \rvz)\right] - \PE\left[f(\vtau; \rvz)\right]$ as follows:
    $G(\cdot, \vtau) - G(\vtau, \vtau)$ where $G(\cdot, \vtau) :=  -\psi(\cdot) + \pscal{\PE\left[\bars(\vtau; \rvz)\right]}{\phib(\cdot)}$.

We also assume that the majorizing function $\vtheta \mapsto g(\vtheta,\vz; \tau)$ possesses a unique minimizer. 
\begin{assumptionP}[Unique global minimum]\label{assumptionA4}
For any $\vs \in \sS$, there exists a unique root in $\mathbb{T}$ to the function $\vtheta \mapsto - \nabla \psi(\vtheta) + \nabla \phib(\vtheta)^\top \vs$. This root corresponds to the unique global minimum of the function $\vtheta \mapsto h(\vs; \vtheta) \eqdef - \psi(\vtheta) + \pscal{\vs}{\phib(\vtheta)}$ over $\sT$. The root is denoted by $\barparam(\vs)$, where  
\begin{align}\label{eq_global_maximum}
    \barparam(\vs) \eqdef \argmin_{\vtheta \in \sT}[- \psi(\vtheta) + \pscal{\vs}{\phib(\vtheta)}], \quad - \nabla \psi(\barparam(\vs)) + \nabla \phib(\barparam(\vs))^\top \vs = \zero.
\end{align}
\end{assumptionP}
A regularity assumption is also assumed on the objective function.

\begin{assumptionP}\label{assumptionObjectiveC1}
    The function $\vtheta \mapsto \PE\left[f(\vtheta;\rvz)\right]$ is continuously differentiable on $\sT$.
\end{assumptionP}

\subsection{Incremental MM algorithm}\label{sec_exponential_minorizer_surrogate}
Given that the expectation in \cref{eq_main_problem} may lack a closed form, but infinitely large datasets $\{\rvz_n, n \geq 0\}$ are available (as stated in \cref{assumptionA3}), we consider the {\tt Incremental (Online) MM} algorithm introduced in \cite{nguyen_online_2023} (see also \cite{dieuleveut_federated_2025}, which is more general as it does not assume a specific form of the oracle of $\PE\left[\bars(\vtheta_n; \rvz)\right]$) and is described in \cref{algorithm_onlineMM}.

This algorithm defines a sequence $\left\{\vs_n, n \geq 0\right\},$ where the update mechanism, detailed in \cref{eq_s_step}, is a Stochastic Approximation iteration. The approach involves constructing a sequence of majorizer functions by defining their parameter $\vs_n$ within the functional space spanned by $-\psi, \phi_1, \ldots, \phi_D.$
\begin{algorithm}[H]
\caption{Incremental MM algorithm}
\label{algorithm_onlineMM}
\begin{algorithmic}[1]
    \Require Incremental data points $\{\vz_{n}:n \ge 0\}$, an initial value $\vs_0 \in \sS$ and positive step sizes $\{\gamma_{n+1}, n \geq 0 \}$ in  $\ooint{0,1}$.
    
    \Ensure A $\sT$-valued sequence $\{\param_n, n \ge 1\}$ and a $\sS$-valued sequence $\{\vs_n, n \ge 0\}$.

    \For{$n = 0,1,\cdots,$}
        \State Compute 
        \begin{align}
        \vs_{n+1} & =\vs_{n}+\gamma_{n+1}\left\{ \bars\left(\barparam(\vs_{n});\vz_{n+1}\right)-\vs_{n}\right\},     \label{eq_s_step}\\
        \vtheta_{n+1} &= \barparam(\vs_{n+1}) \eqdef \argmin_{\vtheta \in \sT}[- \psi(\vtheta) + \pscal{\vs_{n+1}}{\phib(\vtheta)}].\label{eq_argmin_step}
        \end{align}
    \EndFor
\end{algorithmic}
\end{algorithm}

Under \cref{assumptionA2}, it is readily seen that since $\vs_0 \in \sS$ and $\gamma_n \in \ooint{0,1}$
\begin{equation}\label{eq:sninS}
 \sP\left( \forall n \geq 0, \vs_n \in \sS\right) =1;   
\end{equation} with probability one, the algorithm generates an $\sS$-valued sequence.

\section{Convergence properties of the incremental stochastic MM algorithm}\label{sec_convergence_issues}
In this section, we investigate the convergence properties of \cref{algorithm_onlineMM}. Throughout this section, it is assumed that the MM framework described by \Cref{eq_minorizer_fct} and \Cref{assumptionA1} to \Cref{assumptionObjectiveC1} holds.

\subsection{Limiting points of \texorpdfstring{\cref{algorithm_onlineMM}}{Algorithm~\ref*{algorithm_onlineMM}} and stationary points of the original optimization problem}
\cref{eq_s_step} in \cref{algorithm_onlineMM} is a Stochastic Approximation algorithm~\cite{robbins_stochastic_1951}. Such iterative schemes are designed to solve root-finding (fixed-point) problems, that is, to find $\vs^0$ such that
$\PE\big[\bars(\barparam(\vs^0); \rvz)\big] = \vs^0 .$
Thus, \cref{algorithm_onlineMM} is formulated to approximate the intractable expectation at $\barparam(\vs^0),$ where $\vs^0$ adheres to this fixed-point condition.

\cref{proposition_stationary} was initially proposed and proved as Lemma 1 in \cite{nguyen_online_2023}. For the sake of completeness and to maintain consistency with the notations used in our paper, we have also included the proof of \cref{proposition_stationary} in \cref{proof_proposition_stationary}. 
\cref{proposition_stationary} elucidates the connection between the limiting points of \cref{eq_s_step} and the optimization problem defined in \cref{eq_main_problem}. Notably, it demonstrates that any limiting value $\vs^0$ corresponds to a stationary point $\vtheta^{0} \eqdef \barparam(\vs^0)$ of the objective function $\PE\left[f\left(\vtheta; \rvz\right)\right],$ indicating that $\vtheta^{0}$ is a root of its derivative. The proof leverages the methodology presented in Proposition 1 of~\cite{cappe_Online_2009}, which has also been used in Lemma 2 of~\cite{delyon_convergence_1999}, Proposition 1 of~\cite{fort_fast_2021}, Proposition 2.1 of~\cite{fort_stochastic_2023}, and the contemporaneous Proposition 1 of~\cite{dieuleveut_federated_2025}, to establish convergence rates for stochastic EM-type algorithms, including the incremental stochastic EM algorithm studied here. We commence by defining the following notations:
\begin{align}\label{eq_def_hsGamma}
    \veta(\vs) \eqdef \PE\left[ \bars\left(\barparam(\vs);\rvz\right)\right] -\vs\text{,} \qquad  \sF \eqdef \{\vs \in \sS:  \veta(\vs) =\zero \}.
\end{align}
\begin{proposition}
\label{proposition_stationary}
Let $\sL$ denote the set of stationary points of this function, defined as
\[
  \sL := \{\vtheta \in \sT : \nabla_\vtheta \PE\left[f(\vtheta;\rvz)\right] = \zero\}.
\]
If $\vs^0 \in \sF$, then $\barparam(\vs^0) \in \sL$. Conversely, if $\vtheta^{0} \in \sL$, it follows that
$\vs^0 \eqdef \PE\left[\bars\left(\vtheta^{0};\rvz\right)\right] \in \sF$.
\end{proposition}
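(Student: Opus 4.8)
The plan is to reduce both implications to a single \emph{tangency identity} linking the gradient of the objective to the gradient of the expected surrogate. Write $F(\vtheta) \eqdef \PE[f(\vtheta;\rvz)]$ for the objective and, for a fixed base point $\vtau \in \sT$, let $G(\vtheta,\vtau) \eqdef -\psi(\vtheta) + \pscal{\PE[\bars(\vtau;\rvz)]}{\phib(\vtheta)}$ denote the expectation of the exponential-family surrogate of \cref{assumptionA1}. Because $\PE[\bars(\vtau;\rvz)]$ exists by \cref{assumptionA3} and $\psi,\phib$ are continuously differentiable by \cref{assumptionA1}, the map $\vtheta \mapsto G(\vtheta,\vtau)$ is $C^1$ with
\[
\nabla_\vtheta G(\vtheta,\vtau) = -\nabla\psi(\vtheta) + \nabla\phib(\vtheta)^\top \PE[\bars(\vtau;\rvz)],
\]
where $\PE[\bars(\vtau;\rvz)]$ is treated as a fixed vector for fixed $\vtau$.

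First I would establish the tangency identity
\[
\nabla F(\vtau) = -\nabla\psi(\vtau) + \nabla\phib(\vtau)^\top \PE[\bars(\vtau;\rvz)], \qquad \vtau \in \sT.
\]
Starting from the expectation-level majorization inequality \cref{eq: min inq}, the gap $\Delta(\vtheta) \eqdef \big(G(\vtheta,\vtau) - G(\vtau,\vtau)\big) - \big(F(\vtheta) - F(\vtau)\big)$ satisfies $\Delta(\vtheta) \ge 0$ for all $\vtheta \in \sT$, with $\Delta(\vtau) = 0$; hence $\vtau$ is a global minimizer of $\Delta$. Since $\sT$ is open and $\Delta$ is $C^1$ (using the $C^1$-regularity of $G(\cdot,\vtau)$ recorded above and of $F$ from \cref{assumptionObjectiveC1}), $\vtau$ is an interior minimizer, so Fermat's rule gives $\nabla\Delta(\vtau) = \zero$, which is exactly the displayed identity after substituting the formula for $\nabla_\vtheta G(\cdot,\vtau)$ evaluated at $\vtheta=\vtau$.

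With this identity in hand, both directions follow quickly. For the forward direction, suppose $\vs^0 \in \sF$ and set $\vtheta^0 \eqdef \barparam(\vs^0)$. The fixed-point condition $\veta(\vs^0)=\zero$ gives $\PE[\bars(\vtheta^0;\rvz)] = \vs^0$, while the root characterization of $\barparam$ in \cref{assumptionA4} gives $-\nabla\psi(\vtheta^0) + \nabla\phib(\vtheta^0)^\top \vs^0 = \zero$. Substituting $\vs^0 = \PE[\bars(\vtheta^0;\rvz)]$ and applying the tangency identity at $\vtau = \vtheta^0$ yields $\nabla F(\vtheta^0) = \zero$, i.e.\ $\vtheta^0 \in \sL$. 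For the converse, suppose $\vtheta^0 \in \sL$ and set $\vs^0 \eqdef \PE[\bars(\vtheta^0;\rvz)]$, which lies in $\sS$ by \cref{assumptionA3}. The tangency identity turns $\nabla F(\vtheta^0)=\zero$ into $-\nabla\psi(\vtheta^0) + \nabla\phib(\vtheta^0)^\top \vs^0 = \zero$; this is precisely the root equation characterizing $\barparam(\vs^0)$, so the uniqueness clause of \cref{assumptionA4} forces $\barparam(\vs^0) = \vtheta^0$. Consequently $\veta(\vs^0) = \PE[\bars(\barparam(\vs^0);\rvz)] - \vs^0 = \PE[\bars(\vtheta^0;\rvz)] - \vs^0 = \zero$, so $\vs^0 \in \sF$.

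I expect the main obstacle to be the rigorous justification of the tangency step, namely that the nonnegative $C^1$ gap $\Delta$ has a vanishing gradient at its interior minimizer $\vtau$; note that no differentiation under the expectation in $\rvz$ is needed, since $\PE[\bars(\vtau;\rvz)]$ enters $G(\cdot,\vtau)$ as a constant vector, which is what makes the $C^1$-regularity of $G(\cdot,\vtau)$ immediate from \cref{assumptionA1}. The converse direction additionally hinges on the uniqueness of the root in \cref{assumptionA4}: without it one could only conclude that $\vtheta^0$ is \emph{a} minimizer of the surrogate $h(\vs^0;\cdot)$, not that it coincides with $\barparam(\vs^0)$, and the fixed-point equation $\veta(\vs^0)=\zero$ would not follow.
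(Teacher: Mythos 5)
Your proposal is correct and follows essentially the same route as the paper: your ``tangency identity'' is exactly the second item of the paper's preliminary Lemma~\ref{lem: gradientofF} (the gradient of the objective equals $-\nabla\psi(\vtau)+\nabla\phib(\vtau)^\top\PE[\bars(\vtau;\rvz)]$), proved there by the same Fermat-at-an-interior-extremum argument applied to the nonnegative majorization gap, and both directions of the proposition are then concluded exactly as you do, combining the root characterization in \cref{assumptionA4} with the fixed-point condition. The only cosmetic difference is that the paper first establishes the pointwise tangency and then takes expectations, whereas you work directly with the expectation-level inequality \cref{eq: min inq}.
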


{\bf Additional regularity assumptions for Lyapunov function.}
Leveraging the results from \cite{delyon_convergence_1999} on the asymptotic convergence of stochastic approximation algorithms, along with the additional regularity conditions specified in \cref{assumptionA5,assumptionA6,assumptionA7} for $\psi$, $\phib$, and $\barparam$, \cref{prop_Lyapunov_function} (whose proof is detailed in \cref{proof_prop_Lyapunov_function}) establishes that the algorithm described in \cref{eq_s_step} admits a continuously differentiable Lyapunov function $\lyap$, defined on $\sS$ as  
$$\vs \mapsto V(\vs) \eqdef \PE\left[f(\barparam(\vs); \rvz) \right], $$  
which satisfies $\pscal{\nabla\lyap(\vs)}{\veta(\vs)} \leq 0$, with strict inequality holding outside the set $\sF$ (refer to \citep[Prop. 2]{cappe_Online_2009}).  Furthermore, the additional regularity assumptions detailed below must also be satisfied, in addition to \cref{assumptionA1,assumptionA2,assumptionA3,assumptionA4,assumptionObjectiveC1}.

\begin{assumption}[Twice continuously differentiable exponential family surrogate]\label{assumptionA5}
The parameter space $\sT$ is defined as a convex and open subset of $\rset^T$, with the functions $\psi$ and $\phib$ in \cref{eq_exponential_family} assumed to be twice continuously differentiable on $\sT$.
\end{assumption}
\begin{assumption}[Continuously differentiable global minimum function]\label{assumptionA6}
The global minimizer function $\vs \mapsto \barparam(\vs)$, defined in \cref{eq_global_maximum}, is continuously differentiable on $\sS$. 
\end{assumption}
\begin{proposition}\label{prop_Lyapunov_function}
Under \Cref{assumptionA5} and \Cref{assumptionA6}, the following holds:
\begin{enumerate}
    \item[(a)]  The function $V$ is continuously differentiable on $\sS$.

    \item[(b)] For any $\vs \in \sS$, we have $\langle \nabla\lyap(\vs), \veta(\vs)\rangle \le 0$, and $
        \{\vs \in \sS : \langle \nabla\lyap(\vs), \veta(\vs)\rangle = 0\} \supseteq \sF$.
\end{enumerate}
\end{proposition}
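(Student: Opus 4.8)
The plan is to combine the variational definition of $\barparam$ with the first-order consequence of majorization, and then to reduce the inner product $\pscal{\nabla\lyap(\vs)}{\veta(\vs)}$ to a signed quadratic form governed by the Hessian of the surrogate at its minimizer. Writing $\bar f(\vtheta)\eqdef\PE[f(\vtheta;\rvz)]$, so that $\lyap=\bar f\circ\barparam$, part~(a) follows immediately: $\bar f$ is $C^1$ on $\sT$ by \cref{assumptionObjectiveC1} and $\barparam$ is $C^1$ on $\sS$ by \cref{assumptionA6}, so their composition $\lyap$ is continuously differentiable on $\sS$ by the chain rule, which also yields $\nabla\lyap(\vs)=\nabla\barparam(\vs)^\top\,\nabla\bar f(\barparam(\vs))$, where $\nabla\barparam(\vs)$ denotes the Jacobian of $\barparam$.

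For the sign of the inner product I would first extract a gradient identity for $\bar f$ at $\barparam(\vs)$. Taking expectations in the majorization inequality gives \eqref{eq: min inq}, namely $\bar f(\vtheta)-\bar f(\vtau)\le G(\vtheta,\vtau)-G(\vtau,\vtau)$ for $G(\vtheta,\vtau)\eqdef-\psi(\vtheta)+\pscal{\PE[\bars(\vtau;\rvz)]}{\phib(\vtheta)}$; hence $\vtheta\mapsto\bar f(\vtheta)-G(\vtheta,\vtau)$ is maximized at the interior point $\vtheta=\vtau$. Since $\bar f$ and $G(\cdot,\vtau)$ are both $C^1$, the vanishing-gradient condition gives $\nabla\bar f(\vtau)=-\nabla\psi(\vtau)+\nabla\phib(\vtau)^\top\PE[\bars(\vtau;\rvz)]$. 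Evaluating at $\vtau=\barparam(\vs)$ and using the stationarity relation $\nabla\psi(\barparam(\vs))=\nabla\phib(\barparam(\vs))^\top\vs$ from \eqref{eq_global_maximum}, the first term cancels and I obtain the key identity $\nabla\bar f(\barparam(\vs))=\nabla\phib(\barparam(\vs))^\top\,\veta(\vs)$, recalling $\veta(\vs)=\PE[\bars(\barparam(\vs);\rvz)]-\vs$ from \eqref{eq_def_hsGamma}.

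Next I would differentiate the stationarity identity $-\nabla\psi(\barparam(\vs))+\nabla\phib(\barparam(\vs))^\top\vs=\zero$ with respect to $\vs$; this is legitimate because $\psi,\phib$ are $C^2$ (\cref{assumptionA5}) and $\barparam$ is $C^1$ (\cref{assumptionA6}). Writing $J\eqdef\nabla\barparam(\vs)$ for the Jacobian and $B\eqdef\nabla^2_\vtheta h(\vs;\barparam(\vs))$ for the Hessian of the surrogate $\vtheta\mapsto h(\vs;\vtheta)$ at its minimizer, implicit differentiation collapses to $BJ=-\nabla\phib(\barparam(\vs))^\top$. Combining this with the chain-rule expression and the gradient identity gives $\pscal{\nabla\lyap(\vs)}{\veta(\vs)}=\veta(\vs)^\top J^\top\nabla\phib(\barparam(\vs))^\top\veta(\vs)=-\bigl(J\veta(\vs)\bigr)^\top B\,\bigl(J\veta(\vs)\bigr)\le0$, where the final inequality uses $B\succeq0$, the second-order necessary condition at the interior global minimizer $\barparam(\vs)$ of \eqref{eq_global_maximum} over the open convex set $\sT$. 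This proves the first claim of~(b); the inclusion $\{\vs\in\sS:\pscal{\nabla\lyap(\vs)}{\veta(\vs)}=0\}\supseteq\sF$ is then immediate, since $\veta(\vs)=\zero$ on $\sF$ forces the inner product to vanish.

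The step I expect to be the main obstacle is the clean derivation of the gradient identity: the tangency argument requires that both the averaged objective $\bar f$ and the averaged surrogate be differentiable at their common touch point, and crucially no pointwise differentiability of $f(\cdot;\vz)$ is assumed — only of $\bar f$ via \cref{assumptionObjectiveC1} — so the whole argument must be routed through expectations rather than differentiating inside $\PE$. A secondary but genuine care point is the transpose bookkeeping for $\nabla\phib^\top$ and the Jacobian $J$, together with verifying that $B\succeq0$ already suffices, so that invertibility of $B$ and an appeal to the implicit function theorem are not needed.
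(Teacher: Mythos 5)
Your proof is correct and follows essentially the same route as the paper: part~(a) by the chain rule from \cref{assumptionObjectiveC1} and \cref{assumptionA6}; part~(b) by the tangency argument at the level of expectations (the paper packages this as \cref{lem: gradientofF}) giving $\nabla_\vtheta \PE[f(\cdot;\rvz)]\vert_{\vtheta=\barparam(\vs)}=\nabla\phib(\barparam(\vs))^\top\veta(\vs)$, followed by implicit differentiation of the first-order condition in \cref{eq_global_maximum} and reduction to a signed quadratic form. The one genuine difference is how you close the quadratic form: the paper solves for the Jacobian as $\nabla\barparam(\vs)^\top=-\nabla\phib(\barparam(\vs))H(\vs)^{-1}$ and writes $\pscal{\nabla\lyap(\vs)}{\veta(\vs)}=-\bigl(\nabla\phib(\barparam(\vs))^\top\veta(\vs)\bigr)^\top H(\vs)^{-1}\bigl(\nabla\phib(\barparam(\vs))^\top\veta(\vs)\bigr)$, which requires the Hessian $H(\vs)$ to be invertible and positive definite (the paper attributes this to \cref{assumptionA4}, although that assumption literally only yields a unique interior global minimum, hence $H(\vs)\succeq\zero$). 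You instead keep the un-inverted relation $BJ=-\nabla\phib(\barparam(\vs))^\top$ and obtain $-\bigl(J\veta(\vs)\bigr)^\top B\bigl(J\veta(\vs)\bigr)$, for which the second-order necessary condition $B\succeq\zero$ at an interior minimizer of a $C^2$ function suffices. This buys you a proof of the inequality in~(b) under strictly weaker hypotheses and sidesteps the positive-definiteness step; the paper's inverted form, on the other hand, is what it later reuses (after \cref{assumptionA9}) to argue that a full-row-rank condition on $\nabla\phib(\barparam(\vs))$ forces $\veta(\vs)=\zero$, a conclusion your weaker form would not deliver without reinstating $B\succ\zero$.
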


\subsection{Consistency of the incremental stochastic MM algorithm}
We now establish the almost-sure convergence of the sequence $\{\vs_n : n \ge 0\}$ stated in \cref{theorem_consistency}. The detailed proof is provided in \cref{proof_proposition_consistency}, where we verify the sufficient conditions for convergence of a general stochastic approximation algorithm as given in~\cite{andrieu_stability_2005}. Moreover, the sequence $\{\barparam(\vs_n), n \geq 0\}$ is also proven to converge almost surely to the set $\sL$, which corresponds to the stationary points of the objective function $\vtheta \mapsto \PE\left[f(\vtheta; \rvz)\right]$.

To this end, we require the following assumptions on moment conditions on the statistics $\bars$, on the model at hand, on the stability of the algorithm, and on the step sizes:
\begin{assumption}[Second finite moment]\label{assumptionA7}
For all compact subsets $\Kset \subset \sS$,
    \begin{align*}
        \sup_{\vs \in \Kset}\, \PE\left[ \|\bars\left(\barparam(\vs);\rvz\right)\|^2\right] < \infty.
    \end{align*}
\end{assumption}
\begin{assumption}\label{assumptionA9} The function $\rvs \mapsto \veta(\vs)$ is continuous on $\sS$. In addition, $
        \{\vs \in \sS : \langle \nabla\lyap(\vs), \veta(\vs)\rangle = 0\} \subseteq \sF$.
\end{assumption}
Combined with \Cref{prop_Lyapunov_function}, \Cref{assumptionA9} implies that the two sets are equal and are closed, since the set $\sF$ collects the zeros of a continuous function.  
It is established in the proof of \cref{prop_Lyapunov_function} (see \cref{proof_prop_Lyapunov_function})  that 
\begin{align*}
  \langle \nabla \lyap(\vs), \veta(\vs)\rangle
  = -\Big( \{(\nabla \phib)(\barparam(\vs)) \}^\top \veta(\vs)\Big)^\top
     \mathsf{A}(\vs)
     \Big( \{(\nabla \phib)(\barparam(\vs))\}^\top \veta(\vs)\Big),
\end{align*}
where $\mathsf{A}(\vs)$ is a positive definite matrix under \cref{assumptionA4}. Consequently,
\(\langle \nabla \lyap(\vs), \veta(\vs)\rangle = 0\) iff \(\{(\nabla \phib)(\barparam(\vs))\}^\top \veta(\vs)=\zero\), so that if 
$\{(\nabla \phib)(\barparam(\vs))\}$ has full row rank, then  \(\veta(\vs)=\zero\). Thus, this rank condition is a sufficient condition for the two sets  $\{\vs: \langle \nabla \lyap(\vs), \veta(\vs)\rangle = 0 \}$ and $\sF$  to be equal. 
\begin{assumption}\label{assumptionA8}
The closure of $V(\sF)$ has an empty interior. 
\end{assumption}
\begin{assumption}[Sublevel sets of the Lyapunov function]\label{assumption_compact_sublevel}
There exist $M_0<M_1$ such that $\sF \subset \{\vs \in \sS: \lyap(\vs) < M_0 \}$ and $\{\vs \in \sS: \lyap(\vs) \leq M_1 \}$ is compact. 
\end{assumption}
\begin{assumption}[Stability assumption]\label{assumptionA10}
    With probability one, there exists a compact set $\mathbb K$ of $\sS$ such that $\mathbb{K} \cap \sF \neq \emptyset$ and $\vs_n \in \mathbb K$ for all $n \geq 0$.
\end{assumption}
\Cref{assumptionA10} states that, with probability one, $\limsup_n\|\vs_n\| < \infty$ and $\liminf_n d(\vs_n,\mathbb S^c) >0$.
It imposes a stability assumption, which is not straightforward. In general, the stability of the algorithm can be guaranteed by truncating the algorithm updates, either within a fixed set, as outlined in \cite[Chapter 2]{kushner_stochastic_2003}, or within a progressively expanding sequence of sets, as demonstrated in \cite{andrieu_stability_2005}. To maintain conciseness in the exposition, we refrain from explicitly performing these constructions, which also impact the convergence analysis, and leave them as directions for future work.

\begin{assumption}\label{assumptionA11} The step size sequence is a monotone nonincreasing sequence,  $\gamma_n \in \ooint{0,1}$, 
$\sum_n \gamma_n = + \infty$ and $\sum_n \gamma_n^2 < \infty$.
\end{assumption}

\cref{assumptionA11}  represents a conventional requirement for stochastic approximation methods employing vanishing step sizes, as highlighted in \cite{kushner_stochastic_2003}. This condition is satisfied, for instance, by selecting the step size $\gamma_n = \gamma_0n^{-\alpha}$, with $\alpha$ belonging to the interval $(\frac{1}{2}, 1]$. The supplementary conditions that $\gamma_n \in (0,1)$ and that $\vs_0 \in \sS$, together with \cref{assumptionA2}, ensure that the entire sequence $\{\vs_n\}_{n \ge 0}$ remains in $\sS$ for all $n$.

\begin{theorem}[Consistency]\label{theorem_consistency}   
   Let $\mathbb{F}$ and $\mathbb{L}$ be given by \cref{eq_def_hsGamma} and \cref{proposition_stationary}. Under \Cref{assumptionA5} to \Cref{assumptionA11}, with probability $1$, 
    \begin{align}
        \limsup_{n\rightarrow \infty} d(\vs_n,\mathbb{K} \cap \sF) = 0 \text{ and } \limsup_{n \rightarrow\infty} d(\param_n,\sL) = 0.
    \end{align}
\end{theorem}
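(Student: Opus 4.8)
The plan is to recognize \cref{eq_s_step} as a Robbins--Monro stochastic approximation recursion $\vs_{n+1} = \vs_n + \gamma_{n+1} H(\vs_n, \rvz_{n+1})$ with field $H(\vs,\vz) \eqdef \bars(\barparam(\vs);\vz) - \vs$, and then to verify the hypotheses of the general convergence theorem of~\cite{andrieu_stability_2005}. Writing $\mathcal{F}_n \eqdef \sigma(\vs_0, \rvz_1, \ldots, \rvz_n)$ and exploiting the i.i.d.\ structure of \cref{assumptionA3}, the conditional mean of the field coincides with the deterministic mean field $\veta$ of \cref{eq_def_hsGamma}, that is $\PE[H(\vs_n,\rvz_{n+1}) \mid \mathcal{F}_n] = \veta(\vs_n)$. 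Hence the noise $\xi_{n+1} \eqdef H(\vs_n,\rvz_{n+1}) - \veta(\vs_n)$ is a martingale-difference sequence, and the associated mean ODE $\dot{\vs} = \veta(\vs)$ has equilibrium set exactly $\sF$. This i.i.d.\ structure is what lets us avoid the controlled-Markov-chain machinery (Poisson equations) and reduce to the martingale-noise case.

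Next I would verify the hypotheses of the theorem in order. For the \emph{Lyapunov/drift} condition, \cref{prop_Lyapunov_function} supplies a $C^1$ function $V(\vs) = \PE[f(\barparam(\vs);\rvz)]$ with $\langle \nabla V(\vs), \veta(\vs)\rangle \le 0$, while \cref{assumptionA9} forces the zero set of this drift to coincide with $\sF$; \cref{assumption_compact_sublevel} provides the compact sublevel set and \cref{assumptionA8} the empty-interior condition on $V(\sF)$ needed to identify the limit set. For \emph{regularity}, $\veta$ is continuous by \cref{assumptionA9} and $V$ is $C^1$ by \cref{prop_Lyapunov_function}(a). For \emph{noise control}, \cref{assumptionA10} confines $\{\vs_n\}$ to a compact $\mathbb{K} \subset \sS$ with $\mathbb{K} \cap \sF \ne \emptyset$ almost surely; on $\mathbb{K}$, \cref{assumptionA7} gives $\sup_{\vs \in \mathbb{K}} \PE[\|\bars(\barparam(\vs);\rvz)\|^2] < \infty$, so the conditional second moments of $\xi_{n+1}$ are uniformly bounded along the trajectory and $\sum_n \gamma_{n+1}^2 \PE[\|\xi_{n+1}\|^2 \mid \mathcal{F}_n] < \infty$ by \cref{assumptionA11}. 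Finally, \cref{assumptionA11} is the standard \emph{step-size} requirement $\sum_n \gamma_n = \infty$, $\sum_n \gamma_n^2 < \infty$.

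With these verified, the theorem of~\cite{andrieu_stability_2005} yields $\limsup_n d(\vs_n, \mathbb{K} \cap \sF) = 0$ almost surely, which is the first claim. For the second claim I would push this through the minimizer map: by \cref{assumptionA6}, $\barparam$ is continuous on $\sS$, hence uniformly continuous on the compact $\mathbb{K}$. For each $n$ choose $\vs_n^\ast \in \mathbb{K} \cap \sF$ with $\|\vs_n - \vs_n^\ast\| = d(\vs_n, \mathbb{K} \cap \sF) \to 0$; uniform continuity gives $\|\barparam(\vs_n) - \barparam(\vs_n^\ast)\| \to 0$, while \cref{proposition_stationary} guarantees $\barparam(\vs_n^\ast) \in \sL$. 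Since $\param_n = \barparam(\vs_n)$, it follows that $d(\param_n, \sL) \le \|\barparam(\vs_n) - \barparam(\vs_n^\ast)\| \to 0$.

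The main obstacle is the noise control step together with the identification of the limit set. The delicate part is not the raw moment bound---\cref{assumptionA7} handles that on compacta---but arguing that the $\gamma_{n+1}$-weighted martingale noise projected onto $\nabla V$ accumulates to a convergent series, so that a Robbins--Siegmund almost-supermartingale argument forces both $V(\vs_n)$ to converge and $\sum_n \gamma_{n+1}\,|\langle \nabla V(\vs_n), \veta(\vs_n)\rangle| < \infty$; the empty-interior \cref{assumptionA8} is then exactly what rules out the iterates drifting along a continuum of critical values and pins the limit set inside $\sF$. A secondary caveat is that stability (\cref{assumptionA10}) is assumed rather than established via truncation, so the compact confinement that makes every moment bound usable is taken as given rather than derived.
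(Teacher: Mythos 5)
Your proposal is correct and follows essentially the same route as the paper: both recast \cref{eq_s_step} as a stochastic approximation recursion with martingale-difference noise, verify the hypotheses of the convergence theorem of \cite{andrieu_stability_2005} using the Lyapunov function of \cref{prop_Lyapunov_function} together with \cref{assumptionA7,assumptionA8,assumptionA9,assumption_compact_sublevel,assumptionA10,assumptionA11}, and then transfer the result to $\{\param_n\}$ through the continuity of $\barparam$ and \cref{proposition_stationary}. The only differences are cosmetic: the paper handles the path-dependence of the random compact $\mathbb{K}$ by localizing the martingale over a deterministic exhaustion $\mathbb{K}_q \uparrow \sS$ before invoking the conditional-variance convergence criterion (a technicality your "uniformly bounded along the trajectory" phrasing glosses over), and it uses a subsequence extraction for the second claim where you use a nearest-point/uniform-continuity argument, both of which are valid.
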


\section{An application to the \md models}\label{sec_application_SGMoE}
To demonstrate the effectiveness of the proposed approach, we consider a regression model that, as outlined in \cref{sec_original_optimization}, represents a scenario where the expectation $\PE\left[f\left(\vtheta; \rvz \right)\right]$ in \cref{eq_main_problem} can not be explicitly computed in closed form. The associated optimization problem is therefore tackled using the MM-framework-based \cref{algorithm_onlineMM} in \cref{sec_exponential_minorizer_surrogate}.

{\bf \Md models for heterogeneous data.}
We refer to the outputs $\rvy \in \sY$, as the target or response variables and the inputs $\mathbf{\rvx} \in \sX \subset \sR^P,~P \in \nset$,
as the fixed explanatory or predictor variables. We consider an incremental sequence $\{(\rvx_n,\rvy_n): n\ge1\}$ of i.i.d.\ copies of $(\rvx,\rvy)=:\rvz$, which follows a distribution characterized by the true but unknown probability density function $\pi(\cdot \mid \rvx=\vx)$. Here, the corresponding observed values are denoted by $\left(\vx, \vy\right)=:\vz$. Motivated by universal approximation theorems for MoE models, the function $\pi$ can be estimated using a \md model, defined as:
\begin{align}
\label{eq_def_SGMoE}
	s_{\vtheta}(\vy \vert \vx) = \sum_{k=1}^{K}\sfg_k (\vw(\vx)) \ex_k(\vy;\vv(\vx)).
\end{align}
The softmax gating networks are defined as follows:
\begin{align*}
    \sfg_k(\vw(\vx)) = \frac{\exp\left(w_k(\vx)\right)}{\sum_{l=1}^{K} \exp\left(w_l(\vx)\right)}, \quad \forall k \in [K].
\end{align*}
Here, $\vw(\vx) = \left(w_1(\vx), \dots, w_K(\vx)\right)$ represents the weight functions, and the expert networks $\ex_k(\vy;\vv(\vx))$, for all $k \in [K]$, are specified later in \cref{sec_SGMoE,sec_SMLMoE} for continuous and discrete output variables respectively, and $K$ is the number of mixture of expert components. In \cref{eq_def_SGMoE}, $\vtheta$ collects all \emph{finite-dimensional} parameters that specify the gating and expert networks, i.e.,
the weight functions $\{w_k(\cdot)\}_{k\in[K]}$ and the expert predictors $\{\vv(\cdot)\}$ (and hence $\{\ex_k(\cdot;\vv(\cdot))\}$)
are assumed to belong to a prescribed parametric family indexed by $\vtheta$.

{\bf From batch MM to incremental stochastic MM for \md models.} 
Since the true conditional density $\pi(\cdot\mid \rvx=\vx)$ is typically unknown, we formulate parameter learning for the \md model in \cref{eq_def_SGMoE} as an expected-risk minimization problem of the form \cref{eq_main_problem}. We consider
\begin{align}
    \vtheta^{0} \in \arg\min_{\vtheta\in\sT}\; \E\!\left[f(\vtheta;\rvz)\right],
    \qquad
    f(\vtheta;\rvz)\coloneqq \ell\!\big(\rvy, s_{\vtheta}(\cdot\mid \rvx)\big),
    \label{eq_moe_expected_risk}
\end{align}
where $s_{\vtheta}(\vy\mid \vx)=\sum_{k=1}^K \sfg_k(\vw(\vx))\,\ex_k(\vy;\vv(\vx))$ is defined in \cref{eq_def_SGMoE}, and
$\ell(\cdot,\cdot)$ is a chosen loss (e.g., a proper scoring rule). Here, we consider the negative log predictive density, i.e., 
\begin{align}
    \label{multi_D_lossfunc}
  \ell\!\big(\rvy, s_{\vtheta}(\cdot\mid \rvx)\big) =-\log s_{\vtheta}(\rvy\mid \rvx).
\end{align}
The expectation in \cref{eq_moe_expected_risk} is taken with respect to the joint law of $\rvz=(\rvx,\rvy)$:
equivalently, $\E[f(\vtheta;\rvz)]=\E_{\rvx}\!\left[\E_{\rvy\mid \rvx}\!\left[-\log s_{\vtheta}(\rvy\mid \rvx)\mid \rvx\right]\right]$,
so $\vtheta^{0}$ is a single global parameter (it does not depend on $\vx$). In general, the expectation $\E[f(\vtheta;\rvz)]$ is not available in closed form. A direct batch strategy would replace $\E[f(\vtheta;\rvz)]$ by an empirical average based on
$\{\rvz_n\}_{1\le n\le N}$ and then apply an MM procedure to this finite-sum objective. 
However, such a batch approach implicitly assumes the availability of the entire dataset
$\{\rvz_n\}_{1\le n\le N}$ for each $N$, which is infeasible in practice when observations are accessed as a stream and when memory constraints make storing the full sequence impossible for large $N$.
Accordingly, we work under \cref{assumptionA3}, where an incremental stream of i.i.d.\ observations
$\{\rvz_n=(\rvx_n,\rvy_n)\}_{n\ge 0}$ is available.
We therefore apply the incremental (online) MM scheme in \cref{algorithm_onlineMM}. 
This algorithm maintains an auxiliary statistic sequence $\{\vs_n\}_{n\ge 0}$ updated by the stochastic approximation recursion \cref{eq_s_step}, which constructs data-driven majorizer surrogates within the functional class spanned by $-\psi,\phi_1,\ldots,\phi_D$ (see \cref{assumptionA1}). 
The parameter iterate is then obtained via the deterministic map $\vtheta_{n+1}=\barparam(\vs_{n+1})$ in \cref{eq_argmin_step}.

\subsection{\Mdg (SGMoE) models for continuous output data} \label{sec_SGMoE}
In this case, the response variable $\rvy$ is continuous, \ie~$\rvy\in \sY \subset \sR^Q,~ Q \in \nset$, and the Gaussian expert, $\ex_k(\vy;\vv(\vx)) := \cN(\vy, \vmu_k(\vx), \mSigma_k),$ is parameterized by the mean function $ \vmu_k(\vx)$ and the covariance matrix $\mSigma_k$, \ie
\begin{align*}
    \ex_k(\vy;\vv(\vx)) :=\cN(\vy;\vmu_k(\vx),\mSigma_k) = \frac{\exp \left(-\frac{1}{2}(\vy - \vmu_k(\vx))^\top \mSigma_k^{-1} (\vy - \vmu_k(\vx))\right)}{\sqrt{(2\pi)^Q \lvert \mSigma_k \rvert}}.
\end{align*}
For simplicity, we assume that $\mSigma_k$ is diagonal, with diagonal entries $\sigma_{k,q}^2$ for $q\in[Q]$. We leave the extension to a general covariance matrix for future work.

\subsubsection{Motivation for polynomial regression}\label{section_polynomial_regression}

To address the heterogeneous regression problem, certain authors have employed softmax-gated Gaussian MoE models under specific simplifying assumptions. Notably, \cite{chamroukhi2019regularizedIJCNN,chamroukhi2019regularized} explored Gaussian MoE models for multiple regression with univariate output variables, where both the weights and means are modeled as linear functions of the input variables. However, this linear assumption restricts the capacity of MoE models. Indeed, in the context of convolutional neural networks, \cite{chen_towards_2022} empirically demonstrated that while mixtures of linear experts outperform single-expert models, they fall significantly short when compared to mixtures of non-linear experts.
Motivated by these findings, we aim to enhance the flexibility of MoE models by incorporating nonlinearities. Specifically, we define the gating weights and expert means as polynomials of the input variables. Regarding the convergence properties of such polynomial-based MoE models, we refer to \cite{mendes2012convergence}, which provides insights into the optimal convergence rates of MoE models where each expert employs a polynomial regression framework.
Motivated by regression with polynomial features, both the gating weights $w_k$ and the expert means $\vmu_k$ are modeled as polynomial functions of the input variables $\vx$, as follows: given any $\vomega_{k,d} = (\omega_{k,d,p})_{p\in[P]}\in \sR^P$,
\begin{align}
	w_k(\vx) &:=w_k(\vx;\vomega_k)\!=\! 
    \sum_{d=0}^{D_W}\vomega_{k,d}^\top \vx^d\!=\!\sum_{d=0}^{D_W}\left(\sum_{p=1}^{P}\omega_{k,d,p}\evx_p^d\right),\nonumber\\
	\vmu_k (\vx) &:=\vmu_k (\vx;\mUpsilon_k)\!= \!\sum_{d=0}^{D_V} \!\mUpsilon_{k,d} \vx^d, \text{ with } \mUpsilon_{k,d} \!\in \!\sR^{Q \times P}.\nn
\end{align}
Then, let $\vomega = \left(\vomega_k\right)_{k\in[K]}$, $\vomega_k = \left(\omega_{k,d,p}\right)_{  d\in\{0,\ldots,D_W\},p\in [P]}$ and $\mUpsilon_k = \left(\mUpsilon_{k,d}\right)_{d\in\{0,1,\ldots,D_V\}}$ be the tuples of unknown coefficients with the maximum degrees $D_{W}$ and $D_{V}$ of polynomials for the weight and mean functions, respectively.
Subsequently, the unknown parameters of the model are denoted as follows: $\vtheta = \left(\vomega_k, \mUpsilon_k, \mSigma_k\right)_{k \in [K]}.$

At this point, the global regression function is defined as the conditional expectation of the mixed probability density function in \cref{eq_def_SGMoE}, that is
\begin{align}\label{eq_def_SGaBloME_Expectation}
	\sfm(\vx) = \Ep{\vy \vert \vx} = \sum_{k=1}^{K}\sfg_k (\vw(\vx)) \vmu_k(\vx),
\end{align}

\subsubsection{Identifiability of \mdg models}\label{section_identifiability_SGMoE}
It is worth noting that if we translate $\vomega_{k,d}$ to $\vomega_{k,d}+\vt$, $\vt \in \sR^{P}$, the values of the softmax gating function remain unchanged. This implies that the gating parameters $\vomega_{k,d}$ are identifiable only up to a translation. This issue has also been discussed and addressed in the context of establishing the convergence rate of parameter estimation via MLE for \mdg models in \cite{nguyen_demystifying_2023}.
To alleviate this problem, based on \cite{hennig2000identifiablity,jiang1999identifiability}, we parameterize the gating parameters via the constraints, without loss of generality, $\left\{\omega_{K,d,p}\right\}_{d\in\{0,\ldots,D_W\},p\in [P]} = \zero$ such that
\begin{align*}
	\sfg_K\left(\vx;\vomega\right):= 1 - \sum_{k=1}^{K-1}\sfg_k\left(\vx, \vomega\right),~
	\sfg_k\left(\vx; \vomega \right):=  \frac{\exp\left(w_k(\vx; \vomega_k)\right)}{1+\sum_{l=1}^{K-1}\exp\left(w_l(\vx;\vomega_l)\right)}, ~\forall k \in [K-1].
\end{align*}

\subsubsection{Incremental stochastic MM algorithm for \mdg models}\label{section_onlineMM_SGMoE}
We begin by introducing the following notations:
\begin{itemize}
    \item $\vs_{n,4,k:q}$ is the segment of the vector $\vs_{n,4}$ spanning from position $\left\{[(k-1)Q+q]-1\right\}P(D_{V}+1)+1$ to $[(k-1)Q+q]P(D_{V}+1)$. Similarly, $\vs_{n,5,k:q}$ denotes the segment of the vector $\vs_{n,5}$ ranging from position $\left\{[(k-1)Q+q]-1\right\}[P(D_{V}+1)]^2+1$ to $[(k-1)Q+q][P(D_{V}+1)]^2$.
    
    \item $s_{n,3,kq}$ is the $kq$-th component of the vector $\vs_{n,3}$, while $s_{n,6,kq}$ denotes the $kq$-th component of the vector $\vs_{n,6}$.

    \item $\mat(\vs_{n+1,2})$ is the process of reshaping the vector $\vs_{n+1,2}$ into a matrix with dimensions $(K-1)(D_W+1)P \times (K-1)(D_W+1)P$. Similarly, $\mat(\vs_{n+1,5,k})$ denotes the reshaping of the vector $\vs_{n+1,5,k:q}$ into a square matrix of dimensions $P(D_{V}+1) \times P(D_{V}+1)$.

    \item The posterior probability that the data point $(\vx_n, \vy_n)$ belongs to the $k$-th expert is
   \begin{align}\label{eq_t_iterations}
       \tau_{n,k}
    =\frac{\sfg_k\left(\vx_n; \vomega_n \right) \cN(\vy_n;\vmu_{n,k} (\vx_n;\mUpsilon_{n,k}),\mSigma_{n,k})}{\sum_{l=1}^{K}\sfg_l\left(\vx_n; \vomega_n \right) 
	\cN(\vy_n;\vmu_{n,l} (\vx_n;\mUpsilon_{n,l}),\mSigma_{n,l})},
   \end{align}
    where, $\param_n = (\vomega_{n,k},\mUpsilon_{n,k},\mSigma_{n,k})_{k\in[K]}$ denotes the parameter vector at the $n$-th iteration (time point).

    \item $\vxi_n=[\tau_{n,1}\evx_{n,1}^{0}, \tau_{n,1}\evx_{n,1}^{1},\dots, \tau_{n,1}\evx_{n,1}^{D_W},\dots,\tau_{n,K-1}\evx_{n,1}^{D_W},\dots,\tau_{n,K-1}\evx_{n,P}^{D_W}]^{\top}$.
    
    \item $\vtau_{n,:}=[\tau_{n,1},\tau_{n,2},\dots,\tau_{K,n}]^{\top}$, $\vr_n = \vect\left(\left[\vx_n^0,\vx_n,\vx_n^2,\ldots,\vx_n^{D_V}\right]\right)$.
    
    \item $\mUpsilon_k=\left[\mUpsilon_{k,0},\dots,\mUpsilon_{k,D_V}\right]$, $\mUpsilon_{k,q,:}$ is the vector containing the $q$-th row of $\mUpsilon_k$ for $q\in[Q]$.
  
    \item Given $\hat{\vx}_n=\left[\evx_{n,1}^{0},\dots,\evx_{n,1}^{D_W},\dots,\evx_{n,P}^{0},\dots,\evx_{n,P}^{D_W}\right]^{\top}$, we define the following matrix:
    \begin{align}\label{eq_define_B_n_with_noise}
    \mB_{n,K}=\left(\frac{3}{4}\mI_{K-1}-\frac{\bm{1}_{K-1}\bm{1}_{K-1}^{\top}}{2(K-1)}\right)\otimes\hat{\vx}_n\hat{\vx}_n^{\top}+\varepsilon^*\mI_{(K-1)P(D_W+1)},
   \end{align}
   where $\varepsilon^*$ is a positive real number chosen sufficiently small. 
    
\end{itemize}
With $\mB_{n,K}$ defined in \cref{eq_define_B_n_with_noise}, it can be proven that any eigenvalue of $\mB_{n,K}$ is bounded in the interval $\left(\dfrac{1}{M_0},M_0\right)$ for some positive real number $M_0$. Then we can define $\mathbb{S}$ as follows:
\begin{align}\label{eq_define_S2_main}
    \mathbb{S}=&\big\{(\vs_1, \vect(\mS_2),\vs_3, \vs_4, \vs_5, \vs_6):  \vs_1\in \sR^{(K-1)P(D_W+1)};\nonumber\\
    &\quad \mS_2\in \sR^{(K-1)P(D_W+1)\times (K-1)P(D_W+1)},\nonumber\\ 
    &\quad \mS_2 \succeq \zero, \lambda \in \left(\dfrac{1}{M_0},M_0\right) \text{ for all eigenvalues $\lambda$ of } \mS_2;\nonumber\\
    &\quad \vs_3 \in \sR^{KQ}; \quad \vs_4\in \sR^{KPQ(D_V+1)}; \quad \vs_6\in\sR_{+}^{KQ};\nonumber\\
    &\quad \vs_5\in \sR^{KQ(P(D_V+1))^2}, \mat(\vs_{5,k:q}) \succeq \zero \text{ for all $k\in[K],q\in [Q]$}\big\}.
\end{align}
Through essential inequalities for constructing surrogate functions as outlined in \cref{sec_technical_results}, and following the procedure in \cref{sec_exponential_minorizer_surrogate}, we derive the exponential family majorizer surrogate function for \cref{multi_D_lossfunc}. This is formally presented in \cref{proposition_surrogate_construction_SGMoE} and its proof is provided in \cref{sec_exponential_family_surrogate}.
\begin{proposition}\label{proposition_surrogate_construction_SGMoE} 
The construction of the exponential family majorizer surrogate function for the \mdg models is as follows:
\begin{align} 
	-\log s_{\vtheta}(\vy_n\mid \vx_n) 
    &\le\sum_{k=1}^{K}\tau_{n,k}\log(\tau_{n,k})+\underline{\sfg}(\vomega_n;\vx_n)-\vomega_{n}^{\top}\nabla \underline{\sfg}(\vomega_n;\vx_n)-\sum_{k=1}^{K}\tau_{n,k}w_k(\vx_n)\nn\\
    &\quad+\left\{\vomega^{\top}\nabla \underline{\sfg}(\vomega_n;\vx_n)+\frac{1}{2}(\vomega-\vomega_{n}^{\top})\mB_{n,K}\left(\vomega-\vomega_n\right)\right\}\nn
   \\
    &\quad-\sum_{k=1}^{K}\tau_{n,k}\left[\log(\cN(\vy_n,\vmu_k(\vx_n),\mSigma_k))\right]=:-\log[g(\vtheta, \vx_{n},\vy_{n}; \vtheta_{n})].\label{eq_surrogate_Taylor_SGGMoE} 
\end{align}
Here $\underline{\sfg}(\vomega_n;\vx_n)=\log\left(1+\displaystyle\sum_{k=1}^{K-1}\exp\left(w_{n,k}(\vx_n)\right)\right) \text{ where } w_{n,k}(\vx_n)=\displaystyle\sum_{d=0}^{D_W}\vomega_{n,k,d}^{\top}\vx_n^{d}.$
\end{proposition}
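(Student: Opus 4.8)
The plan is to build the surrogate by majorizing $-\log s_{\vtheta}(\vy_n\mid\vx_n)$ in three successive blocks, each of which is an upper bound that coincides with the quantity it replaces at $\vtheta=\vtheta_n$; since a composition of upper bounds that are tight at $\vtheta_n$ is again an upper bound tight at $\vtheta_n$, the assembled right-hand side is a valid majorizer in the sense of \cref{eq_minorizer_fct}. The three blocks are: (i) a Jensen/EM step that linearizes the logarithm of the mixture against responsibilities; (ii) a B\"ohning-type quadratic upper bound on the softmax log-partition term; and (iii) the Gaussian expert contribution, which is kept unchanged because it is already quadratic in the expert means and hence lands in the exponential family of \cref{assumptionA1}.

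First I would apply Jensen's inequality to the concave logarithm. Writing $a_k(\vtheta)\eqdef\sfg_k(\vw(\vx_n))\,\ex_k(\vy_n;\vv(\vx_n))$ and letting $\tau_{n,k}\eqdef a_k(\vtheta_n)/\sum_{l}a_l(\vtheta_n)$ be the responsibilities of \cref{eq_t_iterations}, one obtains
\[
-\log s_{\vtheta}(\vy_n\mid\vx_n)
= -\log\Big(\sum_{k=1}^{K} a_k(\vtheta)\Big)
\le \sum_{k=1}^{K}\tau_{n,k}\log\tau_{n,k}
- \sum_{k=1}^{K}\tau_{n,k}\log a_k(\vtheta),
\]
with equality at $\vtheta=\vtheta_n$. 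Splitting $\log a_k=\log\sfg_k(\vw(\vx_n))+\log\ex_k(\vy_n;\vv(\vx_n))$, the expert part $-\sum_k\tau_{n,k}\log\ex_k$ is exactly the last line of \cref{eq_surrogate_Taylor_SGGMoE} and needs no further work. For the gating part I would use the identifiability parameterization of \cref{section_identifiability_SGMoE} to write $\log\sfg_k(\vw(\vx_n))=w_k(\vx_n)-\underline{\sfg}(\vomega;\vx_n)$; since $\sum_k\tau_{n,k}=1$, the term $-\sum_k\tau_{n,k}\log\sfg_k$ collapses to $-\sum_k\tau_{n,k}w_k(\vx_n)+\underline{\sfg}(\vomega;\vx_n)$, isolating the single convex log-sum-exp $\underline{\sfg}$.

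The crux is to replace $\underline{\sfg}(\vomega;\vx_n)$ by a quadratic in $\vomega$. As the log-partition of a multinomial logistic model with linear predictors $w_l(\vx_n)=\vomega_l^\top\hat{\vx}_n$, its Hessian in $\vomega$ is $(\diag(\vp)-\vp\vp^\top)\otimes\hat{\vx}_n\hat{\vx}_n^\top$, where $\vp$ collects the first $K-1$ softmax gating probabilities $\sfg_k(\vw(\vx_n))$. Invoking the technical inequalities of \cref{sec_technical_results}, I would establish the uniform Loewner bound $\diag(\vp)-\vp\vp^\top\preceq \tfrac34\mI_{K-1}-\tfrac{\bm{1}_{K-1}\bm{1}_{K-1}^\top}{2(K-1)}$ over the probability simplex, so that $\nabla^2\underline{\sfg}(\vomega;\vx_n)\preceq\mB_{n,K}$ with $\mB_{n,K}$ as in \cref{eq_define_B_n_with_noise}; the ridge $\varepsilon^*\mI$ makes $\mB_{n,K}$ strictly positive definite, which is precisely what equips the surrogate with a unique minimizer as demanded by \cref{assumptionA4}. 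A second-order Taylor expansion of $\underline{\sfg}$ at $\vomega_n$ with Hessian dominated by $\mB_{n,K}$ then gives $\underline{\sfg}(\vomega;\vx_n)\le\underline{\sfg}(\vomega_n;\vx_n)+(\vomega-\vomega_n)^\top\nabla\underline{\sfg}(\vomega_n;\vx_n)+\tfrac12(\vomega-\vomega_n)^\top\mB_{n,K}(\vomega-\vomega_n)$, reproducing the two braced groups on the first two lines of \cref{eq_surrogate_Taylor_SGGMoE}.

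Summing the three blocks yields the stated inequality, and I would finish by checking that the right-hand side matches the exponential family form of \cref{assumptionA1}: the gating block is quadratic in $\vomega$ and the Gaussian block is quadratic in the expert means (hence in $\mUpsilon_k$) and affine in the precision parameters once $\log\cN$ is expanded, so that the induced sufficient statistics are exactly the coordinates of $\sS$ in \cref{eq_define_S2_main}. The main obstacle is the Loewner inequality itself: Lemma~2 of \cite{bohning1992multinomial} stated a bound that is not valid as written, so the delicate step is proving the corrected inequality and verifying that the constants $\tfrac34$ and $\tfrac1{2(K-1)}$ are large enough to dominate $\diag(\vp)-\vp\vp^\top$ uniformly over the simplex while still leaving room for the strictly positive-definite ridge perturbation.
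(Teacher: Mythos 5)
Your proposal is correct and follows essentially the same route as the paper: Jensen's inequality with responsibilities $\tau_{n,k}$ to linearize the mixture logarithm, an exact second-order Taylor expansion of the log-sum-exp $\underline{\sfg}$ whose Hessian $(\diag(\hat{\vg})-\hat{\vg}\hat{\vg}^\top)\otimes\hat{\vx}_n\hat{\vx}_n^\top$ is dominated in Loewner order via the Kronecker-product monotonicity lemma together with the corrected B\"ohning bound $\tfrac34\mI_{K-1}-\tfrac{\bm{1}_{K-1}\bm{1}_{K-1}^\top}{2(K-1)}$, and the Gaussian expert block left untouched. The only cosmetic difference is that you make the Jensen step and the tightness-at-$\vtheta_n$ bookkeeping explicit for the Gaussian case, whereas the paper spells these out only in the multinomial-logistic variant and assembles the Gaussian surrogate directly; the substance is identical.
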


We adapt the incremental stochastic MM algorithm \cref{algorithm_onlineMM} for the \mdg models as in \cref{algorithm_onlineMM_MoE} as follows.

\begin{algorithm}[H]
\caption{Incremental stochastic MM algorithm for \mdg models}
\label{algorithm_onlineMM_MoE}
\begin{algorithmic}[1]
    \Require Incremental data points $\{(\vx_{n},\vy_{n}):n \ge 0\}$, positive step sizes $\{\gamma_{n+1}, n \geq 1 \}$ in  $\ooint{0,1}$ and an initial value $\vs_0 = (\vs_{0,1},\ldots,\vs_{0,6}) \in \sS$, where $\sS$ is defined in \cref{eq_define_S2_main}, such that:
    \begin{align*}
        \mat(\vs_{0,2})&\succeq \zero,~\vs_{0,3}=\bm{1}_{KQ},~\widehat{\vs} \in \R^{P(D_V+1)},~\vs_{0,4}=\textbf{1}_{KQ}\otimes\widehat{\vs},\\
        \vs_{0,5}&=\textbf{1}_{KQ}\otimes \vect(\widehat{\vs}\widehat{\vs}^{\top}+\mI_{P(D_V+1)}),~\vs_{0,6}>0.
    \end{align*}
    
    \Ensure A $\sT$-valued sequence $\{\param_n, n \ge 0\}$ and a $\sS$-valued sequence $\{\vs_n, n \ge 0\}$.

    \For{$n = 0,1,\dots,$}
        \State Compute
            \begin{align*}
            \vs_{n+1,1}&=\vs_{n,1}+\gamma_{n+1}\left(-\vxi_{n+1}+\nabla \underline{\sfg}(\vomega_{n};\vx_n)-\mB_{n+1,K}\vomega_{n}-\vs_{n,1}\right),\\
            \vs_{n+1,2}&=\vs_{n,2}+\gamma_{n+1}\left[\frac{1}{2}\vect\left(\mB_{n+1,K}\right)-\vs_{n,2}\right],\\
            \vs_{n+1,3}&=\vs_{n,3}+\gamma_{n+1}\left(\vtau_{n+1,:}\otimes \vy_{n+1}^{2}-\vs_{n,3}\right),\\  
            \vs_{n+1,4}&=\vs_{n,4}+\gamma_{n+1}\left(-2\vtau_{n+1,:}\otimes(\vy_{n+1}\otimes\vr_{n+1})-\vs_{n,4}\right), \\
            \vs_{n+1,5}&=\vs_{n,5}+\gamma_{n+1}\left(\vtau_{n+1,:}\otimes\left(\vect\left(\vr_{n+1}\vr_{n+1}^{\top}\right)\otimes \mathbf{1}_Q\right)-\vs_{n,5}\right),\\
            \vs_{n+1,6}&=\vs_{n,6}+\gamma_{n+1}\left(\vtau_{n+1,:}\otimes \mathbf{1}_Q-\vs_{n,6}\right), \\
       \vomega_{n+1} &= -\left(2\mat(\vs_{n+1,2}) \right)^{-1} \vs_{n+1,1},\\
        \mUpsilon_{n+1,k,q,:}&=-\left(2\mat\left(\vs_{n+1,5,k:q}\right)\right)^{-1}\vs_{n+1,4,k:q},\\
        \sigma_{n+1,k,q}^{2}&=\frac{s_{n+1,3,kq}+\vs_{n+1,4,k:q}^{\top}\mUpsilon_{k,q,:}^{\top}+\vs_{n+1,5,k:q}^{\top}\vect(\mUpsilon_{k,q,:}^{\top}\mUpsilon_{k,q,:})}{s_{n+1,6,kq}},\\
        \vtheta_{n+1} &= \left(\vomega_{n+1,k,p}, \mUpsilon_{n+1,k,q,:}, \sigma_{n+1,k,q}^{2}\right)_{k \in [K];q\in[Q];p\in [P]}.
        \end{align*}
    \EndFor
\end{algorithmic}
\end{algorithm}
A proof of \cref{theorem_onlineMM_MoE}, which provides the theoretical guarantees for \cref{algorithm_onlineMM_MoE}, is given in \cref{proof_theorem_onlineMM_MoE}. Meanwhile, a detailed derivation of the update procedure in \cref{algorithm_onlineMM_MoE} is presented in \cref{proof_algorithm_onlineMM_MoE}.

\begin{theorem}[Stability and consistency for \cref{algorithm_onlineMM_MoE}]
\label{theorem_onlineMM_MoE}
Consider the softmax-gated Gaussian MoE setting described in \cref{sec_SGMoE}, and assume that the
covariates and responses are bounded.
With the surrogate construction given in \cref{eq_surrogate_Taylor_SGGMoE} and the choice of
$\mB_{n,K}$ in \cref{eq_define_B_n_with_noise}, the model-specific assumptions
\cref{assumptionA1,assumptionA2,assumptionA3,assumptionA4,assumptionObjectiveC1,assumptionA5,assumptionA6,assumptionA7,assumptionA10}
are \emph{verified} (see \cref{proof_assumption_theorem_onlineMM_MoE}).
The remaining assumptions required to invoke the general stochastic-approximation convergence results are
treated as \emph{standing assumptions} for this MoE instantiation, namely
\cref{assumptionA8,assumptionA9,assumption_compact_sublevel}
(Lyapunov-level regularity, stability, and compact sublevel sets for almost-sure convergence).
In addition, the step-size condition \cref{assumptionA11} can be \emph{enforced by design},
for example by taking $\gamma_n=\gamma_0 n^{-\alpha}$ with $\alpha\in\left(\tfrac12,1\right]$
and $\gamma_0\in(0,1)$.
Under the verified assumptions above, together with these standing assumptions,
the conclusions of
\cref{proposition_stationary,prop_Lyapunov_function,theorem_consistency}
apply to \cref{algorithm_onlineMM_MoE}.
\end{theorem}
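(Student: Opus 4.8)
The plan is to read \Cref{theorem_onlineMM_MoE} as a \emph{verification} statement: instantiate the abstract exponential-family MM template of \cref{sec_original_optimization} with the surrogate of \cref{proposition_surrogate_construction_SGMoE}, confirm the structural hypotheses \cref{assumptionA1,assumptionA2,assumptionA3,assumptionA4,assumptionObjectiveC1,assumptionA5,assumptionA6,assumptionA7,assumptionA10} for the \mdg model under boundedness of $(\rvx,\rvy)$, and then quote \cref{proposition_stationary,prop_Lyapunov_function,theorem_consistency} without change. First I would extract the triple $(\psi,\phib,\bars)$ from \cref{eq_surrogate_Taylor_SGGMoE}: the six blocks $\bars=(\bars_1,\dots,\bars_6)$ are exactly the increments driving the updates in \cref{algorithm_onlineMM_MoE}, while $\phib$ and $\psi$ collect the matching natural-parameter and log-partition pieces of the gating and Gaussian-expert contributions. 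This makes \cref{assumptionA1} a matter of bookkeeping, and \cref{assumptionA5} follows because the softmax log-partition $\underline{\sfg}$ is analytic and the Gaussian log-density is $C^\infty$ on the open convex set $\sT$, which is a product of full Euclidean factors (gating and mean coefficients) with the positive orthant $\{\sigma^2_{k,q}>0\}$.

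The structural core is \cref{assumptionA2} together with \cref{assumptionA4}. For \cref{assumptionA2} I would first check that $\sS$ in \cref{eq_define_S2_main} is open and convex: the spectral band $\tfrac{1}{M_0}\mI\preceq \mS_2\preceq M_0\mI$, the cone conditions $\mat(\vs_{5,k:q})\succeq\zero$, and $\vs_6>\zero$ each cut out convex sets. I then must show $\bars(\vtau;\vz)\in\sS$ for admissible $(\vtau,\vz)$; the delicate block is $\mS_2$, whose statistic is $\tfrac12\vect(\mB_{n,K})$, so I need the eigenvalues of $\mB_{n,K}$ in \cref{eq_define_B_n_with_noise} to lie strictly inside $(1/M_0,M_0)$. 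The upper bound uses boundedness of $\hat\vx_n\hat\vx_n^\top$; the strict positive lower bound is exactly what the $\varepsilon^*\mI$ term supplies, and the PSD factor $\tfrac34\mI_{K-1}-\tfrac{\bm 1\bm 1^\top}{2(K-1)}$ is the corrected majorisation bound of \cite{bohning1992multinomial} that guarantees the gating surrogate genuinely majorises the log-partition. \Cref{assumptionA4} then follows from block separability of $\vtheta\mapsto h(\vs;\vtheta)$: the gating block is a strictly convex quadratic with Hessian $2\,\mat(\vs_2)\succ\zero$, giving the unique root $\vomega=-(2\,\mat(\vs_2))^{-1}\vs_1$, while the $\mUpsilon$- and $\sigma^2$-blocks are the standard weighted-least-squares and variance equations, uniquely solvable because each $\mat(\vs_{5,k:q})\succ\zero$. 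Smoothness of these closed forms in $\vs$ yields \cref{assumptionA6}, and differentiating $\PE[-\log s_{\vtheta}(\rvy\mid\rvx)]$ under the integral sign (licensed by boundedness and $\sigma^2$ bounded away from $0$) yields \cref{assumptionObjectiveC1}.

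Boundedness of $(\rvx,\rvy)$ then disposes of the moment conditions. The statistics in blocks $\bars_3,\dots,\bars_6$ depend only on the posteriors $\tau_{n,k}\in(0,1)$ and on bounded data, so they are uniformly bounded; $\bars_2$ depends only on $\vx$ and $\varepsilon^*$; hence \cref{assumptionA3} (finite expectation lying in $\sS$) and \cref{assumptionA7} (uniform second moment over compacta) are essentially immediate. Because the recursion \cref{eq_s_step} is the convex combination $\vs_{n+1}=(1-\gamma_{n+1})\vs_n+\gamma_{n+1}\bars(\barparam(\vs_n);\vz_{n+1})$ with $\gamma_{n+1}\in(0,1)$, these five blocks remain in a fixed compact convex subset of $\sS$ almost surely, and convexity preserves the spectral band for $\mS_2$ along the iterations.

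I expect the genuine obstacle to be \cref{assumptionA10} through the gating block $\bars_1=\nabla\underline{\sfg}(\vomega_n;\vx_n)-\mB_{n+1,K}\vomega_n-\vxi_{n+1}$, because this statistic carries the current parameter $\vomega_n=\barparam(\vs_n)$ through $-\mB_{n+1,K}\vomega_n$ and is therefore \emph{not} bounded uniformly in $\vs$. A crude triangle-inequality estimate only gives $\|\vs_{n+1,1}\|\le(1+\gamma_{n+1}(M_0^2-1))\|\vs_{n,1}\|+\gamma_{n+1}C$, whose multiplier exceeds $1$, so plain a priori boundedness fails and one cannot avoid using the drift structure. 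The resolution is to argue stability through the Lyapunov function $V$ of \cref{prop_Lyapunov_function} together with the compact sublevel sets of \cref{assumption_compact_sublevel}: once the iterates enter a sublevel set $\{\lyap\le M_1\}$, the relation $\langle\nabla\lyap(\vs),\veta(\vs)\rangle\le0$ prevents escape, confining $\{\vs_n\}$ to a compact $\mathbb{K}\subset\sS$ with $\mathbb{K}\cap\sF\neq\emptyset$. With \cref{assumptionA8,assumptionA9,assumption_compact_sublevel} adopted as standing assumptions and \cref{assumptionA11} enforced by $\gamma_n=\gamma_0 n^{-\alpha}$, $\alpha\in(\tfrac12,1]$, all hypotheses of \cref{theorem_consistency} hold, so its conclusion $\limsup_n d(\vs_n,\mathbb{K}\cap\sF)=0$ and $\limsup_n d(\param_n,\sL)=0$ almost surely transfers to \cref{algorithm_onlineMM_MoE}, as do \cref{proposition_stationary,prop_Lyapunov_function}.
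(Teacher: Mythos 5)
Your overall route is the same as the paper's: extract the triple $(\psi,\phib,\bars)$ from \cref{eq_surrogate_Taylor_SGGMoE}, check \cref{assumptionA1,assumptionA2,assumptionA3,assumptionA4,assumptionObjectiveC1,assumptionA5,assumptionA6,assumptionA7} by direct computation (openness and convexity of $\sS$ via the spectral band on $\mB_{n,K}$ supplied by the $\varepsilon^*\mI$ perturbation and the corrected B\"ohning bound, block-separable strict convexity of $h(\vs;\cdot)$ for \cref{assumptionA4}, closed-form smoothness of $\barparam$ for \cref{assumptionA6}, boundedness of the data for the moment conditions), and then invoke \cref{proposition_stationary,prop_Lyapunov_function,theorem_consistency}. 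Your observation that the convex-combination form of \cref{eq_s_step} traps blocks $\vs_2,\dots,\vs_6$ in a compact convex subset of $\sS$ is exactly the content of the paper's Lemmas on bounded recursions, and your identification of the gating block $\vs_{n,1}$ as the one statistic that carries the current iterate $\vomega_n=\barparam(\vs_n)$ and therefore resists a uniform bound is precisely the difficulty the paper itself flags.

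The gap is in your proposed resolution of \cref{assumptionA10} for that block. You claim that once the iterates enter a sublevel set $\{\lyap\le M_1\}$, the drift inequality $\pscal{\nabla\lyap(\vs)}{\veta(\vs)}\le 0$ ``prevents escape.'' This is not a valid argument for a stochastic recursion: the non-positive \emph{mean} drift controls only the deterministic ODE limit, while the actual update $\vs_{n+1}=\vs_n+\gamma_{n+1}(\veta(\vs_n)+\xi_{n+1})$ contains a martingale increment $\xi_{n+1}$ that can carry the trajectory out of any sublevel set unless one either truncates the updates (the Chen / Andrieu--Moulines--Priouret constructions the paper explicitly declines to perform) or establishes a supermartingale inequality for $\lyap(\vs_n)$ that controls the second-order terms $\gamma_{n+1}^2\|\xi_{n+1}\|^2$ and the Taylor remainder of $\lyap$ --- none of which you carry out, and the latter would anyway require moment and smoothness bounds on $\lyap$ that are not available here because $\bars_1$ is unbounded in $\vs$. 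The paper does not close this gap either: its verification of \cref{assumptionA10} covers only $\vs_{n,2},\dots,\vs_{n,6}$, and a remark concedes that stability of $\vs_{n,1}$ is adopted as a working assumption. So your proposal should either follow the paper and state the $\vs_1$ stability as a standing hypothesis, or supply a genuine truncation/supermartingale argument; the sublevel-set trapping sketch as written would not survive scrutiny.
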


\subsection{\Mdm models for discrete output data} \label{sec_SMLMoE}
In this situation, the output variable $\ry \in \sY:= [M]$, $M \in \sN$, is a discrete response variable. Furthermore, the multinomial (multiclass) logistic regression expert network is defined via the conditional probability function as follows:
\begin{align*}
    \ex_k(\ry=m;\vv(\vx;\vupsilon)):=\ex_k(\vv_{y_n}(\vx_n;\vupsilon)) = \frac{\exp\left(\evv_{m,k}(\vx)\right)}{\sum_{l=1}^{M} \exp\left(\evv_{l,k}(\vx)\right)},~\forall k \in [K],~\forall m\in[M].
\end{align*}
Here the expert function $\vv=(\vv_m)_{m\in[M]}=(\evv_{m,k})_{m\in[M],k\in[K]}$ is modeled as polynomial function of the input variable $\vx$ as follows:
\begin{align}
	\evv_{m,k}(\vx) &= 
    \sum_{d=0}^{D_V}\vupsilon_{m,k,d}^\top \vx^d=\sum_{d=0}^{D_V}\left(\sum_{p=1}^{P}\upsilon_{m,k,d,p}\evx_p^d\right) \text{ for any $\vupsilon_{m,k,d} = (\evupsilon_{m,k,d,p})_{p\in[P]}\in \sR^P$}.\nn
\end{align}
Then, let $\vomega = \left(\vomega_k\right)_{k\in[K]}$ where $\vomega_k = \left(\omega_{k,d,p}\right)_{  d\in\{0,\ldots,D_W\},p\in [P]}$ and $\vupsilon=(\vupsilon_m)_{m\in[M]}=(\vupsilon_{m,k})_{m\in[M],k\in[K]}$ where $\vupsilon_{m,k} = \left(\evupsilon_{m,k,d,p}\right)_{d\in\{0,1,\ldots,D_V\},p\in [P]}$ be the tuples of unknown coefficients with the maximum degrees $D_{W}$ and $D_{V}$ of polynomials for the softmax-gated and expert functions, respectively. 
Then, we define 
\begin{align*}
	\ex_k(\ry=m;\vv(\vx;\vupsilon)):=\ex_k(\vv_{y_n}(\vx_n;\vupsilon)), ~\forall k \in [K].
\end{align*}
Finally, the unknown parameters of the \mdm models are denoted as follows: $\vtheta = \left(\vomega_k, \vupsilon_{m,k}\right)_{m\in[M],k \in [K]}.$

\subsubsection{Identifiability of \mdm models}
Apart from addressing the identifiability of the parameters from in the \mdg models in \cref{section_identifiability_SGMoE}, we also propose an additional constraint on the parameters, which is also considered in the study of the convergence rate of parameter estimation for \mdm models in \cite{nguyen_general_2024}. Specially, for the multinomial logistic regression expert network, we impose the following constraint: without loss of generality, $\left\{\evupsilon_{M,k,d,p}\right\}_{d\in\{0,\ldots,D_W\},p\in [P]} = \zero$ for all $k\in[K]$ such that
\begin{align*}
	  \ex_k\left(\vv_{M}(\vx)\right) = 1 - \sum_{m=1}^{M-1}\ex_k\left(\vv_{m}(\vx)\right), ~\ex_k\left(\vv_{m}(\vx)\right) =\frac{\exp\left(\evv_{m,k}(\vx)\right)}{1+\sum_{l=1}^{M-1} \exp\left(\evv_{l,k}(\vx)\right)}, ~\forall m \in [M-1].
\end{align*}
\subsubsection{Incremental MM algorithm for \mdm models}\label{section_onlineMM_SGMLMoE}
Let $ \I(z,l)$ be an indicator that maps $[M]\times [M]\rightarrow \{0,1\}$ where $\I(z,l)=1$ if $z=l$ and 0, otherwise. We can choose that function as follows: $ \I(z,l)=[\prod_{q=1}^{M}(z-q)]/\left[(z-l)(z-1)!(M-z)!(-1)^{M-z}\right]$. Then, given $r_{d,l,p,k}=\I(y,l)\tau_{k}\evx_p^d$, we denote
\begin{align*}
    \vr_n&=[r_{n,0,1,1,1},\dots,r_{n,D_V,1,1,1},\dots,r_{n,D_V,M,1,1},\dots,r_{n,D_V,M,P,1},\dots,r_{n,D_V,M,P,K}]^{\top}.
\end{align*}
Given any $t \in \sN, n \in \sN$, and any matrix $\mA \in \sR^{tn\times tn}$, we define ${\bdiag}_t(\mA)\in \sR^{nt\times t}$ as follows:
\begin{align*}
    {\bdiag}_t(\mA)_{[it+1:it+t],[1:t]}=\mA_{[it+1:it+t],[it+1:it+t]},\quad \forall i \in \{0,\dots,n-1\}.
\end{align*}

For any $K \in \sN$ and $M \in \sN$, let $\mB_{n,K}$, $\mB_{n,M}$ and $\vtau_{n,:}$ to have the same definition as the previous \cref{section_onlineMM_SGMoE}.
Now we go into the main part of finding the surrogate function of \cref{multi_D_lossfunc} via \cref{proposition_surrogate_construction_SGMLMoE}, which is proved in \cref{proof_proposition_surrogate_construction_SGMLMoE}.
\begin{proposition}\label{proposition_surrogate_construction_SGMLMoE}
The construction of the exponential family majorizer surrogate construction for \mdm models is as follows:
\begin{align}
    -\log s_{\vtheta}(y_n\mid \vx_n)
    &\le C_{n,k}-\left(\vomega^{\top}\vxi_n\right)-\left(\vupsilon^{\top}\vr_n \right)\nn\\
    &\quad+\Bigg\{\underline{\sfg}(\vomega_n;\vx_n)+ (\vomega-\vomega_n)^{\top}\nabla \underline{\sfg}(\vomega_n;\vx_n)+\frac{1}{2}(\vomega-\vomega_n)^{\top}\mB_{n,K}(\vomega-\vomega_n)\Bigg\} 
   \nn\\
   &\quad +\displaystyle\sum_{k=1}^{K}\tau_{n,k}\Bigg\{\underline{\ex}(\vc_{n,k};\vx_n)+ (\vc_{k}-\vc_{n,k})^{\top}\nabla \underline{\ex}(\vc_{n,k};\vx_n)\nonumber\\
   &\quad +\frac{1}{2}(\vc_{k}-\vc_{n,k})^{\top}\mB_{n,M}(\vc_{k}-\vc_{n,k})\Bigg\}=:-\log[g(\vtheta, \vx_{n}, y_{n}; \vtheta_{n})],\label{eq_surrogate_Taylor_SGMLMoE}
\end{align}
where:
\begin{align*}
    \vc_{n,k}&=[\vupsilon_{n,1,k,0,1},\dots,\vupsilon_{n,1,k,D_V,1},\dots,\vupsilon_{n,M,k,D_V,1},\dots,\vupsilon_{n,M,k,D_V,P}]^{\top},\\
     \underline{\ex}(\vc_{n,k};\vx_n)&=\log\left(1+\sum_{l=1}^{M-1} \exp\left(\evv_{n,l,k}(\vx_n)\right)\right), \quad \underline{\sfg}(\vomega_n;\vx_n)=\log\left(1+\sum_{k=1}^{K-1} \exp\left(\evw_{n,k}(\vx_n)\right)\right),\\
    \tau_{n,k}&=\frac{\sfg_k\left(\vx_n; \vomega_n \right) \ex_k(\vv_{y_n}(\vx_n;\vupsilon_n))}{\sum_{l=1}^{K}\sfg_l\left(\vx_n; \vomega_n \right) \ex_l(\vv_{y_n}(\vx_n;\vupsilon_n))},
\end{align*}
\end{proposition}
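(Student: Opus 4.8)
The plan is to build the surrogate in \cref{eq_surrogate_Taylor_SGMLMoE} by the same three-stage majorization used for the Gaussian case in \cref{proposition_surrogate_construction_SGMoE}, the only genuinely new ingredient being a \emph{second} application of the corrected B\"ohning-type quadratic bound, now to the multinomial-logistic \emph{expert} normalizers. First I would apply the standard EM/Jensen step to the mixture: writing $s_{\vtheta}(y_n\mid\vx_n)=\sum_{k=1}^K \sfg_k(\vx_n;\vomega)\,\ex_k(\vv_{y_n}(\vx_n;\vupsilon))$ and using concavity of $\log$ with the current posterior weights $\tau_{n,k}$ of \cref{proposition_surrogate_construction_SGMLMoE} (which sum to one) gives
\[
-\log s_{\vtheta}(y_n\mid\vx_n)\le \sum_{k=1}^K \tau_{n,k}\log\tau_{n,k}-\sum_{k=1}^K\tau_{n,k}\log\sfg_k(\vx_n;\vomega)-\sum_{k=1}^K\tau_{n,k}\log\ex_k(\vv_{y_n}(\vx_n;\vupsilon)),
\]
with equality at $\vtheta=\vtheta_n$. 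This separates the gating and expert contributions and produces an additive $\tau$-entropy constant, which will be absorbed into $C_{n,k}$.

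Next I would treat the gating block exactly as in \cref{proposition_surrogate_construction_SGMoE}. Using the identifiability parameterization ($w_K\equiv 0$) together with $\sum_k\tau_{n,k}=1$, the term $-\sum_k\tau_{n,k}\log\sfg_k$ collapses to $\underline{\sfg}(\vomega;\vx_n)-\sum_k\tau_{n,k}w_k(\vx_n)$, whose linear part equals $-\vomega^\top\vxi_n$ by definition of $\vxi_n$. The convex log-sum-exp map $\vomega\mapsto\underline{\sfg}(\vomega;\vx_n)$ is then majorized by its second-order Taylor expansion at $\vomega_n$ with curvature $\mB_{n,K}$; this is precisely where the corrected version of Lemma~2 of \cite{bohning1992multinomial}, established in \cref{sec_technical_results}, is invoked. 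Since each $w_k$ is a polynomial feature map of $\vx_n$, the chain rule turns the Loewner bound on the softmax Hessian into the outer-product form $\hat{\vx}_n\hat{\vx}_n^\top$ appearing in \cref{eq_define_B_n_with_noise}, while the additive $\varepsilon^\star\mI$ guarantees strict positive-definiteness.

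I would then repeat this argument for each multinomial-logistic expert. For fixed $k$, the constraint $\evv_{M,k}\equiv 0$ gives $-\log\ex_k(\vv_{y_n}(\vx_n;\vupsilon))=\underline{\ex}(\vc_k;\vx_n)-\sum_{m}\I(y_n,m)\,\evv_{m,k}(\vx_n)$; summing the linear parts over $k$ with weights $\tau_{n,k}$ and using the definition of $\vr_n$ collects them into $-\vupsilon^\top\vr_n$. Each expert normalizer $\vc_k\mapsto\underline{\ex}(\vc_k;\vx_n)$ is again a log-sum-exp over $M-1$ free classes, so the same corrected bound applies with curvature $\mB_{n,M}$, yielding the quadratic term in the final brace of \cref{eq_surrogate_Taylor_SGMLMoE}; weighting by $\tau_{n,k}$ and summing over $k$ completes the expert block. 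Folding the $\tau$-entropy and the current-iterate evaluations into $C_{n,k}$ produces the stated right-hand side, after which one checks that it has the exponential-family form of \cref{assumptionA1} and satisfies the majorizer inequality \cref{eq_minorizer_fct} with equality at $\vtheta_n$, both of which hold by construction.

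The main obstacle is the curvature step: verifying that $\mB_{n,K}$ and $\mB_{n,M}$ dominate, in the Loewner order and uniformly in their arguments, the Hessians of the gating and expert log-sum-exp maps. The naive bound of \cite{bohning1992multinomial} is not quite valid, so the crux is the corrected inequality $\diag(\bm{p})-\bm{p}\bm{p}^\top\preceq \tfrac34\mI_{K-1}-\tfrac{1}{2(K-1)}\bm{1}\bm{1}^\top$ on the softmax simplex (and its $M$-class analogue), combined with a uniform eigenvalue control ensuring the surrogate parameters stay in the admissible set $\sS$ so that the subsequent argmin update \cref{eq_argmin_step} of \cref{algorithm_onlineMM} is well-defined. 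Once this Loewner domination is in hand, every remaining step is routine bookkeeping matching coefficients to $\vxi_n$, $\vr_n$, and $\vc_{n,k}$.
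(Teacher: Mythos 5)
Your proposal follows essentially the same route as the paper's proof: a Jensen/EM decomposition with posterior weights $\tau_{n,k}$ (the paper phrases this via the Lange inequality of \cref{lemma_multi_D_inq1}, which with $f=-\log$ and unit weights is exactly your concavity step), followed by a second-order Taylor majorization of the gating log-sum-exp using the corrected B\"ohning bound of \cref{prop_theo3} combined with the Kronecker-product monotonicity of \cref{Kronecker_prod_property}, and then the identical Taylor--B\"ohning argument applied to each expert's multinomial-logistic normalizer with curvature $\mB_{n,M}$, weighted by $\tau_{n,k}$. The decomposition, the key lemmas, and the bookkeeping into $\vxi_n$, $\vr_n$, and $C_{n,k}$ all match the paper's argument, so the proposal is correct.
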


\begin{algorithm}[H]
\caption{Incremental MM algorithm for \mdm models}
\label{algorithm_onlineMM_SGMLMoE}
\begin{algorithmic}[1]
    \Require Incremental data points $\{(\vx_{n},\vy_{n}):n \ge 0\}$, positive step sizes $\{\gamma_{n+1}, n \geq 1 \}$ in  $\ooint{0,1}$, and an initial value $\vs_0 = (\vs_{0,1},\ldots,\vs_{0,6}) \in \sS$ such that:
    \begin{align*}
        \mat(\vs_{0,2})\succeq \zero, \quad {\bdiag}_{\iota}^{-1}({\mat}_{\iota}(\vs_{0,4}))\succeq \zero, \text{ where } \iota = MP(D_V+1). 
    \end{align*}
    
    \Ensure A $\sT$-valued sequence $\{\param_n, n \ge 0\}$ and a $\sS$-valued sequence $\{\vs_n, n \ge 0\}$:

    \For{$n = 0,1,\dots,$}
        \State Compute
            \begin{align*}
            \vs_{n+1,1}&=\vs_{n,1}+\gamma_{n+1}\left(-\vs_{n+1}+\nabla \underline{\sfg}(\vw_{n};\vx_n)-\mB_{n+1,K}\vw_{n}-\vs_{n,1}\right),\\
            \vs_{n+1,2}&=\vs_{n,2}+\gamma_{n+1}\left[\frac{1}{2}\vect\left(\mB_{n+1,K}\right)-\vs_{n,2}\right],\\
            \vs_{n+1,3}&=\vs_{n,3}\!+\!\gamma_{n+1}\!\left(\!-\vr_{n+1}\!+\!\vect\!\left(\![\tau_{n+1,k}\!\nabla \!\underline{\ex}(\!\vupsilon_{n};\vx_n\!)-\!\tau_{n+1,k}\!\mB_{\!n+1,M}\vupsilon_{n}\!]_{k=1,\dots,K}\!\right)\!-\!\vs_{n,3}\!\right)\!,\\
            \vs_{n+1,4}&=\vs_{n,4}+\gamma_{n+1}\left[\frac{1}{2}\vect\left(\vtau_{n+1,:}\otimes\mB_{n+1,M}\right)-\vs_{n,4}\right],\\
        \vw_{n+1}& = -\left(\mat(\vs_{n+1,2}) + \mat(\vs_{n+1,2})^{\top}\right)^{-1} \vs_{n+1,1},\\
        \vupsilon_{n+1} &= -\left[{\bdiag}_{\iota}^{-1}({\mat}_{\iota}(\vs_{n+1,4}))+{\bdiag}_{\iota}^{-1}({\mat}_{\iota}(\vs_{n+1,4}))^{\top}\right]^{-1} \vs_{n+1,3},\\
         \vtheta_{n+1} &= \left(\vw_{n+1,k},\vupsilon_{n+1,k}\right)_{k\in[K]}.
        \end{align*}
    \EndFor
\end{algorithmic}
\end{algorithm}

We provide the complete derivation of the stochastic recursion underlying
\cref{algorithm_onlineMM_SGMLMoE} in \cref{proof_algorithm_onlineMM_SGMLMoE}.
The resulting sequence of iterates fits naturally within the general
incremental MM framework.
Consequently, the convergence properties of the algorithm follow from
standard stochastic-approximation arguments, after verifying the
model-dependent assumptions.
This leads to the stability and consistency result stated below, whose proof
is given in \cref{proof_assumptions_theorem_onlineMM_SGMLMoE}.

\begin{theorem}[Stability and consistency for \cref{algorithm_onlineMM_SGMLMoE}]
\label{theorem_onlineMM_SGMLMoE}
Consider the softmax-gated multinomial logistic MoE model introduced in
\cref{sec_SMLMoE}, and assume that both the covariates and the responses are
bounded.
With the surrogate construction specified in
\cref{eq_surrogate_Taylor_SGMLMoE} and the parameter-update rules defined in
\cref{algorithm_onlineMM_SGMLMoE}, the MoE-specific assumptions
\cref{assumptionA1,assumptionA2,assumptionA3,assumptionA4,assumptionObjectiveC1,assumptionA5,assumptionA6,assumptionA7,assumptionA10}
are verified (see \cref{proof_assumptions_theorem_onlineMM_SGMLMoE}).
The remaining hypotheses required to apply the general stochastic-approximation
convergence theory are treated as standing assumptions for this model
instantiation, namely
\cref{assumptionA8,assumptionA9,assumption_compact_sublevel}
(Lyapunov regularity, stability, and compactness of sublevel sets).
Moreover, the step-size condition \cref{assumptionA11} can be enforced by
construction, for instance by choosing $\gamma_n=\gamma_0 n^{-\alpha}$ with
$\alpha\in\left(\tfrac12,1\right]$ and $\gamma_0\in(0,1)$.
Under the verified assumptions together with these standing conditions, all
conclusions of
\cref{proposition_stationary,prop_Lyapunov_function,theorem_consistency}
hold for \cref{algorithm_onlineMM_SGMLMoE}.
\end{theorem}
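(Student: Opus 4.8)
The plan is to mirror the argument used for the Gaussian case in \cref{theorem_onlineMM_MoE}: first exhibit the exponential-family structure of the surrogate in \cref{eq_surrogate_Taylor_SGMLMoE}, then verify the model-specific hypotheses \cref{assumptionA1,assumptionA2,assumptionA3,assumptionA4,assumptionObjectiveC1,assumptionA5,assumptionA6,assumptionA7,assumptionA10}, and finally invoke \cref{theorem_consistency} to transfer its conclusions to \cref{algorithm_onlineMM_SGMLMoE}. Concretely, I would read off from the recursion the four sufficient statistics $(\vs_1,\vs_2,\vs_3,\vs_4)$ and identify $\phib(\vtheta)$ as the vector collecting the linear and (vectorised) quadratic monomials in the gating parameters $\vomega$ and expert parameters $\vupsilon$, with $\psi$ the associated normaliser; the quadratic-in-parameter form of \cref{eq_surrogate_Taylor_SGMLMoE} then makes the minimiser $\barparam(\vs)$ explicit as the solution of the two linear systems defining the parameter updates in \cref{algorithm_onlineMM_SGMLMoE}, which is exactly \cref{assumptionA1} together with \cref{assumptionA4}.

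The crux is verifying that \cref{eq_surrogate_Taylor_SGMLMoE} is a genuine majorizer with \emph{strictly} positive-definite curvature. Here I would combine two ingredients. First, the standard concavity (Jensen) bound on the gating log-sum produces the responsibility weights $\tau_{n,k}$ and the affine terms. Second, and more delicately, both the softmax-gating partition function and the per-expert multinomial-logistic partition function are dominated by a fixed quadratic through the corrected B\"ohning bound, giving the curvature matrices $\mB_{n,K}$ and $\mB_{n,M}$ of \cref{eq_define_B_n_with_noise}. The additive regularisation $\varepsilon^\star \mI$ in \cref{eq_define_B_n_with_noise} is what upgrades the merely positive-semidefinite B\"ohning curvature to a strictly positive-definite one; boundedness of the covariates then bounds the eigenvalues of $\hat{\vx}_n\hat{\vx}_n^\top$, so the eigenvalues of $\mB_{n,K}$ and $\mB_{n,M}$ lie in a fixed interval $(1/M_0,M_0)$. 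This uniform spectral control is precisely what makes the constraint set $\sS$ (the multinomial-logistic analogue of \cref{eq_define_S2_main}, encoding the PSD conditions on the reshaped blocks $\vs_2,\vs_4$) open and convex with $\bars$ taking values in it, yielding \cref{assumptionA2}; it guarantees a unique root of $\vtheta\mapsto -\nabla\psi(\vtheta)+\nabla\phib(\vtheta)^\top\vs$ for every $\vs\in\sS$, i.e.\ \cref{assumptionA4}; and, via the implicit function theorem applied to this stationarity equation, it delivers the continuous differentiability of $\barparam$ in \cref{assumptionA6}.

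The remaining model-specific assumptions are comparatively routine under the boundedness hypothesis. Twice continuous differentiability of $\psi$ and $\phib$ (\cref{assumptionA5}) and continuous differentiability of the objective (\cref{assumptionObjectiveC1}) follow because the softmax, multinomial-logistic, and polynomial maps are smooth and a dominated-convergence argument applies on the compact data range. \cref{assumptionA3} and the second-moment bound \cref{assumptionA7} hold since $\bars$ is built from bounded quantities ($\tau_{n,k}\in[0,1]$, polynomial features of a bounded $\vx_n$, the indicator $\I(y,\cdot)$, and the spectrally bounded curvature blocks), so the relevant expectations exist and are uniformly bounded on compacts. For the stability assumption \cref{assumptionA10}, I would exploit the same spectral confinement: every increment in the recursion lands in the compact region carved out by the eigenvalue interval $(1/M_0,M_0)$ and the PSD constraints in $\sS$, so the convex-combination update with $\gamma_{n+1}\in(0,1)$ keeps $\{\vs_n\}$ inside a fixed compact subset of $\sS$ almost surely, in the spirit of \cref{eq:sninS}.

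I expect the main obstacle to be the simultaneous verification of \cref{assumptionA4} and the uniform spectral bound, i.e.\ making the corrected B\"ohning curvature strictly positive definite while keeping its eigenvalues bounded away from $0$ and $\infty$ uniformly over the bounded data and over $\sS$; this is exactly where the erroneous constant in Lemma~2 of \cite{bohning1992multinomial} must be replaced by the corrected bound and where the role of $\varepsilon^\star$ is essential, and it is subtler here than in the Gaussian case because \emph{two} nested softmax layers (gating and experts) each require a valid quadratic majorizer. Once \cref{assumptionA1,assumptionA2,assumptionA3,assumptionA4,assumptionObjectiveC1,assumptionA5,assumptionA6,assumptionA7,assumptionA10} are in place and \cref{assumptionA8,assumptionA9,assumption_compact_sublevel,assumptionA11} are taken as standing or enforced by design, \cref{proposition_stationary} identifies the limit set, \cref{prop_Lyapunov_function} supplies the Lyapunov function $V(\vs)=\PE[f(\barparam(\vs);\rvz)]$, and \cref{theorem_consistency} delivers the almost-sure convergence $\limsup_n d(\vs_n,\mathbb{K}\cap\sF)=0$ and $\limsup_n d(\param_n,\sL)=0$, completing the proof.
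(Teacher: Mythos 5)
Your overall strategy coincides with the paper's: exhibit the exponential-family form of the surrogate via the corrected B\"ohning bound (\cref{prop_theo3} combined with \cref{Kronecker_prod_property}), read off $\barparam(\vs)$ from the two linear systems in the parameter updates, verify the model-specific assumptions, and invoke \cref{theorem_consistency}. Your treatment of \cref{assumptionA1,assumptionA2,assumptionA3,assumptionA4,assumptionObjectiveC1,assumptionA5,assumptionA6} matches the paper's appendix argument in substance.

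The genuine gap is in your claimed verification of \cref{assumptionA7} and \cref{assumptionA10}. The paper's own proof for the multinomial logistic case does \emph{not} establish these: the remark closing \cref{proof_assumptions_theorem_onlineMM_SGMLMoE} states explicitly that \cref{assumptionA7} ``cannot be verified due to the ambiguity in the definition of the $\bdiag$ function'' and that \cref{assumptionA10} for the components $\vs_1$ and $\vs_3$ ``remains unproven'' (only the Gaussian case has a Lemma~\ref{lemma_assump_A7}-type argument, and even there $\vs_{n,1}$ is left as a working assumption). Your sketch does not close this. For \cref{assumptionA10}, the increments of $\vs_{n,1}$ and $\vs_{n,3}$ contain the terms $\nabla \underline{\sfg}(\vomega_n;\vx_n)-\mB_{n+1,K}\vomega_n$ and $\tau_{n+1,k}\nabla\underline{\ex}(\vupsilon_n;\vx_n)-\tau_{n+1,k}\mB_{n+1,M}\vupsilon_n$, which are bounded only if the iterates $\vomega_n=\barparam(\vs_n)$ and $\vupsilon_n$ are bounded; but $\vomega_n$ and $\vupsilon_n$ are obtained by inverting $2\mat(\vs_{n,2})$ and ${\bdiag}_{\iota}^{-1}(\mat_{\iota}(\vs_{n,4}))+{\bdiag}_{\iota}^{-1}(\mat_{\iota}(\vs_{n,4}))^{\top}$ and applying them to $\vs_{n,1}$ and $\vs_{n,3}$, so boundedness of the parameter iterates presupposes exactly the a~priori control of $\vs_{n,1},\vs_{n,3}$ you are trying to prove --- the argument is circular, and \cref{lemma_fir_for_A9} does not apply because the drift is not a bounded function of $n$ alone. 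A similar issue affects \cref{assumptionA7}: $\bars(\barparam(\vs);\rvz)$ involves $\barparam(\vs)$ through these inverses, and the paper's multinomial constraint set $\sS$ only imposes $\mS_2,\mS_4\succeq\zero$ (no eigenvalue interval $(1/M_0,M_0)$ as in \cref{eq_define_S2_main}), so the inverse is not uniformly controlled near the boundary of the PSD cone; your appeal to a ``fixed interval $(1/M_0,M_0)$'' imports structure from the Gaussian case that the paper does not set up here. The correct resolution, consistent with the paper, is to either move \cref{assumptionA7,assumptionA10} into the list of standing assumptions for this instantiation or supply the missing uniform spectral lower bound on $\mat(\vs_{n,4})$ and a non-circular confinement argument for $\vs_{n,1},\vs_{n,3}$; as written, your proof overclaims on exactly the two points the authors concede.
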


\subsection{Why have other variants of incremental stochastic MM and EM algorithms not been successfully implemented in practice for the \md models?}\label{sec_why_fails_MM_EM}

In the literature, numerous incremental stochastic EM and MM algorithms have been proposed to optimize complex latent-variable models. However, most of these methods are not directly applicable to the \md model in \cref{eq_def_SGMoE}. A common limitation is that many online EM variants are formulated under an \emph{exponential-family} assumption on the target model, whereas \cref{eq_def_SGMoE} does not generally belong to an exponential family. For example, \cite{cappe2009line} present an online EM procedure assuming that the complete-data model is in an exponential family (see Assumption~1(a) therein). While this is stated as a structural assumption on the model itself, a careful reading of their analysis shows that the key ingredient exploited in the convergence argument is the availability of a surrogate/objective representation with an exponential-family-like structure, namely a decomposition in terms of sufficient statistics and a parameter update given by a deterministic maximization map. In our setting, we do not impose an exponential-family form on the \md model \cref{eq_def_SGMoE}; instead, we require this structure only for the \emph{majorizer surrogate} (cf.~\cref{assumptionA1,assumptionA2,assumptionA3,assumptionA4,assumptionObjectiveC1}), which yields a strictly weaker requirement and allows a more general treatment. Subsequent extensions, such as \cite{cappe2011online} (Hidden Markov Models) and \cite{cappe2009online} (state-space models), build on the same framework and retain the exponential-family condition, and thus they likewise do not directly cover \cref{eq_def_SGMoE}. Other EM-based adaptations exhibit similar constraints: the Fast iEM algorithm of \cite{karimi_global_2019} introduces a bias-correction step but still leverages exponential-family structure; the SAEM approach in \cite{kuhn_properties_2020} employs an ergodic Markov sampling kernel yet preserves the same requirement; and the Monte Carlo E-step strategy of \cite{rohde2011online}, which samples from distributions defined by the current parameter iterate, again relies on exponential-family assumptions.

MM-based schemes often fail for various reasons, primarily due to their reliance on overly restrictive regularity conditions. For instance, the stochastic MM algorithm proposed in \cite{mairal2013stochastic} requires strict convexity and Lipschitz continuity conditions that are typically violated by \cref{eq_def_SGMoE}. The incremental stochastic MM algorithm proposed in \cite{fort_sequential_2024} relies on expectation bounds stated in Assumptions A2(a) and A3(a) of their paper (discussed in \cref{rem_Assumption_A2_A3}), which are generally not satisfied by the surrogate functions defined in \cref{eq_surrogate_Taylor_SGGMoE} and \cref{eq_surrogate_Taylor_SGMLMoE}. These surrogate functions correspond to the original objective function in \cref{eq_def_SGMoE} for the softmax-gated Gaussian and multinomial logistic MoE models for continuous and discrete output data, respectively. In \cite{mairal_incremental_2015}, a variety of surrogate functions are proposed; however, aside from the quadratic majorizer (also adopted in our work), all other surrogates require properties such as L-smoothness, Lipschitz continuity, or convexity. Lastly, the large-scale incremental stochastic MM approach in \cite{karimi_minimization_2022} employs the MISSO method, which is based on the MISO method proposed by \cite{mairal_incremental_2015} and updates surrogate functions only partially at each iteration. However, they require that the surrogate function satisfies Assumption H2 (see \cref{rem_assumption_H2}), and even though this condition is true if $\hat{e}\left(\vtheta; \vtheta_n \right)$ is L-smooth. Nonetheless, formulating such a surrogate for \cref{eq_def_SGMoE} is still extremely challenging.

In summary, there is no existing incremental stochastic variant of the EM or MM algorithm that can be directly applied to SGMoE models, according to our knowledge. The failure of SGMoE to satisfy strict conditions such as Lipschitz continuity, L-smoothness, or convexity is demonstrated in \cref{sec: prop_check for SGMoE}. While these conditions can be rigorously shown to not hold in general, \cref{rem_Assumption_A2_A3} and \cref{rem_assumption_H2} remain unproven.
This is not due to their validity but rather the inherent difficulty of establishing the non-existence of a suitable surrogate function. At best, we can only suggest that constructing such a surrogate appears to be intractable in practice.
\subsubsection{Restrictive regularity conditions in existing incremental stochastic MM frameworks}
We now examine the regularity assumptions imposed by several recent variants of incremental
stochastic MM algorithms and explain why these conditions are not satisfied by the SGMoE
objective considered in this work.

\begin{condition}[Assumptions A2(a) and A3(a) in \cite{fort_sequential_2024}]
\label{rem_Assumption_A2_A3}
Let $g(\vtheta,\vz,\vu)$ denote a surrogate function for the objective, where $\vu$ belongs to an
auxiliary index set $\mathbb{U}$. Assumption~A2(a) in \cite{fort_sequential_2024} requires
\[
\PE\big[|g(\vtheta,\rvz,\vu)|\big] < \infty,
\qquad \forall (\vtheta,\vu)\in\mathbb{T}\times\mathbb{U},
\]
while Assumption~A3(a) further assumes the existence of some $p>1$ such that
\[
\PE\!\left[\sup_{(\vtheta,\vu)\in\mathbb{T}\times\mathbb{U}}
|g(\vtheta,\rvz,\vu)|^p \right] < \infty .
\]
\end{condition}

\begin{condition}[Assumption H2 in \cite{karimi_minimization_2022}]
\label{rem_assumption_H2}
Consider objective functions $\mathcal{L}_n(\vtheta)$ indexed by $n\in[N]$, and let
$\hat{\mathcal{L}}_n(\vtheta;\bar{\vtheta}_n)$ be a majorizing surrogate of $\mathcal{L}_n$ at
$\bar{\vtheta}_n\in\mathbb{T}$. Define
\[
\hat e(\vtheta;\bar{\vtheta}_n)
:= \hat{\mathcal{L}}_n(\vtheta;\bar{\vtheta}_n) - \mathcal{L}_n(\vtheta).
\]
Assumption~H2 in \cite{karimi_minimization_2022} postulates the existence of a constant $L>0$ such
that
\[
\|\nabla \hat e(\vtheta;\bar{\vtheta}_n)\|^2
\le 2L\, \hat e(\vtheta;\bar{\vtheta}_n),
\qquad \forall\,\vtheta\in\mathbb{T}.
\]
\end{condition}

\begin{remark}
The surrogate function constructed in
\cref{eq_surrogate_Taylor_SGGMoE} (see
\cref{proposition_surrogate_construction_SGMoE}) does not satisfy the moment conditions in
\cref{rem_Assumption_A2_A3}. Indeed, if the covariance matrices $\mSigma_k$ are allowed to grow
unbounded while all other components of $\vtheta$ are fixed, the Gaussian log-likelihood term
\[
-\sum_{k=1}^K \tau_{n,k}
\log\!\left(\cN(\vy_n;\vmu_k(\vx_n),\mSigma_k)\right)
\]
diverges due to the $\log|\mSigma_k|$ contribution. Consequently,
$\sup_{(\vtheta,\vu)} |g(\vtheta,\rvz,\vu)|^p$ is not integrable for any $p>1$, violating both
Assumptions~A2(a) and~A3(a) of \cite{fort_sequential_2024}.

Assumption~H2 of \cite{karimi_minimization_2022} is likewise not applicable in our setting.
That framework requires the construction of surrogates directly majorizing the objective
$\mathcal{L}_n$, which in turn relies on strong global regularity properties of
$f(\vtheta;\vz)$ such as Lipschitz continuity of the gradient, $L$-smoothness, or convexity.
As we show below, these properties do not generally hold for the SGMoE objective.
\end{remark}

\subsubsection{Failure of Lipschitz continuity, smoothness, and convexity for SGMoE}
\label{sec: prop_check for SGMoE}

Many existing incremental stochastic MM algorithms rely on strong global regularity assumptions on the objective function, such as Lipschitz continuity, $L$-smoothness, or convexity.  
We now show that these properties fail, in general, for the SGMoE negative log predictive density
\[
f(\vtheta;\vz) = -\log s_{\vtheta}(\vy\mid\vx),
\]
even when covariates and responses are bounded.

Let $\vmu(\vx)$ be a polynomial function of $\vx$. Then there exists a parameter matrix
$\mUpsilon_0$ such that $\vmu_0(\vx)=\vy$.
Define a sequence of parameter vectors
\[
\vtheta_n := (\vomega,\mUpsilon_0,\mSigma_n),
\qquad
\vtheta^\star := (\vomega,\mUpsilon_0,\mSigma^\star),
\]
where $\mSigma_n \succeq \zero$ with $\mSigma_n \downarrow \zero$ and $\mSigma^\star \succeq \zero$ is fixed.

\paragraph{Failure of global Lipschitz continuity.}
For Gaussian experts, when $\vmu_0(\vx)=\vy$, the SGMoE density satisfies
\[
s_{\vtheta_n}(\vy\mid\vx)
\;\ge\;
\sfg_k(\vw(\vx))\,\cN(\vy;\vy,\mSigma_n)
\]
for some $k\in[K]$, and therefore
\[
f(\vtheta_n;\vz)
= -\log s_{\vtheta_n}(\vy\mid\vx)
\;\ge\;
-\log \cN(\vy;\vy,\mSigma_n)
= \tfrac12 \log |\mSigma_n| + C
\;\xrightarrow[n\to\infty]{}\; +\infty.
\]
At the same time, the parameter difference
$\|\vtheta_n-\vtheta^\star\|$ remains bounded for any norm on $\rset^T$.
Hence,
\[
\frac{|f(\vtheta_n;\vz)-f(\vtheta^\star;\vz)|}
{\|\vtheta_n-\vtheta^\star\|}
\;\xrightarrow[n\to\infty]{}\; +\infty,
\]
which rules out global Lipschitz continuity of $f(\vtheta;\vz)$.

\paragraph{Failure of $L$-smoothness.}
The gradient of the Gaussian negative log-likelihood with respect to $\mSigma$ contains terms of the form
$\mSigma^{-1}$ and $\mSigma^{-2}$. Consequently,
\[
\|\nabla_\vtheta f(\vtheta_n;\vz)\|
\;\xrightarrow[n\to\infty]{}\; +\infty
\quad \text{as } \mSigma_n \downarrow \zero,
\]
showing that no global Lipschitz constant on the gradient can exist. Thus, $f$ is not $L$-smooth.

\paragraph{Failure of convexity.}
Let $\vmu_1(\vx)\neq \vy$ and $\vmu_2(\vx)\neq \vy$ satisfy
\[
\vmu_0(\vx)=\tfrac12\big(\vmu_1(\vx)+\vmu_2(\vx)\big),
\]
and define
\[
\vtheta_i := (\vomega,\mUpsilon_i,\mSigma_n), \quad i=1,2,
\]
where $\mUpsilon_i$ corresponds to $\vmu_i$.
As $\mSigma_n \downarrow \zero$, both $f(\vtheta_1;\vz)$ and $f(\vtheta_2;\vz)$ remain finite, since
$\vmu_i(\vx)\neq \vy$.
However,
\[
f\!\left(\tfrac{\vtheta_1+\vtheta_2}{2};\vz\right)
= -\log \cN(\vy;\vy,\mSigma_n)
\;\xrightarrow[n\to\infty]{}\; +\infty.
\]
This violates Jensen’s inequality and shows that the SGMoE objective is not convex, and hence not strongly convex, in general.

These results demonstrate that the SGMoE negative log predictive density does not satisfy global Lipschitz continuity, $L$-smoothness, or convexity. As a consequence, several existing incremental stochastic MM frameworks that rely on these restrictive regularity conditions are not directly applicable to SGMoE models.

\section{Numerical experiments}\label{sec_experimental_studies}
In this section, we conduct comprehensive experiments on both synthetic and real-world datasets to evaluate the effectiveness and robustness of the proposed method. All experiments were conducted using Python 3.12.2 on a standard Unix-based system.

\subsection{Experimental settings}

{\bf Assessment of initialization schemes.} We consider two initialization strategies:
\begin{enumerate}
    \item \emph{Perturbed ground truth initialization:} The initial parameters are generated by adding Gaussian noise to the true data-generating parameters. This strategy mostly serves to test the local stability of the algorithm. 
    \item \emph{Data-driven initialization:} This strategy involves partitioning the first batch of incoming data into clusters using a standard K-means clustering procedure and then compute an initial set of parameters. For real-world data, this is the realistic initialization approach.
\end{enumerate}


{\bf Assessment of regression performance.} Furthermore, we benchmark the regression performance of our algorithm against five baseline stochastic models using a standard train-test evaluation framework. With the regression function in \cref{eq_def_SGaBloME_Expectation}, there are two approaches for assessment. Firstly, for strictly synthetic data, we compute the regression values using the estimated parameters and compare it to the ones computed with true parameters, this is called \emph{estimation error}. Secondly, for both synthetic and real-world data, we measure the ``fit" of the regression line on the testing samples, similarly to how regression models are traditionally bench-marked, this is called \emph{prediction error}. For both approaches, we utilize the following point-wise error metrics: mean squared error (MSE), mean absolute percentage error (MAPE), and normalized root mean squared error (NRMSE). It is also worth noting that, throughout this section, the maximum log-likelihood at each iteration of the incremental stochastic MM algorithm is computed by substituting the parameter estimates from that iteration into \cref{multi_D_lossfunc}.

{\bf Comparative methods.} In this simulation study on MoE models, we compare our proposed method against several widely-used stochastic online or incremental stochastic optimization algorithms, including stochastic approximation or SGD~\cite{robbins_stochastic_1951}, RMSProp~\cite{hinton_neural_2012}, Adam~\cite{kingma_adam_2015}, and Sophia~\cite{liu_sophia_2024}.

\subsection{Experiments on synthetic data}
We now present experiments conducted on synthetic data. Specifically, we consider a dataset with two generating Gaussian distributions, which correspond to two clusters. Each cluster consists of $N = 1,000$ samples. We set the model configurations as follows: $D_W = 1$, $D_V = 1$, $P = 2$, and $Q = 1$. The true parameters, denoted by $\boldsymbol{\omega}^0$, $\boldsymbol{\upsilon}^0$, and $\boldsymbol{\sigma}^0$, are specified as
\begin{align*}
\vomega^{0} &=
\begin{bmatrix}
\begin{bmatrix}
0 & 8 \\
0 & 0
\end{bmatrix},
\begin{bmatrix}
0 & 0 \\
0 & 0
\end{bmatrix}
\end{bmatrix}, \quad 
\vupsilon^{0} =
\begin{bmatrix}
\begin{bmatrix}
\begin{bmatrix} 0 & 0 \end{bmatrix} \\
\begin{bmatrix} -2.5 & 0 \end{bmatrix}
\end{bmatrix}, 
\begin{bmatrix}
\begin{bmatrix} 0 & 0 \end{bmatrix} \\
\begin{bmatrix} 2.5 & 0 \end{bmatrix}
\end{bmatrix}
\end{bmatrix}^\top, \quad
\vsigma^{0} =
\begin{bmatrix}
\begin{bmatrix} 1 \end{bmatrix},
\begin{bmatrix} 1 \end{bmatrix}
\end{bmatrix}.
\end{align*}

A visualization of the simulated dataset is provided in \cref{fig_simulation_dataset_visualization}. The dataset is divided into training and testing subsets, with 80\% of the data allocated to training.  It is important to note that under the \emph{perturbed ground truth initialization} setting, the focus is to analyze the convergence behavior of the MLE, and thus only the training data is used. In contrast, the \emph{data-driven initialization} scheme utilize both the training and testing data to evaluate generalization performance.

\begin{figure}[!ht]
    \centering
    \begin{subfigure}[b]{0.49\textwidth}
        \centering
        \includegraphics[trim={0 0 0 0.75cm},clip,width=\textwidth]{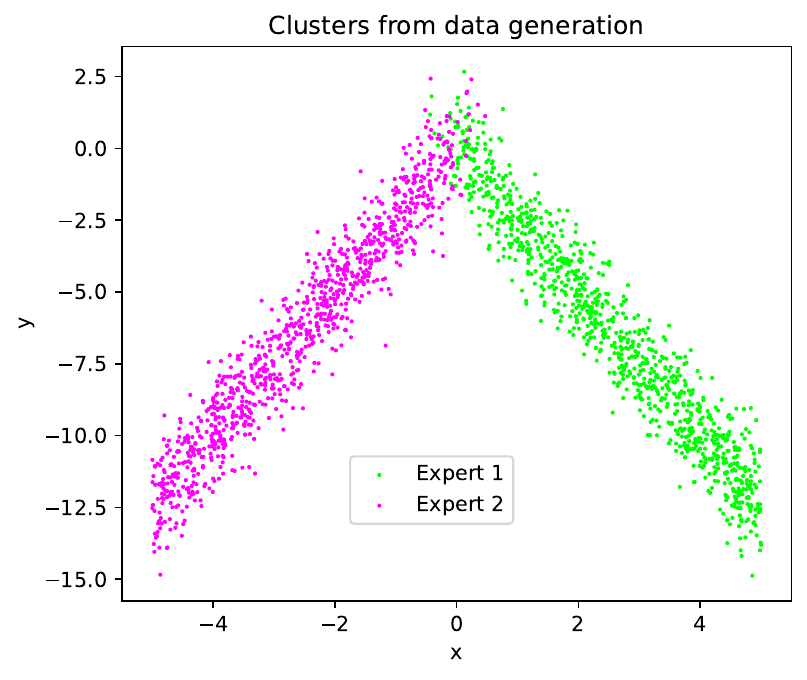} 
        \caption{Typical realization of the synthetic dataset.}
    \end{subfigure}
    \hfill
        \begin{subfigure}[b]{0.49\textwidth}
        \centering
        \includegraphics[trim={0 0 0 0.75cm},clip,width=\textwidth]{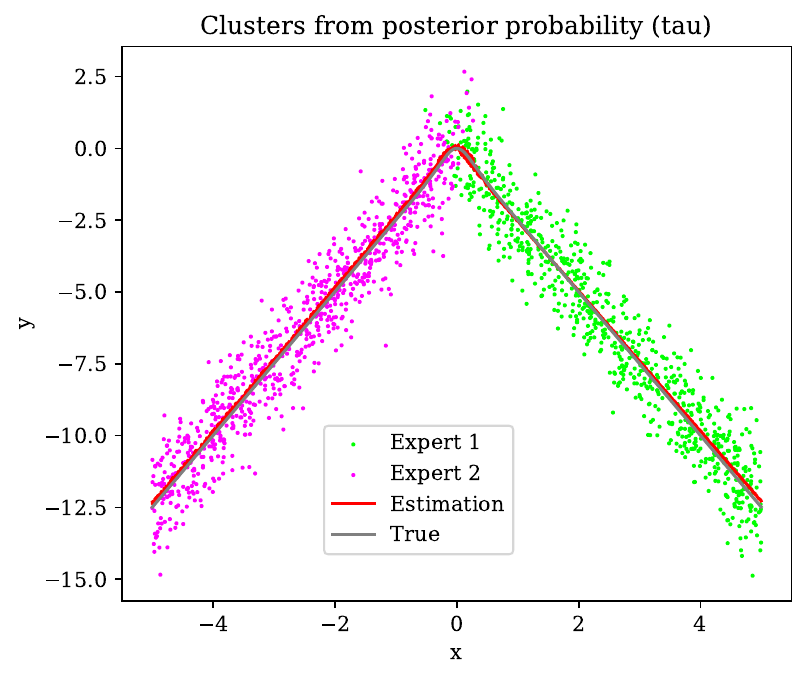} 
         \caption{Estimated clusters and regression functions.}\label{fig_Predicted_mean_line_IDEAL}
    \end{subfigure}
    \caption{Visualization of the synthetic dataset generated from the true parameters and estimated clusters and regression functions by our proposed model with perturbed ground truth initialization in softmax-gated MoE model.}
    \label{fig_simulation_dataset_visualization}
\end{figure}
\subsubsection{Evaluation on synthetic data with perturbed ground truth initialization} \label{fig_ideal_init_lowdime}

We first evaluate the performance of our algorithm under the \textit{perturbed ground truth initialization} setting. More precisely, the noise variables are independently drawn from a Gaussian distribution with mean 0 and variance $1$, and then scaled by a factor of $0.005$. These small perturbations are added to the true parameters to generate the initial values. For each series, the initial latent state $s_{i,0}$ is computed as the average of the corresponding $S(\vz)$ values over the first $85$ observations of $\vz$, using the initialized parameters.

In \cref{fig_ideal_MLE}, we present the progression of the maximum log-likelihood value over $N=1600$ iterations. Furthermore, we apply \textit{Polyak averaging}, a widely adopted technique used to reduce variance in optimization trajectories and to dampen the influence of initially volatile parameter estimates. Starting from a designated iteration $N_0$ (we choose $N_0 = 100$), the averaged sequence is initialized as $\boldsymbol{\vtheta}_{N_0}^P = 0$. For all $n \geq N_0$, the updates follow the rule:
\[
\boldsymbol{\vtheta}_{n+1}^P = \boldsymbol{\vtheta}_n^P + \alpha_{n - N_0 + 1} \left( \boldsymbol{\vtheta}_n - \boldsymbol{\vtheta}_n^P \right), \quad \text{where } \alpha_n \text{ is typically set to } \frac{1}{n}.
\]
In the figure, the true MLE trajectory is plotted in black, and the MLE trajectory with Polyak averaging is shown in red. It can be observed that both trajectories converge, and the red line, which incorporates Polyak averaging, exhibits greater overall stability. Additionally, we plot the trajectories of selected parameter dimensions, including $\vomega^0_{1,0,1}$, $\vupsilon^0_{0,1,0,0}$, and $\vsigma^0_{0,0}$, as shown in \cref{fig_ideal_parameter}. These trajectories clearly demonstrate that our optimization method effectively drives the parameters to converge to their true values. Furthermore, in \cref{fig_Predicted_mean_line_IDEAL}, we visualize the predicted mean values from our model compared to the true mean, providing further insight into the predictive accuracy of the estimated parameters.

\begin{figure}
    \centering
    \begin{subfigure}[b]{0.495\textwidth}
        \centering
        \includegraphics[trim={0 0 0 .9cm},clip,width=\textwidth]{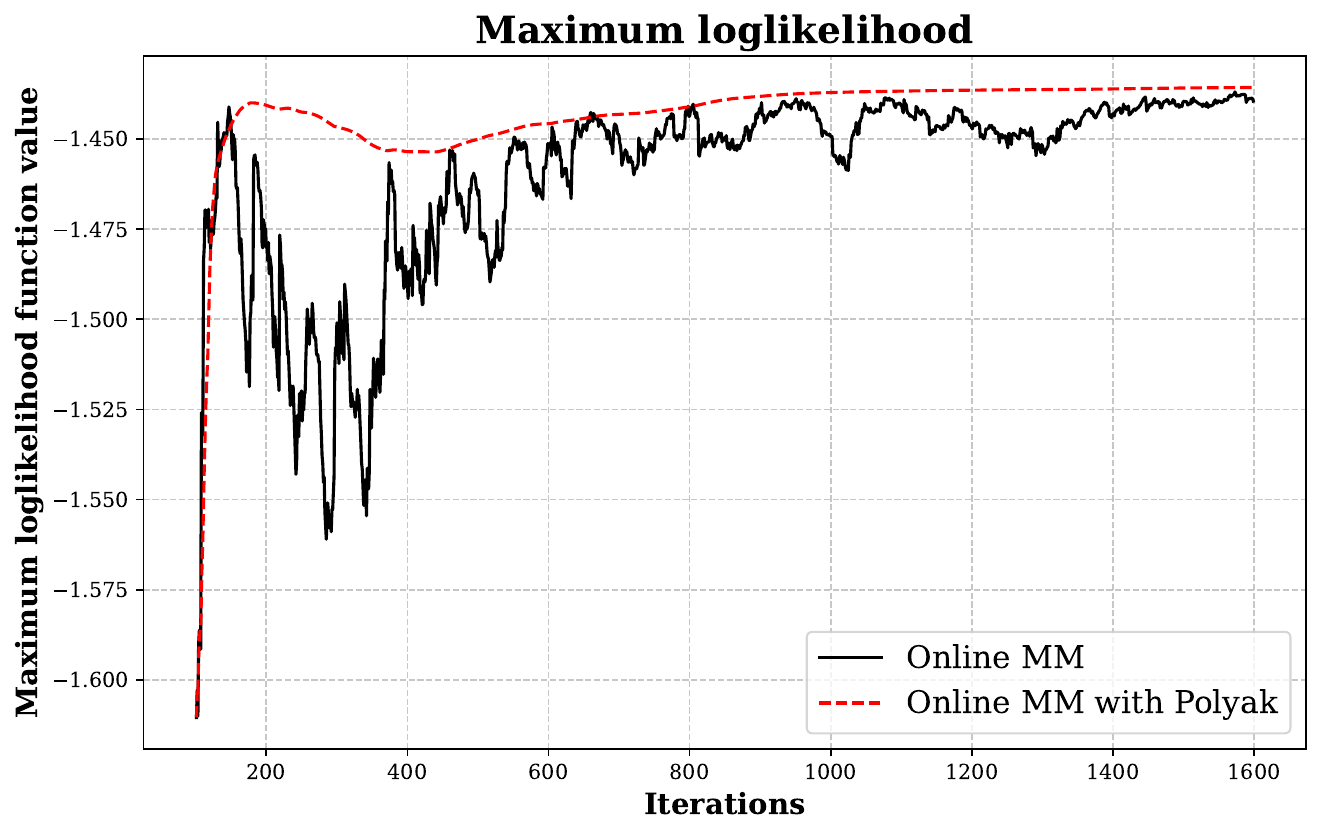} 
        \caption{Loss function with perturbed ground truth initialization.}\label{fig_ideal_MLE}
    \end{subfigure}
    \hfill
    \begin{subfigure}[b]{0.495\textwidth}
        \centering
        \includegraphics[trim={0 .2cm 0 .9cm},clip,width=\textwidth]{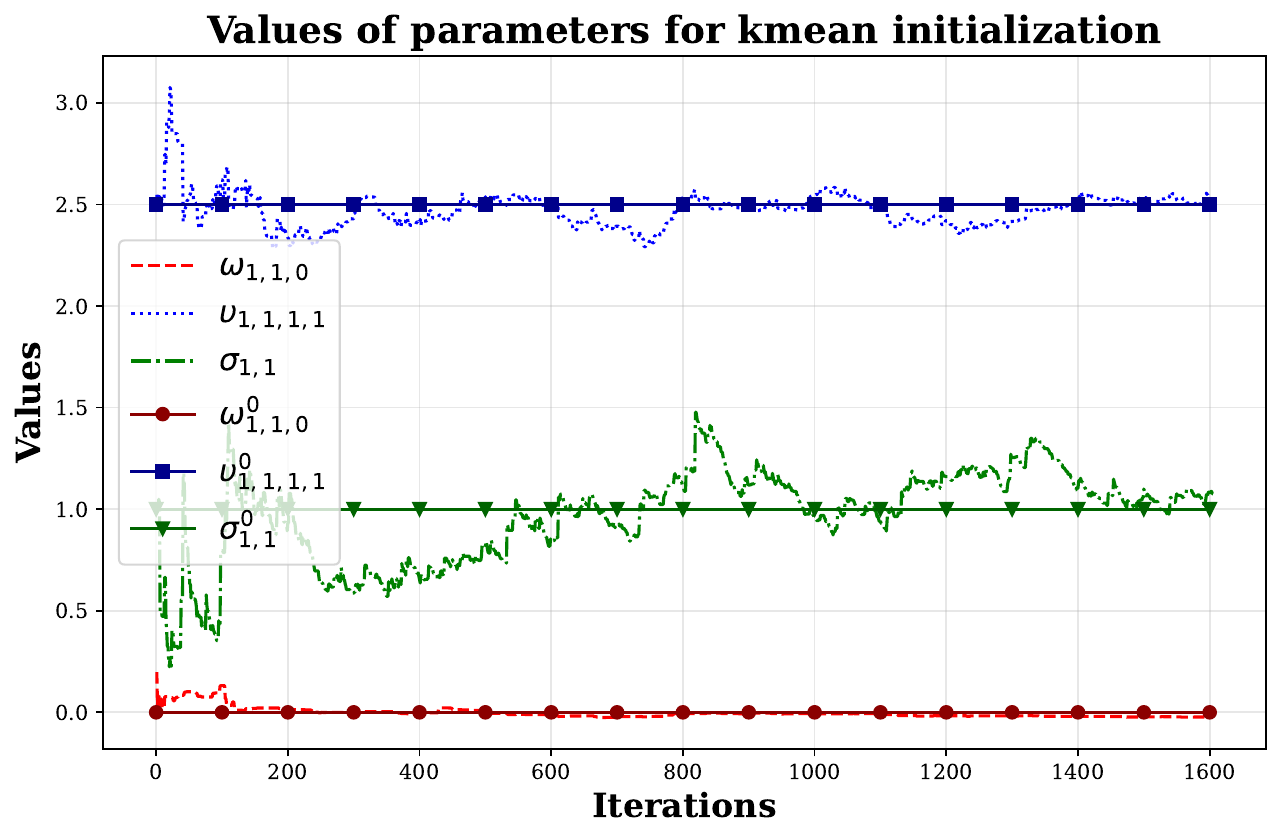} 
        \caption{Parameter estimation with perturbed ground truth initialization.}\label{fig_ideal_parameter}
    \end{subfigure}
    
       \centering
    \begin{subfigure}[b]{0.495\textwidth}
        \centering
        \includegraphics[trim={0 0 0 0.85cm},clip,width=\textwidth]{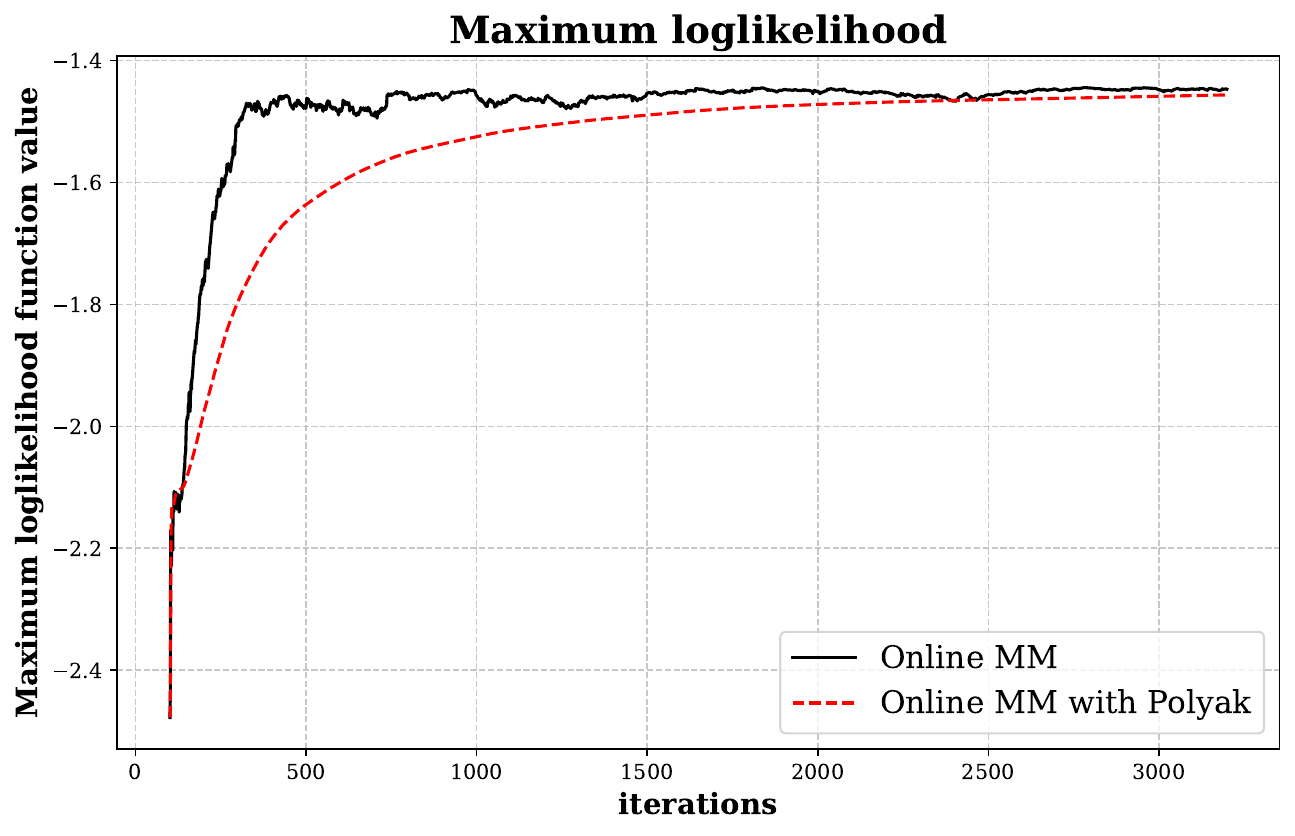} 
        \caption{Loss function with data-driven initialization.}\label{fig_kmean_MLE}
    \end{subfigure}
    \hfill
    \begin{subfigure}[b]{0.495\textwidth}
        \centering
        \includegraphics[trim={0 0 0 .8cm},clip,width=\textwidth]{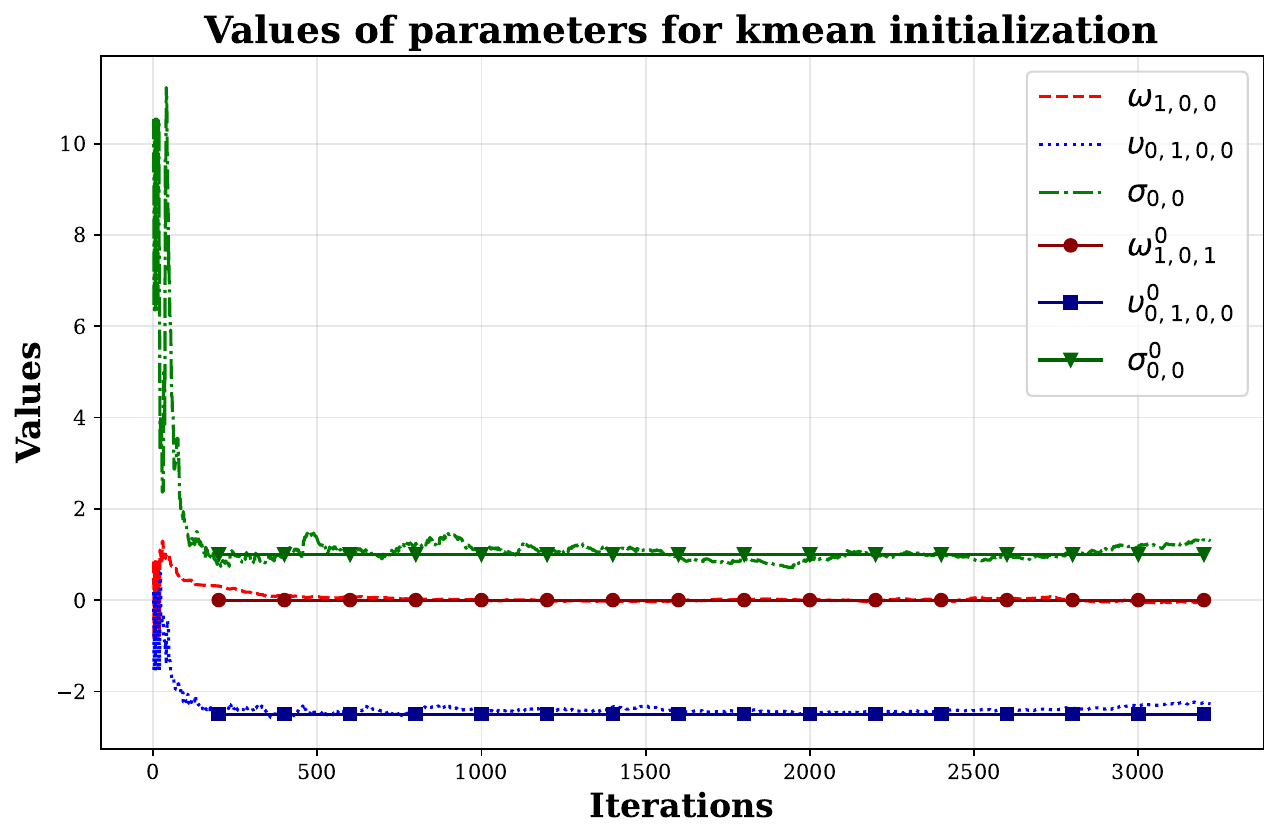} 
        \caption{Parameter estimation with data-driven initialization.}\label{fig_kmean_parameters}
    \end{subfigure}
    \caption{Maximum log-likelihood function with and without applying the Polyak average method over each iteration and value of selected parameters over $N=1600$ iteration.}
\end{figure}

\subsubsection{Evaluation on synthetic data with data-driven initialization}\label{Kmean_init_syn_data_lowdime}

To further demonstrate the robustness of our algorithm, we employ an alternative initialization strategy based on K-means clustering. Specifically, K-means is executed using randomly selected centroids, repeated up to ten times, and the configuration with the lowest distortion is selected as the initialization. The initial latent state values for each series, denoted by $s_{i,0}$, are computed in the same manner as previously described. The progression of the maximum log-likelihood function over iterations is presented in \cref{fig_kmean_MLE}, while the convergence of selected parameter values is depicted in \cref{fig_kmean_parameters}. Despite the inferior starting point provided by K-means initialization compared to the ideal setting, our algorithm exhibits strong robustness, successfully guiding the parameters toward their true values.

Additionally, \cref{fig_Distance_to_MLE} illustrates the trajectory of the distance between the estimated and true log-likelihood functions over iterations. This plot clearly demonstrates that our proposed method achieves the fastest convergence rate among all compared baselines. To mitigate the impact of numerical instability observed during early iterations, we report the maximum log-likelihood values starting from iteration 400.
The results, shown in \cref{tab_Kmean_tm}, indicate that our proposed method consistently outperforms all baseline approaches across all metrics. Second, each algorithm is trained solely on the training data and evaluated on the test set, with prediction accuracy again measured against the ground truth using the same metrics. Details of the train/test data split are provided at the beginning of this section. The corresponding results are summarized in \cref{tab_Kmean_tt}.
\begin{figure}[H]
    \centering
    \includegraphics[trim={0 0.25cm 0 1cm},clip,width=0.8\linewidth]{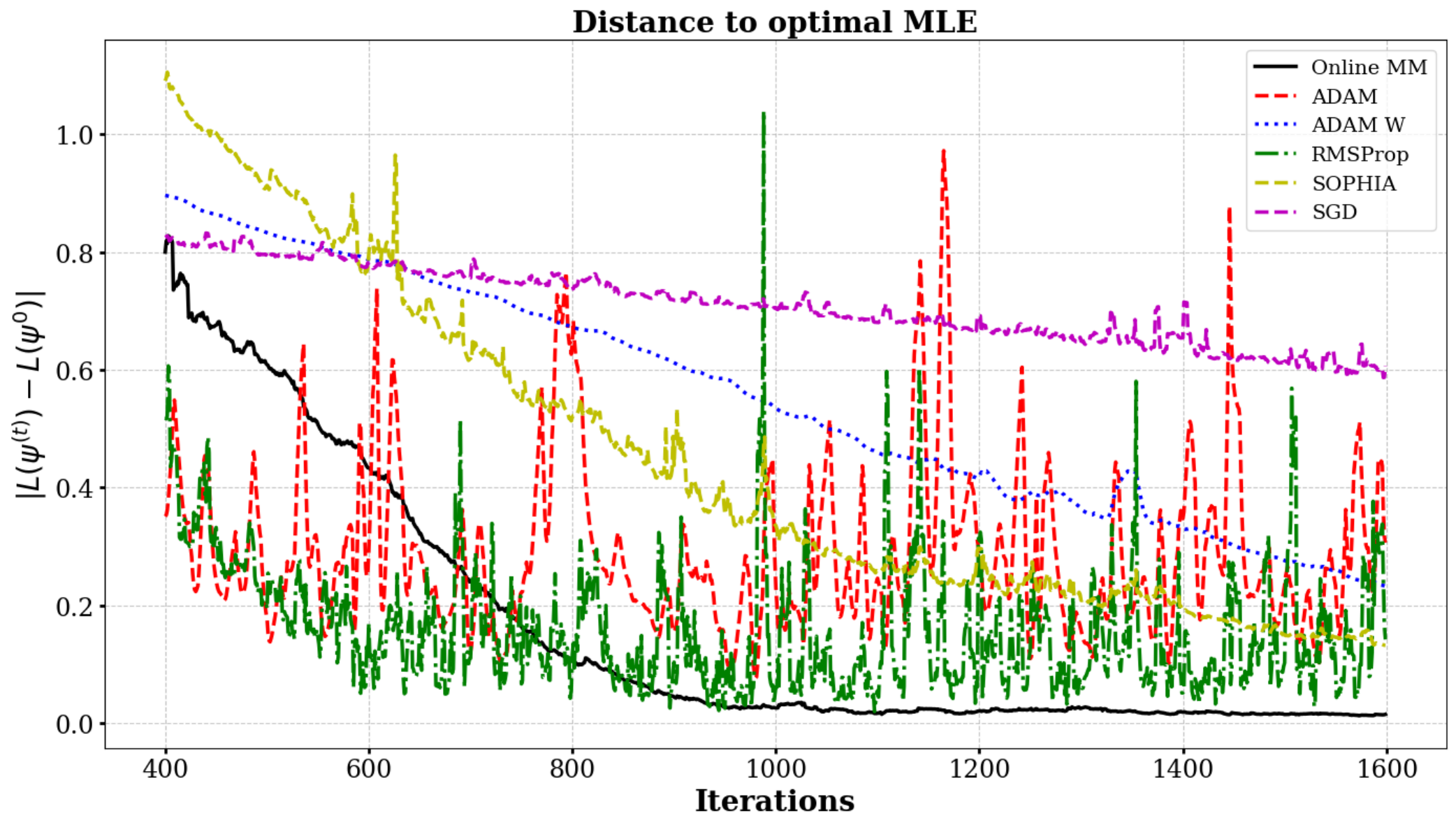}
    \caption{Distance to the true maximum log-likelihood of all algorithms for each iteration.}
    \label{fig_Distance_to_MLE}
\end{figure}
\begin{table}[H]
\centering
\begin{minipage}{0.45\textwidth}
    \centering
    \caption{Estimation errors with data-driven initialization}
    \begin{tabular}{|l|c|c|c|}
\hline
\textbf{Model} & \textbf{MSE$\downarrow$} & \textbf{MAPE$\downarrow$} & \textbf{NMRSE$\downarrow$} \\
\hline
\textbf{Inc. MM} & \textbf{0.014} & \textbf{0.228} & \textbf{0.009} \\
{SGD}       & {0.415} & {2.230} & {0.050} \\
{Adam}      & {0.824} & {1.272} & {0.061} \\
{Adam W}    & {0.192} & {1.816} & {0.035} \\
{RMSProp}   & {0.322} & {0.545} & {0.042} \\
{Sophia}    & {0.272} & {1.633} & {0.042} \\
\hline
\end{tabular}

    \label{tab_Kmean_tm}
\end{minipage}\hfill
\begin{minipage}{0.45\textwidth}
    \centering
    \caption{Prediction errors with data-driven initialization}
    \begin{tabular}{|l|c|c|c|}
\hline
\textbf{Model} & \textbf{MSE$\downarrow$} & \textbf{MAPE$\downarrow$} & \textbf{NMRSE$\downarrow$} \\
\hline
\textbf{Inc. MM} & \textbf{0.950} & 0.480 & \textbf{0.059} \\
SGD       & 1.517 & 0.332 & 0.075 \\
Adam      & 1.789 & 0.422 & 0.082 \\
Adam W    & 1.139 & \textbf{0.219} & 0.065 \\
RMSProp   & 1.505 & 0.478 & 0.075 \\
Sophia    & 1.256 & 0.257 & 0.069 \\
\hline
\end{tabular}

    \label{tab_Kmean_tt}
\end{minipage}
\end{table}

\subsection{Experiments on higher-dimensional synthetic data}

In the previous section, we demonstrated that our proposed methods perform exceptionally well on low-dimensional synthetic data. However, real-world datasets are often higher-dimensional, and the performance observed in low-dimensional settings may not necessarily generalize. To evaluate the robustness of our approach under more realistic conditions, we conduct experiments on simulated data with significantly higher dimensionality. As we show in this section, even in these more challenging scenarios, our model consistently outperforms baseline methods, underscoring its robustness and effectiveness in high-dimensional settings.

Similar to the previous section, we set the number of data points and the number of clusters to 1,000 and 2, respectively. The parameters \( D_W \) and \( D_V \) are set to 1, while \( P \) and \( Q \) are chosen to be 10 and 1, respectively. Additionally, the true parameters, denoted by \( \vomega^0, \vupsilon^0 \), and \( \vsigma^0 \), are initialized using \cref{alogo_initialize_of_highdim_gen}.

\begin{algorithm}[H]
\caption{Initialization of \( \vomega^0 \), \( \vupsilon^0 \), and \( \vsigma^0 \) for Higher-Dimensional Simulation}
\label{alogo_initialize_of_highdim_gen}
\begin{algorithmic}[1]
\State \textbf{Input:} Number of experts \( K \), dimensions \( D_V \), \( D_W \), parameters \( P \), \( Q \), seed \( s_e \).
\State Initialize tensors:
\begin{itemize}
    \item \( \vupsilon^0 \in \sR^{K \times (D_V + 1) \times Q \times P} \gets \zero \).
    \item \( \vsigma^0 \in \sR^{K \times Q \times Q} \gets \zero \).
    \item \( \vomega^0 \in \sR^{P \times K \times (D_W + 1)} \gets \zero \).
\end{itemize}

\State \textbf{Step 1: Initialization of variance components}
\For{$k = 1$ to $K$} 
    \For{$q = 1$ to $Q$}
        \State Set random seed \( s_e + kQ + q \).
        \State Sample \( \vsigma^0[k, q, q] \sim \text{Uniform}(0.5, 1.5) \).
    \EndFor
\EndFor

\State \textbf{Step 2: Initialization of expert parameters}
\For{$(k, d, q, p)$ over their respective ranges $([K]\times \{\{0\}\cup [D_V]\}\times[Q]\times[P])$}
    \State Set random seed: $s_e + s_e + k(D_V+1)QP + dQP + qP + p$. 
    \If{$p = 1$}
        \State Sample \( \vupsilon^0[k, d, q, p] \sim \text{Uniform}(1.5, 5) \).
    \EndIf
    \If{$k > 1$ and $d > 0$}
        \State \( \vupsilon^0[k, d, q, p] \gets -\vupsilon^0[k, d, q, p] \).
    \EndIf
\EndFor

\State \textbf{Step 3: Initialization of gating parameters}
\For{$(p, k, d)$ over their respective ranges $([P]\times[K-1]\times \{\{0\}\cup [D_V]\})$}
    \If{$p = 1$, $k = 1$, $d = 1$}
        \State \( \vomega^0[p, k, d] \gets 8 \).
    \Else
        \State \( \vomega^0[p, k, d] \gets 0 \).
    \EndIf
\EndFor
\end{algorithmic}
\end{algorithm}

\subsubsection{Evaluation on higher dimensional synthetic data with perturbed ground truth initialization}
Following the same procedure as in \cref{fig_ideal_init_lowdime}, we begin our evaluation under the setting using perturbed ground truth initialization. Specifically, the initialization is performed by perturbing the ground-truth parameters with Gaussian noise, and the latent series $\vs_i$ is initialized accordingly.
The progression of the maximum log-likelihood values across epochs is illustrated in \cref{tab_Highdime_ideal_loss}. In this plot, the trajectory corresponding to the true log-likelihood is depicted in black, while the log-likelihood obtained via Polyak averaging is shown in red. The figure clearly demonstrates that, as the number of iterations increases, the proposed Incremental MM algorithm effectively maximizes the log-likelihood function. Moreover, the inclusion of Polyak averaging enhances the stability of the learning process and leads to consistently superior performance in terms of log-likelihood values.
As summarized in \cref{tab_Param_converge_metric}, the error values across all parameters remain low under all metrics, thereby confirming the ability of our algorithm to recover the true model parameters with high precision.

\begin{table}[H]
\centering
\begin{tabular}{|c|c|c|c|}
\toprule
\textbf{Parameters} & \textbf{MSE} & \textbf{MAPE} & \textbf{NMRSE} \\
\midrule
Omega  & 0{,}0051 & 0{,4750} & 0{,0087} \\
Upsilon & 0{,1849} & 1{,3460} & 0{,0917} \\
Sigma   & 0{,0015} & 0{,0183} & 0{,0435} \\
\bottomrule
\end{tabular}
\caption{The distance between each parameter and its ground truth using different metrics.}
\label{tab_Param_converge_metric}
\end{table}
\begin{figure}[H]
    \centering
    \includegraphics[trim={0 0.5cm 0 1.76cm},clip,width=0.9\linewidth]{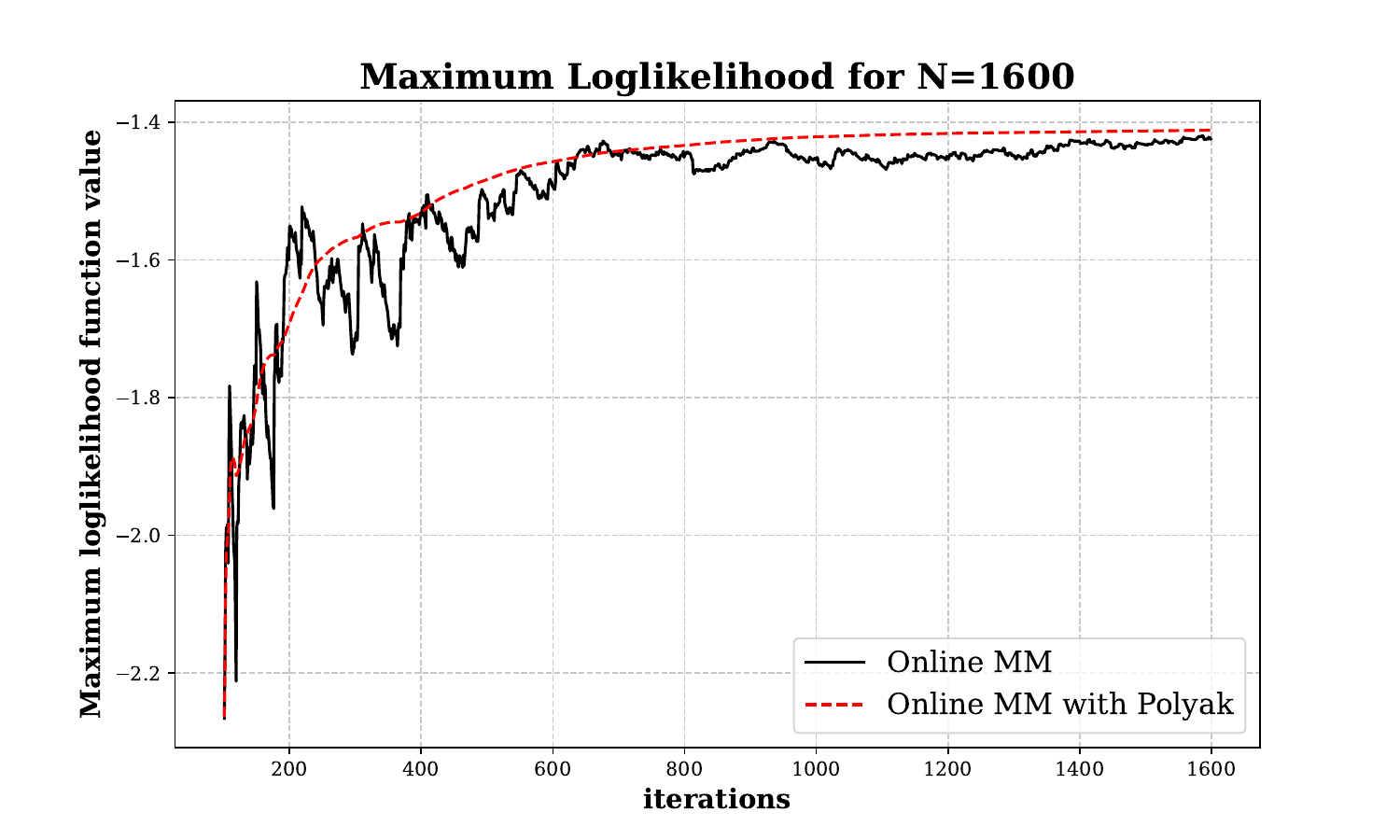}
    \caption{Maximum log-likelihood function with and without applying the Polyak averaging method over each iteration for higher dimensional synthetic data.}
    \label{tab_Highdime_ideal_loss}
\end{figure}

\subsubsection{Evaluation on higher dimensional synthetic data with data-driven initialization}
In this section, we evaluate the performance of our method on high-dimensional synthetic data using data-driven initialization. This initialization strategy is generally more adaptable and better aligned with real-world scenarios, thus serving as a robust framework for assessing the resilience of clustering algorithms. To ensure consistency and comparability, we adopt the same configuration as described in \cref{Kmean_init_syn_data_lowdime}, using the synthetic dataset introduced at the beginning of this section.
As in \cref{Kmean_init_syn_data_lowdime}, we utilize two complementary evaluation strategies. The first involves comparing the predicted means against the true mean trajectory of the data, while the second employs a train-test split to assess the generalization performance. The results of the first evaluation are summarized in \cref{tab_Kmean_tm_hidime}, and the results of the train-test evaluation are presented in \cref{tab_Kmean_tt_hidime}. These findings further demonstrate the robustness and superior accuracy of our proposed method in high-dimensional settings.

\begin{table}[H]
\centering
\begin{minipage}{0.45\textwidth}
    \centering
    \caption{Estimation errors with data-driven initialization for high dimensional simulated data}
    \begin{tabular}{|l|c|c|c|}
\hline
\textbf{Model} & \textbf{MSE$\downarrow$} & \textbf{MAPE$\downarrow$} & \textbf{NMRSE$\downarrow$} \\
\hline
\textbf{Inc. MM} & \textbf{2,809} & \textbf{0,197} & 0,081 \\

SGD & 3,278 & 0,243 & \textbf{0,073} \\

Adam & 3,402 & 0,245 & 0,074 \\

Adam W & 2,904 & 0,233 & 0,080 \\

RMSProp & 2,840 & 0,226 & 0,079 \\

Sophia & 2,912 & 0,256 & 0,077 \\
\hline
\end{tabular}

    \label{tab_Kmean_tm_hidime}
\end{minipage}\hfill
\begin{minipage}{0.45\textwidth}
    \centering
    \caption{Prediction errors with data-driven initialization for high dimensional simulation data}
    \begin{tabular}{|l|c|c|c|}
\hline
\textbf{Model} & \textbf{MSE$\downarrow$} & \textbf{MAPE$\downarrow$} & \textbf{NMRSE$\downarrow$} \\
\hline
\textbf{Inc. MM} & \textbf{4,397} & \textbf{0,293} & 0,098 \\
SGD & 4,607 & 0,304 & 0,099 \\

Adam & 4,725 & 1,148 & \textbf{0,087} \\

Adam W & 4,434 & 0,439 & 0,090 \\

RMSProp & 4,478 & 0,359 & 0,106 \\

Sophia & 4,674 & 0,314 & 0,105 \\
\hline
\end{tabular}

    \label{tab_Kmean_tt_hidime}
\end{minipage}
\end{table}

\subsection{Experiments on real-world data}

To further assess the robustness and practical applicability of our proposed model, we conducted empirical evaluations on two real-world datasets.

\subsubsection{Dent maize genotypes dataset}
The first dataset used in our study aims to elucidate the genetic and molecular mechanisms underlying drought-responsive traits in maize \cite{prado2018phenomics,blein2020systems}. It comprises 254 dent maize genotypes that capture the genetic diversity of the species. These genotypes were grown under two distinct watering conditions and evaluated for seven ecophysiological traits, although some measurements were incomplete \cite{prado2018phenomics}.

In parallel, proteomic profiling was performed on leaf samples collected from the same plants, yielding quantification for $2,055$ proteins. Of these, $973$ proteins were measured using continuous data, while the remaining $1,082$ were recorded as count data \cite{blein2020systems}. For our analysis, we focused on a subset of the dataset, specifically two ecophysiological traits (late leaf area and water use together with the $973$ continuously quantified proteins, all collected under water deficit conditions.

After removing samples with missing values, the final dataset consisted of $233$ maize genotypes (i.e., \( N=233 \)), each with measurements for the two selected traits (\( Q=2 \)) and $973$ protein expression levels (\( P=973 \)). To mitigate numerical issues associated with high latent dimensions \( P \) and \( Q \), we applied Lasso regression with feature selection (from \texttt{scikit-learn}), see also \cite{friedman_regularization_2010,tibshirani_regression_1996}, to identify the 11 most relevant features from \( \vx \). These 11 features were used as inputs, and the first dimension of \( \vy \) was selected as the output for the experiment.

The dataset was split into a training set with $185$ samples and a test set with $47$ samples. We set both \( D_V \) and \( D_W \) to 1. Because the true underlying parameters are unknown in this real-world dataset, we adopted K-means for initialization for all the baseline. Each method was trained on the training set and evaluated on the test set using MSE, MAPE, and NRMSE as performance metrics. To further ensure robustness, we conducted 5-fold cross-validation, reporting the average performance over the folds along with the corresponding standard deviations to reflect result stability.

Learning rates for all solvers were finely tuned to ensure optimal performance. For initialization of \( \vomega \), we considered two strategies: (1) setting all entries to zero, and (2) applying logistic regression between the inputs \( \vx \) and the labels generated via K-means. The results are summarized in \cref{tab_maize_genotypes}, where each cell presents the mean metric value, with the standard deviation shown in parentheses. Incremental MM refers to the approach with zero-initialized \( \vomega \), while Incremental MM* denotes initialization via logistic regression. Overall, Incremental MM and Incremental MM* consistently outperform the baseline methods, even on real-world data, and maintains comparable standard deviations, highlighting both accuracy and stability.

\subsubsection{Communities and Crime dataset}
The second dataset we consider is the Communities and Crime dataset, available at \cite{communities_and_crime_183}. This dataset integrates socio-economic data from the $1990$ U.S. Census, law enforcement data from the $1990$ Law Enforcement Management and Administrative Statistics survey, and crime statistics from the $1995$ FBI Uniform Crime Reporting program. It comprises $1,994$ instances, each corresponding to a distinct U.S. community, and includes $127$ attributes capturing a range of factors such as demographics, economic indicators, housing conditions, and characteristics of local law enforcement agencies. The primary target variable is the per capita rate of violent crime, computed based on population size and the aggregated number of reported incidents of murder, rape, robbery, and assault.

Similar to the previous experiment, we employ Lasso regression with feature selection from \texttt{scikit-learn}, see also \cite{friedman_regularization_2010,tibshirani_regression_1996}, to identify the ten most informative attributes. These ten features are used as input variables $\vx$, alongside the output variable $\vy$, which corresponds to the per capita rate of violent crimes. The dataset is subsequently divided into a training set comprising $1600$ samples and a test set containing 397 samples. We set $D_W = 1$ by default and evaluate the performance of Incremental MM under two configurations: one with $D_V = 1$ and another with $D_V = 2$. As the true underlying parameters are unknown in this context, we employ K-means initialization.
To assess robustness, we apply five-fold cross-validation, reporting the mean and standard deviation of each metric across the folds.

To ensure fair comparisons, the learning rates of all solvers are carefully fine-tuned. Consistent with our previous experimental settings, we examine two strategies for initializing $\vomega$: (1) setting all entries to zero, and (2) a ``warm-start" approach where we cluster the first data batch and utilize the assignment labels as targets to solve a logistic regression problem to get the initial gating parameters. The final evaluation results are presented in \cref{Criminal_res}.

\begin{table}[H]
\centering
\caption{Prediction errors for the Dent maize genotypes dataset with data-driven initialization. The best result for each metric is shown in \textbf{bold}, and the second-best is \underline{underlined}. \textit{Incremental MM*} denotes the case where $\vomega$ is initialized using logistic regression.}
\vspace{0.3em}
\label{tab_maize_genotypes}
\begin{tabular}{|c|c|c|c|}
\toprule
\textbf{Model} & \textbf{MSE$\downarrow$} & \textbf{MAPE$\downarrow$} & \textbf{NMRSE$\downarrow$} \\
\midrule
Incremental MM & \underline{0.1190} (0.021) & \underline{0.1073} (0.013) & \underline{0.1421} (0.016) \\
Incremental MM* & \textbf{0.1156} (0.017) & \textbf{0.1065} (0.013) & \textbf{0.1397} (0.017) \\
SGD       & 0.1332 (0.030) & 0.1147 (0.018) & 0.1514 (0.021) \\
Adam      & 0.1329 (0.026) & 0.1164 (0.015) & 0.1522 (0.024) \\
Adam W    & 0.1327 (0.026) & 0.1163 (0.015) & 0.1521 (0.024) \\
RMSProp   & 0.1324 (0.028) & 0.1168 (0.017) & 0.1514 (0.023) \\
Sophia    & 0.1332 (0.030) & 0.1147 (0.018) & 0.1514 (0.021) \\
\bottomrule
\end{tabular}
\end{table}

\begin{table}[H]
\centering
\caption{Prediction errors for the Communities and Crime dataset with data-driven initialization. The best result for each metric is highlighted in bold, while the second-best is underlined. Incremental MM$^i$ denotes the Incremental MM algorithm with $D_V$ set to $i$. Incremental MM* indicates the case where $\vomega$ is initialized using logistic regression.} 
\label{Criminal_res}
\vspace{0.3em}
\begin{tabular}{|c|c|c|c|}
\toprule
\textbf{Model} & \textbf{MSE$\downarrow$} & \textbf{MAPE$\downarrow$} & \textbf{NMRSE$\downarrow$} \\
\midrule
Incremental MM$^2$ & 0.0321 (0.004) & \underline{0.9537} (0.116) & \textbf{0.1307} (0.016) \\
Incremental MM$^1$ & \textbf{0.0283} (0.003) & 1.0440 (1.083) & 0.1461 (0.012) \\
Incremental MM* & \underline{0.0316} (0.010) & \textbf{0.9500} (0.229) & 0.1594 (0.010) \\
SGD           & 0.0462 (0.012) & 1.8288 (0.231) & 0.1591 (0.010) \\
Adam          & 0.0314 (0.005) & 2.2066 (0.914) & \underline{0.1413} (0.007) \\
Adam W        & 0.0315 (0.004) & 1.5257 (0.507) & 0.1417 (0.006) \\
RMSProp       & 0.0322 (0.005) & 2.126 (1.184)  & 0.1415 (0.006) \\
Sophia        & 0.0309 (0.005) & 1.9419 (1.059) & 0.1430 (0.007) \\
\bottomrule
\end{tabular}
\end{table}

\section{Conclusion and future investigation}\label{sec_conclusion_future}

This work introduces a novel incremental stochastic MM algorithm that generalizes the incremental stochastic EM algorithm and provides a robust framework for handling high-volume and incremental data. By relaxing the stringent assumptions typically required by EM-based methods, such as convexity and explicit latent variable representations, our approach broadens the applicability of incremental stochastic estimation to a wider class of models. Notably, we demonstrate the effectiveness of the proposed method in fitting softmax-gated MoE models, a setting in which the incremental stochastic EM algorithm fails, achieving superior empirical performance over widely-used stochastic optimization algorithms. Looking forward, several promising directions for future research emerge. First, our incremental stochastic MM framework naturally accommodates extensions beyond Gaussian experts, allowing for more flexible modeling with non-Gaussian components. This generalization opens the door to integrating deep neural network architectures within each expert, enabling powerful hybrid models that combine statistical rigor with the representational capacity of deep learning. Second, while the current formulation processes one data point per iteration, extending the algorithm to support mini-batch updates could improve both stability and scalability, particularly in high-throughput or distributed environments. Developing a theoretically grounded mini-batch MM algorithm with provable convergence guarantees represents an important step toward scalable incremental stochastic learning. Third, addressing the challenges posed by high-dimensional datasets requires incorporating dimensionality reduction techniques or sparsity-inducing regularization within the MM framework to mitigate overfitting and enhance computational efficiency. Lastly, extending the algorithm to handle mixed-type data, including both quantitative and qualitative covariates and parameters, would broaden its applicability to diverse real world problems. This could be achieved by designing expert components capable of modeling discrete structures or employing data transformation techniques compatible with the MM principle. Collectively, these avenues underscore the potential of the proposed incremental stochastic MM framework as a versatile and foundational tool in modern statistical and machine learning pipelines.

\section*{Acknowledgments} TrungTin Nguyen, Hien Duy Nguyen, Florence Forbes, and Gersende Fort acknowledge funding from the Australian Research Council grant DP230100905, and from Inria Project WOMBAT. Christopher Drovandi and TrungTin Nguyen acknowledge funding from the Australian Research Council Centre of Excellence for the Mathematical Analysis of Cellular Systems (CE230100001). Christopher Drovandi was also supported by an Australian Research Council Future Fellowship
(FT210100260).


\appendix

\begin{center}
\textbf{{\Large Supplementary Materials for\\``Revisiting Incremental Stochastic Majorization–Minimization Algorithms with Applications to Mixture of Experts''}}
\end{center}

In this supplementary material, we present the proofs of the main results in \cref{sec_proofs_main_results} and the technical proofs and results in \cref{sec_technical_proof} and \cref{sec_technical_results}, respectively.


\section{Proofs of main results}\label{sec_proofs_main_results}
We start with a preliminary lemma
\begin{lemma}\label{lem: gradientofF}
Under \cref{eq_minorizer_fct}, \Cref{assumptionA1} and \Cref{assumptionObjectiveC1},
    it holds, 
    \begin{enumerate}
    \item for all $\vtau \in \sT$ and $ \rvz \in \sZ$,  $
\nabla_\vtheta f(\cdot;\rvz) \vert_{\vtheta=\vtau}
  + \nabla \psi(\vtau)
  - \nabla \phib(\vtau)^\top \bars(\vtau;\rvz)  = \zero .
$
        \item for all $\vtau \in \sT$, $
\nabla_\vtheta \PE\left[f(\cdot;\rvz)\right] \vert_{\vtheta=\vtau}
  + \nabla \psi(\vtau)
  - \nabla \phib(\vtau)^\top \PE\left[\bars(\vtau;\rvz) \right] = \zero .
$
    \end{enumerate}
\end{lemma}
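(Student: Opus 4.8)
The plan rests on the elementary observation that the majorization condition \cref{eq_minorizer_fct} is equivalent to saying that, for each fixed $(\vtau,\vz)$, the auxiliary function
\[
D_{\vtau,\vz}(\vtheta) \eqdef g(\vtheta,\vz;\vtau) - f(\vtheta;\vz)
\]
attains a global minimum over $\sT$ at $\vtheta=\vtau$. Indeed, rearranging \cref{eq_minorizer_fct} gives $D_{\vtau,\vz}(\vtheta)\ge D_{\vtau,\vz}(\vtau)$ for every $\vtheta\in\sT$. Since $\sT$ is open, $\vtau$ is an \emph{interior} minimizer, so the first-order stationarity condition $\nabla_\vtheta D_{\vtau,\vz}(\vtau)=\zero$ must hold, provided $D_{\vtau,\vz}$ is differentiable at $\vtau$. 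Here $g(\cdot,\vz;\vtau)$ is differentiable because $\psi$ and $\phib$ are continuously differentiable under \Cref{assumptionA1}, and $f(\cdot;\vz)$ is differentiable by the standing differentiability convention on $f$.

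For the first claim I would then simply compute the gradient of the surrogate. Differentiating \cref{eq_exponential_family} termwise in $\vtheta$ gives
\[
\nabla_\vtheta g(\vtheta,\vz;\vtau) = -\nabla\psi(\vtheta) + \nabla\phib(\vtheta)^\top \bars(\vtau;\vz),
\]
where $\nabla\phib(\vtheta)^\top$ is the $T\times D$ matrix with columns $\nabla\phi_1(\vtheta),\dots,\nabla\phi_D(\vtheta)$, consistent with the paper's notation. Evaluating at $\vtheta=\vtau$ and substituting into $\nabla_\vtheta D_{\vtau,\vz}(\vtau)=\zero$ yields
\[
-\nabla\psi(\vtau) + \nabla\phib(\vtau)^\top \bars(\vtau;\vz) - \nabla_\vtheta f(\vtau;\vz) = \zero,
\]
which after rearrangement is exactly the first identity, valid for every $\vz\in\sZ$.

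For the second claim I would avoid interchanging gradient and expectation (which would demand an integrable dominating function for $\nabla_\vtheta f$) and instead work directly with the \emph{expected} surrogate. Taking expectations of \cref{eq_minorizer_fct} and using linearity, together with the fact that $\psi$ and $\nabla\phib(\vtau)^\top$ do not depend on $\vz$, gives precisely the majorizing inequality \cref{eq: min inq}: the map
\[
\vtheta \mapsto G(\vtheta,\vtau) - \PE\left[f(\vtheta;\rvz)\right], \qquad G(\vtheta,\vtau) \eqdef -\psi(\vtheta) + \pscal{\PE\left[\bars(\vtau;\rvz)\right]}{\phib(\vtheta)},
\]
is again globally minimized over $\sT$ at $\vtheta=\vtau$. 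The map $G(\cdot,\vtau)$ is continuously differentiable because $\PE[\bars(\vtau;\rvz)]$ is a fixed vector (finite and lying in $\sS$ by \Cref{assumptionA3}) and $\psi,\phib$ are $C^1$, while $\vtheta\mapsto\PE[f(\vtheta;\rvz)]$ is $C^1$ by \Cref{assumptionObjectiveC1}. Repeating the interior first-order argument and computing $\nabla_\vtheta G(\vtau,\vtau) = -\nabla\psi(\vtau) + \nabla\phib(\vtau)^\top \PE[\bars(\vtau;\rvz)]$ then delivers the second identity.

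The main obstacle, and the reason the route through the expected surrogate is preferable, is the regularity of $f$: \Cref{assumptionObjectiveC1} only guarantees differentiability of the \emph{averaged} objective $\PE[f(\cdot;\rvz)]$, not of $f(\cdot;\vz)$ pointwise, and in particular it does not license differentiation under the integral sign without an additional domination hypothesis. The first claim therefore tacitly relies on pointwise differentiability of $f(\cdot;\vz)$, whereas the second claim is deliberately engineered to bypass this by majorizing the already-differentiable expectation directly, which is exactly why the inequality \cref{eq: min inq} was recorded beforehand.
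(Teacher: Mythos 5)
Your proposal is correct and follows essentially the same route as the paper: both arguments observe that the (pointwise, then expected) difference between the surrogate and the objective is minimized at the interior point $\vtau$, invoke first-order stationarity there, and compute the gradient of the exponential-family surrogate explicitly; your remark that part (1) tacitly presupposes pointwise differentiability of $f(\cdot;\vz)$ — which the statement itself assumes by writing $\nabla_\vtheta f(\cdot;\rvz)\vert_{\vtheta=\vtau}$ — is also implicit in the paper's proof.
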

\begin{proof}
Let $\rvz \in \sZ$, and $\vtau\in \sT$. Applying \cref{eq_minorizer_fct} and \cref{eq_exponential_family}, we have
\begin{equation} \label{eq:majorizingtangent}
    f(\cdot; \rvz) - f(\vtau; \rvz)  
        - \psi\left(\vtau\right) + \psi\left(\cdot\right) + \left[ \phib(\vtau)-\phib(\cdot)\right]^\top\bars(\vtau;\rvz)  \leq 0
\end{equation}  
and this function is equal to zero at $\vtau$. This implies that the derivative at $\vtau$ is zero, thus yielding the first property. 

We assumed that the objective function is finite on $\sT$ (see \Cref{sec_original_optimization}), i.e.
for every $\vtheta \in \sT$, $\PE\big[|f(\vtheta;\rvz)|\big] < \infty$.  
Take the expectation w.r.t. $\rvz$ in \cref{eq:majorizingtangent}; observing again that the function
\begin{equation*} 
    \PE\left[ f(\cdot; \rvz) \right]-  \PE\left[f(\vtau; \rvz) \right] 
        - \psi\left(\vtau\right) + \psi\left(\cdot\right) + \left[ \phib(\vtau)-\phib(\cdot)\right]^\top \,  \PE\left[\bars(\vtau;\rvz) \right] \leq 0
\end{equation*} 
is non-positive on 
$\sT$ and equal to zero at $\vtau$, we conclude that its derivative is zero at $\vtau$, which yields the second property. 
\end{proof}

\subsection{Proof of \texorpdfstring{\cref{proposition_stationary}}{Proposition \ref*{proposition_stationary}}}\label{proof_proposition_stationary}
Let $\vs^0 \in \sF$. Apply \cref{lem: gradientofF} with $\vtau \leftarrow \barparam(\vs^0)$. It then follows that 
\begin{align}
\nabla_\vtheta \PE\left[f(\cdot;\rvz)\right] \vert_{\vtheta = \barparam(\vs^0)}
  + \nabla \psi(\barparam(\vs^0))
  - \nabla \phib(\barparam(\vs^0))^\top \vs^{0} = \zero,\nn
\end{align}
which implies $\barparam(\vs^0) \in \sL$ by \cref{assumptionA4}. Conversely, let $\vtheta^{0} \in \sL$;  then by \cref{lem: gradientofF}, we have 
\[
\nabla \psi(\vtheta^{0}) - \nabla \phib(\vtheta^{0})^\top \PE\left[\bars(\vtheta^{0};\rvz) \right] = \zero,
\]
which, by \cref{assumptionA3} and \Cref{assumptionA4}, implies that
$\vtheta^{0} = \barparam\left(\PE\left[\bars(\vtheta^{0};\rvz) \right]\right)$.
By definition of $\vs^0$, this yields $\vtheta^{0} = \barparam(\vs^0)$, from which it follows that $\vs^0  =\PE\left[ \bars\left(\barparam(\vs^0);\rvz\right)\right]$; \ie\ $\vs^0 \in \sF$.

\subsection{Proof of \texorpdfstring{\cref{prop_Lyapunov_function}}{Proposition \ref*{prop_Lyapunov_function}}}\label{proof_prop_Lyapunov_function}

Recall that
\[
  \lyap(\rvs) \eqdef \PE\big[f(\barparam(\vs); \rvz)\big],
  \qquad
  \veta(\vs) \eqdef \PE\big[\bars(\barparam(\vs);\rvz)\big] - \rvs,
\]
and that, by definition of $\barparam(\vs)$,
\begin{equation}
  \label{eq_global_maximum_prop2}
  \barparam(\vs) 
  \eqdef \argmin_{\vtheta \in \sT} h(\vs;\vtheta),
  \qquad 
  h(\vs;\vtheta) \eqdef -\psi(\vtheta) + \pscal{\vs}{\phib(\vtheta)},
\end{equation}
so that the first–order optimality condition reads
\begin{equation}
  \label{eq_stationarity_barparam}
  -\nabla \psi(\barparam(\vs)) + \nabla \phib(\barparam(\vs))^\top \vs = \zero,
  \qquad \vs \in \sS.
\end{equation}

\paragraph{(a) Differentiability of $V$.}
By ~\cref{assumptionA6} and \Cref{assumptionObjectiveC1}, $\lyap$ is continuously differentiable on $\sS$.

\paragraph{(b) Sign of $\langle \nabla \lyap(\vs), \veta(\vs)\rangle$}

Using \cref{lem: gradientofF} with $\vtau = \barparam(\vs)$ gives
\begin{equation}
  \label{eq_consistency_tool2}
  \nabla_\vtheta \PE\big[f(\barparam(\vs);\rvz)\big]
  = - (\nabla \psi)(\barparam(\vs))
    + (\nabla \phib)(\barparam(\vs))^\top \PE\big[\bars(\barparam(\vs);\rvz)\big].
\end{equation}
Therefore, by \cref{eq_stationarity_barparam} and \Cref{eq_consistency_tool2}, 
\begin{equation}
    \label{eq:GdtObjective}
    \nabla_\vtheta \PE\big[f(\cdot;\rvz)\big] \vert_{\vtheta = \barparam(\vs)} = \big\{(\nabla \phib)(\barparam(\vs))\big\}^\top\veta(\vs). 
\end{equation}
Applying the chain rule to $\lyap(\rvs) = \PE[f(\barparam(\vs);\rvz)]$ and using
\cref{eq_consistency_tool2} and \Cref{eq:GdtObjective} yields
\begin{align}
  \nabla \lyap(\vs) 
  = \nabla \barparam(\vs)^\top  \nabla_\vtheta \PE\big[f(\cdot;\rvz)\big] \vert_{\vtheta = \barparam(\vs)}
  = \nabla \barparam(\vs)^\top \big\{(\nabla \phib)(\barparam(\vs))\big\}^\top \, \veta(\vs).
  \label{eq_gradV_m}
\end{align}

Next we differentiate the optimality condition in \cref{eq_stationarity_barparam}
with respect to $\vs$.
Writing \(h(\vs;\vtheta)\) as in \cref{eq_global_maximum_prop2}, \cref{assumptionA4} and \cref{assumptionA5}
imply that, for each fixed $\vs \in \sS$, the function
\(\vtheta \mapsto h(\vs;\vtheta)\) is twice continuously differentiable and has a
unique minimizer at $\barparam(\vs)$, with positive definite Hessian
\(H(\vs) \eqdef \nabla_\vtheta^2 h(\vs;\vtheta)\vert_{\vtheta = \barparam(\vs)}\).
Differentiating \(\rvs \mapsto \nabla_\vtheta h(\vs;\vtheta)\vert_{\vtheta=\barparam(\vs)} = \zero\) with respect to \(\rvs\) gives
\[
  \nabla_\vtheta^2 h(\vs;\vtheta)\big\vert_{\vtheta=\barparam(\vs)}\, \nabla \barparam(\vs)
  + \nabla_s \big[\nabla_\vtheta h(\vs;\vtheta)\big]\big\vert_{\vtheta=\barparam(\vs)} = \zero.
\]
Using
\(
  \nabla_\vtheta h(\vs;\vtheta) = -\nabla \psi(\vtheta) + \nabla \phib(\vtheta)^\top \vs
\)
and the fact that \(\nabla_s s^\top\) is the identity matrix, we obtain
\[
  \nabla_s \big[\nabla_\vtheta h(\vs;\vtheta)\big]\big\vert_{\vtheta=\barparam(\vs)}
  = \nabla_\vtheta \phib(\barparam(\vs))^\top.
\]
Hence 
\begin{equation}
  \label{prop2_iden0}
  \nabla \barparam(\vs)^\top
  = -\nabla \phib(\barparam(\vs))
    H(\vs)^{-1}.
\end{equation}
Plugging \cref{prop2_iden0} into \cref{eq_gradV_m} yields
\begin{align}
  \label{eq_prop2_desired_result}
  \langle \nabla \lyap(\vs), \veta(\vs)\rangle
  &= \veta(\vs)^\top \nabla \lyap(\vs) \nonumber\\
  &= -\veta(\vs)^\top (\nabla \phib)(\barparam(\vs))  H(\vs)^{-1}
     \{(\nabla \phib)(\barparam(\vs)) \}^\top  \veta(\vs) \nonumber\\
  &= -\Big( \{(\nabla \phib)(\barparam(\vs)) \}^\top \veta(\vs)\Big)^\top
     H(\vs)^{-1}
     \Big( \{(\nabla \phib)(\barparam(\vs))\}^\top \veta(\vs)\Big).
\end{align}
By~\cref{assumptionA4},
$H(\vs)$ is positive definite,
so its inverse is also positive definite. Consequently,
\(\langle \nabla \lyap(\vs), \veta(\vs)\rangle \le 0\) for all \(\vs \in \sS\), as claimed.

 If \(\vs \in \sF\), then \(\veta(\vs) =\zero\) by definition, and
\(\langle \nabla \lyap(\vs), \veta(\vs)\rangle = 0\) follows immediately.

\subsection{Proof of \texorpdfstring{\cref{theorem_consistency}}{Theorem \ref*{theorem_consistency}}} \label{proof_proposition_consistency}
{\bf Definitions.} Define the filtration \(\{\mathcal{F}_n,\, n \ge 0\}\) by
\[
\mathcal{F}_n \eqdef \sigma(\vs_0,\, \rvz_1,\, \ldots,\, \rvz_n).
\]
Then \(\vs_n \in \mathcal{F}_n\) for all \(n \ge 0\).  
Define the martingale increment 
\begin{align}
\xi_{n+1}\eqdef\bars\left(\barparam\left(\vs_n\right);\vz_{n+1}\right)-\mathbb{E}\left[ \bars\left(\barparam\left(\vs_n\right);\rvz_{n+1}\right)\,\middle|\, \mathcal{F}_n \right].\nn
    \end{align} Note that for any positive measurable function \(\tau\), $\mathbb{E}\!\left[\, \tau(\vs_n, \rvz_{n+1}) \,\middle|\, \mathcal{F}_n \right]
= \mathbb{E}_{\pi}\!\left[\, \tau(\vs, \rvz) \right]_{\vs = \vs_n}$.   \par Our algorithm \cref{eq_s_step} can be rewritten as:
    \begin{align} \label{s_step_with_eps}
        \vs_{n+1} & =\vs_{n}+\gamma_{n+1}\left\{ \veta(\vs_n)+\xi_{n+1}\right\}.
    \end{align} The proof consists in checking the assumptions of \cite[Theorem 2.3]{andrieu_stability_2005}, which provide a general result for the almost-sure convergence of a stable Stochastic Approximation algorithm.

{\bf Checking the assumption A1 in \cite{andrieu_stability_2005}.}  It is verified under  \Cref{assumptionA9}, \Cref{assumptionA8},   \Cref{assumption_compact_sublevel} and 
by \Cref{prop_Lyapunov_function}.

{\bf A stable algorithm w.p.1.} By \Cref{assumptionA10}, with probability one,  the sequence $\{\vs_n, n \geq 0\}$ remains in a compact set $\mathbb{K}$ of $\sS$ ($\mathbb{K}$ is path-dependent). Hence, the stability condition is verified. Furthermore,  $\mathbb{K} \cap \sF \neq \emptyset$.

{\bf The conditions on the step size sequence.} By \Cref{assumptionA11}, the step size sequence is monotone nonincreasing, $\lim_k \gamma_k = 0$ and $\sum_k \gamma_k = +\infty$. For any $\lambda_0>0$, there exists $K$  such that $\gamma_K \leq \lambda_0$; since we are interested in the convergence when $k \to \infty$, we can assume without loss of generality that the algorithm is studied for all $k \geq K$. 

{\bf Show that with probability one, $\limsup_n \sup_{k \geq n} \| \sum_{j=n}^k \gamma_j \xi_j\| =0$.} We prove the equivalent property: with probability one, $\lim_n M_n$ exists (and is finite), where $M_n \eqdef \sum_{j = 1}^n \gamma_j \xi_j$.  

Let $\{\mathbb{K}_q, q \geq 1\}$ be a sequence of compact sets of $\sS$ such that $\mathbb{K}_q \subseteq  \mathbb{K}_{q+1}$  and $\bigcup_{q \in \nset} \mathbb{K}_q = \sS$; for example, set $\mathbb{K}_q \eqdef \left\{\vs \in \sS: \|\vs\| \leq q, d(\vs, \sS^c) \geq \frac{1}{q} \right\}$.  By \Cref{assumptionA10}, 
\begin{equation}\label{eq:prop3:tool4}
\mathbb{P}\left( \bigcup_{q \in \nset} \{\forall n \geq 0, \vs_n \in \mathbb{K}_q \}\right)= 1.
\end{equation}
We write
\begin{align}
\mathbb{P} \left( \lim_n M_n \ \text{exists} \right) & = \mathbb{P} \left( \{ \lim_n M_n \ \text{exists} \} \cap  \bigcup_{q \in \nset} \{\forall n \geq 0, \vs_n \in \mathbb{K}_q \}  \right)  \nonumber \\
& = \lim_q \uparrow \mathbb{P} \left( \{ \lim_n M_n \ \text{exists} \} \cap  \{\forall n \geq 0, \vs_n \in \mathbb{K}_q \}  \right) \nonumber \\
& = \lim_q \uparrow \mathbb{P} \left( \{ \lim_n M_{n,q} \ \text{exists} \} \cap  \{\forall n \geq 0, \vs_n \in \mathbb{K}_q \}  \right) \label{eq:prop3:tool1}
\end{align}
where $M_{n,q} \eqdef \sum_{j=1}^n \gamma_j \xi_j \1_{\vs_{j-1} \in \mathbb{K}_q}$. For fixed $q \geq 0$, the random variables $\{M_{n,q}, n \geq 1 \}$ is a $\F_n$-adapted martingale sequence, upon noting that by definition of $\xi_n$, $\E\left[ \xi_n \1_{\vs_{n-1} \in \mathbb{K}_q} \vert \F_{n-1} \right] =0$. By \Cref{assumptionA7}, it is a square-integrable martingale:
\[
\E\left[ \| M_{n,q} \|^2 \right] = \sum_{j=1}^n \gamma_j^2 \E\left[ \| \xi_j \|^2 \1_{\vs_{j-1} \in \mathbb{K}_q} \right] \leq  \sup_{\vs \in \mathbb{K}_q} \E_\pi\left[ \| \bars\left(\barparam\left(\vs\right);\rvz\right)\|^2 \right] \, \sum_{j=1}^n \gamma_j^2 < \infty.
\]
Therefore, by \cite[Theorem 2.15]{hall:heyde} 
\begin{equation}\label{eq:prop3:tool2}
\{\lim_n M_{n,q} \ \text{exists} \} \supseteq \{ \sum_{j \geq 1} \gamma_j^2  \E\left[ \|\xi_j \|^2 \1_{ \vs_{j-1} \in \mathbb{K}_q} \vert \F_{j-1} \right]< \infty \}.
\end{equation}
Using again  \Cref{assumptionA7} and since $\sum_j \gamma_j^2< \infty$ (see \Cref{assumptionA11}), we have 
\begin{equation}\label{eq:prop3:tool3}
\mathbb{P}\left( \sum_{j \geq 1} \gamma_j^2  \E\left[ \|\xi_j \|^2 \1_{ \vs_{j-1} \in \mathbb{K}_q}  \vert \F_{j-1} \right]< \infty\right) =1.
\end{equation}
        By  \cref{eq:prop3:tool1}, \cref{eq:prop3:tool2}, \cref{eq:prop3:tool3} and \cref{eq:prop3:tool4}, we have
\[
\mathbb{P} \left( \lim_n M_n \ \text{exists} \right)  \geq \lim_q \uparrow \mathbb{P} \left( \forall n, \vs_n \in \mathbb{K}_q \right) =1.
\]
The proof is concluded.

{\bf Convergence of the sequence $\{\vs_n, n \geq 0\}$.} By \cite[Theorem 2.3]{andrieu_stability_2005},  with probability one, $\limsup_n d(\vs_n, \{\pscal{\nabla V(\vs)}{\veta(\vs)} =0 \} \cap \mathbb{K}) =0.$ By \Cref{assumptionA9}, this yields $\limsup_n d(\vs_n, \sF\cap \mathbb{K})=0$. 

{\bf Convergence of the sequence $\{\vtheta_n, n \geq 0\}$.} We prove an inclusion of sets: $\{ \limsup_n d(\vs_n, \sF) =0 \} \subseteq \{\limsup_n d(\vtheta_n, \sL)= 0\}$; since the set on the LHS is almost-sure, this will imply that the set on the RHS is almost-sure.
Let $\omega \in \{ \omega: \limsup_n d(\vs_n(\omega), \sF)=0\}$. Set $\vtheta_n(\omega) \eqdef \barparam(\vs_n(\omega))$. Let $\{\vtheta_{n_q}(\omega), q \geq 0 \}$ be a converging subsequence with limiting value $\vtheta_\star$; we have to prove that $\vtheta_\star \in \sL$.

From the subsequence $\{\vs_{n_q}(\omega),q \geq 0\}$, we extract a converging subsequence  $\{\vs_{n_{\rho(q)}}(\omega),q \geq 0\}$ with limit $\vs_\star$; by \Cref{assumptionA10}, such a subsequence exists and $\vs_\star \in \sS$. Since $\omega \in \{\limsup_n d(\vs_n, \sF) =0 \}$ and $\sF$ is closed (as a consequence of \Cref{assumptionA9}), then $\vs_\star \in \sF$. By \Cref{assumptionA6}, $\barparam$ is continuous so that $\vtheta_\star =\lim_q \barparam(\vs_{n_q}(\omega))= \lim_q \barparam(\vs_{n_{\rho(q)}}(\omega)) = \barparam(\vs_\star)$. Therefore, by \Cref{proposition_stationary}, $\vtheta_\star \in \sL$. This concludes the proof.

\subsection{Proof of \texorpdfstring{\cref{theorem_onlineMM_MoE}}{Theorem \ref*{theorem_onlineMM_MoE}}}\label{proof_theorem_onlineMM_MoE}
\subsubsection{Proof of exponential family majorizer surrogate via \texorpdfstring{\cref{eq_surrogate_Taylor_SGGMoE}}{Equation (\ref*{eq_surrogate_Taylor_SGGMoE})} in \texorpdfstring{\cref{proposition_surrogate_construction_SGMoE}}{Proposition \ref*{proposition_surrogate_construction_SGMoE}}}\label{sec_exponential_family_surrogate}
\par Denote:
\begin{align*}
    \underline{\sfg}(\vomega;\vx)=\log\left(1+\displaystyle\sum_{k=1}^{K-1}\exp\left(w_k(\vx)\right)\right) \text{ where } w_k(\vx)=\displaystyle\sum_{d=0}^{D_W}\vomega_{k,d}^{\top}\vx^{d}.
\end{align*}
\par We have $\underline{\sfg}$ is $C^2$, then Taylor's theorem with Lagrange remainder implies that there exists some $\alpha \in (0,1)$ such that
\begin{align}\label{eq_Taylor}
    \underline{\sfg}(\vomega;\vx)-\underline{\sfg}(\vomega_n;\vx)=\left(\vomega-\vomega_n\right)^{\top}\nabla  \underline{\sfg} \left(\vomega_n;\vx\right)+\frac{1}{2}\left(\vomega-\vomega_n\right)^{\top}\nabla^{2}\underline{\sfg}\left(\vomega_n+\alpha\left(\vomega-\vomega_n\right);\vx_n\right)\left(\vomega-\vomega_n\right).
\end{align}

{\bf Calculate the gradient of $\underline{\sfg}$.} For some $k \in [K-1]$, $p\in [P]$ and $d \in \{0,1,\dots,D_W\}$, the gradient can be calculate as follow:
\begin{align*}
    \nabla_{\evomega_{k,d,p}}\underline{\sfg}=\frac{\evx^{d}_p\exp\left(\displaystyle\sum_{d=0}^{D_W}\vomega_{k,d}^{\top}\vx^{d}\right)}{1+\displaystyle\sum_{l=1}^{K-1}\exp\left(\displaystyle\sum_{d=0}^{D_W}\vomega_{l,d}^{\top}\vx^{d}\right)}=\evx^{d}_p\sfg_k(\vw(\vx)).
\end{align*}
\par Hence, the gradient of $\underline{\sfg}$ is given as:
\begin{align*}
    \nabla \underline{\sfg}
	=&\left[x_1^{0} \sfg_1(\vomega(\xb)),\dots,x_1^{D_W} \sfg_1(\vomega(\xb)),\ldots,x_{P}^{0}\sfg_{K-1}(\vomega(\xb)),\dots,x_{P}^{D_W}\sfg_{K-1}(\vomega(\xb))\right]^{\top}=\hat{\vg}\otimes\hat{\vx},
\end{align*}
\par where $\hat{\vx}=\left[x_1^{0},\dots,x_1^{D_W},\dots,x_P^{0},\dots,x_P^{D_W}\right]^{\top}$, $\hat{\vg}=\left[ \sfg_1(\vw(\vx)),\dots,\sfg_{K-1}(\vw(\vx))\right]^{\top}$ and $\otimes$ is the Kronecker product.\\
{\bf Calculate for the Hessian the gradient of $\underline{\sfg}$.} For some $k_1, k_2 \in [K-1]$, $d_1,d_2 \in \{0,1,\dots,D_W\}$ and $p_1, p_2\in [P]$, the Hessian can be calculated as follow:
\begin{align*}
    \nabla_{\evomega_{k_1,d_1,p_1}\evomega_{k_2,d_2,p_2}}\underline{\sfg}&=\frac{-\evx^{d_1}_{p_1}\exp(w_{k_1}(\vx))\evx^{d_2}_{p_2}\exp(w_{k_2}(\vx))}{\left[1+\displaystyle\sum_{l=1}^{K-1}\exp(w_l(\vx))\right]^2}=-\evx^{d_1}_{p_1}\evx^{d_2}_{p_2}\sfg_{k_1}(\vw(\vx))\sfg_{k_2}(\vw(\vx)) \text{ for $k_1 \neq k_2$,}\\
\nabla_{\evomega_{k_1,d_1,p_1}\evomega_{k_1,d_2,p_2}}\underline{\sfg}&=\frac{\evu^{d_1}_{p_1}\evx^{d_2}_{p_2}\exp(w_{k_1}(\vx))\left(\displaystyle\sum_{l=1}^{K}\exp(w_l(\vx))-\exp(w_{k_1}(\vx))\right)}{\left[1+\displaystyle\sum_{l=1}^{K-1}\exp(w_l(\vx))\right]^2}\nn\\
    &=\evu^{d_1}_{p_1}\evx^{d_2}_{p_2}\sfg_{k_1}(\vw(\vx))(1-\sfg_{k_1}(\vw(\vx))).
\end{align*}
Hence, the Hessian is given as:
\begin{align*}
    \nabla^{2}\underline{\sfg}=\left(\text{diag}(\hat{\vg})-\hat{\vg}\hat{\vg}^{\top}\right)\otimes\hat{\vx}\hat{\vx}^{\top}.
\end{align*}
\begin{lemma}[See, \eg~ \cite{bohning1992multinomial}]\label{Kronecker_prod_property}
    If $\mA \succeq \mB$ then for symmetric, nonnegative definite $\mC$:
    \begin{align*}
        \mA \otimes \mC \succeq \mB \otimes \mC.
    \end{align*}
\end{lemma}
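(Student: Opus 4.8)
The plan is to reduce the claimed Loewner inequality to the fact that the Kronecker product of two symmetric positive semidefinite (PSD) matrices is again PSD, and then to establish that fact by a factorization argument.

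First, by bilinearity of the Kronecker product,
\[
\mA \otimes \mC - \mB \otimes \mC = (\mA - \mB) \otimes \mC.
\]
Writing $\mM \eqdef \mA - \mB$, the hypothesis $\mA \succeq \mB$ says precisely that $\mM$ is symmetric and PSD. Since $\mC$ is symmetric and nonnegative definite by assumption, it therefore suffices to prove that $\mM \otimes \mC \succeq \zero$; that is, the whole statement reduces to showing that the Kronecker product of two symmetric PSD matrices is PSD.

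For this I would use symmetric square roots. Because $\mM \succeq \zero$ and $\mC \succeq \zero$ are symmetric, set $\mP \eqdef \mM^{1/2}$ and $\mQ \eqdef \mC^{1/2}$, so that $\mM = \mP^\top \mP$ and $\mC = \mQ^\top \mQ$. Invoking the mixed-product property $(\mR \otimes \mS)(\mU \otimes \mV) = (\mR\mU) \otimes (\mS\mV)$ together with the transpose rule $(\mR \otimes \mS)^\top = \mR^\top \otimes \mS^\top$, one obtains
\[
\mM \otimes \mC = (\mP^\top \mP) \otimes (\mQ^\top \mQ) = (\mP \otimes \mQ)^\top (\mP \otimes \mQ),
\]
which has the form $\mR^\top \mR$ and is hence PSD. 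Substituting back $\mM = \mA - \mB$ yields $\mA \otimes \mC \succeq \mB \otimes \mC$, as claimed. An equivalent spectral route diagonalizes $\mM = \mU \mLambda \mU^\top$ and $\mC = \mV \mD \mV^\top$ with nonnegative diagonal entries and observes that $\mM \otimes \mC = (\mU \otimes \mV)(\mLambda \otimes \mD)(\mU \otimes \mV)^\top$ is symmetric with eigenvalues $\lambda_i \mu_j \geq 0$.

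This result is standard and presents no genuine difficulty; the only point requiring care — the main (minor) obstacle — is the Kronecker bookkeeping, namely invoking the mixed-product and transpose identities correctly and noting that the Kronecker product of symmetric matrices is symmetric, so that the PSD conclusion is well posed.
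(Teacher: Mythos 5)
Your proposal is correct and follows essentially the same route as the paper: both reduce the claim to showing that the Kronecker product of two symmetric PSD matrices is PSD via a factorization combined with the mixed-product identity (the paper uses spectral decompositions, you use symmetric square roots and then note the spectral variant yourself). If anything, your write-up is cleaner, since the paper's displayed proof works with $\mC\otimes(\mB-\mA)$ where it should be $(\mA-\mB)\otimes\mC$ to match the stated inequality.
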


\begin{proposition} \label{prop_theo3}
 Let $\pi_k$ be be real values in $(0,1)$ for $k \in [K+1]$ and $\displaystyle\sum_{k=1}^{K+1}\pi_k=1$. 
 Denote $\hat{\pib}=[\pi_1,\pi_2,\dots,\pi_K]^{\top}$. We have:
\begin{align*}
    \text{diag}(\hat{\pib})-\hat{\pib}\hat{\pib}^{\top}\leq \left(\frac{3}{4}\mI_{K-1}-\frac{\bm{1}_{K-1}\bm{1}_{K-1}^{\top}}{2(K-1)}\right).
\end{align*}
\end{proposition}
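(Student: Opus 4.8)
The plan is to recast the claim as the Loewner inequality $\mathbf B\succeq\mathbf M$, where $\mathbf B:=\tfrac34\mI_{n}-\tfrac{1}{2n}\bm{1}_n\bm{1}_n^\top$ and $\mathbf M:=\diag(\hat\pib)-\hat\pib\hat\pib^\top$, with $n$ the common dimension of these two matrices and $\hat\pib=(\pi_1,\dots,\pi_n)^\top\in(0,1)^n$. The only structural fact I would use about $\hat\pib$ is that, since the full list of probabilities sums to one and at least one further strictly positive mass is omitted from $\hat\pib$, one has $s:=\sum_{i=1}^n\pi_i<1$; write $\pi_\ast:=1-s\in(0,1)$ for this leftover mass. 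The goal is thus to show $\mathbf v^\top\mathbf B\mathbf v\ge\mathbf v^\top\mathbf M\mathbf v$ for every $\mathbf v\in\rset^n$.

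First I would evaluate both quadratic forms in the orthogonal decomposition $\mathbf v=\tfrac{\alpha}{\sqrt n}\bm{1}_n+\mathbf w$ with $\bm{1}_n^\top\mathbf w=0$. Using $\|\mathbf v\|^2=\alpha^2+\|\mathbf w\|^2$ and $(\bm{1}_n^\top\mathbf v)^2=n\alpha^2$, a direct computation gives $\mathbf v^\top\mathbf B\mathbf v=\tfrac14\alpha^2+\tfrac34\|\mathbf w\|^2$. For the left-hand form the key observation is probabilistic: writing $v_i$ for the $i$-th entry of $\mathbf v$, we have $\mathbf v^\top\mathbf M\mathbf v=\sum_i\pi_i v_i^2-(\sum_i\pi_i v_i)^2=\Var(X)$, where $X$ is the discrete random variable taking the value $v_i$ with probability $\pi_i$ for $i\in[n]$ and the value $0$ with probability $\pi_\ast$. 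The leftover mass $\pi_\ast>0$ is exactly what turns $\mathbf M$ into an honest variance of a distribution that also charges the reference value $0$, and this is where $s<1$ enters.

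The heart of the argument is then Popoviciu's variance inequality: since $X$ is supported on the finite set $\{v_1,\dots,v_n,0\}$, one has $\Var(X)\le\tfrac14\max_{j,l}(\mathrm{val}_j-\mathrm{val}_l)^2$, the values ranging over $v_1,\dots,v_n$ and $0$. It therefore suffices to bound every pairwise squared gap by $\alpha^2+3\|\mathbf w\|^2=n\mu^2+3\|\mathbf w\|^2$, where $\mu:=\alpha/\sqrt n$ so that $v_i=\mu+w_i$. For a gap between two genuine indices this is immediate from $(w_i-w_j)^2\le2(w_i^2+w_j^2)\le2\|\mathbf w\|^2$; for a gap against the reference value $0$ one writes $(\mu+w_i)^2=\mu^2+2\mu w_i+w_i^2$ and combines $2\mu w_i\le\mu^2+w_i^2$ with $2w_i^2\le2\|\mathbf w\|^2\le(n-2)\mu^2+3\|\mathbf w\|^2$ to obtain $(\mu+w_i)^2\le n\mu^2+3\|\mathbf w\|^2$ (the case $n=1$, where $\mathbf w=0$, is immediate). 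Chaining these through Popoviciu yields $\Var(X)\le\tfrac14\alpha^2+\tfrac34\|\mathbf w\|^2=\mathbf v^\top\mathbf B\mathbf v$, as required.

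The main subtlety — and precisely where the correction to Böhning's Lemma~2 lives — is the coefficient $\tfrac34$ in place of $\tfrac12$. The binding direction is the component of $\mathbf v$ along $\bm{1}_n$: there $\mathbf v^\top\mathbf M\mathbf v$ collapses to $s(1-s)\mu^2$, whose supremum over $s\in(0,1)$ is $\tfrac14\mu^2$, while $\mathbf B$ must retain eigenvalue $\tfrac34-\tfrac12=\tfrac14$ in that direction. Replacing $\tfrac34$ by $\tfrac12$ would place $\bm{1}_n$ in the kernel of $\mathbf B$ even though $\bm{1}_n^\top\mathbf M\bm{1}_n=s(1-s)>0$, so the Loewner inequality would fail; checking this near-equality is what confirms that $\tfrac34$ is simultaneously necessary and sufficient, and it is the step I would treat most carefully in the write-up.
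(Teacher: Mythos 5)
Your proof is correct, and it takes a genuinely different route from the paper's. The paper proves the bound by renormalization: writing $s=\sum_i\pi_i<1$ and $\varepsilon=1/s$, it applies B\"ohning's bound for stochastic vectors (\cref{upperbound_H_fastMM}) to $\varepsilon\hat{\pib}$, divides by $\varepsilon^2$ to obtain $\diag(\hat{\pib})-\hat{\pib}\hat{\pib}^{\top}\leq \tfrac{s^2}{2}\left(\mI-\tfrac{\bm{1}\bm{1}^{\top}}{K}\right)+(1-s)\diag(\hat{\pib})$, and absorbs the two correction terms via $s^2\leq 1$ and $(1-s)\max_i\pi_i\leq\tfrac14$; that last product bound is exactly the extra $\tfrac14\mI$ which inflates B\"ohning's $\tfrac12$ to $\tfrac34$. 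You instead argue directly on quadratic forms: the orthogonal split of $\mathbf{v}$ along $\bm{1}$ gives $\tfrac14\alpha^2+\tfrac34\|\mathbf{w}\|^2$ for the right-hand form, the left-hand form is identified as the variance of a discrete variable that charges $0$ with the leftover mass $1-s$, and Popoviciu's inequality plus elementary pairwise gap bounds close the argument (your chain $2\|\mathbf{w}\|^2\leq(n-2)\mu^2+3\|\mathbf{w}\|^2$ needs $n\geq2$, and you correctly dispose of $n=1$ separately). Your version is self-contained apart from Popoviciu, whereas the paper uses \cref{upperbound_H_fastMM} as a black box; your remark that $\bm{1}^{\top}\bigl(\diag(\hat{\pib})-\hat{\pib}\hat{\pib}^{\top}\bigr)\bm{1}=s(1-s)>0$ while the original $\tfrac12\left(\mI-\tfrac{\bm{1}\bm{1}^{\top}}{n}\right)$ annihilates $\bm{1}$ is also a sharper explanation of why the correction to B\"ohning's Lemma~2 is unavoidable than anything in the paper's proof. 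Two minor caveats: your closing tightness discussion only shows that $\tfrac12$ fails, not that $\tfrac34$ is optimal (for $n\geq2$ the Rayleigh quotient of the left-hand matrix along $\bm{1}$ is at most $\tfrac{1}{4n}$ versus $\tfrac14$ for the right-hand one, so there is slack in that direction); and, like the paper, you silently repair the dimension mismatch in the statement as printed ($\hat{\pib}$ has $K$ entries while the right-hand side is $(K-1)\times(K-1)$) by using only the fact that the retained masses sum to strictly less than one.
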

The proofs of \cref{Kronecker_prod_property} are provided in \cref{lemma3_proof}, while the proofs of \cref{prop_theo3} are detailed in \cref{proof_prop_theo3}.

{\bf Proof of exponential family majorizer surrogate via \cref{eq_surrogate_Taylor_SGGMoE}.} Applying \cref{Kronecker_prod_property} and \cref{prop_theo3} to \cref{eq_Taylor} we receive the majorization:
\begin{align*}
    \underline{\sfg}(\vomega;\vx_n) &\leq \underline{\sfg}(\vomega_n;\vx_n)+ \left(\vomega-\vomega_n\right)^{\top}\nabla \underline{\sfg}(\vomega_n;\vx_n)+\frac{1}{2}\left(\vomega-\vomega_n\right)^{\top}\mB_{n,K}\left(\vomega-\vomega_n\right).
\end{align*}
Hence, given $\vomega
:=\left(\omegab_{k,d,p}\vert_{p\in [P], k\in[K-1], d\in\{0,\ldots,D_W\}}\right)$, the overall majorization function is defined:
\begin{align*}
    -\log s_{\vtheta}(\vy_n\mid \vx_n)
    &\le \sum_{k=1}^{K}\tau_{n,k}\log(\tau_{n,k})-\sum_{k=1}^{K}\tau_{n,k}w_k(\vx_n)-\sum_{k=1}^{K}\tau_{n,k}\left[\log(\cN(\vy_n;\vmu_k(\vx_n),\mSigma_k))\right]\\
    &\quad +\left\{\underline{\sfg}(\vomega_n;\vx_n)+ \left(\vomega-\vomega_n\right)^{\top}\nabla \underline{\sfg}(\vomega_n;\vx_n)+\frac{1}{2}\left(\vomega-\vomega_n\right)^{\top}\mB_{n,K}\left(\vomega-\vomega_n\right)\right\}, 
   \\
     &=\sum_{k=1}^{K}\tau_{n,k}\log(\tau_{n,k})+\underline{\sfg}(\vomega_n;\vx_n)-\vomega_{n}^{\top}\nabla \underline{\sfg}(\vomega_n;\vx_n)-\sum_{k=1}^{K}\tau_{n,k}w_k(\vx_n)\\
    &\quad +\left\{\vomega^{\top}\nabla \underline{\sfg}(\vomega_n;\vx_n)+\frac{1}{2}(\vomega-\vomega_{n}^{\top})\mB_{n,K}\left(\vomega-\vomega_n\right)\right\} 
   \\
    &\quad -\sum_{k=1}^{K}\tau_{n,k}\left[\log(\cN(\vy_n;\vmu_k(\vx_n),\mSigma_k))\right]=:-\log[g(\vtheta, \vx_{n},\vy_{n}; \vtheta_{n})].  
\end{align*}

\subsubsection{Verification of assumptions in \texorpdfstring{\cref{theorem_onlineMM_MoE}}{Theorem \ref*{theorem_onlineMM_MoE}}}\label{proof_assumption_theorem_onlineMM_MoE}

To verify the model-specific assumptions
\cref{assumptionA1,assumptionA2,assumptionA3,assumptionA4,assumptionObjectiveC1,assumptionA5,assumptionA6,assumptionA7}, we will assume that $\vx$ and $\vy$ are bounded by $\evu^\#$ and $y^\#$, respectively. 

\textbf{Verification of \cref{assumptionA1}.}
    For the problem of softmax-gated MoE, we consider the majorizer surrogate $g$ to have the following form that belongs to the exponential family
\begin{equation} \label{eq_exponential_family_appendix}
g\left(\vtheta,\vz;\vtau\right)\eqdef - \psi\left(\vtheta\right)+ \pscal{\bars(\vtau;\vz)}{\phib(\vtheta)},
\end{equation}
where,
\begin{align*}
    &\phib(\vtheta):=
    \begin{bmatrix}
      \vomega\\
      \vect(\vomega\vomega^{\top})\\
      \vkappa\\
      \vzeta \\
      \mDelta\\
      \bar{\vSigma}
    \end{bmatrix}, ~\bars(\vtau;\vz)=
    \begin{bmatrix}
      -\vxi+\nabla f(\vtau)-\mB\vtau \\
      \frac{1}{2}\vect(\mB)\\
      \vtau\otimes \vy^{2} \\
      -2\vtau\otimes(\vy\otimes\vr) \\      \vtau\otimes\left(\vect\left(\vr\vr^{\top}\right)\otimes \mathbf{1}_Q\right)\\
      \vtau\otimes \mathbf{1}_Q
    \end{bmatrix},~\psi(\vtheta)=C.
\end{align*}
Here, $C$ represents the constants that are independent of $\vomega$ and $\vkappa, \vzeta,\mDelta,\bar{\vSigma}$ and $\vtau$ are defined as follows:
\begin{align*}
    \bar{\vSigma}&=[\log\sigma_{1,1}^{2},\dots, \log\sigma_{1,Q}^2, \log\sigma_{2,1}^{2},\dots,\log\sigma_{K,Q}^{2}]^{\top},~\vkappa=\left[\frac{1}{\sigma_{1,1}^{2}},\dots,\frac{1}{\sigma_{1,Q}^2},\frac{1}{\sigma_{2,1}^{2}},\dots,\frac{1}{\sigma_{K,Q}^{2}}\right]^{\top},\\
    \vzeta&=\vect\left(\left[\frac{\mUpsilon_{1,1}^{\top}}{\sigma_{1,1}^{2}},\dots,\frac{\mUpsilon_{1,Q}^{\top}}{\sigma_{1,Q}^{2}},\frac{\mUpsilon_{2,1}^{\top}}{\sigma_{2,1}^{2}},\dots,\frac{\mUpsilon_{K,Q}^{\top}}{\sigma_{K,Q}^{2}}\right]\right),~
    \vtau=[\tau_{1},\tau_{2},\dots,\tau_{K}]^{\top},\\
    \mDelta&=\vect\left(\frac{\vect(\mUpsilon_{1,1}^{\top}\mUpsilon_{1,1})}{\sigma_{1,1}^{2}},\dots,\frac{\vect(\mUpsilon_{1,Q}^{\top}\mUpsilon_{1,Q})}{\sigma_{1,Q}^{2}},\frac{\vect(\mUpsilon_{2,1}^{\top}\mUpsilon_{2,1})}{\sigma_{2,1}^{2}},\dots,\frac{\vect(\mUpsilon_{K,Q}^{\top}\mUpsilon_{K,Q})}{\sigma_{k,q}^2}\right).
\end{align*}

Furthermore, since $\phib,\psi$ and $\bars$ are composed of elementary functions applied component-wise to their parameters, they are trivially continuously differentiable and measurable due to the smoothness and measurability properties of elementary functions.

\textbf{Verification of \cref{assumptionA2}.} For $\mB_{n,K}$ defined in \cref{eq_define_B_n_with_noise}, it can be proven that for any eigenvalue $\lambda_{\mB_{n,K}}$ of $\mB_{n,K}$, $\lambda_{\mB_{n,K}}$ is bounded in the interval $\left(\dfrac{1}{M_0},M_0\right)$ for some positive real number $M_0$. Additionally, since $\lambda^{max}_{\mB_{n,K}^{-1}}=\dfrac{1}{\lambda^{min}_{\mB_{n,K}}}<\dfrac{1}{M_0}$ and $\lambda^{min}_{\mB_{n,K}^{-1}}=\dfrac{1}{\lambda^{max}_{\mB_{n,K}}}>\dfrac{1}{M_0}$. Thus, for all eigenvalues $\lambda_{\mB_{n,K}^{-1}}$ of $\mB_{n,K}^{-1}$, $\lambda_{\mB_{n,K}^{-1}}\in\left(\dfrac{1}{M_0},M_0\right)$.
\par Knowing this value $M_0$, we can define $\mathbb{S}$ as follows:
\begin{align}\label{eq_define_S2}
    \mathbb{S}=&\big\{(\vs_1, \vect(\mS_2),\vs_3, \vs_4, \vs_5, \vs_6):  \vs_1\in \sR^{(K-1)P(D_W+1)};\nonumber\\
    &\quad \mS_2\in \sR^{(K-1)P(D_W+1)\times (K-1)P(D_W+1)}, \mS_2 \succeq \zero, \lambda \in \left(\dfrac{1}{M_0},M_0\right) \text{ for all eigenvalues $\lambda$ of } \mS_2;\nonumber\\
    &\quad \vs_3 \in \sR^{KQ}; \quad \vs_4\in \sR^{KPQ(D_V+1)}; \quad \vs_6\in\sR_{+}^{KQ};\nonumber\\
    &\quad \vs_5\in \sR^{KQ(P(D_V+1))^2}, \mat(\vs_{5,k:q}) \succeq \zero \text{ for all $k\in[K],q\in [Q]$}\big\}.
\end{align}

It is trivial that $\mathbb{S}$ denoted above satisfies the requirements of \cref{assumptionA2}.

\textbf{Verification of \cref{assumptionA3}.} $\bars(\vtheta,\vz)$ is composed of elementary functions applied component-wise to $\vx$ and $\vy$, hence its expectation exist and lies in $\mathbb{S}$. Furthermore, since $\vx$ and $\vy$ are bounded, it can be shown that $\vtau,\mB,\vr,\vxi$ and $\nabla \underline{\sfg}$ are all bounded. Thus, the expectation $\PE[\bars(\vtheta; \rvz)]$ exists, lies within $\sS$, for any $\vtheta \in \sT$.

\textbf{Verification of \cref{assumptionA4}.} This assumption is proven in \cref{param update} where we have shown that $\vomega,\mUpsilon,\vsigma^2$ of the next iteration are the unique solution to the minimization problem of the current iteration. Indeed, since the objective is strictly convex in each block, the minimizer is unique and measurable.

\textbf{Verification of \cref{assumptionObjectiveC1}.} Since $f(\vtheta;\vz)$ is continuously differentiable in $\vtheta$ for all $\vz$
and dominated by an integrable envelope (due to bounded covariates and responses),
differentiation under the expectation is valid, hence
$\vtheta \mapsto \E[f(\vtheta;\rvz)]$ is $C^1$ on $\sT$.

\textbf{Verification of \cref{assumptionA5}.} Since we have $\vomega \in \sR^{P\times K\times (D_W+1)},\mUpsilon\in \sR^{K\times (D_V+1)\times Q\times P}$ and $\vsigma^2 \in \sR_{+}^{K\times Q\times Q}$, thus $~\mathbb{T}=\sR^{P\times K\times (D_W+1)}\times \sR^{K\times (D_V+1)\times Q\times P}\times\sR_{+}^{K\times Q\times Q}$ is a convex and open set. Furthermore, similar to the verification of \cref{assumptionA1}, we can show that $\psi$ and $\phib$ is twice continuously differentiable throughout $\mathbb{T}$.

\textbf{Verification of \cref{assumptionA6}.} From \cref{param update}, we have the function $\bar{\vtheta}(\vs)$ is defined as:
\begin{align*}
    \bar{\vtheta}(\vs)=\begin{bmatrix}
      &-\left(2\mat(\vs_{2}) \right)^{-1} \vs_{1}\\
      -&\left(\mat\left(\vs_{5,k:q}\right)+\mat\left(\vs_{5,k:q}\right)^{\top}\right)^{-1}\vs_{4,k:q},\\
      &\dfrac{\vs_{3,kq}+\vs_{4,k:q}^{\top}\mUpsilon_{k,q,:}^{\top}+\vs_{5,k:q}^{\top}\vect(\mUpsilon_{k,q,:}^{\top}\mUpsilon_{k,q,:})}{s_{6,kq}}
    \end{bmatrix}, \quad \vs=(\vs_1,\vs_2,\vs_3,\vs_4,\vs_5,\vs_6).
\end{align*}
\par It is trivial to show that $\bar{\vtheta}(\vs)$ is continuously differentiable for $\vs_1, \vs_3, \vs_4, \vs_6$. For $\vs_2, \vs_5$, we have $\vv\rightarrow \mat(\vv)+\mat(\vv)^{\top}$ is an affine transformation, thus it is continuously differentiable. Furthermore, from \cref{param update}, we know that $2\mat(\vs_2)$ and $\mat(\vs_5)+\mat(\vs_5)^{\top}$ are positive definite, thus their inverts are non-singular, thus their inverts are continuously differentiable. This concludes our proof.

\textbf{Verification of \cref{assumptionA7}.}  By applying \cref{lemma_assump_A7} in addition with the fact that $\vkappa, \vzeta,\mDelta,\bar{\vSigma}$ (defined in \cref{section_construction_MM_Oexpert}) are all bounded. We can conclude that for all compact set $\mathbb{K}\subset\mathbb{S}$, $\bars(\bar{\vtheta}(\vs),\vx)$ is bounded by a number $\beta$. Hence,
\begin{align*}
    \sup_{\vs \in \Kset}(\PE\left[ |\bars\left(\barparam(\vs);\rvx\right)|^2\right]) < \beta^2 <\infty.
\end{align*}
\begin{lemma} \label{lemma_assump_A7}
    Let $\mathbb{K}$ be a compact subset of $\mathbb{S}$. Then for all $\vs_1,\vs_2\in \mathbb{K}$, we have $(2\mat(\vs_2))^{-1}\vs_1$ is bounded.
\end{lemma}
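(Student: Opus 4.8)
The plan is to separate the two sources of control: the compactness of $\mathbb{K}$, which bounds the vector $\vs_1$, and the eigenvalue constraint built into the definition of $\mathbb{S}$ in \cref{eq_define_S2_main}, which bounds the inverse matrix $(2\mat(\vs_2))^{-1}$. The assertion is then a uniform bound on $\norm{(2\mat(\vs_2))^{-1}\vs_1}$ as the blocks $(\vs_1,\vs_2)$ range over the coordinates of points in $\mathbb{K}$. First I would invoke compactness: since $\mathbb{K}$ is a compact subset of $\mathbb{S}$ and the projection $\vs\mapsto\vs_1$ onto the first block is continuous, its image is compact and hence bounded, so there exists a finite $R>0$ with $\norm{\vs_1}\le R$ for every point of $\mathbb{K}$. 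Compactness is genuinely needed at this step, because $\mathbb{S}$ itself places no bound on $\vs_1$, which ranges over all of $\sR^{(K-1)P(D_W+1)}$.

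Next I would control the inverse matrix using the structural constraint defining $\mathbb{S}$. By construction, every eigenvalue $\lambda$ of $\mat(\vs_2)=\mS_2$ lies in $\ooint{1/M_0,\, M_0}$, so $\mat(\vs_2)$ is symmetric positive definite, $2\mat(\vs_2)$ is invertible, and the eigenvalues of $(2\mat(\vs_2))^{-1}$ lie in $\ooint{1/(2M_0),\, M_0/2}$. For a symmetric positive definite matrix the spectral (operator) norm equals its largest eigenvalue, whence $\norm{(2\mat(\vs_2))^{-1}}_{\mathrm{op}} < M_0/2$; crucially, this bound is uniform over all of $\mathbb{S}$ (not merely pointwise on $\mathbb{K}$), since the lower eigenvalue bound $1/M_0$ is encoded in the set itself. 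Combining the two estimates by submultiplicativity of the operator norm then gives $\norm{(2\mat(\vs_2))^{-1}\vs_1}\le \norm{(2\mat(\vs_2))^{-1}}_{\mathrm{op}}\,\norm{\vs_1} < (M_0/2)\,R < \infty$, independent of the chosen point, which is exactly the claimed bound.

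There is no substantive obstacle here; the only point requiring care is reading the definition of $\mathbb{S}$ so that the eigenvalue condition is recognized as a \emph{uniform} lower bound $1/M_0$ on $\lambda_{\min}(\mat(\vs_2))$. This is what upgrades a mere pointwise invertibility statement into the uniform operator-norm bound on the inverse, after which compactness supplies the remaining bound on $\vs_1$ and the two combine immediately.
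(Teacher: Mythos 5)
Your proof is correct and follows essentially the same route as the paper's: bound $\vs_1$ by compactness of $\mathbb{K}$, bound the inverse matrix uniformly via the eigenvalue constraint $\lambda\in\ooint{1/M_0,M_0}$ built into the definition of $\sS$, and combine. If anything, your version is slightly more careful than the paper's, which states that $\mat(\vs_2)$ is bounded rather than explicitly bounding the \emph{inverse} $(2\mat(\vs_2))^{-1}$, which is the quantity actually needed.
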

\begin{proof}[Proof of \cref{lemma_assump_A7}]
    Since $\vs_2\in\mathbb{K}\subset\mathbb{S}$, we have $\mat(\vs_2)$ eigenvalues are bounded in $\left(\frac{1}{M_0},M_0\right)$. This leads to $\mat(\vs_2)$ being bounded. Furthermore, we have $\vs_1\in\mathbb{K}$ and $\mathbb{K}$ is compact, so $\vs_1$ is also bounded. Combining these two reasons we conclude the proof.
\end{proof}

\textbf{Verification of \cref{assumptionA10}.} First, notice that since $\vx$ and $\vy$ are bounded, $\vkappa,\vxi,\mDelta,\bar{\vSigma}$ are also bounded. Thus by applying \cref{lemma_fir_for_A9}, we can show that $\vs_{i,n}$ is bounded for all $i\in\{2,\dots,6\}$. Additionally, since $\vkappa,\vxi,\mDelta,\bar{\vSigma}$ are bounded, due to how we choose $\mathbb{S}$, for all $i\in\{2,\dots,6\}$, there exist a compact set $\mathbb{J}_i\subset\mathbb{S}$ that satisfy the condition of \cref{lemma_sec_for_A9} for sequence $\vs_i$. Hence, by applying \cref{lemma_sec_for_A9}, we can show that 
    \begin{align*}
        \lim\inf d(\vs_{i,n},\mathbb{S}^c)>0,\quad \forall i \in \{2,\dots,6\}.
    \end{align*}
    \par Since, $\vx$ and $\vy$ are taken from compact sets, we can apply \cref{lemma_sec_for_A9} for $\vs_i, \forall i\in\{2,\dots,6\}$, and thus having
    \begin{align*}
        \lim\inf d(\vs_{i,n},\mathbb{S}^c)>0,\quad \forall i \in \{2,\dots,6\}.
    \end{align*}
\begin{lemma}\label{lemma_fir_for_A9}
    Given a sequence of vectors $(\vv_n)^{+\infty}_{n=0}$ satisfying
    \begin{align*}
        \vv_{n+1}=\vv_n+\gamma_n(\vf(n)-\vv_n),
    \end{align*}
    where $0<\gamma_n<1$. If $\vf$ is bounded (element-wise), then, $\vv_n$ is bounded (element-wise). 
\end{lemma}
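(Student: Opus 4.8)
The plan is to exploit the convex-combination structure of the recursion. First I would rewrite the update as
\[
\vv_{n+1} = (1-\gamma_n)\,\vv_n + \gamma_n\,\vf(n),
\]
which, since $\gamma_n \in (0,1)$, expresses $\vv_{n+1}$ as a genuine convex combination of $\vv_n$ and $\vf(n)$. Because the claim is element-wise and the recursion decouples across coordinates, it suffices to treat a single scalar component; write $v_n$ for a generic entry of $\vv_n$ and $f(n)$ for the corresponding entry of $\vf(n)$, so that $v_{n+1} = (1-\gamma_n)\,v_n + \gamma_n\,f(n)$.

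Let $M$ be an element-wise bound on $\vf$, so that $|f(n)| \le M$ for all $n$, and set $B \eqdef \max(|v_0|, M)$. I would then prove $|v_n| \le B$ for all $n$ by induction. The base case $|v_0| \le B$ is immediate. For the inductive step, assuming $|v_n| \le B$, the triangle inequality together with $\gamma_n, 1-\gamma_n \in (0,1)$ gives
\[
|v_{n+1}| \le (1-\gamma_n)\,|v_n| + \gamma_n\,|f(n)| \le (1-\gamma_n)\,B + \gamma_n\,M \le B,
\]
where the final inequality uses $M \le B$. Since the bound $B$ is uniform in $n$ and the argument applies verbatim to every coordinate, each component of $\vv_n$ remains bounded, which is precisely the claim.

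There is no real obstacle here: the only structural fact exploited is that each step is a convex combination, which holds coordinate-wise for every $n$ regardless of the precise value of $\gamma_n$. In particular, the resulting bound depends neither on summability of $\{\gamma_n\}$ nor on any monotonicity assumption---only on the membership $\gamma_n \in (0,1)$---so the iterates simply stay trapped within the element-wise interval determined by the initial value $\vv_0$ and the range of $\vf$. The one point worth stating explicitly is that the constant $B$ can be chosen independently of $n$, as this is exactly what delivers the uniform element-wise boundedness of the entire sequence asserted in the lemma.
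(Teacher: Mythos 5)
Your proof is correct and follows essentially the same route as the paper's: both rewrite the update as a convex combination $(1-\gamma_n)\vv_n+\gamma_n\vf(n)$ and run an induction showing the iterates stay within a bound determined by the initial value and the bound on $\vf$ (the paper handles upper and lower bounds separately with an element-wise max, while you use absolute values, which is an immaterial difference). No gaps.
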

    
    \begin{lemma}\label{lemma_sec_for_A9}
        Let $\mathbb{H}$ be a convex, open set. Given a sequence of vectors $(\vv_n)^{+\infty}_{n=0}$ in $\mathbb{H}$ satisfying
    \begin{align*}
        \vv_{n+1}=\vv_n+\gamma_n(\vf(n)-\vv_n),
    \end{align*}
    where $0<\gamma_n<1$. Then, if there exist a compact set $\mathbb{J}\subseteq \mathbb{H}$ such that $\vf(n)\in \mathbb{J}\quad\forall n$, we have
    \begin{align*}
        \lim\inf \{d(\vv_n,\mathbb{S}^c)\}>0.
    \end{align*}
    \end{lemma}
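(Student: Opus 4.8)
The plan is to exploit two facts that the hypotheses hand us for free: the recursion is a genuine convex combination (because $\gamma_n\in(0,1)$), and $\mathbb{H}$ is convex. First I would rewrite the update as $\vv_{n+1}=(1-\gamma_n)\vv_n+\gamma_n\vf(n)$, so that $\vv_{n+1}$ lies on the segment joining $\vv_n\in\mathbb{H}$ and $\vf(n)\in\mathbb{J}\subseteq\mathbb{H}$. (I also read the target of the conclusion as $d(\vv_n,\mathbb{H}^c)$ rather than $d(\vv_n,\mathbb{S}^c)$; the two agree in the application where the lemma is invoked with $\mathbb{H}=\mathbb{S}$.) The whole argument then reduces to tracking the scalar ``depth'' sequence $a_n:=d(\vv_n,\mathbb{H}^c)$.

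The crucial structural observation, and really the only non-mechanical step, is that the depth function $\vu\mapsto d(\vu,\mathbb{H}^c)$ is \emph{concave} on the convex set $\mathbb{H}$. I would establish this through the characterization $d(\vu,\mathbb{H}^c)=\sup\{r\ge 0: B(\vu,r)\subseteq\mathbb{H}\}$: given $B(\vu_1,r_1),B(\vu_2,r_2)\subseteq\mathbb{H}$ and $\lambda\in[0,1]$, every point of $B(\lambda\vu_1+(1-\lambda)\vu_2,\ \lambda r_1+(1-\lambda)r_2)$ can be written as a convex combination of a point of $B(\vu_1,r_1)$ and a point of $B(\vu_2,r_2)$, so convexity of $\mathbb{H}$ forces that ball into $\mathbb{H}$; taking suprema over $r_1,r_2$ yields $d(\lambda\vu_1+(1-\lambda)\vu_2,\mathbb{H}^c)\ge \lambda\,d(\vu_1,\mathbb{H}^c)+(1-\lambda)\,d(\vu_2,\mathbb{H}^c)$.

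Next I would extract the positive margin $\delta:=d(\mathbb{J},\mathbb{H}^c)$, and this is exactly where compactness of $\mathbb{J}$ is used: $\mathbb{J}$ is compact, $\mathbb{H}^c$ is closed, and they are disjoint because $\mathbb{J}\subseteq\mathbb{H}$, so their distance is strictly positive, giving $d(\vf(n),\mathbb{H}^c)\ge\delta>0$ for every $n$. Likewise $a_0=d(\vv_0,\mathbb{H}^c)>0$ since $\vv_0$ lies in the open set $\mathbb{H}$. Applying concavity to the convex-combination update then produces the one-dimensional inequality $a_{n+1}\ge(1-\gamma_n)a_n+\gamma_n\delta$.

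Finally a one-line induction closes the argument. Setting $m:=\min(a_0,\delta)>0$, the base case is $a_0\ge m$, and if $a_n\ge m$ then, since $\delta\ge m$, we get $a_{n+1}\ge(1-\gamma_n)m+\gamma_n\delta\ge(1-\gamma_n)m+\gamma_n m=m$. Hence $a_n\ge m$ for all $n$ and $\liminf_{n}d(\vv_n,\mathbb{H}^c)\ge\min(a_0,\delta)>0$, as claimed; the degenerate case $\mathbb{H}=\rset^D$ (so $\mathbb{H}^c=\emptyset$ and all distances are $+\infty$) is trivial. I expect the concavity of the depth function to be the only delicate point: once it is in hand, no divergence condition such as $\sum_n\gamma_n=\infty$ is even required, and the conclusion follows from the elementary monotone recursion above.
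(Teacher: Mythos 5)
Your proof is correct, and it takes a genuinely different route from the paper's. The paper argues geometrically at the level of sets: it forms $\hat{\mathbb{J}}$, the convex hull of $\mathbb{J}\cup\{\vv_0\}$, observes that this is a compact convex subset of the open convex set $\mathbb{H}$, shows by induction that every iterate $\vv_{n+1}=(1-\gamma_n)\vv_n+\gamma_n\vf(n)$ stays in $\hat{\mathbb{J}}$ (as a convex combination of two of its elements), and then invokes the strictly positive distance between the compact set $\hat{\mathbb{J}}$ and the closed set $\mathbb{H}^c$. You instead scalarize the problem via the depth $a_n:=d(\vv_n,\mathbb{H}^c)$ and lean on the concavity of $\vu\mapsto d(\vu,\mathbb{H}^c)$ on a convex $\mathbb{H}$, which converts the convex-combination update into the one-dimensional recursion $a_{n+1}\ge(1-\gamma_n)a_n+\gamma_n\delta$ with $\delta=d(\mathbb{J},\mathbb{H}^c)>0$; your ball-inclusion argument for concavity is sound, and the induction on $\min(a_0,\delta)$ closes it. Your correct reading of the paper's $\mathbb{S}^c$ as $\mathbb{H}^c$ (the lemma is applied with $\mathbb{H}=\mathbb{S}$) and your handling of the degenerate case $\mathbb{H}^c=\emptyset$ are both appropriate. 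What each approach buys: the paper's argument is more elementary in that it needs no concavity lemma, though it tacitly uses that the convex hull of a compact set in $\rset^D$ is compact (Carath\'eodory), whereas yours sidesteps that and additionally delivers an explicit quantitative lower bound $\liminf_n d(\vv_n,\mathbb{H}^c)\ge\min\bigl(d(\vv_0,\mathbb{H}^c),\,d(\mathbb{J},\mathbb{H}^c)\bigr)$ rather than a bare unquantified $\varepsilon>0$. Both correctly use only $\gamma_n\in(0,1)$ and neither needs $\sum_n\gamma_n=\infty$.
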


\begin{proof}[Proof of \cref{lemma_fir_for_A9}]
    Let say $\vf$ is bounded (element-wise) by $\valpha$. Denote $\valpha_0=\max(\valpha,\vt_0)$ (element-wise). We will show by induction that $\vv_n$ is upper bounded by $\valpha_0$ (element-wise). We have $\vt_0 \leq \valpha_0$. Let say for some $n$, we have $\vv_n\leq \valpha_0$ (element-wise), then:
    \begin{align*}
        \vv_{n+1}=(1-\gamma_{n+1})\vv_n+\gamma_{n+1}\vf(n)\leq (1-\gamma_{n+1})\valpha_0+\gamma_{n+1}\valpha_0=\valpha_0.
    \end{align*}
    Hence, by induction, we have shown that $\vv_n$ is upper bounded. Doing similarly for the lower bound, we complete the proof.
\end{proof}
    
    \begin{proof}[Proof of \cref{lemma_sec_for_A9}]
        \par Let $\hat{\mathbb{J}}$ be the smallest compact, convex set that contains $\mathbb{J}\cup\{\vt_0\}$ (i.e $\hat{\mathbb{J}}$ is the convex hull of $\mathbb{J}\cup\{\vt_0\}$). We know that $\vt_0\in\mathbb{H}$ and $\mathbb{J}\subseteq\mathbb{H}$, hence $\mathbb{J}\cup\{\vt_0\}\subseteq \mathbb{H}$. Since $\mathbb{H}$ is convex and open, we have $\hat{\mathbb{J}}\subset\mathbb{H}$. We will show by induction that $\vv_n\in\hat{\mathbb{J}}$ for all $n$. We have $\vt_0\in \hat{\mathbb{J}}$. Assume that $\vv_n \in \hat{\mathbb{J}}$, we have:
        \begin{align*}
            \vv_{n+1}=(1-\gamma_{n})\vv_n+\gamma_n \vf(n)
        \end{align*}
        is a convex combination of $\vv_n$ and $\vf(n)$. But since both $\vv_n$ and $\vf(n)$ are elements of $\hat{\mathbb{J}}$,  $\vv_{n+1}\in \hat{\mathbb{J}}$. Hence, by induction we have $\vv_n\in \hat{\mathbb{J}}\quad \forall n$. Lastly, since $\hat{\mathbb{J}}$ is a compact subset of the open set $\mathbb{S}$, there exist $\varepsilon>0$ such that for all $\va\in \hat{\mathbb{J}}$ and $\vb\in \mathbb{S}^c$, $$d(\va,\vb)>\varepsilon.$$ Combining this with the previous proven facts, we yield
        \begin{align*}
            \lim\inf\{d(\vv_n,\mathbb{S}^c)\}>\varepsilon>0.
        \end{align*}
        This completes the proof.
    \end{proof}

\textbf{Verification of \cref{assumptionA11}.} By taking $\gamma_n=\gamma_0 n^{-\alpha}$ with $\alpha\in\left(\tfrac12,1\right]$ and $\gamma_0\in(0,1)$, all requirements of \cref{assumptionA11} are met.

\begin{remark}
    Although formal proofs for \cref{assumptionA10} (in the case of $\vs_{n,1}$) and \cref{assumptionA8,assumptionA9} remain elusive, we adopt these as working assumptions for the MoE model. Nonetheless, our experiments on both synthetic and real-world datasets consistently demonstrate that our proposed method achieves convergence reliably, with a faster convergence rate and higher accuracy compared to the baseline models.
\end{remark}

\subsection{Derivation of the \texorpdfstring{\cref{algorithm_onlineMM_MoE}}{Algorithm \ref*{algorithm_onlineMM_MoE}}}\label{proof_algorithm_onlineMM_MoE}

\subsubsection{Construction of the incremental stochastic MM algorithm for \texorpdfstring{$O_{gate}$}{}}\label{section_construction_MM_Ogate}

\begin{align*}
O_{gate}(\vomega,\vtheta_n) &= -\sum_{k=1}^{K}\tau_{k}w_k(\vx)+\displaystyle\left\{\vomega^{\top}\nabla \underline{\sfg}(\vomega_n;\vx_n)+\frac{1}{2}\left(\vomega-\vomega_n\right)^{\top}\mB_{n,K}\left(\vomega-\vomega_n\right)\right\}.
\end{align*}
Let us denote vector:
\begin{align*}
    \vxi=[\tau_{1}x_1^{0}, \tau_{1}x_1^{1},\dots, \tau_{1}x_1^{D_W},\dots,\tau_{K-1}x_1^{D_W},\dots,\tau_{K-1}x_P^{D_W}]^{\top}.
\end{align*}
Hence:
\begin{align*}
    O_{gate}(\vomega,\vtheta_n)&=\left(-\vxi^{\top}+\nabla \underline{\sfg}(\vomega_n;\vx_n)^{\top}-\vomega_n^{\top}\mB\right)\vomega+\frac{1}{2}\vomega^{\top}\mB\vomega+ C,\\
    \mB&=\left(\frac{3}{4}\mI_{K-1}-\frac{\bm{1}_{K-1}\bm{1}_{K-1}^{\top}}{2(K-1)}\right)\otimes\hat{\vx}\hat{\vx}^{\top},
\end{align*}
 where $C$ represents the constants that are independent of $\vomega$.
 
Here the incremental stochastic MM algorithm can be written as:
\begin{align*}
    &\phib(\vomega):=
    \begin{bmatrix}
      \vomega\\
      \vect(\vomega\vomega^{\top})
    \end{bmatrix}, ~\bars(\vtau;\vz)=
    \begin{bmatrix}
      -\vxi+\nabla \underline{\sfg}(\vtau)-\mB\vtau \\
      \frac{1}{2}\vect(\mB)
    \end{bmatrix},~\psi(\vomega)=C.
\end{align*}
\subsubsection{Construction of the incremental stochastic MM algorithm for \texorpdfstring{$O_{expert}$}{}}\label{section_construction_MM_Oexpert}
Recall that the expert function to be minimized is given by:
\begin{align*}  
  O_{expert}(\mUpsilon,\vsigma^2,\vtheta) &=\displaystyle\sum_{k=1}^{K}\left[ \tau_{k} {(\vy - \vupsilon_k(\vx))^\top \mSigma_k^{-1} (\vy - \vupsilon_k(\vx))}+ \tau_k \log \lvert \mSigma_k \rvert\right]^{\top}\\
  &=\displaystyle\sum_{k=1}^{K}\left[ \tau_{k} {(\vy - \mUpsilon_k\vr)^\top \mSigma_k^{-1} (\vy - \mUpsilon_k\vr)}+ \tau_k \log \lvert \mSigma_k \rvert\right]^{\top}.
\end{align*}
Where $\mUpsilon_k=\left[\mUpsilon_{k,0},\dots,\mUpsilon_{k,D_V}\right]$ is a $Q\times P(D_V+1)$ matrix and $\vr=\vect\left(\left[\vx^{0},\dots,\vx^{Dv}\right]\right)$ is a $P(D_V+1)$-vector. We further denote $\mUpsilon_{k,q,:}$ to be the vector containing the $q$-th row of $\mUpsilon_k$ and $\sigma_{k,q}^2$ to be the values on the diagonal of $\mSigma_k$ for $q \in [Q]$.\\
Here the incremental stochastic MM algorithm can be written as:
\begin{align*}
    \phib(\vupsilon,\vsigma)&:=
    \begin{bmatrix}
      \vkappa\\
      \vzeta \\
      \mDelta\\
      \bar{\vSigma}
    \end{bmatrix},~ 
    \bars(\vtau;\vx)=
    \begin{bmatrix}
      \vtau\otimes \vy^{2} \\
      -2\vtau\otimes(\vy\otimes\vr) \\      \vtau\otimes\left(\vect\left(\vr\vr^{\top}\right)\otimes \mathbf{1}_Q\right)\\
      \vtau\otimes \mathbf{1}_Q
    \end{bmatrix}, \text{ where}\\
    \bar{\vSigma}&=[\log\sigma_{1,1}^{2},\dots, \log\sigma_{1,Q}^2, \log\sigma_{2,1}^{2},\dots,\log\sigma_{K,Q}^{2}]^{\top},~\vkappa=\left[\frac{1}{\sigma_{1,1}^{2}},\dots,\frac{1}{\sigma_{1,Q}^2},\frac{1}{\sigma_{2,1}^{2}},\dots,\frac{1}{\sigma_{K,Q}^{2}}\right]^{\top},\\
    \vzeta&=\vect\left(\left[\frac{\mUpsilon_{1,1}^{\top}}{\sigma_{1,1}^{2}},\dots,\frac{\mUpsilon_{1,Q}^{\top}}{\sigma_{1,Q}^{2}},\frac{\mUpsilon_{2,1}^{\top}}{\sigma_{2,1}^{2}},\dots,\frac{\mUpsilon_{K,Q}^{\top}}{\sigma_{K,Q}^{2}}\right]\right),~
    \vtau=[\tau_{1},\tau_{2},\dots,\tau_{K}]^{\top},\\
    \mDelta&=\vect\left(\frac{\vect(\mUpsilon_{1,1}^{\top}\mUpsilon_{1,1})}{\sigma_{1,1}^{2}},\dots,\frac{\vect(\mUpsilon_{1,Q}^{\top}\mUpsilon_{1,Q})}{\sigma_{1,Q}^{2}},\frac{\vect(\mUpsilon_{2,1}^{\top}\mUpsilon_{2,1})}{\sigma_{2,1}^{2}},\dots,\frac{\vect(\mUpsilon_{K,Q}^{\top}\mUpsilon_{k,q,:})}{\sigma_{k,q}^2}\right).
\end{align*}
Results from \cref{section_construction_MM_Ogate,section_construction_MM_Oexpert} imply the desired updates of \cref{eq_s_step} presented in \cref{algorithm_onlineMM_MoE}.
\subsubsection{Parameters update}\label{param update}
\begin{lemma} \label{lemma_nonegative_serie}
    Consider the series $\vu_m$ defined as follows:
    \begin{align*}
        &\vu_0=\mA \in \sR^{m \times m},~\vu_{m+1}=\vu_m+\Lambda(\mB-\vu_m),~ \forall m \geq 1, \text{ here } \mB\in \sR^{m\times m}.
    \end{align*}
    Then, $\vu_m \succeq \zero$ for all $m$ if the following conditions are satisfied: $\vu_0\succeq \zero$, $\mB\succeq \zero$ and $1 >\Lambda > 0$.
\end{lemma}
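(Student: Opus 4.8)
The plan is to proceed by a straightforward induction on $m$, after rewriting the recursion as a convex combination of $\vu_m$ and $\mB$, and then invoking the elementary fact that the cone of positive semi-definite matrices is convex and invariant under multiplication by nonnegative scalars. This is the Loewner-order analogue of the entrywise induction already carried out in \cref{lemma_fir_for_A9}, so the structure of the argument is essentially identical; only the order relation changes.

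First I would record the convex-combination form of the update. Since $\Lambda$ is a scalar with $\Lambda\in(0,1)$, for every $m\ge 0$ we have
\begin{equation*}
\vu_{m+1} = \vu_m + \Lambda(\mB - \vu_m) = (1-\Lambda)\,\vu_m + \Lambda\,\mB ,
\end{equation*}
so that $\vu_{m+1}$ is a convex combination of $\vu_m$ and $\mB$ with strictly positive weights $1-\Lambda$ and $\Lambda$. Next I would run the induction. The base case is immediate, since $\vu_0 = \mA \succeq \zero$ by hypothesis. For the inductive step, assume $\vu_m \succeq \zero$; fixing an arbitrary $\vv \in \sR^{m}$ and testing the quadratic form gives
\begin{equation*}
\vv^{\top}\vu_{m+1}\vv = (1-\Lambda)\,\vv^{\top}\vu_m\vv + \Lambda\,\vv^{\top}\mB\vv \;\ge\; 0 ,
\end{equation*}
because each summand is nonnegative: $1-\Lambda>0$ and $\Lambda>0$, while $\vv^{\top}\vu_m\vv\ge 0$ by the induction hypothesis and $\vv^{\top}\mB\vv\ge 0$ by the assumption $\mB\succeq\zero$. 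As $\vv$ was arbitrary, $\vu_{m+1}\succeq\zero$, which closes the induction and proves the claim for all $m$.

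There is no genuine obstacle in this lemma: the entire content is the observation that the PSD cone is closed under nonnegative linear combinations, which the convex-combination rewrite supplies directly. The single point worth flagging explicitly at the outset is that the argument relies on $\Lambda$ being a scalar (equivalently, a gain of the form $\Lambda\mI$ with $\Lambda\in(0,1)$); for a general matrix gain the update would no longer be a convex combination of $\vu_m$ and $\mB$ in the Loewner order, and the conclusion could fail. I would therefore make the scalar interpretation of $\Lambda$ explicit before beginning the induction, consistent with the hypothesis $1>\Lambda>0$.
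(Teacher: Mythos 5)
Your proof is correct and follows essentially the same route as the paper's: both arguments rewrite the update as the convex combination $\vu_{m+1}=(1-\Lambda)\vu_m+\Lambda\mB$ and induct on $m$, using closure of the PSD cone under nonnegative combinations. Your version is slightly more explicit (spelling out the quadratic-form test and flagging that $\Lambda$ must be a scalar), but there is no substantive difference in approach.
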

\par The proofs for \cref{lemma_nonegative_serie} are shown in \cref{lemma4_proof}.

\textbf{Gating update:}\\
At the $n+1$ iteration, our gating parameters are calculated as followed:
\begin{align*}
    \vomega_{n+1}&=\displaystyle\argmin_{\vomega}\left[\vomega^{\top} \vs_{n+1,1}+\vect(\vomega\vomega^{\top})^{\top}\vs_{n+1,2}\right].
\end{align*}
The first and second derivatives of $r_1(\vomega)=\vomega^{\top}\vs_{n+1,1}+\vect(\vomega\vomega^{\top})^{\top}\vs_{n+1,2}$ are:
\begin{align} \label{1deGatting}
   \nabla r_1(\vomega)=\vs_{n+1,1}+(\mat(\vs_{n+1,2})+\mat(\vs_{n+1,2})^{\top})\vomega,
\end{align}
\begin{align} \label{2deGatting}
   \nabla^2 r_1(\vomega)=\mat(\vs_{n+1,2})+\mat(\vs_{n+1,2})^{\top}.
\end{align}
\par Since
\begin{align*}
    \vs_{n+1,2}=\vs_{n,2}+\gamma_{n+1}\left[\frac{1}{2}\vect\mB_{n,K}-\vs_{n,2}\right],
\end{align*}
hence:
\begin{align*}
    \mat(\vs_{n+1,2})
    &=\mat(\vs_{n,2})+\gamma_{n+1}\left[\frac{1}{2}\mB_{n,K}-\mat(\vs_{n,2})\right].
\end{align*}
\par Now by applying \cref{lemma_nonegative_serie} (knowing $\mat(\vs_{0,2})\succeq \zero)$, we have $\mat(\vs_{n,2})$ is positive semi-definite for all $n$. Combine this with \cref{2deGatting}, we have $\vomega_{n+1}$ is the solution of \cref{1deGatting}. Moreover, since $\vs_{0,2}$ and $\mB_{n,K}$ are PSD matrices for all integer $n$, 
 by induction, we have $\mat(\vs_{n,2})$ is a PSD matrix for all $n$. 

Hence,
\begin{align*}
\vomega_{n+1} = -\left(2\mat(\vs_{n+1,2})\right)^{-1} \vs_{n+1,1},
\end{align*}
where $\mat(\vs_{n+1,2})$ represents the reshaping of $\vs_{n+1,2}$ into a $K(D_W+1)P \times K(D_W+1)P$ square matrix.

\textbf{Expert mean functions update:}

At the $n+1$ iteration, our expert mean functions parameters are calculated as followed:
\begin{align*}
    \mUpsilon_{n+1,k,q,:}&=\displaystyle\argmin_{\mUpsilon_{k,q,:}}\left[\vs_{n+1,4,k:q}^{\top}\frac{\mUpsilon_{k,q,:}^{\top}}{\sigma_{k,q}^2}+\vs_{n+1,5,k:q}^{\top}\frac{\vect(\mUpsilon_{k,q,:}^{\top}\mUpsilon_{k,q,:})}{\sigma_{k,q}^2}\right].\\   
\end{align*}
\par The first and second derivatives of 
\begin{align*}
    r_2(\mUpsilon_{k,q,:})=\vs_{n+1,4,k:q}^{\top}\frac{\mUpsilon_{k,q,:}^{\top}}{\sigma_{k,q}^2}+\vs_{n+1,5,k:q}^{\top}\frac{\vect(\mUpsilon_{k,q,:}^{\top}\mUpsilon_{k,q,:})}{\sigma_{k,q}^2}
\end{align*} are:
\begin{align} \label{1dupsi}
        \nabla r_2(\mUpsilon_{k,q,:})=\frac{\vs_{n+1,4,k:q}}{\sigma_{k,q}^{2}}+\frac{\left(\mat\left(\vs_{n+1,5,k:q}\right)+\mat\left(\vs_{n+1,5,k:q}\right)^{\top}\right)}{\sigma_{k,q}^{2}}\mUpsilon_{k,q,:}^{\top}, \text{ and } 
\end{align}
\begin{align} \label{2dupsi}
    \nabla^2 r_2(\mUpsilon_{k,q,:})=\frac{\left(\mat\left(\vs_{n+1,5,k:q}\right)+\mat\left(\vs_{n+1,5,k:q}\right)^{\top}\right)}{\sigma_{k,q}^{2}}.
\end{align}
\par Since
\begin{align*}
    \vs_{n+1,5,k:q}=\vs_{n,5,k:q}+\gamma_{n+1}\left(\vect\left(\vr\vr^{\top}\right)-\vs_{n,5,k:q}\right),
\end{align*}
hence:
\begin{align*}
    \mat(\vs_{n+1,5,k:q})=\mat(\vs_{n,5,k:q})+\gamma_{n+1}\left(\vr\vr^{\top}-\mat(\vs_{n,5,k:q})\right).
\end{align*}
\par Now by applying \cref{lemma_nonegative_serie}, knowing that:
\begin{align*}
\mat(\vs_{0,5,k:q})=\widehat{\vs}\widehat{\vs}^{\top}+\mI_{D_{V}+1}\succeq \zero,
\end{align*}
we have $\mat(\vs_{n,5,k:q})$ is positive definite for all $n$ and $k \in [K]$, $q\in [Q]$. Combine this with \cref{2dupsi}, we have the solution of \cref{1dupsi} is the global minimum of $r_2(\mUpsilon_{k,q,:})$. Additionally, it can be shown by induction that $\mat(\vs_{n,5,k:q})$ is always a symmetric matrix. 
\par
Hence: 
\begin{align*}
\mUpsilon_{n+1,k,q,:}=-\left(2\mat\left(\vs_{n+1,5,k:q}\right)\right)^{-1}\vs_{n+1,4,k:q},
\end{align*}
where $\mat(\vs_{n+1,5,k:q})$ is $\vs_{n+1,5,k:q}$ reshape into $P(D_{V}+1) \times P(D_{V}+1)$ square matrix.

\textbf{Expert variance update:}\\
At the $n+1$ iteration, our expert variance parameters are calculated as followed:
\begin{align*}
    \sigma_{n+1,k,q}^{2}&=\displaystyle\argmin_{\sigma_{k,q}^2}\left[s_{n+1,3,kq}\frac{1}{\sigma_{k,q}^2}+\vs_{n+1,4,k:q}^{\top}\frac{\mUpsilon_{k,q,:}^{\top}}{\sigma_{k,q}^2}+\vs_{n+1,5,k:q}^{\top}\frac{\vect(\mUpsilon_{k,q,:}^{\top}\mUpsilon_{k,q,:})}{\sigma_{k,q}^2}+s_{n+1,6,kq}\log(\sigma_{k,q}^2)\right].
\end{align*}
\par The first and second derivatives of: 
\begin{align*}
    r_3(\sigma_{k,q}^2)=s_{n+1,3,kq}\frac{1}{\sigma_{k,q}^2}+\vs_{n+1,4,k:q}^{\top}\frac{\mUpsilon_{k,q,:}^{\top}}{\sigma_{k,q}^2}+\vs_{n+1,5,k:q}^{\top}\frac{\vect(\mUpsilon_{k,q,:}^{\top}\mUpsilon_{k,q,:})}{\sigma_{k,q}^2}+s_{n+1,6,kq}\log(\sigma_{k,q}^2)
\end{align*} are:
\begin{align} \label{d1sigma}
    \nabla r_3(\sigma_{k,q}^2)=-\frac{s_{n+1,3,kq}+\vs_{n+1,4,k:q}^{\top}\mUpsilon_{k,q,:}^{\top}+\vs_{n+1,5,k:q}^{\top}\vect(\mUpsilon_{k,q,:}^{\top}\mUpsilon_{k,q,:})}{\sigma_k^4}+\frac{s_{n+1,6,kq}}{\sigma_{k,q}^2},
\end{align}
\begin{align} 
    \nabla^2 r_3(\sigma_{k,q}^2)=2\frac{s_{n+1,3,kq}+\vs_{n+1,4,k:q}^{\top}\mUpsilon_{k,q,:}^{\top}+\vs_{n+1,5,k:q}^{\top}\vect(\mUpsilon_{k,q,:}^{\top}\mUpsilon_{k,q,:})}{\sigma_{k,q}^6}-\frac{s_{n+1,6,kq}}{\sigma_{k,q}^4}.\nn
\end{align}
We shall prove that the solution to \cref{d1sigma} is a global minimum of: 
\begin{align*}
    s_{n+1,3,kq}\frac{1}{\sigma_{k,q}^2}+\vs_{n+1,4,k:q}^{\top}\frac{\mUpsilon_{k,q,:}^{\top}}{\sigma_{k,q}^2}+\vs_{n+1,5,k:q}^{\top}\frac{\vect(\mUpsilon_{k,q,:}^{\top}\mUpsilon_{k,q,:})}{\sigma_{k,q}^2}+s_{n+1,6,kq}\log(\sigma_{k,q}^2).
\end{align*}
Denote $\sigma_{n+1,k,q}^{2}=\dfrac{s_{n+1,3,kq}+\vs_{n+1,4,k:q}^{\top}\mUpsilon_{k,q,:}^{\top}+\vs_{n+1,5,k:q}^{\top}\vect(\mUpsilon_{k,q,:}^{\top}\mUpsilon_{k,q,:})}{s_{n+1,6,kq}}$ be such solution, we have:
\begin{align*} 
    &\sigma_{n+1,k,q}^{6}\nabla^2 r_3(\sigma_{n+1,k,q}^{2})\\
    &=s_{n+1,3,kq}+\vs_{n+1,4,k:q}^{\top}\mUpsilon_{k,q,:}^{\top}+\vs_{n+1,5,k:q}^{\top}\vect(\mUpsilon_{k,q,:}^{\top}\mUpsilon_{k,q,:})\\
    &=s_{n,3,kq}+\gamma_{n+1}\left({\tau}_k y_q^{2}-s_{n,3,kq}\right)+\left[\vs_{n,4,k:q}+\gamma_{n+1}\left((-2y_q\tau_k\vr)-\vs_{n,4,k:q}\right)\right]^{\top}\mUpsilon_{k,q,:}^{\top}\\
    &\quad +\left[\vs_{n,5,k:q}+\gamma_{n+1}\left(\tau_k\left(\vect\left(\vr\vr^{\top}\right)\right)-\vs_{n,5,k:q}\right)\right]^{\top}\vect(\mUpsilon_{k,q,:}^{\top}\mUpsilon_{k,q,:})\\
    &=(1-\gamma_{n+1})\left[s_{n,3,kq}+\vs_{n,4,k:q}^{\top}\mUpsilon_{k,q,:}^{\top}+\vs_{n,5,k:q}^{\top}\vect(\mUpsilon_{k,q,:}^{\top}\mUpsilon_{k,q,:})\right]+\gamma_{n+1}\tau_k\left(y_q-\mUpsilon_{k,q,:}\vr\right)^2.
\end{align*}
\par Now notice that for all value of $\mUpsilon_{k,q,:}$ we have:
\begin{align*}
    s_{0,3,kq}+s_{0,4,kq}^{\top}\mUpsilon_{k,q,:}^{\top}+\vs_{0,5,k:q}^{\top}\vect(\mUpsilon_{k,q,:}^{\top}\mUpsilon_{k,q,:})=\left(\frac{1}{2}+\mUpsilon_{k,q,:}\widehat{\vs}\right)^2 +\mUpsilon_{k,q,:}\mUpsilon_{k,q,:}^{\top}+\frac{3}{4} >0.
\end{align*}
Hence using induction we can prove that $\sigma_{n+1,k,q}^{6}\nabla^2 r_3(\sigma_{n+1,k,q}^{2})>0$ for all $n, k,q, \mUpsilon_{k,q,:}$. Which directly leading to $\sigma_{n+1,k,q}^{2}$ being a local minimum. Now since $\sigma_{n+1,k,q}^{2}$ is the only solution of $\nabla r_3(\sigma_{k,q}^2)$, to prove that it is the global minimum, we only need to show that $$\displaystyle\lim_{\sigma_{k,q}^2\rightarrow0^{+}}r_3(\sigma_{k,q}^2)\geq r_3(\sigma_{n+1,k,q}^{2}) \text{ and } \displaystyle\lim_{\sigma_{k,q}^2\rightarrow+\infty}r_3(\sigma_{k,q}^2)\geq r_3(\sigma_{n+1,k,q}^{2}).$$ 

Denote $U=s_{n+1,3,kq}+\vs_{n+1,4,k:q}^{\top}\mUpsilon_{k,q,:}^{\top}+\vs_{n+1,5,k:q}^{\top}\vect(\mUpsilon_{k,q,:}^{\top}\mUpsilon_{k,q,:}) >0$, we have:
\begin{align*}
    &\displaystyle\lim_{\sigma_{k,q}^2\rightarrow+\infty}r_3(\sigma_{k,q}^2)=\displaystyle\lim_{\sigma_{k,q}^2\rightarrow+\infty}\frac{U}{\sigma_{k,q}^2}+\displaystyle\lim_{\sigma_{k,q}^2\rightarrow+\infty}s_{n+1,6,kq}\log(\sigma_{k,q}^2)=0 + \infty=+\infty > r_3(\sigma_{n+1,k,q}^{2}),\\
    &\displaystyle\lim_{\sigma_{k,q}^2\rightarrow0^{+}}r_3(\sigma_{k,q}^2)=\displaystyle\lim_{\sigma_{k,q}^2\rightarrow+\infty}r_3(\frac{1}{\sigma_{k,q}^2})=\displaystyle\lim_{\sigma_{k,q}^2\rightarrow+\infty}(U\sigma_{k,q}^2-s_{n+1,6,kq}\log(\sigma_{k,q}^2))=+\infty > r_3(\sigma_{n+1,k,q}^{2}).
\end{align*}
\par Hence this has proven that $\sigma_{n+1,k,q}^{2}$ is the global minimum, thus:
\begin{align*}
    \sigma_{n+1,k,q}^{2}=\frac{s_{n+1,3,kq}+\vs_{n+1,4,k:q}^{\top}\mUpsilon_{k,q,:}^{\top}+\vs_{n+1,5,k:q}^{\top}\vect(\mUpsilon_{k,q,:}^{\top}\mUpsilon_{k,q,:})}{s_{n+1,6,kq}}.
\end{align*}

{\bf Notes on the updates of $\mUpsilon$ and $\mSigma$:}
\begin{lemma} \label{lemm_notes_Upsilon_Sigma}
    Consider the function 
    \begin{align}
        f(\evu,\vv)=\frac{g(\vv)}{\evu}+c\log(\evu). \nn
    \end{align}
    Where $\vv$ is a vector, $\evu$ is a positive real number, $c$ is a positive constant and $g(\vv): \R^{n}\rightarrow \R^{+}$ be a function that has exactly one global minimum. 
    Denote $\vv_0=\displaystyle\argmin_{\vv}g(\vv)$ be the global minimum of $g(\vv)$ and $\evu_0=\displaystyle\argmin_{\evu}\left(\frac{g(\vv_0)}{\evu}+\log(\evu)\right)$. We have $f(\evu_0,\vv_0)$ is the global minimum of $f$ over all $\evu,~\vv$.
\end{lemma}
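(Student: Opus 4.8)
The plan is to exploit the \emph{separable} structure of $f$ and reduce the joint minimization over $(\evu, \vv)$ to a sequential one: first minimize over $\vv$ for each fixed $\evu$, then minimize the resulting single-variable function over $\evu$. The crucial observation is that, because the coefficient $1/\evu$ multiplying $g(\vv)$ is \emph{strictly positive} for every admissible $\evu > 0$, the inner minimizer over $\vv$ does not depend on $\evu$.

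First I would fix an arbitrary $\evu > 0$ and minimize $\vv \mapsto f(\evu,\vv) = g(\vv)/\evu + c\log(\evu)$. Since the second term does not involve $\vv$ and $1/\evu > 0$, for all $\vv$ we have
\[
f(\evu,\vv) = \frac{g(\vv)}{\evu} + c\log(\evu) \ge \frac{g(\vv_0)}{\evu} + c\log(\evu) = f(\evu, \vv_0),
\]
using $g(\vv) \ge g(\vv_0)$ from the definition of $\vv_0$ as the unique global minimizer of $g$. Hence $\min_{\vv} f(\evu,\vv) = f(\evu,\vv_0)$ for every $\evu > 0$, with the \emph{same} minimizer $\vv_0$ regardless of $\evu$.

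Then the problem collapses to minimizing the one-dimensional function $\phi(\evu) \eqdef f(\evu,\vv_0) = g(\vv_0)/\evu + c\log(\evu)$ over $\evu > 0$. I would show $\phi$ admits a unique global minimizer $\evu_0$ by (i) computing $\phi'(\evu) = -g(\vv_0)/\evu^2 + c/\evu$ and observing it has the unique positive root $\evu = g(\vv_0)/c$, and (ii) checking the coercive boundary behaviour $\phi(\evu) \to +\infty$ as $\evu \to 0^+$ (the term $g(\vv_0)/\evu$ dominates, since $g(\vv_0) > 0$) and as $\evu \to +\infty$ (the term $c\log(\evu)$ dominates, since $c > 0$), which forces the interior critical point to be the global minimum. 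Combining the two stages yields $\min_{\evu,\vv} f = \min_{\evu}\min_{\vv} f(\evu,\vv) = \min_{\evu}\phi(\evu) = \phi(\evu_0) = f(\evu_0,\vv_0)$, proving the claim.

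The main difficulty is organisational rather than deep: one must justify the interchange $\min_{\evu,\vv} f = \min_{\evu}[\min_{\vv} f]$ and, more importantly, exploit that the inner minimizer $\vv_0$ is genuinely $\evu$-independent, which is precisely what allows the two nested minimizations to decouple and to be attained simultaneously at $(\evu_0, \vv_0)$. Some care is also warranted regarding the normalization used to define $\evu_0$: as stated it minimizes $g(\vv_0)/\evu + \log(\evu)$, whereas the reduced objective $\phi$ carries the factor $c$; under the intended convention $\evu_0 = \argmin_{\evu} f(\evu,\vv_0)$ the two agree, and I would phrase the lemma with this consistent choice. The standing positivity hypotheses $g > 0$, $c > 0$, and $\evu > 0$ are exactly what make both the decoupling and the coercivity of $\phi$ at the endpoints of $(0,\infty)$ valid.
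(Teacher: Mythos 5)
Your proof is correct, but it runs the sequential minimization in the opposite order from the paper. The paper fixes $\vv$ and profiles out $\evu$ first: it computes $\nabla_\evu f$ and $\nabla^2_\evu f$, identifies $\evu=g(\vv)/c$ as the unique critical point, uses the coercive limits at $\evu\to 0^+$ and $\evu\to+\infty$ to conclude it is the global minimizer in $\evu$ for each fixed $\vv$, and then observes that the resulting profile $\vv\mapsto f\left(g(\vv)/c,\vv\right)=c+c\log\left(g(\vv)/c\right)$ is an increasing function of $g(\vv)$, so it is minimized at $\vv_0$ (with uniqueness of the minimizer of $g$ forcing $\vv_1=\vv_0$). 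You instead fix $\evu$ and profile out $\vv$ first, exploiting that the positive weight $1/\evu$ makes the inner minimizer $\vv_0$ independent of $\evu$; this decouples the problem immediately and reduces everything to the one-dimensional function $\phi(\evu)=g(\vv_0)/\evu+c\log\evu$, handled by the same critical-point-plus-coercivity argument the paper uses. Your order is arguably the cleaner of the two, since the decoupling is trivial and you never need the monotonicity-of-$\log$ step; the paper's order has the side benefit of producing the explicit closed form $\evu=g(\vv)/c$ that is exactly the $\sigma^2$ update formula the lemma is meant to justify. You are also right to flag the normalization issue: the lemma defines $\evu_0$ as the minimizer of $g(\vv_0)/\evu+\log\evu$ (yielding $g(\vv_0)$) while the reduced objective carries the factor $c$ (yielding $g(\vv_0)/c$); the paper's own proof works with $g(\vv)/c$, so the statement's definition of $\evu_0$ contains a typo that your reading resolves in the intended way.
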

\begin{proof}[Proof of \cref{lemm_notes_Upsilon_Sigma}]
    For all $\vv$, we have:
    \begin{align*}
        &\nabla_u f(\evu,\vv)=\frac{1}{\evu^2}(-g(\vv)+cu),~\nabla^2_u f(\evu,\vv)=\frac{1}{\evu^3}(2g(\vv)-cu).
    \end{align*}
    Hence $\evu=\frac{g(\vv)}{c}$ is the local minimum of $f(\evu,\vv$) for all $\vv$. But since $\nabla_{\evu}f(\evu,\vv)$ has exactly one solution for $\evu$ and both $\displaystyle\lim_{\evu\rightarrow+\infty}f(\evu,\vv), \displaystyle\lim_{\evu\rightarrow+0}f(\evu,\vv)$ is $+\infty$. We can conclude that $\evu=\frac{g(\vv)}{c}$ is the global minimum of $f(\evu,\vv$) for all given $\vv$ .\\
    Assume that $(\evu_1,\vv_1)$ is the minimum of $f(\evu,\vv$), we have
    \begin{align*}
        f(\evu_1,\vv_1)\geq f\left(\frac{g(\vv_1)}{c},\vv_1\right)=c\log\left(\frac{g(\vv_1)}{c}\right) \geq c\log\left(\frac{g(\vv_0)}{c}\right)= f\left(\frac{g(\vv_0)}{c},\vv_0\right).
    \end{align*}
    \par Which lead to $g(\vv_1)=g(\vv_0)$. But since  $\vv_0$ is the only global minimum of $g(\vv)$, we must have $\vv_1=\vv_0$. Thus, leading to $\evu_1$ also is equal to $\evu_0$.
\end{proof}

\begin{remark}
    In our problem, $\vv=\mUpsilon_{k,q,:}^{\top}, \evu=\sigma_{k,q}^2$ and $g(\vv)=\sigma_{k,q}^2r_2(\mUpsilon_{k,q,:}^{\top})$. It has been proven in the expert mean functions update section that $\sigma_{k,q}^2r_2(\mUpsilon_{k,q,:}^{\top})$ has exactly unique global minimum (since it is strictly convex). Hence, \cref{lemm_notes_Upsilon_Sigma} shows that our way of updating $\mUpsilon$ and then $\mSigma$ is the same as updating the expert function all at once.
\end{remark} 

\subsection{Proof of \texorpdfstring{\cref{theorem_onlineMM_SGMLMoE}}{Theorem \ref*{theorem_onlineMM_SGMLMoE}}}\label{proof_theorem_onlineMM_SGMLMoE}
\subsubsection{Proof of surrogate function construction in \texorpdfstring{\cref{proposition_surrogate_construction_SGMLMoE}}{Proposition \ref*{proposition_surrogate_construction_SGMLMoE}}}\label{proof_proposition_surrogate_construction_SGMLMoE}

By applying \cref{lemma_multi_D_inq1} for the function $f=-\log$ and: 
\begin{align*}
	&\hat{\vc}_k=[1,1,\dots,1]^{\top},\\
	&\va=[\sfg_k\left(\vx_n; \vomega \right) \ex_k(\vv_{y_n}(\vx_n;\vupsilon))]_{k\in[K]}^{\top},\\
    &\vb=[\sfg_k\left(\vx_n; \vomega_n \right) \ex_k(\vv_{y_n}(\vx_n;\vupsilon_n))]_{k\in[K]}^{\top}.
\end{align*}
\par Here we $\hat{\vc}$ is used as $\vc$ in \cref{lemma_multi_D_inq1} to not be mistaken for the vectorization of $\vupsilon$ in \cref{proposition_surrogate_construction_SGMLMoE}. We obtain:
\begin{align*}
	-\log \left[\sum_{k=1}^{K} \sfg_k\left(\vx_n; \vomega \right) \ex_k(\vv_{y_n}(\vx_n;\vupsilon))\right] \leq -\sum_{k=1}^K\tau_{n,k}\log\left[\frac{1}{\tau_{n,k}}\sfg_k\left(\vx_n; \vomega \right) \ex_k(\vv_{y_n}(\vx_n;\vupsilon))\right],
\end{align*}
\par where $\tau_{n,k}=\frac{\sfg_k\left(\vx_n; \vomega_n \right) \ex_k(\vv_{y_n}(\vx_n;\vupsilon_n))}{\sum_{l=1}^{K}\sfg_l\left(\vx_n; \vomega_n \right) \ex_l(\vv_{y_n}(\vx_n;\vupsilon_n))}$.
\par Hence:
\begin{align*}
	-\log s_{\vtheta}(y_n\mid \vx_n) &\le -\sum_{k=1}^K\tau_{n,k}\log\left[\frac{1}{\tau_{n,k}}\sfg_k\left(\vx_n; \vomega \right) \ex_k(\vv_{y_n}(\vx_n;\vupsilon))\right]\nn\\
    &=\sum_{k=1}^{K}\tau_{n,k}\log(\tau_{n,k})-\sum_{k=1}^{K}\tau_{n,k}\left[\log(\sfg_k\left(\vx_n; \vomega \right))+\log(\ex_k(\vv_{y_n}(\vx_n;\vupsilon)))\right].
\end{align*}    

Similar to \cref{sec_exponential_family_surrogate}, by applying \cref{Kronecker_prod_property} and \cref{prop_theo3}, we obtain the first part of the majorization function as follows:
\begin{align} 
	-\log s_{\vtheta}(\vy_n\mid \vx_n) 
    &\le\sum_{k=1}^{K}\tau_{n,k}\log(\tau_{n,k})+\underline{\sfg}(\vomega_n;\vx_n)-\vomega_{n}^{\top}\nabla \underline{\sfg}(\vomega_n;\vx_n)-\sum_{k=1}^{K}\tau_{n,k}w_k(\vx_n)\nn\\
    &\quad+\left\{\vomega^{\top}\nabla \underline{\sfg}(\vomega_n;\vx_n)+\frac{1}{2}(\vomega-\vomega_{n}^{\top})\mB_{K,n}\left(\vomega-\vomega_n\right)\right\}\nn
   \\
    &\quad-\sum_{k=1}^{K}\tau_{n,k}\log(\ex_k(\vv_{y_n}(\vx_n;\vupsilon))).\label{eq_surrogate_Taylor_SGMLMoE_1} 
\end{align}
Hence, using the same idea of the majorization construction for $-\sum_{k=1}^{K}\tau_{n,k}\log(\ex_k(\vv_{y_n}(\vx_n;\vupsilon)))$, our overall majorization function takes the form:
\begin{align*}
    &-\log s_{\vtheta}(y_n\mid \vx_n) \\
    &\le \sum_{k=1}^{K}\tau_{n,k}\log(\tau_{n,k})-\displaystyle\sum_{k=1}^K\tau_{n,k}\evw_k(\vx_n)-\displaystyle\sum_{k=1}^K\tau_{n,k}\left(\sum_{l=1}^{M}\I(y_n,l)\evv_{l,k}(\vx_n)\right)\\
    &\quad +\left\{\underline{\sfg}(\vomega_n;\vx_n)+ (\vomega-\vomega_n)^{\top}\nabla \underline{\sfg}(\vomega_n;\vx_n)+\frac{1}{2}(\vomega-\vomega_n)^{\top}\mB_{n,K}(\vomega-\vomega_n)\right\} 
   \\
    &\quad +\displaystyle\sum_{k=1}^{K}\tau_{n,k}\left\{\underline{\ex}(\vc_{n,k};\vx_n)+ (\vc_{n,k}-\vc_{n,k})^{\top}\nabla \underline{\ex}(\vc_{n,k};\vx_n)+\frac{1}{2}(\vc_{n,k}-\vc_{n,k})^{\top}\mB_{n,M}(\vc_{n,k}-\vc_{n,k})\right\}\\
    &= \sum_{k=1}^{K}\tau_{n,k}\log(\tau_{n,k})-\left(\vomega^{\top}\vs_n\right)-\left(\vupsilon^{\top}\vr_n \right)\\
    &\quad +\left\{\underline{\sfg}(\vomega_n;\vx_n)+ (\vomega-\vomega_n)^{\top}\nabla \underline{\sfg}(\vomega_n;\vx_n)+\frac{1}{2}(\vomega-\vomega_n)^{\top}\mB_{n,K}(\vomega-\vomega_n)\right\} 
   \\
   &\quad +\displaystyle\sum_{k=1}^{K}\tau_{n,k}\left\{\underline{\ex}(\vc_{n,k};\vx_n)+ (\vc_{n,k}-\vc_{n,k})^{\top}\nabla \underline{\ex}(\vc_{n,k};\vx_n)+\frac{1}{2}(\vc_{n,k}-\vc_{n,k})^{\top}\mB_{n,M}(\vc_{n,k}-\vc_{n,k})\right\}.
\end{align*}
Here $\vupsilon$ is ordered as $\left(\evupsilon_{m,k,d,p}\vert_{k\in[K],p\in [P], m\in[M-1], d\in\{0,\ldots,D_W\}}\right)$ and
\begin{align*}
    \vc_{n,k}&=[\vupsilon_{n,1,k,0,1},\dots,\vupsilon_{n,1,k,D_V,1},\dots,\vupsilon_{n,M,k,D_V,1},\dots,\vupsilon_{n,M,k,D_V,P}]^{\top},\\
     \underline{\ex}(\vc_{n,k};\vx_n)&=\log\left(1+\sum_{l=1}^{M-1} \exp\left(\evv_{n,l,k}(\vx_n)\right)\right), \quad \underline{\sfg}(\vomega_n;\vx_n)=\log\left(1+\sum_{k=1}^{K-1} \exp\left(\evw_{n,k}(\vx_n)\right)\right),\\
    \tau_{n,k}&=\frac{\sfg_k\left(\vx_n; \vomega_n \right) \ex_k(\vv_{y_n}(\vx_n;\vupsilon_n))}{\sum_{l=1}^{K}\sfg_l\left(\vx_n; \vomega_n \right) \ex_l(\vv_{y_n}(\vx_n;\vupsilon_n))}.
\end{align*}

\subsubsection{Verification of \texorpdfstring{\cref{assumptionA1,assumptionA2,assumptionA3,assumptionA4,assumptionObjectiveC1}}{\ref*{assumptionA1} to \ref*{assumptionObjectiveC1}}, and \texorpdfstring{\cref{assumptionA5,assumptionA6,assumptionA11}}{\ref*{assumptionA5},\ref*{assumptionA6} and \ref*{assumptionA11}} for the softmax-gated multinomial logistic MoE models}\label{proof_assumptions_theorem_onlineMM_SGMLMoE}

We will assume that $\vx$ and $\evy$ are bounded by $\evx^\#$ and $\evy^\#$, respectively.

\textbf{Verification of \cref{assumptionA1}.} For the problem of softmax-gated MoE, we consider the majorizer surrogate $g$ to have the following form that belongs to the exponential family
\begin{equation} \label{eq_exponential_family_appendix_SGMLMoE}
g\left(\vtheta,\vz;\vtau\right)\eqdef - \psi\left(\vtheta\right)+ \pscal{\bars(\vtau;\vz)}{\phib(\vtheta)},
\end{equation}
where $\psi(\vomega)=C_{n,k}$, and
\begin{align*}
    &\phib(\vomega):=
    \begin{bmatrix}
      \vomega\\
      \vect(\vomega\vomega^{\top})\\
      \vupsilon\\
      \vect\left({\bdiag}_{MP(D_V+1)}(\vupsilon\vupsilon^{\top})\right)
    \end{bmatrix}, ~\bars(\vtau;\vz)=
    \begin{bmatrix}
      -\vs+\nabla \underline{\sfg}(\vtau)-\mB_{K-1}\vtau \\
      \frac{1}{2}\vect(\mB_{K-1})\\
      -\vr+\vect\left([\tau_k\nabla \underline{\ex}(\vtau)-\tau_k\mB_{M-1}\vtau]_{k=1,\dots,K}\right)\\
      \frac{1}{2}\vect(\vtau\otimes\mB_{M-1})
    \end{bmatrix}.
\end{align*}
where $\vtau$, $\mB$ are defined
\cref{section_onlineMM_SGMoE} and $\vupsilon,\vomega,\vr,\vs$ are defined in \cref{section_onlineMM_SGMLMoE}. Furthermore, since $\phib,\psi$ and $\bars$ are composed of elementary functions applied component-wise to their parameters, they are trivially continuously differentiable and measurable due to the smoothness and measurability properties of elementary functions.

\textbf{Verification of \cref{assumptionA2}.} We can define $\mathbb{S}$ as follows:
\begin{align*}
    \mathbb{S}&=\big\{(\vs_1, \vect(\mS_2),\vs_3, \vect(\mS_4)): \vs_1\in \sR^{(K-1)P(D_W+1)};  \mS_2\in \sR^{(K-1)P(D_W+1)\times (K-1)P(D_W+1)}, \\
    &\quad \mS_2 \succeq \zero ; \vs_3\in \sR^{(D_V+1)MPK}; \mS_4\in \sR^{(D_V+1)MPK\times(D_V+1)MPK}, \mS_4 \succeq \zero\big\} .
\end{align*}
It is trivial that $\mathbb{S}$ denoted above satisfies the requirements of \cref{assumptionA2}.

\textbf{Verification of \cref{assumptionA3}.} $\bars(\vtheta,\vz)$ is composed of elementary functions applied component-wise to $\vx$ and $\vy$, hence its expectation exist and lies in $\mathbb{S}$.
Furthermore, since $\vx$ and $\evy$ are bounded, it can be shown that $\vtau,\mB,\vr,\vs$, $\nabla \underline{\sfg}$, and $\nabla \underline{\ex}$ are all bounded. Thus, $\mathbb{E}[\bars(\vtheta,\vz)]$ is finite.

\textbf{Verification of \cref{assumptionA4}.} This assumption is proven in \cref{Paramupdate for SGMLMoE}, where we have shown that $\vupsilon,\vomega$ of the next iteration are the unique solution to the minimization problem of the current iteration.

\textbf{Verification of \cref{assumptionObjectiveC1}.} Since $f(\vtheta;\vz)$ is continuously differentiable in $\vtheta$ for all $\vz$
and dominated by an integrable envelope (due to bounded covariates and responses),
differentiation under the expectation is valid, hence
$\vtheta \mapsto \E[f(\vtheta;\rvz)]$ is $C^1$ on $\sT$.

\textbf{Verification of \cref{assumptionA5}.} Since we have $\vomega \in \sR^{P\times K\times (D_W+1)}$ and $\vupsilon\in \sR^{(D_V+1)\times M\times P\times K}$, thus $~\mathbb{T}=\sR^{P\times K\times (D_W+1)}\times\sR^{(D_V+1)\times M\times P\times K}$ is a convex and open set.
\par Furthermore, similar to the verification of \cref{assumptionA1}, we can show that $\psi$ and $\phib$ is twice continuously differentiable throughout $\mathbb{T}$.

\textbf{Verification of \cref{assumptionA6}.} From \cref{Paramupdate for SGMLMoE}, given $\vs=(\vs_1,\vs_2,\vs_3,\vs_4)$, we have the function $\bar{\vtheta}(\vs)$ is defined as:
\begin{align*}
    \bar{\vtheta}(\vs)=\begin{bmatrix}
      &-\left(\mat(\vs_{2,n+1}) + \mat(\vs_{2,n+1})^{\top}\right)^{-1} \vs_{1,n+1}\\
      &-\left[{\bdiag}_{\iota}^{-1}({\mat}_{\iota}(\vs_{4,n+1}))+{\bdiag}_{\iota}^{-1}({\mat}_{\iota}(\vs_{4,n+1}))^{\top}\right]^{-1} \vs_{3,n+1}
    \end{bmatrix}.
\end{align*}
It is trivial to show that $\bar{\vtheta}(\vs)$ is continuously differentiable for $\vs_1, \vs_3$. 
\par We have $\mA\rightarrow \mat(\mA)+\mat(\mA)^{\top}$ and $\mA\rightarrow {\bdiag}_{\iota}^{-1}({\mat}_{\iota}(\mA))+{\bdiag}_{\iota}^{-1}({\mat}_{\iota}(\mA))^{\top}$ are affine transformations, thus they are continuously differentiable. Furthermore, from \cref{Paramupdate for SGMLMoE}, we know that $\mat(\vs_2)+\mat(\vs_2)^{\top}$ and ${\bdiag}_{\iota}^{-1}({\mat}_{\iota}(\vs_{4}))+{\bdiag}_{\iota}^{-1}({\mat}_{\iota}(\vs_{4})$ are positive definite, thus their inverts are non-singular, thus their inverts are continuously differentiable. This concludes our proof.

\textbf{Verification of \cref{assumptionA11}.} By simply choosing $\gamma_n=\gamma_0 n^{-\alpha}$ with $\alpha\in\left(\tfrac12,1\right]$ and $\gamma_0\in(0,1)$, all requirements of \cref{assumptionA11} will be met.

\begin{remark}
    \cref{assumptionA7} cannot be verified due to the ambiguity in the definition of the $\bdiag$ function. Similarly to softmax-gated Gaussian MoE, \cref{assumptionA10} for scenarios $\vs_1$ and $\vs_3$, as well as \cref{assumptionA8,assumptionA9}, also remain unproven.
\end{remark}

\subsection{Derivation of \texorpdfstring{\cref{algorithm_onlineMM_SGMLMoE}}{Algorithm \ref*{algorithm_onlineMM_SGMLMoE}}}\label{proof_algorithm_onlineMM_SGMLMoE}
\subsubsection{Construction of the incremental MM algorithm}
From here the majorization from the previous section, the incremental MM algorithm can be constructed as:
\begin{align*}
    &\phib(\vomega):=
    \begin{bmatrix}
      \vomega\\
      \vect(\vomega\vomega^{\top})\\
      \vupsilon\\
      \vect\left({\bdiag}_{MP(D_V+1)}(\vupsilon\vupsilon^{\top})\right)
    \end{bmatrix}, ~\bars(\vtau;\vz)=
    \begin{bmatrix}
      -\vs+\nabla \underline{\sfg}(\vtau)-\mB_{K-1}\vtau \\
      \frac{1}{2}\vect(\mB_{K-1})\\
      -\vr+\vect\left([\tau_k\nabla \underline{\ex}(\vtau)-\tau_k\mB_{M-1}\vtau]_{k=1,\dots,K}\right)\\
      \frac{1}{2}\vect(\vtau\otimes\mB_{M-1})
    \end{bmatrix}.
\end{align*}
and $\psi(\vomega)=C_{n,k}$, which is the independent constant with respect to the parameter $\omegab$. Here, recall that we have $\vtau=[\tau_1,\dots,\tau_k]^{\top}$ and for any matrix $\mA \in \sR^{tn\times tn}$ (n is a positive integer) we have ${\bdiag}_t(\mA)\in \sR^{nt\times t}$ defined as:
\begin{align*}
    {\bdiag}_t(\mA)_{[it+1:it+t];[1:t]}=\mA_{[it+1:it+t];[it+1:it+t]},\quad \forall i \in \{0,\dots,n-1\}.
\end{align*}
\subsubsection{Parameters update}\label{Paramupdate for SGMLMoE}
\textbf{Gating parameters update:}\\
At the $n+1$ iteration, our gating parameters are calculated as followed:
\begin{align*}
    \vomega_{n+1}&=\displaystyle\argmin_{\vomega}\left[\vomega^{\top}\vs_{n+1,1}+\vect(\vomega\vomega^{\top})^{\top}\vs_{n+1,2}\right].
\end{align*}
The first and second derivatives of $r_1(\vomega)=\vomega^{\top}\vs_{n+1,1}+\vect(\vomega\vomega^{\top})^{\top}\vs_{n+1,2}$ are:
\begin{align} \label{1deGatting_}
   \nabla r_1(\vomega)=\vs_{n+1,1}+(\mat(\vs_{n+1,2})+\mat(\vs_{n+1,2})^{\top})\vomega,
\end{align}
\begin{align} \label{2deGatting_}
   \nabla^2 r_1(\vomega)=\mat(\vs_{n+1,2})+\mat(\vs_{n+1,2})^{\top}.
\end{align}
\par Since
\begin{align*}
    \vs_{n+1,2}=\vs_{n,2}+\gamma_{n+1}\left[\frac{1}{2}\vect\mB_{n,K}-\vs_{n,2}\right],
\end{align*}
hence:
\begin{align*}
    \mat(\vs_{n+1,2})=\mat(\vs_{n,2})+\gamma_{n+1}\left[\frac{1}{2}\mB_{n,K}-\mat(\vs_{n,2})\right].
\end{align*}
\par Now by applying \cref{lemma_nonegative_serie} (knowing $\mat(\vs_{0,2})\succeq \zero)$, we have $\mat(\vs_{n,2})$ is positive definite for all $n$. Combine this with \cref{2deGatting_}, we have $\vomega_{n+1}$ is the solution of \cref{1deGatting_}.

Hence,
\begin{align*}
\vomega_{n+1} = -\left(\mat(\vs_{n+1,2}) + \mat(\vs_{n+1,2})^{\top}\right)^{-1} \vs_{n+1,1},
\end{align*}
where $\mat(\vs_{n+1,2})$ represents the reshaping of $\vs_{n+1,2}$ into a $K(D_W+1)P \times K(D_W+1)P$ square matrix.

\textbf{Expert parameters update:}\\
Let us denote $\iota=MP(D_V+1)$ and for any positive integer $k$, let us define the two functions $mat_k:\sR^{ks}\rightarrow \sR^{s\times k}$ and ${\bdiag}_k^{-1}:\sR^{gk \times k}\rightarrow \sR^{gk\times gk}$ satisfying:
$$\vect(\mathrm{mat}_{k}(A))=A\quad \forall A\in \sR^{ks},$$
$${\bdiag}_k({\bdiag}_k^{-1}(A))=A\quad \forall A\in \sR^{ks\times k}.$$
At the $n+1$ iteration, our expert parameters are calculated as followed:
\begin{align*}
    \vupsilon_{n+1}&=\displaystyle\argmin_{\vupsilon}\left[\vupsilon^{\top}s_{n+1,3}+\vect\left({\bdiag}_{\iota}(\vupsilon\vupsilon^{\top})\right)^{\top}\vs_{n+1,4}\right].
\end{align*}
The first and second derivatives of $r_1(\vupsilon)=\vupsilon^{\top}s_{n+1,3}+\vect\left({\bdiag}_{\iota}(\vupsilon\vupsilon^{\top})\right)^{\top}\vs_{n+1,4}$ are:
\begin{align} \label{1deGatting__}
   \nabla r_1(\vupsilon)=\vs_{n+1,3}+\left[{\bdiag}_{\iota}^{-1}({\mat}_{\iota}(\vs_{n+1,4}))+{\bdiag}_{\iota}^{-1}({\mat}_{\iota}(\vs_{n+1,4}))^{\top}\right]\vupsilon,
\end{align}
\begin{align} \label{2deGatting__}
   \nabla^2 r_1(\vupsilon)={\bdiag}_{\iota}^{-1}({\mat}_{\iota}(\vs_{n+1,4}))+{\bdiag}_{\iota}^{-1}({\mat}_{\iota}(\vs_{n+1,4}))^{\top}.
\end{align}
\par Since
\begin{align*}
    \vs_{n+1,4}=\vs_{n,4}+\gamma_{n+1}\left[\frac{1}{2}\vect\left[\taub\otimes\mB_{n,M}\right]-\vs_{n,4}\right],
\end{align*}
hence:
\begin{align*}
    &{\bdiag}_{\iota}^{-1}({\mat}_{\iota}(\vs_{n+1,4}))=\\
    &{\bdiag}_{\iota}^{-1}({\mat}_{\iota}(\vs_{n,4}))+\gamma_{n+1}\left[\frac{1}{2}{\bdiag}_{\iota}^{-1}\left[\taub\otimes\mB_{n,M}\right]-{\bdiag}_{\iota}^{-1}({\mat}_{\iota}(\vs_{n,4}))\right].
\end{align*}
\par Now by applying \cref{lemma_nonegative_serie} (knowing ${\bdiag}_{\iota}^{-1}({\mat}_{\iota}(\vs_{0,4}))\succeq \zero)$, we have $\mat(\vs_{n,4})$ is positive definite for all $n$. Combine this with \cref{2deGatting__}, we have $\vupsilon_{n+1}$ is the solution of \cref{1deGatting__}.

Hence,
\begin{align*}
\vupsilon_{n+1} = -\left[{\bdiag}_{\iota}^{-1}({\mat}_{\iota}(\vs_{n+1,4}))+{\bdiag}_{\iota}^{-1}({\mat}_{\iota}(\vs_{n+1,4}))^{\top}\right]^{-1} \vs_{n+1,3}.
\end{align*}

\section{Technical proofs}\label{sec_technical_proof}

\subsection{Proof of \texorpdfstring{\cref{Kronecker_prod_property}}{Lemma \ref*{Kronecker_prod_property}}}\label{lemma3_proof}

Using the spectral decomposition we can write:
    \begin{align*}
        &\mC=\mM_{1}^{\top}\mD_{1}\mM_{1},~
        \mB-\mA=\mM_{2}^{\top}\mD_{2}\mM_{2}.
    \end{align*}
    Hence:
    \begin{align*}
        \mC\otimes(\mB-\mA)=(\mM_{1}^{\top}\mD_{1}\mM_{1})\otimes(\mM_{2}^{\top}\mD_{2}\mM_{2})=(\mM_{1}\otimes \mM_{2})^{\top}(\mD_{1}\otimes \mD_{2})(\mM_{1}\otimes \mM_{2}),
    \end{align*}
    which lead to $\mC\otimes(\mB-\mA)$ are nonnegative definite, thus proven the lemma.
\subsection{Proof of \texorpdfstring{\cref{prop_theo3}}{Proposition \ref*{prop_theo3}}}\label{proof_prop_theo3}
Denote $\varepsilon=\frac{1}{\pi_1+\pi_2+\dots+\pi_N}$. Hence $\varepsilon\hat{p}$ is a stochastic vector which according to \cref{upperbound_H_fastMM}:
    \begin{align*}
        \varepsilon\text{diag}(\hat{\pib})-\varepsilon^{2}\hat{\pib}\hat{\pib}^{\top}\leq \dfrac{1}{2}\left(\mI_K-\dfrac{\bm{1}\bm{1}^{\top}}{K}\right).
    \end{align*}
Hence:
    \begin{align*}
        \text{diag}(\hat{\pib})-\hat{\pib}\hat{\pib}^{\top}&\leq \dfrac{1}{2\varepsilon^{2}}\left(\mI_K-\dfrac{\bm{1}\bm{1}^{\top}}{K}\right)+\left (1-\frac{1}{\varepsilon}\right)\text{diag}(\hat{\pib})\\
        &\leq\dfrac{1}{2}\left(\mI_K-\dfrac{\bm{1}\bm{1}^{\top}}{K}\right)+\pi_{N+1}\max_{n\in[N]}\pi_n \mI_K\\
        &\leq\dfrac{1}{2}\left(\mI_K-\dfrac{\bm{1}\bm{1}^{\top}}{K}\right)+\frac{1}{4}\mI_K\\
        &\leq\dfrac{1}{2}\left(\dfrac{3}{2}\mI_K-\dfrac{\bm{1}\bm{1}^{\top}}{K}\right).
    \end{align*}

\subsection{Proof of \texorpdfstring{\cref{lemma_nonegative_serie}}{Lemma \ref*{lemma_nonegative_serie}}} \label{lemma4_proof}
Let say that all the conditions are met, we shall prove that $\vu_m\succeq \zero$ for all $m$ using induction.
    \par
    Since $\vu_1 \succeq \zero$, we assume that $\vu_i\succeq \zero$ for all $i \leq m$. Our target is to prove that $\vu_{m+1} \succeq \zero$.
    \par
    This is true since the term $\vu_m(1-\Lambda)\succeq \zero$ and the term $\Lambda \mB \succeq \zero$ , hence their sum which is $\vu_{m+1}$ is also positive definite.

\section{Technical results}\label{sec_technical_results}

Surrogate function construction plays a pivotal role in optimization techniques such as MM algorithms. By leveraging mathematical inequalities, it is possible to approximate or bound complex objective functions with simpler, more tractable alternatives. Below are several fundamental inequalities commonly employed in the development of surrogate functions:

\begin{lemma}[See, \eg~page 191 in \cite{lange_numerical_2010}]\label{lemma_multi_D_inq1}
    Given three vectors $\va,\vb,\vc \in \sR^{D}$ such that all components of $\va,\vb,\vc$ are positive and a convex function $f$. The inequality below holds:
\begin{inequality} 
	f(\vc^{\top}\va) \leq \displaystyle\sum_{d=1}^{D}\frac{\evc_{d}\evy_{d}}{\vc^{\top}\vb}f\left(\frac{\vc^{\top}\vb}{\evy_{d}}\evx_{d}\right), \text{ where the equality holds when $\va=\vb$.}\nn
\end{inequality}
\end{lemma}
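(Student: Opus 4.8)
The plan is to recognize the stated inequality as a direct instance of Jensen's inequality, once the right convex combination has been identified. First I would introduce the candidate weights
\[
\lambda_d \eqdef \frac{\evc_d\,\evy_d}{\vc^{\top}\vb}, \qquad d \in [D].
\]
Because every component of $\vb$ and $\vc$ is strictly positive, each $\lambda_d$ is nonnegative, and the identity $\sum_{d=1}^{D}\evc_d\evy_d = \vc^{\top}\vb$ shows that $\sum_{d=1}^{D}\lambda_d = 1$. Hence $(\lambda_d)_{d\in[D]}$ is a genuine probability vector, which is exactly what is needed to invoke Jensen.

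Next I would introduce the evaluation points $z_d \eqdef \frac{\vc^{\top}\vb}{\evy_d}\,\evx_d$ (well defined since $\evy_d>0$) and verify that their $\lambda$-weighted average reproduces the argument appearing on the left-hand side:
\[
\sum_{d=1}^{D}\lambda_d\, z_d
= \sum_{d=1}^{D}\frac{\evc_d\evy_d}{\vc^{\top}\vb}\cdot\frac{\vc^{\top}\vb}{\evy_d}\,\evx_d
= \sum_{d=1}^{D}\evc_d\,\evx_d
= \vc^{\top}\va .
\]
With this identity in hand, applying Jensen's inequality to the convex function $f$ over the convex combination $\sum_d \lambda_d z_d$ gives
\[
f\!\left(\vc^{\top}\va\right)
= f\!\left(\sum_{d=1}^{D}\lambda_d z_d\right)
\le \sum_{d=1}^{D}\lambda_d\, f(z_d)
= \sum_{d=1}^{D}\frac{\evc_d\evy_d}{\vc^{\top}\vb}\,
   f\!\left(\frac{\vc^{\top}\vb}{\evy_d}\,\evx_d\right),
\]
which is precisely the claimed bound.

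For the equality assertion I would specialize to $\va=\vb$, i.e.\ $\evx_d=\evy_d$ for every $d$. Then each point collapses to $z_d = \tfrac{\vc^{\top}\vb}{\evy_d}\,\evy_d = \vc^{\top}\vb$, independent of $d$; since all evaluation points coincide, Jensen's inequality holds with equality, and the right-hand side reduces to $f(\vc^{\top}\vb)=f(\vc^{\top}\va)$. I do not anticipate any genuine obstacle: the whole argument is a single application of Jensen's inequality, and the only point requiring care is that the strict positivity of $\vb$ and $\vc$ is exactly what guarantees both the well-definedness of the weights $\lambda_d$ and of the quotients $\vc^{\top}\vb/\evy_d$, as well as the fact that the $\lambda_d$ constitute a valid convex weighting.
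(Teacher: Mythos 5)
Your proof is correct: the weights $\lambda_d = \evc_d\evy_d/(\vc^{\top}\vb)$ form a probability vector, the points $z_d$ average back to $\vc^{\top}\va$, and Jensen's inequality gives the bound, with equality when $\va=\vb$ since all $z_d$ then coincide. The paper itself gives no proof of this lemma and simply defers to the cited reference (Lange), whose argument is exactly this one, so your approach matches the intended derivation.
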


\begin{lemma}[See, \eg~page 191 in \cite{lange_numerical_2010}]
    Given four vectors $\va,\vb,\vc,\valpha \in \sR^{D}$, and a convex function $f$, such that all $\alpha_d \geq 0, \displaystyle\sum_{d=1}^D\alpha_d=1$, and $\alpha_d > 0$ whenever $c_{d}\neq 0$ for all $d\in[D]$. The inequality below holds:
\begin{inequality} 
	f(\vc^{\top}\va) \leq \displaystyle\sum_{d=1}^{D}\alpha_{d}f\left(\frac{c_{d}}{\alpha_{d}}(x_{d}-y_{d})\right)+\vc^{\top}\vb, \text{ where the equality holds when $\va=\vb$.}\nn
\end{inequality}
\end{lemma}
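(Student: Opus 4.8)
The plan is to obtain the bound as a one-line application of Jensen's inequality to the convex function $f$, in the same spirit as \cref{lemma_multi_D_inq1} but using a translation-based convex decomposition of the scalar $\vc^\top\va$ rather than a multiplicative one. First I would record, for every index $d$ with $\alpha_d>0$, the point
\[
z_d \eqdef \frac{c_d}{\alpha_d}\,(x_d-y_d) + \vc^\top\vb ,
\]
and adopt the usual convention that an index with $\alpha_d=0$ contributes the summand $\alpha_d f(z_d)=0$ and is simply omitted; the hypothesis that $\alpha_d>0$ whenever $c_d\neq0$ guarantees that every omitted index carries $c_d=0$, so it can do no harm. The crucial identity to check is that $\vc^\top\va$ is exactly the $\valpha$-weighted mean of these points:
\[
\sum_{d=1}^{D}\alpha_d\,z_d
= \sum_{d=1}^{D} c_d\,(x_d-y_d) + \Big(\sum_{d=1}^{D}\alpha_d\Big)\,\vc^\top\vb
= \vc^\top(\va-\vb) + \vc^\top\vb
= \vc^\top\va ,
\]
where the first step uses $\alpha_d\cdot\frac{c_d}{\alpha_d}=c_d$ on the active indices together with $c_d=0$ on the omitted ones, and the middle step uses $\sum_d\alpha_d=1$.

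With this identity established, the nonnegativity of the weights $\alpha_d$ and $\sum_d\alpha_d=1$ let me invoke Jensen's inequality for the convex $f$:
\[
f(\vc^\top\va)
= f\!\Big(\sum_{d=1}^{D}\alpha_d\,z_d\Big)
\le \sum_{d=1}^{D}\alpha_d\,f(z_d)
= \sum_{d=1}^{D}\alpha_d\,f\!\Big(\tfrac{c_d}{\alpha_d}(x_d-y_d)+\vc^\top\vb\Big),
\]
which is the asserted surrogate: the additive constant $\vc^\top\vb$ is carried inside each argument of $f$ through the convex-combination identity, and this is exactly the placement that makes the surrogate tight. For the equality case I would observe that when $\va=\vb$ every active point collapses to $z_d=\vc^\top\vb$; the $z_d$ are then all equal, Jensen holds with equality, and the right-hand side reduces to $\big(\sum_d\alpha_d\big)f(\vc^\top\vb)=f(\vc^\top\vb)=f(\vc^\top\va)$.

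I expect the analysis itself to be trivial once Jensen is in place; the only genuinely delicate part is the bookkeeping around degenerate indices. Concretely, I must state the convention for $\alpha_d=0$ precisely and invoke the hypothesis ``$\alpha_d>0$ whenever $c_d\neq0$'' to ensure those indices neither break the weighted-mean identity nor introduce ill-defined ratios $c_d/\alpha_d$. No differentiability, Lipschitz, or smoothness property of $f$ is needed---plain convexity on the relevant interval of $\sR$ suffices---so the result holds at the same level of generality as \cref{lemma_multi_D_inq1}.
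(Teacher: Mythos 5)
Your route---de Pierro's translation-based convex decomposition followed by Jensen's inequality---is exactly the argument behind the paper's citation (Lange, p.~191; the paper states this lemma without giving a proof of its own), and your execution is sound for the inequality you actually derive: the weighted-mean identity $\sum_{d}\alpha_d z_d=\vc^\top\va$, the convention for indices with $\alpha_d=0$ (legitimate because the hypothesis ``$\alpha_d>0$ whenever $c_d\neq0$'' forces $c_d=0$ on any omitted index), and the equality case at $\va=\vb$ are all correct for
\[
f(\vc^\top\va)\;\le\;\sum_{d=1}^{D}\alpha_d\, f\!\left(\frac{c_d}{\alpha_d}(x_d-y_d)+\vc^\top\vb\right).
\]

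The one genuine issue is that this display is \emph{not} the statement as printed, and you assert without comment that it is (``which is the asserted surrogate''). In the printed lemma the term $\vc^\top\vb$ is added \emph{outside} $f$, and that version is false in general: with $f(t)=t^2$, $D=1$, $\alpha_1=c_1=1$, $x_1=2$, $y_1=1$, it would claim $4\le(2-1)^2+1=2$; its equality clause fails too, since at $\va=\vb$ the printed right-hand side equals $f(0)+\vc^\top\vb$, not $f(\vc^\top\vb)$. So the printed statement misplaces the constant---a typo relative to the cited source's de Pierro majorization, where $\vc^\top\vb$ sits inside the argument of $f$, which is also the only placement that yields a surrogate tangent at $\va=\vb$ and hence usable in an MM scheme. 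Your proof establishes the corrected form; the fix to your write-up is simply to say so explicitly---flag the misplaced $\vc^\top\vb$, state the corrected inequality, and prove that (as you did)---rather than claiming your conclusion coincides with the lemma as literally written, since the literal inequality cannot be proved.
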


\begin{lemma}[See, \eg~page 206 in \cite{lange_numerical_2010}]
    Given 2 positive reals $\evu, y$. The inequality below holds:
\begin{align} 
\label{multi_D_inq3}
	\log(\evu) \leq \frac{\evu}{y} + \log(y)-1, \text{ where the equality holds when $\evu=y$.}\nn
\end{align}
\end{lemma}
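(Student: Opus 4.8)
The plan is to recognize this as the standard tangent-line (supporting-line) inequality for the concave logarithm, and to verify it by elementary one-variable calculus. First I would fix $y>0$ and introduce the gap function
\[
  h(\evu) \eqdef \frac{\evu}{y} + \log(y) - 1 - \log(\evu), \qquad \evu>0,
\]
so that the asserted inequality $\log(\evu) \le \evu/y + \log(y) - 1$ is exactly $h(\evu)\ge 0$, and the equality case of the lemma amounts to identifying the zeros of $h$.

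Next I would analyse $h$ on $(0,\infty)$ by differentiation. We have $h'(\evu) = y^{-1} - \evu^{-1}$, which vanishes precisely at $\evu=y$, is strictly negative for $\evu<y$, and strictly positive for $\evu>y$; hence $h$ is strictly decreasing on $(0,y)$ and strictly increasing on $(y,\infty)$, so that $\evu=y$ is its unique global minimizer. Evaluating at that point yields $h(y) = 1 + \log(y) - 1 - \log(y) = 0$.

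Combining the two observations gives $h(\evu)\ge h(y)=0$ for every $\evu>0$, with equality if and only if $\evu=y$, which is precisely the claimed inequality together with its stated equality condition. Equivalently, and more conceptually, the right-hand side equals $\log(y) + y^{-1}(\evu-y)$, the value at $\evu$ of the tangent to $t\mapsto\log t$ at the point $t=y$; since $\log$ is strictly concave it lies below each of its tangent lines, with equality only at the point of tangency, yielding the same conclusion.

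I do not anticipate any genuine obstacle, as this is a classical elementary bound. The only point deserving slight care is to record the \emph{strictness} of the monotonicity of $h$ on each side of $y$ (rather than merely $h'(y)=0$), since it is this strictness that pins down the equality case $\evu=y$ instead of just the non-strict inequality.
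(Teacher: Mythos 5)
Your proof is correct: the gap function $h(\evu) = \evu/y + \log(y) - 1 - \log(\evu)$ is strictly decreasing on $(0,y)$ and strictly increasing on $(y,\infty)$ with $h(y)=0$, which gives the inequality together with the equality case, and the tangent-line reading of the right-hand side as $\log(y) + y^{-1}(\evu - y)$ is the standard conceptual explanation. Note that the paper does not actually prove this lemma anywhere --- it is cited directly from page~206 of the Lange reference and used as a known tool --- so there is no in-paper argument to compare against; your elementary calculus verification is exactly the kind of proof the cited source gives, and it is complete as written.
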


\begin{lemma}[Theorem 5.3 in \cite{bohning1988monotonicity}]\label{upperbound_H_fastMM}
    Let $\pib_N$ be a stochastic vector of dimension $N$. We have:
    \begin{align*}
        \text{diag}(\pib_N)-\pib_N\pib_N^{\top}\leq \dfrac{1}{2}\left(\mI_K-\dfrac{\bm{1}_N\bm{1}_N^{\top}}{N}\right).
    \end{align*}
\end{lemma}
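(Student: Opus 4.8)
The plan is to reduce the Loewner-order claim to a scalar inequality by testing both matrices against an arbitrary vector $\vv \in \rset^N$. Set $\mA \eqdef \mathrm{diag}(\pib_N) - \pib_N \pib_N^\top$ and $\mB \eqdef \tfrac12\bigl(\mI_N - N^{-1}\bm{1}_N \bm{1}_N^\top\bigr)$; it suffices to prove $\vv^\top \mA \vv \le \vv^\top \mB \vv$ for all $\vv$, i.e.\ $\mB - \mA \succeq \zero$. First I would rewrite each quadratic form as a variance. A direct expansion gives $\vv^\top \mA \vv = \sum_i \pi_i v_i^2 - \bigl(\sum_i \pi_i v_i\bigr)^2 = \Var_{\pib_N}(\vv)$, the variance of the discrete random variable taking value $v_i$ with probability $(\pib_N)_i$, while $\vv^\top \mB \vv = \tfrac12\bigl(\|\vv\|^2 - N^{-1}(\bm{1}_N^\top \vv)^2\bigr) = \tfrac12\sum_i (v_i - \bar v)^2$ with $\bar v \eqdef N^{-1}\sum_i v_i$. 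Both expressions are invariant under the shift $\vv \mapsto \vv + c\,\bm{1}_N$, which I would use to normalise whenever convenient.

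Next I would bound the left side using Popoviciu's variance inequality: writing $m \eqdef \min_i v_i$ and $M \eqdef \max_i v_i$, the random variable is supported in $[m,M]$, so $\Var_{\pib_N}(\vv) \le \tfrac14 (M-m)^2$. It then remains to lower-bound the right side, which I would do by discarding all but the two extreme summands, $\sum_i (v_i - \bar v)^2 \ge (M - \bar v)^2 + (m - \bar v)^2 \ge \tfrac12 (M - m)^2$, where the final step is the elementary inequality $a^2 + b^2 \ge \tfrac12 (a+b)^2$ applied with $a \eqdef M - \bar v \ge 0$ and $b \eqdef \bar v - m \ge 0$. Chaining the two estimates yields $\vv^\top \mA \vv \le \tfrac14 (M-m)^2 \le \tfrac12 \sum_i (v_i - \bar v)^2 = \vv^\top \mB \vv$, which is exactly the claim.

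The hard part is not the algebra but pinning down the constant $\tfrac12$: it arises from the tension between Popoviciu's bound, which is sharp for the symmetric two-point distribution on $\{m,M\}$, and the reference quadratic form attached to the uniform law $N^{-1}\bm{1}_N \bm{1}_N^\top$; both are tight precisely when $\vv$ loads only two extreme coordinates, so the factor cannot be improved in general and the two inequalities must be matched carefully. If I preferred not to invoke Popoviciu as a black box, I would make the argument self-contained by centring $\vv$ at the midpoint $(M+m)/2$, so that every centred coordinate lies in $[-(M-m)/2,\,(M-m)/2]$ and hence $\Var_{\pib_N}(\vv) \le \mathbb{E}_{\pib_N}[(\cdot)^2] \le (M-m)^2/4$ directly. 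The degenerate cases ($N=1$, or all $v_i$ equal) make both sides vanish and are immediate.
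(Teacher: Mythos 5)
Your proof is correct. The paper itself offers no proof of this lemma: it is imported verbatim as Theorem~5.3 of B\"ohning--Lindsay (1988) and used as a black box (e.g.\ in the proof of Proposition~\ref{prop_theo3}), so there is nothing internal to compare against; what you have produced is a self-contained elementary replacement for that citation. Your route --- identify $\vv^\top(\mathrm{diag}(\pib_N)-\pib_N\pib_N^\top)\vv$ with $\Var_{\pib_N}(\vv)$ and $\tfrac12\vv^\top(\mI_N-N^{-1}\bm{1}_N\bm{1}_N^\top)\vv$ with $\tfrac12\sum_i(v_i-\bar v)^2$, bound the former above by $\tfrac14(M-m)^2$ via Popoviciu (or the midpoint-centring argument you sketch, which avoids any black box), and the latter below by the same quantity by keeping only the two extreme coordinates and using $a^2+b^2\ge\tfrac12(a+b)^2$ --- chains correctly, and the degenerate cases ($M=m$, so both sides vanish) are disposed of as you say. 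The argument has the added merit of exhibiting tightness of the constant: for $N=2$, $\pib_2=(\tfrac12,\tfrac12)$ and $\vv=(1,-1)^\top$ both quadratic forms equal $1$, so the factor $\tfrac12$ cannot be improved, a fact the bare citation does not make visible. Two small remarks: the $\mI_K$ in the displayed statement is a typo for $\mI_N$, which you have correctly silently repaired; and when you discard all but the two extreme summands you implicitly use that the argmax and argmin indices are distinct, which is guaranteed exactly when $M>m$, i.e.\ outside the degenerate case you already handle, so no gap results.
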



\bibliographystyle{abbrv}
\bibliography{ref}

@article{cai_survey_2025,
	title = {A {Survey} on {Mixture} of {Experts} in {Large} {Language} {Models}},
	volume = {37},
	number = {7},
	journal = {IEEE Transactions on Knowledge and Data Engineering},
	author = {Cai, Weilin and Jiang, Juyong and Wang, Fan and Tang, Jing and Kim, Sunghun and Huang, Jiayi},
	year = {2025},
	keywords = {Computational modeling, Computer architecture, gating functions, Large language models, Logic gates, mixture of experts, Natural language processing, Surveys, System analysis and design, Taxonomy, Training, Transformers},
	pages = {3896--3915},
}

@article{dereich_ode_2025,
	title = {{ODE} approximation for the {Adam} algorithm: {General} and overparametrized setting},
	journal = {arXiv preprint arXiv:2511.04622},
	author = {Dereich, Steffen and Jentzen, Arnulf and Kassing, Sebastian},
	year = {2025},
}

@inproceedings{zhang_moefication_2022,
	address = {Dublin, Ireland},
	title = {{MoEfication}: {Transformer} {Feed}-forward {Layers} are {Mixtures} of {Experts}},
	url = {https://aclanthology.org/2022.findings-acl.71},
	doi = {10.18653/v1/2022.findings-acl.71},
	abstract = {Recent work has shown that feed-forward networks (FFNs) in pre-trained Transformers are a key component, storing various linguistic and factual knowledge. However, the computational patterns of FFNs are still unclear. In this work, we study the computational patterns of FFNs and observe that most inputs only activate a tiny ratio of neurons of FFNs. This phenomenon is similar to the sparsity of the human brain, which drives research on functional partitions of the human brain. To verify whether functional partitions also emerge in FFNs, we propose to convert a model into its MoE version with the same parameters, namely MoEfication. Specifically, MoEfication consists of two phases: (1) splitting the parameters of FFNs into multiple functional partitions as experts, and (2) building expert routers to decide which experts will be used for each input. Experimental results show that MoEfication can conditionally use 10\% to 30\% of FFN parameters while maintaining over 95\% original performance for different models on various downstream tasks. Besides, MoEfication brings two advantages: (1) it significantly reduces the FLOPS of inference, i.e., 2x speedup with 25\% of FFN parameters, and (2) it provides a fine-grained perspective to study the inner mechanism of FFNs. The source code of this paper can be obtained from https://github.com/thunlp/MoEfication.},
	booktitle = {Findings of the {Association} for {Computational} {Linguistics}: {ACL} 2022},
	publisher = {Association for Computational Linguistics},
	author = {Zhang, Zhengyan and Lin, Yankai and Liu, Zhiyuan and Li, Peng and Sun, Maosong and Zhou, Jie},
	month = may,
	year = {2022},
	pages = {877--890},
}

@inproceedings{chen_sparse_2023,
	title = {Sparse {MoE} as the {New} {Dropout}: {Scaling} {Dense} and {Self}-{Slimmable} {Transformers}},
	url = {https://openreview.net/forum?id=w1hwFUb_81},
	booktitle = {The {Eleventh} {International} {Conference} on {Learning} {Representations}},
	author = {Chen, Tianlong and Zhang, Zhenyu and JAISWAL, AJAY KUMAR and Liu, Shiwei and Wang, Zhangyang},
	year = {2023},
}

@inproceedings{do_hyperrouter_2023,
	address = {Singapore},
	title = {{HyperRouter}: {Towards} {Efficient} {Training} and {Inference} of {Sparse} {Mixture} of {Experts}},
	copyright = {All rights reserved},
	url = {https://aclanthology.org/2023.emnlp-main.351},
	abstract = {By routing input tokens to only a few split experts, Sparse Mixture-of-Experts has enabled efficient training of large language models. Recent findings suggest that fixing the routers can achieve competitive performance by alleviating the collapsing problem, where all experts eventually learn similar representations. However, this strategy has two key limitations: (i) the policy derived from random routers might be sub-optimal, and (ii) it requires extensive resources during training and evaluation, leading to limited efficiency gains. This work introduces HyperRouter, which dynamically generates the router's parameters through a fixed hypernetwork and trainable embeddings to achieve a balance between training the routers and freezing them to learn an improved routing policy. Extensive experiments across a wide range of tasks demonstrate the superior performance and efficiency gains of HyperRouter compared to existing routing methods. Our implementation is publicly available at https://github.com/giangdip2410/HyperRouter.},
	booktitle = {Proceedings of the 2023 {Conference} on {Empirical} {Methods} in {Natural} {Language} {Processing}},
	publisher = {Association for Computational Linguistics},
	author = {Do, Truong and Khiem, Le and Pham, Quang and Nguyen, TrungTin and Doan, Thanh-Nam and Nguyen, Binh and Liu, Chenghao and Ramasamy, Savitha and Li, Xiaoli and Hoi, Steven},
	editor = {Bouamor, Houda and Pino, Juan and Bali, Kalika},
	month = dec,
	year = {2023},
	pages = {5754--5765},
}

@inproceedings{le_mixture_2024,
	title = {Mixture of {Experts} {Meets} {Prompt}-{Based} {Continual} {Learning}},
	booktitle = {Advances in {Neural} {Information} {Processing} {Systems}},
	author = {Le, Minh and Nguyen, An and Nguyen, Huy and Nguyen, Trang and Pham, Trang and Van Ngo, Linh and Ho, Nhat},
	year = {2024},
	note = {arXiv preprint arXiv:2405.14124},
}

@article{nguyen_non_asymptotic_2022,
	title = {A non-asymptotic approach for model selection via penalization in high-dimensional mixture of experts models},
	volume = {16},
	copyright = {All rights reserved},
	url = {https://doi.org/10.1214/22-EJS2057},
	doi = {10.1214/22-EJS2057},
	number = {2},
	journal = {Electronic Journal of Statistics},
	author = {Nguyen, TrungTin and Nguyen, Hien Duy and Chamroukhi, Faicel and Forbes, Florence},
	year = {2022},
	keywords = {clustering, mixture of regressions, Model selection, mixture of experts, Oracle inequality, Gaussian locally-linear mapping models, block-diagonal covariance matrix, graphical lasso, linear cluster-weighted models, penalized maximum likelihood},
	pages = {4742 -- 4822},
}

@article{thai_model_2025,
	title = {Model {Selection} for {Gaussian}-gated {Gaussian} {Mixture} of {Experts} {Using} {Dendrograms} of {Mixing} {Measures}},
	journal = {arXiv preprint arXiv:2505.13052},
	author = {Thai, Tuan and Nguyen, TrungTin and Do, Dat and Ho, Nhat and Drovandi, Christopher},
	year = {2025},
}

@inproceedings{hai_dendrograms_2026,
	series = {Proceedings of {Machine} {Learning} {Research}},
	title = {Dendrograms of {Mixing} {Measures} for {Softmax}-{Gated} {Gaussian} {Mixture} of {Experts}: {Consistency} without {Model} {Sweeps}},
	booktitle = {Proceedings of {The} 29th {International} {Conference} on {Artificial} {Intelligence} and {Statistics}},
	publisher = {PMLR},
	author = {Hai, Do Tien and Mai, Trung Nguyen and Nguyen, TrungTin and Ho, Nhat and Nguyen, Binh T and Drovandi, Christopher},
	year = {2026},
}

@inproceedings{fort_perturbed_2021,
	title = {The {Perturbed} {Prox}-{Preconditioned} {Spider} {Algorithm}: {Non}-{Asymptotic} {Convergence} {Bounds}},
	doi = {10.1109/SSP49050.2021.9513846},
	booktitle = {2021 {IEEE} {Statistical} {Signal} {Processing} {Workshop} ({SSP})},
	author = {Fort, G. and Moulines, E.},
	year = {2021},
	keywords = {Approximation algorithms, Conferences, Control Variates, Finite sum optimization, Large Scale Learning, Minimization, Monte Carlo methods, Perturbation methods, Signal processing, Signal processing algorithms, Statistical Learning, Variance reduced Stochastic gradient},
	pages = {96--100},
}

@inproceedings{fort_geom_spider_em_2021,
	title = {Geom-{Spider}-{EM}: {Faster} {Variance} {Reduced} {Stochastic} {Expectation} {Maximization} for {Nonconvex} {Finite}-{Sum} {Optimization}},
	doi = {10.1109/ICASSP39728.2021.9414271},
	booktitle = {{ICASSP} 2021 - 2021 {IEEE} {International} {Conference} on {Acoustics}, {Speech} and {Signal} {Processing} ({ICASSP})},
	author = {Fort, Gersende and Moulines, Eric and Wai, Hoi-To},
	year = {2021},
	keywords = {Complexity theory, Computational modeling, Conferences, Expectation Maximization, Inference algorithms, Large scale learning, Latent variable analysis, Nonconvex stochastic optimization, Signal processing, Signal processing algorithms, Stochastic processes, Variance reduction},
	pages = {3135--3139},
}

@inproceedings{wang_spiderboost_2019,
	title = {{SpiderBoost} and {Momentum}: {Faster} {Variance} {Reduction} {Algorithms}},
	volume = {32},
	url = {https://proceedings.neurips.cc/paper_files/paper/2019/file/512c5cad6c37edb98ae91c8a76c3a291-Paper.pdf},
	booktitle = {Advances in {Neural} {Information} {Processing} {Systems}},
	publisher = {Curran Associates, Inc.},
	author = {Wang, Zhe and Ji, Kaiyi and Zhou, Yi and Liang, Yingbin and Tarokh, Vahid},
	editor = {Wallach, H. and Larochelle, H. and Beygelzimer, A. and Alché-Buc, F. d' and Fox, E. and Garnett, R.},
	year = {2019},
}

@inproceedings{chen_stochastic_2018,
	title = {Stochastic {Expectation} {Maximization} with {Variance} {Reduction}},
	volume = {31},
	url = {https://proceedings.neurips.cc/paper_files/paper/2018/file/aba22f748b1a6dff75bda4fd1ee9fe07-Paper.pdf},
	booktitle = {Advances in {Neural} {Information} {Processing} {Systems}},
	publisher = {Curran Associates, Inc.},
	author = {Chen, Jianfei and Zhu, Jun and Teh, Yee Whye and Zhang, Tong},
	editor = {Bengio, S. and Wallach, H. and Larochelle, H. and Grauman, K. and Cesa-Bianchi, N. and Garnett, R.},
	year = {2018},
}

@article{dieuleveut_federated_2025,
	title = {Federated {Majorize}-{Minimization}: {Beyond} {Parameter} {Aggregation}},
	journal = {arXiv preprint arXiv:2507.17534},
	author = {Dieuleveut, Aymeric and Fort, Gersende and Hegazy, Mahmoud and Wai, Hoi-To},
	year = {2025},
}

@article{dieuleveut_stochastic_2023,
	title = {Stochastic {Approximation} {Beyond} {Gradient} for {Signal} {Processing} and {Machine} {Learning}},
	volume = {71},
	doi = {10.1109/TSP.2023.3301121},
	journal = {IEEE Transactions on Signal Processing},
	author = {Dieuleveut, Aymeric and Fort, Gersende and Moulines, Eric and Wai, Hoi-To},
	year = {2023},
	keywords = {Approximation algorithms, compressed stochastic gradient, Convergence, convergence analysis, expectation maximization, Lyapunov methods, Machine learning algorithms, Monte Carlo methods, Random variables, Signal processing algorithms, Stochastic approximation},
	pages = {3117--3148},
}

@article{fort_stochastic_2023,
	title = {Stochastic variable metric proximal gradient with variance reduction for non-convex composite optimization},
	volume = {33},
	issn = {1573-1375},
	url = {https://doi.org/10.1007/s11222-023-10230-6},
	doi = {10.1007/s11222-023-10230-6},
	abstract = {This paper introduces a novel algorithm, the Perturbed Proximal Preconditioned SPIDER algorithm (3P-SPIDER), designed to solve finite sum non-convex composite optimization. It is a stochastic Variable Metric Forward–Backward algorithm, which allows approximate preconditioned forward operator and uses a variable metric proximity operator as the backward operator; it also proposes a mini-batch strategy with variance reduction to address the finite sum setting. We show that 3P-SPIDER extends some Stochastic preconditioned Gradient Descent-based algorithms and some Incremental Expectation Maximization algorithms to composite optimization and to the case the forward operator can not be computed in closed form. We also provide an explicit control of convergence in expectation of 3P-SPIDER, and study its complexity in order to satisfy the approximate epsilon-stationary condition. Our results are the first to combine the non-convex composite optimization setting, a variance reduction technique to tackle the finite sum setting by using a minibatch strategy and, to allow deterministic or random approximations of the preconditioned forward operator. Finally, through an application to inference in a logistic regression model with random effects, we numerically compare 3P-SPIDER to other stochastic forward–backward algorithms and discuss the role of some design parameters of 3P-SPIDER.},
	number = {3},
	journal = {Statistics and Computing},
	author = {Fort, Gersende and Moulines, Eric},
	month = apr,
	year = {2023},
}

@inproceedings{fort_stochastic_2020,
	title = {A {Stochastic} {Path} {Integral} {Differential} {EstimatoR} {Expectation} {Maximization} {Algorithm}},
	volume = {33},
	url = {https://proceedings.neurips.cc/paper_files/paper/2020/file/c589c3a8f99401b24b9380e86d939842-Paper.pdf},
	booktitle = {Advances in {Neural} {Information} {Processing} {Systems}},
	publisher = {Curran Associates, Inc.},
	author = {Fort, Gersende and Moulines, Eric and Wai, Hoi-To},
	editor = {Larochelle, H. and Ranzato, M. and Hadsell, R. and Balcan, M. F. and Lin, H.},
	year = {2020},
	pages = {16972--16982},
}

@incollection{neal_view_1998,
	address = {Dordrecht},
	title = {A {View} of the {EM} {Algorithm} that {Justifies} {Incremental}, {Sparse}, and other {Variants}},
	isbn = {978-94-011-5014-9},
	url = {https://doi.org/10.1007/978-94-011-5014-9_12},
	abstract = {The EM algorithm performs maximum likelihood estimation for data in which some variables are unobserved. We present a function that resembles negative free energy and show that the M step maximizes this function with respect to the model parameters and the E step maximizes it with respect to the distribution over the unobserved variables. From this perspective, it is easy to justify an incremental variant of the EM algorithm in which the distribution for only one of the unobserved variables is recalculated in each E step. This variant is shown empirically to give faster convergence in a mixture estimation problem. A variant of the algorithm that exploits sparse conditional distributions is also described, and a wide range of other variant algorithms are also seen to be possible.},
	booktitle = {Learning in {Graphical} {Models}},
	publisher = {Springer Netherlands},
	author = {Neal, Radford M. and Hinton, Geoffrey E.},
	editor = {Jordan, Michael I.},
	year = {1998},
	doi = {10.1007/978-94-011-5014-9_12},
	pages = {355--368},
}

@incollection{lange_nonconvex_2021,
	title = {Nonconvex {Optimization} via {MM} {Algorithms}: {Convergence} {Theory}},
	isbn = {978-1-118-44511-2},
	url = {https://onlinelibrary.wiley.com/doi/abs/10.1002/9781118445112.stat08295},
	abstract = {The majorization–minimization (MM) principle is an extremely general framework for deriving optimization algorithms. It includes the expectation–maximization (EM) algorithm, proximal gradient algorithm, concave–convex procedure, quadratic lower bound algorithm, and proximal distance algorithm as special cases. Besides numerous applications in statistics, optimization, and imaging, the MM principle finds wide applications in large-scale machine learning problems such as matrix completion, discriminant analysis, and nonnegative matrix factorizations. When applied to nonconvex optimization problems, MM algorithms enjoy the advantages of convexifying the objective function, separating variables, numerical stability, and ease of implementation. However, compared to the large body of literature on other optimization algorithms, the convergence analysis of MM algorithms is scattered and problem specific. This survey presents a unified treatment of the convergence of MM algorithms. With modern applications in mind, the results encompass nonsmooth objective functions and nonasymptotic analysis.},
	booktitle = {Wiley {StatsRef}: {Statistics} {Reference} {Online}},
	publisher = {John Wiley \& Sons, Ltd},
	author = {Lange, Kenneth and Won, Joong-Ho and Landeros, Alfonso and Zhou, Hua},
	year = {2021},
	doi = {https://doi.org/10.1002/9781118445112.stat08295},
	keywords = {convergence rate, global convergence, Lojasiewicz's inequality, MM algorithm, non-smooth analysis},
	pages = {1--22},
}

@inproceedings{kingma_adam_2015,
	title = {Adam: {A} method for stochastic optimization},
	booktitle = {The 3rd {International} {Conference} for {Learning} {Representations}},
	author = {Kingma, Diederik P and Ba, Jimmy},
	year = {2015},
}

@article{tibshirani_regression_1996,
	title = {Regression shrinkage and selection via the {Lasso}},
	volume = {58},
	number = {1},
	journal = {Journal of the Royal Statistical Society: Series B (Methodological)},
	author = {Tibshirani, Robert},
	year = {1996},
	note = {Publisher: Wiley Online Library},
	pages = {267--288},
}

@article{friedman_regularization_2010,
	title = {Regularization {Paths} for {Generalized} {Linear} {Models} via {Coordinate} {Descent}},
	volume = {33},
	url = {https://www.jstatsoft.org/index.php/jss/article/view/v033i01},
	doi = {10.18637/jss.v033.i01},
	abstract = {We develop fast algorithms for estimation of generalized linear models with convex penalties. The models include linear regression, two-class logistic regression, and multi- nomial regression problems while the penalties include ℓ\&lt;sub\&gt;1\&lt;/sub\&gt; (the lasso), ℓ\&lt;sub\&gt;2\&lt;/sub\&gt; (ridge regression) and mixtures of the two (the elastic net). The algorithms use cyclical coordinate descent, computed along a regularization path. The methods can handle large problems and can also deal efficiently with sparse features. In comparative timings we find that the new algorithms are considerably faster than competing methods.},
	number = {1},
	urldate = {2025-05-26},
	journal = {Journal of Statistical Software},
	author = {Friedman, Jerome H. and Hastie, Trevor and Tibshirani, Rob},
	month = feb,
	year = {2010},
	note = {Section: Articles},
	pages = {1 -- 22},
}

@article{hinton_neural_2012,
	title = {Neural networks for machine learning: lecture 6a overview of mini-batch gradient descent},
	volume = {14},
	number = {8},
	journal = {University of Toronto, Technical Report},
	author = {Hinton, Geoffrey and Srivastava, Nitish and Swersky, Kevin},
	year = {2012},
	pages = {2},
}

@article{robbins_stochastic_1951,
	title = {A {Stochastic} {Approximation} {Method}},
	volume = {22},
	url = {https://doi.org/10.1214/aoms/1177729586},
	doi = {10.1214/aoms/1177729586},
	number = {3},
	journal = {The Annals of Mathematical Statistics},
	author = {Robbins, Herbert and Monro, Sutton},
	year = {1951},
	note = {Publisher: Institute of Mathematical Statistics},
	pages = {400 -- 407},
}

@inproceedings{liu_sophia_2024,
	title = {Sophia: {A} {Scalable} {Stochastic} {Second}-order {Optimizer} for {Language} {Model} {Pre}-training},
	url = {https://openreview.net/forum?id=3xHDeA8Noi},
	booktitle = {The {Twelfth} {International} {Conference} on {Learning} {Representations}},
	author = {Liu, Hong and Li, Zhiyuan and Hall, David Leo Wright and Liang, Percy and Ma, Tengyu},
	year = {2024},
}

@article{jiang_mixtral_2024,
	title = {Mixtral of experts},
	journal = {arXiv preprint arXiv:2401.04088},
	author = {Jiang, Albert Q and Sablayrolles, Alexandre and Roux, Antoine and Mensch, Arthur and Savary, Blanche and Bamford, Chris and Chaplot, Devendra Singh and Casas, Diego de las and Hanna, Emma Bou and Bressand, Florian and {others}},
	year = {2024},
}

@article{guo_deepseek_r1_2025,
	title = {{DeepSeek}-{R1}: {Incentivizing} {Reasoning} {Capability} in {LLMs} via {Reinforcement} {Learning}},
	journal = {arXiv preprint arXiv:2501.12948},
	author = {Guo, Daya and Yang, Dejian and Zhang, Haowei and Song, Junxiao and Zhang, Ruoyu and Xu, Runxin and Zhu, Qihao and Ma, Shirong and Wang, Peiyi and Bi, Xiao and {others}},
	year = {2025},
}

@inproceedings{dai_deepseekmoe_2024,
	address = {Bangkok, Thailand},
	title = {{DeepSeekMoE}: {Towards} {Ultimate} {Expert} {Specialization} in {Mixture}-of-{Experts} {Language} {Models}},
	url = {https://aclanthology.org/2024.acl-long.70/},
	doi = {10.18653/v1/2024.acl-long.70},
	abstract = {In the era of large language models, Mixture-of-Experts (MoE) is a promising architecture for managing computational costs when scaling up model parameters. However, conventional MoE architectures like GShard, which activate the top-K out of N experts, face challenges in ensuring expert specialization, i.e. each expert acquires non-overlapping and focused knowledge. In response, we propose the DeepSeekMoE architecture towards ultimate expert specialization. It involves two principal strategies: (1) finely segmenting the experts into mN ones and activating mK from them, allowing for a more flexible combination of activated experts; (2) isolating K\_s experts as shared ones, aiming at capturing common knowledge and mitigating redundancy in routed experts. Starting from a modest scale with 2B parameters, we demonstrate that DeepSeekMoE 2B achieves comparable performance with GShard 2.9B, which has 1.5 {\textbackslash}times expert parameters and computation. In addition, DeepSeekMoE 2B nearly approaches the performance of its dense counterpart with the same number of total parameters, which sets the upper bound of MoE models. Subsequently, we scale up DeepSeekMoE to 16B parameters and show that it achieves comparable performance with DeepSeek 7B and LLaMA2 7B, with only about 40\% of computations.},
	booktitle = {Proceedings of the 62nd {Annual} {Meeting} of the {Association} for {Computational} {Linguistics} ({Volume} 1: {Long} {Papers})},
	publisher = {Association for Computational Linguistics},
	author = {Dai, Damai and Deng, Chengqi and Zhao, Chenggang and Xu, R.x. and Gao, Huazuo and Chen, Deli and Li, Jiashi and Zeng, Wangding and Yu, Xingkai and Wu, Y. and Xie, Zhenda and Li, Y.k. and Huang, Panpan and Luo, Fuli and Ruan, Chong and Sui, Zhifang and Liang, Wenfeng},
	editor = {Ku, Lun-Wei and Martins, Andre and Srikumar, Vivek},
	month = aug,
	year = {2024},
	pages = {1280--1297},
}

@inproceedings{oudoumanessah2025,
      title={Cluster globally, Reduce locally: Scalable efficient dictionary compression for magnetic resonance fingerprinting}, 
      author={Geoffroy Oudoumanessah and Thomas Coudert and Luc Meyer and Aurelien Delphin and Michel Dojat and Carole Lartizien and Florence Forbes},
      year={2025},
      booktitle={IEEE International Symposium on Biological Imaging (ISBI)}
}

@inproceedings{nguyen_joint_2024,
	title = {Joint learning of {Gaussian} graphical models in heterogeneous dependencies of high-dimensional transcriptomic data},
	url = {https://openreview.net/forum?id=slJkBPxw9x},
	booktitle = {The 16th {Asian} {Conference} on {Machine} {Learning} ({Conference} {Track})},
	author = {Nguyen, Dung Ngoc and Li, Zitong},
	year = {2024},
}

@article{le_thi_online_2020,
	title = {Online {Learning} {Based} on {Online} {DCA} and {Application} to {Online} {Classification}},
	volume = {32},
	issn = {0899-7667},
	url = {https://doi.org/10.1162/neco_a_01266},
	doi = {10.1162/neco_a_01266},
	abstract = {We investigate an approach based on DC (Difference of Convex functions) programming and DCA (DC Algorithm) for online learning techniques. The prediction problem of an online learner can be formulated as a DC program for which online DCA is applied. We propose the two so-called complete/approximate versions of online DCA scheme and prove their logarithmic/sublinear regrets. Six online DCA-based algorithms are developed for online binary linear classification. Numerical experiments on a variety of benchmark classification data sets show the efficiency of our proposed algorithms in comparison with the state-of-the-art online classification algorithms.},
	number = {4},
	urldate = {2025-01-19},
	journal = {Neural Computation},
	author = {Le Thi, Hoai An and Ho, Vinh Thanh},
	month = apr,
	year = {2020},
	pages = {759--793},
}

@article{le_thi_dca_2021,
	title = {{DCA} for online prediction with expert advice},
	volume = {33},
	issn = {1433-3058},
	url = {https://doi.org/10.1007/s00521-021-05709-0},
	doi = {10.1007/s00521-021-05709-0},
	abstract = {We investigate DC (Difference of Convex functions) programming and DCA (DC Algorithm) for a class of online learning techniques, namely prediction with expert advice, where the learner’s prediction is made based on the weighted average of experts’ predictions. The problem of predicting the experts’ weights is formulated as a DC program for which an online version of DCA is investigated. The two so-called approximate/complete variants of online DCA based schemes are designed, and their regrets are proved to be logarithmic/sublinear. The four proposed algorithms for online prediction with expert advice are furthermore applied to online binary classification. Experimental results tested on various benchmark datasets showed their performance and their superiority over three standard online prediction with expert advice algorithms—the well-known weighted majority algorithm and two online convex optimization algorithms.},
	number = {15},
	journal = {Neural Computing and Applications},
	author = {Le Thi, Hoai An and Ho, Vinh Thanh},
	month = aug,
	year = {2021},
	pages = {9521--9544},
}

@article{chouzenoux_stochastic_2017,
	title = {A {Stochastic} {Majorize}-{Minimize} {Subspace} {Algorithm} for {Online} {Penalized} {Least} {Squares} {Estimation}},
	volume = {65},
	doi = {10.1109/TSP.2017.2709265},
	number = {18},
	journal = {IEEE Transactions on Signal Processing},
	author = {Chouzenoux, Emilie and Pesquet, Jean-Christophe},
	year = {2017},
	keywords = {adaptive filtering, Algorithm design and analysis, Approximation algorithms, Context, Convergence, descent methods, filter identification, machine learning, majorization-minimization, memory gradient methods, Newton method, optimization, Optimization, recursive algorithms, Signal processing algorithms, sparsity, Stochastic approximation, Stochastic processes, subspace algorithms},
	pages = {4770--4783},
}

@article{cappe2009line,
  title={On-line expectation--maximization algorithm for latent data models},
  author={Capp{\'e}, Olivier and Moulines, Eric},
  journal={Journal of the Royal Statistical Society Series B: Statistical Methodology},
  volume={71},
  number={3},
  pages={593--613},
  year={2009},
  publisher={Oxford University Press}
}

@inproceedings{karimi_minimization_2022,
	series = {Proceedings of {Machine} {Learning} {Research}},
	title = {Minimization by {Incremental} {Stochastic} {Surrogate} {Optimization} for {Large} {Scale} {Nonconvex} {Problems}},
	volume = {167},
	url = {https://proceedings.mlr.press/v167/karimi22a.html},
	abstract = {Many constrained, nonconvex and nonsmooth optimization problems can be tackled using the majorization-minimization (MM) method which alternates between constructing a surrogate func- tion which upper bounds the objective function, and then minimizing this surrogate. For problems which minimize a finite sum of functions, a stochastic version of the MM method selects a batch of functions at random at each iteration and optimizes the accumulated surrogate. However, in many cases of interest such as variational inference for latent variable models, the surrogate functions are expressed as an expectation. In this contribution, we propose a doubly stochastic MM method based on Monte Carlo approximation of these stochastic surrogates. We establish asymptotic and non-asymptotic convergence of our scheme in a constrained, nonconvex, nonsmooth optimization setting. We apply our new framework for inference of logistic regression model with missing data and for variational inference of Bayesian variants of LeNet-5 and Resnet-18 on benchmark datasets.},
	booktitle = {Proceedings of {The} 33rd {International} {Conference} on {Algorithmic} {Learning} {Theory}},
	publisher = {PMLR},
	author = {Karimi, Belhal and Wai, Hoi-To and Moulines, Eric and Li, Ping},
	editor = {Dasgupta, Sanjoy and Haghtalab, Nika},
	month = apr,
	year = {2022},
	pages = {606--637},
}

@article{mairal2013stochastic,
  title={Stochastic majorization-minimization algorithms for large-scale optimization},
  author={Mairal, Julien},
  journal={Advances in Neural Information Processing Systems},
  volume={26},
  year={2013}
}

@inproceedings{rohde2011online,
  title={Online maximum-likelihood estimation for latent factor models},
  author={Rohde, David and Capp{\'e}, Olivier},
  booktitle={2011 IEEE Statistical Signal Processing Workshop (SSP)},
  pages={565--568},
  year={2011},
  organization={IEEE}
}

@article{mensch_stochastic_2018,
	title = {Stochastic {Subsampling} for {Factorizing} {Huge} {Matrices}},
	volume = {66},
	doi = {10.1109/TSP.2017.2752697},
	number = {1},
	journal = {IEEE Transactions on Signal Processing},
	author = {Mensch, Arthur and Mairal, Julien and Thirion, Bertrand and Varoquaux, Gaël},
	year = {2018},
	keywords = {Algorithm design and analysis, Convergence, Dictionaries, dictionary learning, functional MRI, hyperspectral imaging, Machine learning, majorization-minimization, Matrix factorization, NMF, randomized methods, Signal processing algorithms, Sparse matrices, stochastic optimization, Stochastic processes},
	pages = {113--128},
}

@inproceedings{cappe2009online,
  title={Online sequential monte carlo em algorithm},
  author={Capp{\'e}, Olivier},
  booktitle={2009 IEEE/SP 15th Workshop on Statistical Signal Processing},
  pages={37--40},
  year={2009},
  organization={IEEE}
}

@article{cappe2011online,
  title={Online EM algorithm for hidden Markov models},
  author={Capp{\'e}, Olivier},
  journal={Journal of Computational and Graphical Statistics},
  volume={20},
  number={3},
  pages={728--749},
  year={2011},
  publisher={Taylor \& Francis}
}

@misc{communities_and_crime_183,
  author       = {Redmond, Michael},
  title        = {{Communities and Crime}},
  year         = {2002},
  howpublished = {UCI Machine Learning Repository},
  note         = {{DOI}: https://doi.org/10.24432/C53W3X}
}

@article{blein2020systems,
  title={A systems genetics approach reveals environment-dependent associations between SNPs, protein coexpression, and drought-related traits in maize},
  author={Blein-Nicolas, M{\'e}lisande and Negro, Sandra Sylvia and Balliau, Thierry and Welcker, Claude and Cabrera-Bosquet, Lloren{\c{c}} and Nicolas, St{\'e}phane Dimitri and Charcosset, Alain and Zivy, Michel},
  journal={Genome Research},
  volume={30},
  number={11},
  pages={1593--1604},
  year={2020},
  publisher={Cold Spring Harbor Lab}
}

@article{prado2018phenomics,
  title={Phenomics allows identification of genomic regions affecting maize stomatal conductance with conditional effects of water deficit and evaporative demand},
  author={Prado, Santiago Alvarez and Cabrera-Bosquet, Lloren{\c{c}} and Grau, Antonin and Coupel-Ledru, Aude and Millet, Emilie J and Welcker, Claude and Tardieu, Fran{\c{c}}ois},
  journal={Plant, Cell \& Environment},
  volume={41},
  number={2},
  pages={314--326},
  year={2018},
  publisher={Wiley Online Library}
}

@misc{oudoumanessah2024,
      title={Scalable magnetic resonance fingerprinting: Incremental inference of high dimensional elliptical mixtures from large data volumes}, 
      author={Geoffroy Oudoumanessah and Thomas Coudert and Carole Lartizien and Michel Dojat and Thomas Christen and Florence Forbes},
      year={2024},
      eprint={2412.10173},
      archivePrefix={arXiv},
      primaryClass={stat.AP},
}

@article{Kugler2022,
  TITLE = {Fast {B}ayesian Inversion for high dimensional inverse problems},
  AUTHOR = {Kugler, B. and Forbes, F. and Dout{\'e}, S.},
  journal = {Statistics and Computing},
  volume={32},
  number={2},
  pages={31},
  year={2022}
}

@article{boux2021,
  title={Bayesian inverse regression for vascular magnetic resonance fingerprinting},
  author={Boux, Fabien and Forbes, Florence and Arbel, Julyan and Lemasson, Benjamin and Barbier, Emmanuel L},
  journal={IEEE Transactions on Medical Imaging},
  volume={40},
  number={7},
  pages={1827--1837},
  year={2021},
  publisher={IEEE}
}

@article{lyu_online_2022,
	title = {Online {Nonnegative} {CP}-dictionary {Learning} for {Markovian} {Data}},
	volume = {23},
	url = {http://jmlr.org/papers/v23/21-0419.html},
	number = {148},
	journal = {Journal of Machine Learning Research},
	author = {Lyu, Hanbaek and Strohmeier, Christopher and Needell, Deanna},
	year = {2022},
	pages = {1--50},
}

@article{lyu2024stochastic,
  title={Stochastic regularized majorization-minimization with weakly convex and multi-convex surrogates},
  author={Lyu, Hanbaek},
  journal={Journal of Machine Learning Research},
  volume={25},
  pages={1--83},
  year={2024}
}

@article{lupu2024convergence,
  title={Convergence analysis of stochastic higher-order majorization--minimization algorithms},
  author={Lupu, Daniela and Necoara, Ion},
  journal={Optimization Methods and Software},
  volume={39},
  pages={384--413},
  year={2024}
}

@article{chouzenoux2022sabrina,
  title={SABRINA: A stochastic subspace majorization-minimization algorithm},
  author={Chouzenoux, Emilie and Fest, Jean-Baptiste},
  journal={Journal of Optimization Theory and Applications},
  volume={195},
  pages={919--952},
  year={2022}}

@article{jiang2024stochastic,
  title={The stochastic proximal distance algorithm},
  author={Jiang, Haoyu and Xu, Jason},
  journal={Statistics and Computing},
  volume={34},
  pages={210},
  year={2024}
}

@book{mclachlan_em_1997,
	title = {The {EM} {Algorithm} and {Extensions}},
	publisher = {Wiley},
	author = {McLachlan, Geoffrey J and Krishnan, Thriyambakam},
	year = {1997},
}

@article{delyon_convergence_1999,
	title = {Convergence of a stochastic approximation version of the {EM} algorithm},
	journal = {Annals of statistics},
	author = {Delyon, Bernard and Lavielle, Marc and Moulines, Eric},
	year = {1999},
	pages = {94--128},
}

@article{kuhn_properties_2020,
	title = {Properties of the stochastic approximation {EM} algorithm with mini-batch sampling},
	volume = {30},
	issn = {1573-1375},
	url = {https://doi.org/10.1007/s11222-020-09968-0},
	doi = {10.1007/s11222-020-09968-0},
	abstract = {To deal with very large datasets a mini-batch version of the Monte Carlo Markov Chain Stochastic Approximation Expectation–Maximization algorithm for general latent variable models is proposed. For exponential models the algorithm is shown to be convergent under classical conditions as the number of iterations increases. Numerical experiments illustrate the performance of the mini-batch algorithm in various models. In particular, we highlight that mini-batch sampling results in an important speed-up of the convergence of the sequence of estimators generated by the algorithm. Moreover, insights on the effect of the mini-batch size on the limit distribution are presented. Finally, we illustrate how to use mini-batch sampling in practice to improve results when a constraint on the computing time is given.},
	number = {6},
	journal = {Statistics and Computing},
	author = {Kuhn, Estelle and Matias, Catherine and Rebafka, Tabea},
	month = nov,
	year = {2020},
	pages = {1725--1739},
}

@article{pham_competesmoeeffective_2024,
	title = {{CompeteSMoE}–{Effective} {Training} of {Sparse} {Mixture} of {Experts} via {Competition}},
	copyright = {All rights reserved},
	journal = {arXiv preprint arXiv:2402.02526},
	author = {Pham, Quang and Do, Giang and Nguyen, Huy and Nguyen, TrungTin and Liu, Chenghao and Sartipi, Mina and Nguyen, Binh T and Ramasamy, Savitha and Li, Xiaoli and Hoi, Steven and {others}},
	year = {2024},
}

@inproceedings{lathuiliere_deep_2017,
	title = {Deep mixture of linear inverse regressions applied to head-pose estimation},
	booktitle = {Proceedings of the {IEEE} {Conference} on {Computer} {Vision} and {Pattern} {Recognition}},
	author = {Lathuilière, Stéphane and Juge, Rémi and Mesejo, Pablo and Muñoz-Salinas, Rafael and Horaud, Radu},
	year = {2017},
	pages = {4817--4825},
}

@inproceedings{you_speechmoe2_2022,
	title = {Speechmoe2: {Mixture}-of-{Experts} {Model} with {Improved} {Routing}},
	url = {https://doi.org/10.1109/ICASSP43922.2022.9747065},
	doi = {10.1109/ICASSP43922.2022.9747065},
	booktitle = {{IEEE} {International} {Conference} on {Acoustics}, {Speech} and {Signal} {Processing}, {ICASSP} 2022, {Virtual} and {Singapore}, 23-27 {May} 2022},
	publisher = {IEEE},
	author = {You, Zhao and Feng, Shulin and Su, Dan and Yu, Dong},
	year = {2022},
	pages = {7217--7221},
}

@article{kuusela_semi_supervised_2012,
	title = {Semi-supervised anomaly detection – towards model-independent searches of new physics},
	volume = {368},
	url = {https://dx.doi.org/10.1088/1742-6596/368/1/012032},
	doi = {10.1088/1742-6596/368/1/012032},
	abstract = {Most classification algorithms used in high energy physics fall under the category of supervised machine learning. Such methods require a training set containing both signal and background events and are prone to classification errors should this training data be systematically inaccurate for example due to the assumed MC model. To complement such model-dependent searches, we propose an algorithm based on semi-supervised anomaly detection techniques, which does not require a MC training sample for the signal data. We first model the background using a multivariate Gaussian mixture model. We then search for deviations from this model by fitting to the observations a mixture of the background model and a number of additional Gaussians. This allows us to perform pattern recognition of any anomalous excess over the background. We show by a comparison to neural network classifiers that such an approach is a lot more robust against misspecification of the signal MC than supervised classification. In cases where there is an unexpected signal, a neural network might fail to correctly identify it, while anomaly detection does not suffer from such a limitation. On the other hand, when there are no systematic errors in the training data, both methods perform comparably.},
	number = {1},
	journal = {Journal of Physics: Conference Series},
	author = {Kuusela, Mikael and Vatanen, Tommi and Malmi, Eric and Raiko, Tapani and Aaltonen, Timo and Nagai, Yoshikazu},
	month = jun,
	year = {2012},
	pages = {012032},
}

@article{li_drug_2019,
	title = {Drug sensitivity prediction with high-dimensional mixture regression},
	volume = {14},
	url = {https://doi.org/10.1371/journal.pone.0212108},
	doi = {10.1371/journal.pone.0212108},
	abstract = {This paper proposes a mixture regression model-based method for drug sensitivity prediction. The proposed method explicitly addresses two fundamental issues in drug sensitivity prediction, namely, population heterogeneity and feature selection pertaining to each of the subpopulations. The mixture regression model is estimated using the imputation-conditional consistency algorithm, and the resulting estimator is consistent. This paper also proposes an average-BIC criterion for determining the number of components for the mixture regression model. The proposed method is applied to the CCLE dataset, and the numerical results indicate that the proposed method can make a drastic improvement over the existing ones, such as random forest, support vector regression, and regularized linear regression, in both drug sensitivity prediction and feature selection. The p-values for the comparisons in drug sensitivity prediction can reach the order O(10−8) or lower for the drugs with heterogeneous populations.},
	number = {2},
	journal = {PLOS ONE},
	author = {Li, Qianyun and Shi, Runmin and Liang, Faming},
	month = feb,
	year = {2019},
	pages = {1--18},
}

@inproceedings{karimi_global_2019,
	title = {On the {Global} {Convergence} of ({Fast}) {Incremental} {Expectation} {Maximization} {Methods}},
	volume = {32},
	url = {https://proceedings.neurips.cc/paper_files/paper/2019/file/a14ac55a4f27472c5d894ec1c3c743d2-Paper.pdf},
	booktitle = {Advances in {Neural} {Information} {Processing} {Systems}},
	author = {Karimi, Belhal and Wai, Hoi-To and Moulines, Eric and Lavielle, Marc},
	year = {2019},
}

@article{cappe_online_2011,
	title = {Online {EM} {Algorithm} for {Hidden} {Markov} {Models}},
	volume = {20},
	issn = {1061-8600},
	url = {https://doi.org/10.1198/jcgs.2011.09109},
	doi = {10.1198/jcgs.2011.09109},
	number = {3},
	journal = {Journal of Computational and Graphical Statistics},
	author = {Cappé, Olivier},
	month = jan,
	year = {2011},
	pages = {728--749},
	annote = {doi: 10.1198/jcgs.2011.09109},
}

@article{corff_online_2013,
	title = {Online {Expectation} {Maximization} based algorithms for inference in {Hidden} {Markov} {Models}},
	volume = {7},
	url = {https://doi.org/10.1214/13-EJS789},
	doi = {10.1214/13-EJS789},
	number = {none},
	journal = {Electronic Journal of Statistics},
	author = {Corff, Sylvain Le and Fort, Gersende},
	year = {2013},
	pages = {763 -- 792},
}

@article{razaviyayn_stochastic_2016,
	title = {A {Stochastic} {Successive} {Minimization} {Method} for {Nonsmooth} {Nonconvex} {Optimization} with {Applications} to {Transceiver} {Design} in {Wireless} {Communication} {Networks}},
	volume = {157},
	issn = {1436-4646},
	url = {https://doi.org/10.1007/s10107-016-1021-7},
	doi = {10.1007/s10107-016-1021-7},
	abstract = {Consider the problem of minimizing the expected value of a cost function parameterized by a random variable. The classical sample average approximation method for solving this problem requires minimization of an ensemble average of the objective at each step, which can be expensive. In this paper, we propose a stochastic successive upper-bound minimization method (SSUM) which minimizes an approximate ensemble average at each iteration. To ensure convergence and to facilitate computation, we require the approximate ensemble average to be a locally tight upper-bound of the expected cost function and be easily optimized. The main contributions of this work include the development and analysis of the SSUM method as well as its applications in linear transceiver design for wireless communication networks and online dictionary learning. Moreover, using the SSUM framework, we extend the classical stochastic (sub-)gradient method to the case of minimizing a nonsmooth nonconvex objective function and establish its convergence.},
	number = {2},
	journal = {Mathematical Programming},
	author = {Razaviyayn, Meisam and Sanjabi, Maziar and Luo, Zhi-Quan},
	month = jun,
	year = {2016},
	pages = {515--545},
}

@inproceedings{nguyen_bayesian_2024,
	title = {Bayesian {Likelihood} {Free} {Inference} using {Mixtures} of {Experts}},
	copyright = {All rights reserved},
	doi = {10.1109/IJCNN60899.2024.10650052},
	booktitle = {2024 {International} {Joint} {Conference} on {Neural} {Networks} ({IJCNN})},
	author = {Nguyen, Hien Duy and Nguyen, TrungTin and Forbes, Florence},
	year = {2024},
	keywords = {Mixture of Experts, Accuracy, Approximation algorithms, Bayesian Synthetic Likelihood, Computational modeling, Estimation, Gaussian Locally Linear Mapping, Knowledge engineering, Likelihood Free Inference, Neural networks, Time series analysis},
	pages = {1--8},
}

@article{mairal_incremental_2015,
	title = {Incremental {Majorization}-{Minimization} {Optimization} with {Application} to {Large}-{Scale} {Machine} {Learning}},
	volume = {25},
	url = {https://doi.org/10.1137/140957639},
	doi = {10.1137/140957639},
	abstract = {Majorization-minimization algorithms consist of successively minimizing a sequence of upper bounds of the objective function. These upper bounds are tight at the current estimate, and each iteration monotonically drives the objective function downhill. Such a simple principle is widely applicable and has been very popular in various scientific fields, especially in signal processing and statistics. We propose an incremental majorization-minimization scheme for minimizing a large sum of continuous functions, a problem of utmost importance in machine learning. We present convergence guarantees for nonconvex and convex optimization when the upper bounds approximate the objective up to a smooth error; we call such upper bounds “first-order surrogate functions.” More precisely, we study asymptotic stationary point guarantees for nonconvex problems, and for convex ones, we provide convergence rates for the expected objective function value. We apply our scheme to composite optimization and obtain a new incremental proximal gradient algorithm with linear convergence rate for strongly convex functions. Our experiments show that our method is competitive with the state of the art for solving machine learning problems such as logistic regression when the number of training samples is large enough, and we demonstrate its usefulness for sparse estimation with nonconvex penalties.},
	number = {2},
	journal = {SIAM Journal on Optimization},
	author = {Mairal, Julien},
	year = {2015},
	pages = {829--855},
}

@article{shalev_shwartz_online_2012,
	title = {Online {Learning} and {Online} {Convex} {Optimization}},
	volume = {4},
	issn = {1935-8237},
	url = {http://dx.doi.org/10.1561/2200000018},
	doi = {10.1561/2200000018},
	number = {2},
	journal = {Foundations and Trends® in Machine Learning},
	author = {Shalev-Shwartz, Shai},
	year = {2012},
	pages = {107--194},
}

@book{lan_first_order_2020,
	title = {First-order and stochastic optimization methods for machine learning},
	volume = {1},
	publisher = {Springer},
	author = {Lan, Guanghui},
	year = {2020},
}

@book{hazan_introduction_2016,
	title = {Introduction to online convex optimization},
	volume = {2},
	publisher = {Now Publishers, Inc.},
	author = {Hazan, Elad},
	year = {2016},
}

@inproceedings{karimi_non_asymptotic_2019,
	series = {Proceedings of {Machine} {Learning} {Research}},
	title = {Non-asymptotic {Analysis} of {Biased} {Stochastic} {Approximation} {Scheme}},
	volume = {99},
	url = {https://proceedings.mlr.press/v99/karimi19a.html},
	abstract = {Stochastic approximation (SA) is a key method used in statistical learning. Recently, its non-asymptotic convergence analysis has been considered in many papers. However, most of the prior analyses are made under restrictive assumptions such as unbiased gradient estimates and convex objective function, which significantly limit their applications to sophisticated tasks such as online and reinforcement learning. These restrictions are all essentially relaxed in this work. In particular, we analyze a general SA scheme to minimize a non-convex, smooth objective function. We consider update procedure whose drift term depends on a state-dependent Markov chain and the mean field is not necessarily of gradient type, covering approximate second-order method and allowing asymptotic bias for the one-step updates. We illustrate these settings with the online EM algorithm and the policy-gradient method for average reward maximization in reinforcement learning.},
	booktitle = {Proceedings of the {Thirty}-{Second} {Conference} on {Learning} {Theory}},
	publisher = {PMLR},
	author = {Karimi, Belhal and Miasojedow, Blazej and Moulines, Eric and Wai, Hoi-To},
	editor = {Beygelzimer, Alina and Hsu, Daniel},
	month = jun,
	year = {2019},
	pages = {1944--1974},
}

@article{fort_fast_2021,
	title = {Fast incremental expectation maximization for finite-sum optimization: nonasymptotic convergence},
	volume = {31},
	issn = {1573-1375},
	url = {https://doi.org/10.1007/s11222-021-10023-9},
	doi = {10.1007/s11222-021-10023-9},
	abstract = {Fast incremental expectation maximization (FIEM) is a version of the EM framework for large datasets. In this paper, we first recast FIEM and other incremental EM type algorithms in the Stochastic Approximation within EM framework. Then, we provide nonasymptotic bounds for the convergence in expectation as a function of the number of examples n and of the maximal number of iterations \$\$K\_{\textbackslash}mathrm \{max\}\$\$. We propose two strategies for achieving an \$\${\textbackslash}epsilon \$\$-approximate stationary point, respectively with \$\$K\_{\textbackslash}mathrm \{max\}= O(n{\textasciicircum}\{2/3\}/{\textbackslash}epsilon )\$\$and \$\$K\_{\textbackslash}mathrm \{max\}= O({\textbackslash}sqrt\{n\}/{\textbackslash}epsilon {\textasciicircum}\{3/2\})\$\$, both strategies relying on a random termination rule before \$\$K\_{\textbackslash}mathrm \{max\}\$\$and on a constant step size in the Stochastic Approximation step. Our bounds provide some improvements on the literature. First, they allow \$\$K\_{\textbackslash}mathrm \{max\}\$\$to scale as \$\${\textbackslash}sqrt\{n\}\$\$which is better than \$\$n{\textasciicircum}\{2/3\}\$\$which was the best rate obtained so far; it is at the cost of a larger dependence upon the tolerance \$\${\textbackslash}epsilon \$\$, thus making this control relevant for small to medium accuracy with respect to the number of examples n. Second, for the \$\$n{\textasciicircum}\{2/3\}\$\$-rate, the numerical illustrations show that thanks to an optimized choice of the step size and of the bounds in terms of quantities characterizing the optimization problem at hand, our results design a less conservative choice of the step size and provide a better control of the convergence in expectation.},
	number = {4},
	journal = {Statistics and Computing},
	author = {Fort, G. and Gach, P. and Moulines, E.},
	month = jun,
	year = {2021},
	pages = {48},
}

@book{borkar_stochastic_2008,
	title = {Stochastic approximation: a dynamical systems viewpoint},
	volume = {9},
	publisher = {Springer},
	author = {Borkar, Vivek S},
	year = {2008},
}

@article{meilijson_fast_1989,
	title = {A {Fast} {Improvement} to the {Em} {Algorithm} on its {Own} {Terms}},
	volume = {51},
	issn = {0035-9246},
	url = {https://doi.org/10.1111/j.2517-6161.1989.tb01754.x},
	doi = {10.1111/j.2517-6161.1989.tb01754.x},
	abstract = {The EM algorithm is a numerical technique for the evaluation of maximum likelihood estimates for parameters describing incomplete data models. It is easy to apply in many problems and is stable but slow. The algorithm fails to provide a consistent estimator of the standard errors of the maximum likelihood estimates unless the additional analysis required by the Louis method is performed. Newton-type or other gradient methods are faster and provide error estimates but tend to be unstable and require the analytical evaluation of likelihoods to derive expressions for the score function and (at least) approximations to the Fisher information matrix. The purpose of this paper is to expand on a result by Fisher that permits a unification of EM methodology and Newton methods. The evaluation of the individual observation-by-observation score functions of the incomplete data is a by-product of the application of the E step of the EM algorithm. Once these become available, the Fisher information matrix may be consistently estimated, and the M step may be replaced by a fast Newton-type step.},
	number = {1},
	journal = {Journal of the Royal Statistical Society: Series B (Methodological)},
	author = {Meilijson, Isaac},
	month = sep,
	year = {1989},
	pages = {127--138},
}

@article{nguyen_introduction_2017,
	title = {An introduction to {Majorization}-{Minimization} algorithms for machine learning and statistical estimation},
	volume = {7},
	url = {https://wires.onlinelibrary.wiley.com/doi/abs/10.1002/widm.1198},
	doi = {https://doi.org/10.1002/widm.1198},
	abstract = {MM (majorization–minimization) algorithms are an increasingly popular tool for solving optimization problems in machine learning and statistical estimation. This article introduces the MM algorithm framework in general and via three commonly considered example applications: Gaussian mixture regressions, multinomial logistic regressions, and support vector machines. Specific algorithms for these three examples are derived and Mathematical Programming Series A numerical demonstrations are presented. Theoretical and practical aspects of MM algorithm design are discussed. WIREs Data Mining Knowl Discov 2017, 7:e1198. doi: 10.1002/widm.1198 This article is categorized under: Algorithmic Development {\textgreater} Statistics Technologies {\textgreater} Machine Learning},
	number = {2},
	journal = {WIREs Data Mining and Knowledge Discovery},
	author = {Nguyen, Hien D.},
	year = {2017},
	pages = {e1198},
}

@article{leeuw_application_1977,
	title = {Application of convex analysis to multidimensional scaling},
	journal = {Recent developments in statistics},
	author = {{De Leeuw}, J},
	year = {1977},
	pages = {133--145},
}

@article{jaquier2021tensor,
  title={Tensor-variate mixture of experts for proportional myographic control of a robotic hand},
  author={Jaquier, No{\'e}mie and Haschke, Robert and Calinon, Sylvain},
  journal={Robotics and Autonomous Systems},
  volume={142},
  pages={103812},
  year={2021}
}

@book{ortega_iterative_1970,
	title = {Iterative {Solution} of {Nonlinear} {Equations} in {Several} {Variables}},
	volume = {30},
	publisher = {SIAM},
	author = {Ortega, JM and Rheinboldt, WC},
	year = {1970},
}

@article{de_leeuw_convergence_1977,
	title = {Convergence of correction matrix algorithms for multidimensional scaling},
	volume = {36},
	journal = {Geometric representations of relational data},
	author = {De Leeuw, Jan and Heiser, Willem J},
	year = {1977},
	pages = {735--752},
}

@article{andrieu_stability_2005,
	title = {Stability of {Stochastic} {Approximation} under {Verifiable} {Conditions}},
	volume = {44},
	abstract = {In this paper we address the problem of the stability and convergence of the stochastic approximation procedure {\textbackslash}[ þeta\_n+1 = þeta\_n + {\textbackslash}gamma\_n+1 [h(þeta\_n)+{\textbackslash}xi\_n+1]. {\textbackslash}] The stability of such sequences \{þeta\_n\}{\textbackslash} is known to heavily rely on the behavior of the mean field h at the boundary of the parameter set and the magnitude of the stepsizes used. The conditions typically required to ensure convergence, and in particular the boundedness or stability of \{ þeta\_n \}{\textbackslash}, are either too difficult to check in practice or not satisfied at all. This is the case even for very simple models. The most popular technique for circumventing the stability problem consists of constraining \{ þeta\_n \}{\textbackslash} to a compact subset \{\vphantom{\}}{\textbackslash}mathcalK{\textbackslash} in the parameter space. This is obviously not a satisfactory solution, as the choice of \{\vphantom{\}}{\textbackslash}mathcalK{\textbackslash} is a delicate one. In this paper we first prove a “deterministic' stability result, which relies on simple conditions on the sequences \{ {\textbackslash}xi\_n \}{\textbackslash} and {\textbackslash} \{ {\textbackslash}gamma\_n \}{\textbackslash}. We then propose and analyze an algorithm based on projections on adaptive truncation sets, which ensures that the aforementioned conditions required for stability are satisfied. We focus in particular on the case where \{ {\textbackslash}xi\_n \}{\textbackslash} is a so-called Markov state-dependent noise. We establish both the stability and convergence with probability 1 (w.p. 1) of the algorithm under a set of simple and verifiable assumptions. We illustrate our results with an example related to adaptive Markov chain Monte Carlo algorithms.},
	number = {1},
	journal = {SIAM Journal on Control and Optimization},
	author = {Andrieu, Christophe and Moulines, Eric and Priouret, Pierre},
	year = {2005},
	pages = {283--312},
}

@book{kushner_stochastic_2003,
	title = {Stochastic {Approximation} and {Recursive} {Algorithms} and {Applications}},
	volume = {35},
	publisher = {Springer Science \& Business Media},
	author = {Kushner, Harold and Yin, G George},
	year = {2003},
}

@inproceedings{puigcerver_sparse_2024,
	title = {From {Sparse} to {Soft} {Mixtures} of {Experts}},
	url = {https://openreview.net/forum?id=jxpsAj7ltE},
	booktitle = {The {Twelfth} {International} {Conference} on {Learning} {Representations}},
	author = {Puigcerver, Joan and Ruiz, Carlos Riquelme and Mustafa, Basil and Houlsby, Neil},
	year = {2024},
}

@article{fedus_switch_2022,
	title = {Switch {Transformers}: {Scaling} to {Trillion} {Parameter} {Models} with {Simple} and {Efficient} {Sparsity}},
	volume = {23},
	url = {http://jmlr.org/papers/v23/21-0998.html},
	number = {120},
	journal = {Journal of Machine Learning Research},
	author = {Fedus, William and Zoph, Barret and Shazeer, Noam},
	year = {2022},
	pages = {1--39},
}

@inproceedings{shazeer_outrageously_2017,
	title = {Outrageously {Large} {Neural} {Networks}: {The} {Sparsely}-{Gated} {Mixture}-of-{Experts} {Layer}},
	url = {https://openreview.net/forum?id=B1ckMDqlg},
	booktitle = {International {Conference} on {Learning} {Representations}},
	author = {Shazeer, Noam and Mirhoseini, *Azalia and Maziarz, *Krzysztof and Davis, Andy and Le, Quoc and Hinton, Geoffrey and Dean, Jeff},
	year = {2017},
}

@article{chong_risk_2024,
	title = {Risk {Bounds} for {Mixture} {Density} {Estimation} on {Compact} {Domains} via the h-{Lifted} {Kullback}–{Leibler} {Divergence}},
	issn = {2835-8856},
	url = {https://openreview.net/forum?id=lAKvQO4vHj},
	journal = {Transactions on Machine Learning Research},
	author = {Chong, Mark Chiu and Nguyen, Hien Duy and Nguyen, TrungTin},
	year = {2024},
}

@article{nguyen_mini_batch_2020,
	title = {Mini-batch learning of exponential family finite mixture models},
	volume = {30},
	issn = {1573-1375},
	url = {https://doi.org/10.1007/s11222-019-09919-4},
	doi = {10.1007/s11222-019-09919-4},
	abstract = {Mini-batch algorithms have become increasingly popular due to the requirement for solving optimization problems, based on large-scale data sets. Using an existing online expectation–maximization (EM) algorithm framework, we demonstrate how mini-batch (MB) algorithms may be constructed, and propose a scheme for the stochastic stabilization of the constructed mini-batch algorithms. Theoretical results regarding the convergence of the mini-batch EM algorithms are presented. We then demonstrate how the mini-batch framework may be applied to conduct maximum likelihood (ML) estimation of mixtures of exponential family distributions, with emphasis on ML estimation for mixtures of normal distributions. Via a simulation study, we demonstrate that the mini-batch algorithm for mixtures of normal distributions can outperform the standard EM algorithm. Further evidence of the performance of the mini-batch framework is provided via an application to the famous MNIST data set.},
	number = {4},
	journal = {Statistics and Computing},
	author = {Nguyen, Hien D. and Forbes, Florence and McLachlan, Geoffrey J.},
	month = jul,
	year = {2020},
	pages = {731--748},
}

@article{le_thi_online_2024,
	title = {Online {Stochastic} {DCA} {With} {Applications} to {Principal} {Component} {Analysis}},
	volume = {35},
	doi = {10.1109/TNNLS.2022.3213558},
	number = {5},
	journal = {IEEE Transactions on Neural Networks and Learning Systems},
	author = {Le Thi, Hoai An and Luu, Hoang Phuc Hau and Dinh, Tao Pham},
	year = {2024},
	keywords = {Convex functions, DC algorithm (DCA), Difference of Convex functions (DC) programming, Machine learning algorithms, nonconvex optimization, online stochastic DCA (osDCA), Optimization, Principal component analysis, principal component analysis (PCA), Programming, Standards, Stochastic processes},
	pages = {7035--7047},
}

@article{sun_majorization_minimization_2017,
	title = {Majorization-{Minimization} {Algorithms} in {Signal} {Processing}, {Communications}, and {Machine} {Learning}},
	volume = {65},
	doi = {10.1109/TSP.2016.2601299},
	number = {3},
	journal = {IEEE Transactions on Signal Processing},
	author = {Sun, Ying and Babu, Prabhu and Palomar, Daniel P.},
	year = {2017},
	keywords = {Convergence, Estimation, Linear programming, Majorization-minimization, Minimization, non-convex optimization, Optimization, Signal processing algorithms, surrogate function, Taylor series, upperbounds},
	pages = {794--816},
}

@article{nguyen_laplace_2016,
	title = {Laplace mixture of linear experts},
	volume = {93},
	issn = {0167-9473},
	url = {https://www.sciencedirect.com/science/article/pii/S0167947314003089},
	doi = {https://doi.org/10.1016/j.csda.2014.10.016},
	abstract = {Mixture of Linear Experts (MoLE) models provide a popular framework for modeling nonlinear regression data. The majority of applications of MoLE models utilizes a Gaussian distribution for regression error. Such assumptions are known to be sensitive to outliers. The use of a Laplace distributed error is investigated. This model is named the Laplace MoLE (LMoLE). Links are drawn between the Laplace error model and the least absolute deviations regression criterion, which is known to be robust among a wide class of criteria. Through application of the minorization–maximization algorithm framework, an algorithm is derived that monotonically increases the likelihood in the estimation of the LMoLE model parameters. It is proven that the maximum likelihood estimator (MLE) for the parameter vector of the LMoLE is consistent. Through simulation studies, the robustness of the LMoLE model over the Gaussian MOLE model is demonstrated, and support for the consistency of the MLE is provided. An application of the LMoLE model to the analysis of a climate science data set is described.},
	journal = {Computational Statistics \& Data Analysis},
	author = {Nguyen, Hien D and McLachlan, Geoffrey J},
	year = {2016},
	keywords = {Robust regression, Laplace distribution, Minorization–maximization algorithm, Mixture of experts},
	pages = {177--191},
}

@book{lange_numerical_2010,
	title = {Numerical analysis for statisticians},
	volume = {1},
	publisher = {Springer},
	author = {Lange, Kenneth},
	year = {2010},
}

@inproceedings{nguyen_online_2023,
	title = {An {Online} {Minorization}-{Maximization} {Algorithm}},
	isbn = {978-3-031-09033-2},
	doi = {10.1007/978-3-031-09034-9_29},
	abstract = {Modern statistical and machine learning settings often involve high data volume and data streaming, which require the development of online estimation algorithms. The online Expectation–Maximization (EM) algorithm extends the popular EM algorithm to this setting, via a stochastic approximation approach.We show that an online version of the Minorization–Maximization (MM) algorithm, which includes the online EM algorithm as a special case, can also be constructed in a similar manner. We demonstrate our approach via an application to the logistic regression problem and compare it to existing methods.},
	language = {English},
	booktitle = {Classification and {Data} {Science} in the {Digital} {Age} - 17th {Conference} of the {International} {Federation} of {Classification} {Societies}, {IFCS} 2022, {Proceedings}},
	publisher = {Springer},
	author = {Nguyen, Hien Duy and Forbes, Florence and Fort, Gersende and Cappé, Olivier},
	year = {2023},
	pages = {263--271},
}

@article{cappe_Online_2009,
	title = {On-{Line} {Expectation}–{Maximization} {Algorithm} for latent {Data} {Models}},
	volume = {71},
	issn = {1369-7412},
	url = {https://doi.org/10.1111/j.1467-9868.2009.00698.x},
	doi = {10.1111/j.1467-9868.2009.00698.x},
	abstract = {We propose a generic on-line (also sometimes called adaptive or recursive) version of the expectation–maximization (EM) algorithm applicable to latent variable models of independent observations. Compared with the algorithm of Titterington, this approach is more directly connected to the usual EM algorithm and does not rely on integration with respect to the complete-data distribution. The resulting algorithm is usually simpler and is shown to achieve convergence to the stationary points of the Kullback–Leibler divergence between the marginal distribution of the observation and the model distribution at the optimal rate, i.e. that of the maximum likelihood estimator. In addition, the approach proposed is also suitable for conditional (or regression) models, as illustrated in the case of the mixture of linear regressions model.},
	number = {3},
	journal = {Journal of the Royal Statistical Society Series B: Statistical Methodology},
	author = {Cappé, Olivier and Moulines, Eric},
	month = feb,
	year = {2009},
	pages = {593--613},
}

@article{fort_sequential_2024,
	title = {Sequential {Sample} {Average} {Majorization}–{Minimization}},
	url = {https://hal.science/hal-04607609},
	journal = {Statistics and Computing},
	author = {Fort, Gersende and Forbes, Florence and Nguyen, Hien Duy},
	year = {2025},
note ={Accepted for publication}
}

@inproceedings{nguyen_demystifying_2023,
    title = {Demystifying {Softmax} {Gating} in {Gaussian} {Mixture} of {Experts}},
    author = {Nguyen, Huy and Nguyen, TrungTin and Ho, Nhat},
      booktitle = "Advances in Neural Information Processing Systems",
    month = dec,
      year={2023}
}

@inproceedings{nguyen_general_2024,
	title = {A {General Theory} for {Softmax Gating Multinomial Logistic Mixture of Experts}},
	copyright = {All rights reserved},
	booktitle = {Proceedings of The 41st International Conference on Machine Learning},
	author = {Nguyen, Huy and Akbarian, Pedram and Nguyen, TrungTin and Ho, Nhat},
	year = {2024},
}

@inproceedings{nguyen_towards_2024,
	title = {Towards {Convergence} {Rates} for {Parameter} {Estimation} in {Gaussian}-gated {Mixture} of {Experts}},
	volume = {238},
	copyright = {All rights reserved},
	url = {https://proceedings.mlr.press/v238/nguyen24b.html},
	abstract = {Originally introduced as a neural network for ensemble learning, mixture of experts (MoE) has recently become a fundamental building block of highly successful modern deep neural networks for heterogeneous data analysis in several applications of machine learning and statistics. Despite its popularity in practice, a satisfactory level of theoretical understanding of the MoE model is far from complete. To shed new light on this problem, we provide a convergence analysis for maximum likelihood estimation (MLE) in the Gaussian-gated MoE model. The main challenge of that analysis comes from the inclusion of covariates in the Gaussian gating functions and expert networks, which leads to their intrinsic interaction via some partial differential equations with respect to their parameters. We tackle these issues by designing novel Voronoi loss functions among parameters to accurately capture the heterogeneity of parameter estimation rates. Our findings reveal that the MLE has distinct behaviors under two complement settings of location parameters of the Gaussian gating functions, namely when all these parameters are non-zero versus when at least one among them vanishes. Notably, these behaviors can be characterized by the solvability of two different systems of polynomial equations. Finally, we conduct a simulation study to empirically verify our theoretical results.},
	booktitle = {Proceedings of {The} 27th {International} {Conference} on {Artificial} {Intelligence} and {Statistics}},
	author = {Nguyen, Huy and Nguyen, TrungTin and Nguyen, Khai and Ho, Nhat},
	month = may,
	year = {2024},
	pages = {2683--2691},
}

@article{blein_nicolas_nonlinear_2024,
	title = {Nonlinear network-based quantitative trait prediction from biological data},
	journal = {Journal of the Royal Statistical Society Series C: Applied Statistics},
	author = {Blein-Nicolas, Mélisande and Devijver, Emilie and Gallopin, Mélina and Perthame, Emeline},
	month = mar,
	year = {2024},
	pages = {qlae012},
}

@inproceedings{chen_towards_2022,
	title = {Towards {Understanding} the {Mixture}-of-{Experts} {Layer} in {Deep} {Learning}},
	booktitle = {Advances in {Neural} {Information} {Processing} {Systems}},
	author = {Chen, Zixiang and Deng, Yihe and Wu, Yue and Gu, Quanquan and Li, Yuanzhi},
	year = {2022},
}

@article{forbes_summary_2022,
	title = {Summary statistics and discrepancy measures for approximate {Bayesian} computation via surrogate posteriors},
	volume = {32},
	copyright = {All rights reserved},
	issn = {1573-1375},
	abstract = {A key ingredient in approximate Bayesian computation (ABC) procedures is the choice of a discrepancy that describes how different the simulated and observed data are, often based on a set of summary statistics when the data cannot be compared directly. Unless discrepancies and summaries are available from experts or prior knowledge, which seldom occurs, they have to be chosen, and thus their choice can affect the quality of approximations. The choice between discrepancies is an active research topic, which has mainly considered data discrepancies requiring samples of observations or distances between summary statistics. In this work, we introduce a preliminary learning step in which surrogate posteriors are built from finite Gaussian mixtures using an inverse regression approach. These surrogate posteriors are then used in place of summary statistics and compared using metrics between distributions in place of data discrepancies. Two such metrics are investigated: a standard L\$\$\_2\$\$distance and an optimal transport-based distance. The whole procedure can be seen as an extension of the semi-automatic ABC framework to a functional summary statistics setting and can also be used as an alternative to sample-based approaches. The resulting ABC quasi-posterior distribution is shown to converge to the true one, under standard conditions. Performance is illustrated on both synthetic and real data sets, where it is shown that our approach is particularly useful when the posterior is multimodal.},
	number = {5},
	journal = {Statistics and Computing},
	author = {Forbes, Florence and Nguyen, Hien Duy and Nguyen, TrungTin and Arbel, Julyan},
	month = oct,
	year = {2022},
	pages = {85},
}

@article{nguyen_approximations_2021,
	title = {Approximations of conditional probability density functions in {Lebesgue} spaces via mixture of experts models},
	volume = {8},
	copyright = {All rights reserved},
	issn = {2195-5832},
	abstract = {Mixture of experts (MoE) models are widely applied for conditional probability density estimation problems. We demonstrate the richness of the class of MoE models by proving denseness results in Lebesgue spaces, when inputs and outputs variables are both compactly supported. We further prove an almost uniform convergence result when the input is univariate. Auxiliary lemmas are proved regarding the richness of the soft-max gating function class, and their relationships to the class of Gaussian gating functions.},
	number = {1},
	urldate = {2022-01-21},
	journal = {Journal of Statistical Distributions and Applications},
	author = {Nguyen, Hien Duy and Nguyen, TrungTin and Chamroukhi, Faicel and McLachlan, Geoffrey John},
	month = aug,
	year = {2021},
	keywords = {Mixture models, Mixture of experts, Approximation theory, Conditional probability density functions, Lebesgue spaces},
	pages = {13},
	file = {Snapshot:/Users/trnguen/Zotero/storage/Z69IFFZD/s40488-021-00125-0.html:text/html},
}

@article{nguyen_approximation_2022,
	title = {Approximation of probability density functions via location-scale finite mixtures in {Lebesgue} spaces},
	copyright = {All rights reserved},
	issn = {0361-0926},
	journal = {Communications in Statistics - Theory and Methods},
	author = {Nguyen, TrungTin and Chamroukhi, Faicel and Nguyen, Hien D. and McLachlan, Geoffrey J.},
	month = may,
	year = {2022},
	pages = {1--12},
	annote = {doi: 10.1080/03610926.2021.2002360},
}

@article{nguyen_approximation_2020,
	title={Approximation by finite mixtures of continuous density functions that vanish at infinity},
	author={Nguyen, TrungTin and Nguyen, Hien D and Chamroukhi, Faicel and McLachlan, Geoffrey J},
	journal={Cogent Mathematics \& Statistics},
	volume={7},
	number={1},
	pages={1750861},
	year={2020},
	publisher={Cogent OA}
}

@phdthesis{nguyen_model_2021,
	type = {{PhD} {Thesis}},
	title = {Model {Selection} and {Approximation} in {High}-dimensional {Mixtures} of {Experts} {Models}: from {Theory} to {Practice}},
	copyright = {All rights reserved},
	shorttitle = {Model selection and approximation in high-dimensional mixtures of experts models},
	abstract = {In this thesis, we study the approximation capabilities, model estimation and selection properties, of a rich family of mixtures of experts (MoE) models in a high-dimensional setting, including MoE with Gaussian experts and soft-max (SGaME) or Gaussian gating functions (GLoME). Firstly, we improve upon universal approximation results in the context of unconditional mixture distributions and study such capabilities for MoE models in a variety of contexts, including conditional probability density functions (PDF) approximation and approximate Bayesian computation. More precisely, we prove that to an arbitrary degree of accuracy, location-scale mixtures of a continuous PDF can approximate any continuous PDF, uniformly, on a compact set; location-scale mixtures of an essentially bounded PDF, resp. of conditional PDF, can approximate any PDF, resp. any continuous conditional PDF whenever the input and output variables are both compactly supported, in Lebesgue spaces. Next, we establish non-asymptotic model selection results in high-dimensional regression scenarios for a variety of MoE regression models, including GLoME and SGaME, based on an inverse regression strategy or a Lasso penalization, respectively. These include results for the selection of the number of mixture components, as well as for the joint variable and covariance matrices rank selection. In particular, these results provide a strong theoretical guarantee: a finite-sample oracle inequality satisfied by the penalized maximum likelihood estimator with a Jensen–Kullback–Leibler type loss, to support the slope heuristic criterion in a finite sample setting, compared to the classical asymptotic criteria. Finally, to support our theoretical results and the statistical study of non-asymptotic model selection, we perform numerical studies by considering simulated and real data, which highlight the performance of our results, including the finite-sample oracle inequalities.},
	language = {en},
	urldate = {2022-01-21},
	school = {Normandie Université},
	author = {Nguyen, TrungTin},
	month = dec,
	year = {2021},
}

@article{do_strong_2022,
	title = {Strong identifiability and parameter learning in regression with heterogeneous response},
	journal = {arXiv preprint arXiv:2212.04091},
	author = {Do, Dat and Do, Linh and Nguyen, XuanLong},
	year = {2022},
}

@INPROCEEDINGS{chamroukhi2019regularizedIJCNN,
	author={Chamroukhi, Faicel and Huynh, Bao Tuyen},
	booktitle={2018 International Joint Conference on Neural Networks (IJCNN)}, 
	title={Regularized Maximum-Likelihood Estimation of Mixture-of-Experts for Regression and Clustering}, 
	year={2018},
	volume={},
	number={},
	pages={1-8},}

@article{chamroukhi2019regularized,
	title={Regularized Maximum Likelihood Estimation and Feature Selection in Mixtures-of-Experts Models},
	author={Chamroukhi, Faicel and Huynh, Bao Tuyen},
	journal={Journal de la Soci{\'e}t{\'e} Fran{\c{c}}aise de Statistique},
	volume={160},
	number={1},
	pages={57--85},
	year={2019}
}

@inproceedings{bengio_deep_2013,
	address = {Berlin, Heidelberg},
	title = {Deep {Learning} of {Representations}: {Looking} {Forward}},
	isbn = {978-3-642-39593-2},
	abstract = {Deep learning research aims at discovering learning algorithms that discover multiple levels of distributed representations, with higher levels representing more abstract concepts. Although the study of deep learning has already led to impressive theoretical results, learning algorithms and breakthrough experiments, several challenges lie ahead. This paper proposes to examine some of these challenges, centering on the questions of scaling deep learning algorithms to much larger models and datasets, reducing optimization difficulties due to ill-conditioning or local minima, designing more efficient and powerful inference and sampling procedures, and learning to disentangle the factors of variation underlying the observed data. It also proposes a few forward-looking research directions aimed at overcoming these challenges.},
	booktitle = {Statistical {Language} and {Speech} {Processing}},
	author = {Bengio, Yoshua},
	year = {2013},
	pages = {1--37},
}

@article{ho_strong_2016,
	title = {On strong identifiability and convergence rates of parameter estimation in finite mixtures},
	volume = {10},
	number = {1},
	journal = {Electronic Journal of Statistics},
	author = {Ho, Nhat and Nguyen, XuanLong},
	year = {2016},
	pages = {271--307},
}

@article{ho_convergence_2016,
	title = {Convergence rates of parameter estimation for some weakly identifiable finite mixtures},
	volume = {44},
	number = {6},
	journal = {The Annals of Statistics},
	author = {Ho, Nhat and Nguyen, XuanLong},
	year = {2016},
	keywords = {maximum likelihood estimation, minimax bounds, Mixture models, strong identifiability, system of polynomial equations, Wasserstein distances, weak identifiability},
	pages = {2726 -- 2755},
}

@article{norets_approximation_2010,
	title = {Approximation of conditional densities by smooth mixtures of regressions},
	volume = {38},
	number = {3},
	journal = {The Annals of Statistics},
	author = {Norets, Andriy},
	year = {2010},
	keywords = {Bayesian conditional density estimation, Finite mixtures of normal distributions, mixtures of experts, smoothly mixing regressions},
	pages = {1733 -- 1766},
}

@article{jiang_hierarchical_1999,
	title = {Hierarchical mixtures-of-experts for exponential family regression models: approximation and maximum likelihood estimation},
	journal = {Annals of Statistics},
	author = {Jiang, Wenxin and Tanner, Martin A},
	year = {1999},
	pages = {987--1011},
}

@article{ho_convergence_2022,
	title = {Convergence {Rates} for {Gaussian} {Mixtures} of {Experts}},
	journal = {Journal of Machine Learning Research},
	author = {Ho, Nhat and Yang, Chiao-Yu and Jordan, Michael I},
	year = {2022},
}

@article{deleforge2015hyper,
	abstract = {Hyper-spectral data can be analyzed to recover physical properties at large planetary scales. This involves resolving inverse problems which can be addressed within machine learning, with the advantage that, once a relationship between physical parameters and spectra has been established in a data-driven fashion, the learned relationship can be used to estimate physical parameters for new hyper-spectral observations. Within this framework, we propose a spatially constrained and partially-latent regression method which maps high-dimensional inputs (hyper-spectral images) onto low-dimensional responses (physical parameters such as the local chemical composition of the soil). The proposed regression model comprises two key features. First, it combines a Gaussian mixture of locally linear mappings (GLLiM) with a partially latent response model. While the former makes high-dimensional regression tractable, the latter enables to deal with physical parameters that cannot be observed or, more generally, with data contaminated by experimental artifacts that cannot be explained with noise models. Second, spatial constraints are introduced in the model through a Markov random field (MRF) prior which provides a spatial structure to the Gaussian-mixture hidden variables. Experiments conducted on a database composed of remotely sensed observations collected from the Mars planet by the Mars Express orbiter demonstrate the effectiveness of the proposed model.},
	author = {Deleforge, A and Forbes, F and Ba, S and Horaud, R},
	issn = {1941-0484 VO - 9},
	journal = {IEEE Journal of Selected Topics in Signal Processing},
	keywords = {Approximation methods,Computational modeling,Data models,GLLiM,Gaussian mixture of locally linear mappings,Gaussian processes,High-dimensional regression,MRF,Markov processes,Markov random field,Mars,Mars Express orbiter,Mars physical properties,Mars planet,OMEGA instrument,Vectors,Zinc,data-driven fashion,hyper-spectral data,hyper-spectral image analysis,hyper-spectral images,hyperspectral imaging,image processing,large planetary scales,latent variable model,learning (artificial intelligence),machine learning,mixture models,partially latent regression,partially-latent regression method,regression analysis,spatial Markov dependencies},
	number = {6},
	pages = {1037--1048},
	title = {{Hyper-Spectral Image Analysis With Partially Latent Regression and Spatial Markov Dependencies}},
	volume = {9},
	year = {2015}
}

@article{rakhlin2005risk,
	author = {Rakhlin, Alexander and Panchenko, Dmitry and Mukherjee, Sayan},
	journal = {ESAIM: PS},
	pages = {220--229},
	title = {{Risk bounds for mixture density estimation}},
	volume = {9},
	year = {2005}
}

@article{nguyen2019approximation,
	abstract = {Mixture of experts (MoE) models are a class of artificial neural networks that can be used for functional approximation and probabilistic modeling. An important class of MoE models is the class of mixture of linear experts (MoLE) models, where the expert functions map to real topological output spaces. Recently, Gaussian gated MoLE models have become popular in applied research. There are a number of powerful approximation results regarding Gaussian gated MoLE models, when the output space is univariate. These results guarantee the ability of Gaussian gated MoLE mean functions to approximate arbitrary continuous functions, and Gaussian gated MoLE models themselves to approximate arbitrary conditional probability density functions. We utilize and extend upon the univariate approximation results in order to prove a pair of useful results for situations where the output spaces are multivariate. We do this by proving a pair of lemmas regarding the combination of univariate MoLE models, which are interesting in their own rights.},
	author = {Nguyen, Hien D and Chamroukhi, Faicel and Forbes, Florence},
	issn = {0925-2312},
	journal = {Neurocomputing},
	keywords = {Artificial neural network,Conditional model,Gaussian distribution,Mean function,Multiple-output,Multivariate analysis},
	pages = {208--214},
	title = {{Approximation results regarding the multiple-output Gaussian gated mixture of linear experts model}},
	volume = {366},
	year = {2019}
}

@article{masoudnia2014mixture,
	abstract = {Mixture of experts (ME) is one of the most popular and interesting combining methods, which has great potential to improve performance in machine learning. ME is established based on the divide-and-conquer principle in which the problem space is divided between a few neural network experts, supervised by a gating network. In earlier works on ME, different strategies were developed to divide the problem space between the experts. To survey and analyse these methods more clearly, we present a categorisation of the ME literature based on this difference. Various ME implementations were classified into two groups, according to the partitioning strategies used and both how and when the gating network is involved in the partitioning and combining procedures. In the first group, The conventional ME and the extensions of this method stochastically partition the problem space into a number of subspaces using a special employed error function, and experts become specialised in each subspace. In the second group, the problem space is explicitly partitioned by the clustering method before the experts' training process starts, and each expert is then assigned to one of these sub-spaces. Based on the implicit problem space partitioning using a tacit competitive process between the experts, we call the first group the mixture of implicitly localised experts (MILE), and the second group is called mixture of explicitly localised experts (MELE), as it uses pre-specified clusters. The properties of both groups are investigated in comparison with each other. Investigation of MILE versus MELE, discussing the advantages and disadvantages of each group, showed that the two approaches have complementary features. Moreover, the features of the ME method are compared with other popular combining methods, including boosting and negative correlation learning methods. As the investigated methods have complementary strengths and limitations, previous researches that attempted to combine their features in integrated approaches are reviewed and, moreover, some suggestions are proposed for future research directions.},
	author = {Masoudnia, Saeed and Ebrahimpour, Reza},
	issn = {1573-7462},
	journal = {Artificial Intelligence Review},
	number = {2},
	pages = {275--293},
	title = {{Mixture of experts: a literature survey}},
	volume = {42},
	year = {2014}
}

@book{lange2016mm,
	author = {Lange, Kenneth},
	title = {MM Optimization Algorithms},
	publisher = {Society for Industrial and Applied Mathematics},
	year = {2016},
	address = {Philadelphia, PA},
	edition   = {},
	eprint = {https://epubs.siam.org/doi/pdf/10.1137/1.9781611974409}
}

@article{hennig2000identifiablity,
	author = {Hennig, C},
	issn = {1432-1343},
	journal = {Journal of Classification},
	number = {2},
	pages = {273--296},
	title = {{Identifiablity of Models for Clusterwise Linear Regression}},
	volume = {17},
	year = {2000}
}

@article{jiang1999identifiability,
	abstract = {In mixtures-of-experts (ME) models, “experts” of generalized linear models are combined, according to a set of local weights called the “gating function”. The invariant transformations of the ME probability density functions include the permutations of the expert labels and the translations of the parameters in the gating functions. Under certain conditions, we show that the ME systems are identifiable if the experts are ordered and the gating parameters are initialized. The conditions are validated for Poisson, gamma, normal and binomial experts.},
	author = {Jiang, W and Tanner, M A},
	issn = {0893-6080},
	journal = {Neural Networks},
	keywords = {Generalized linear models,Indentifiability,Invariant transformations,Mixtures-of-experts},
	number = {9},
	pages = {1253--1258},
	title = {{On the identifiability of mixtures-of-experts}},
	volume = {12},
	year = {1999}
}

@article{dempster1977maximum,
	abstract = {[A broadly applicable algorithm for computing maximum likelihood estimates from incomplete data is presented at various levels of generality. Theory showing the monotone behaviour of the likelihood and convergence of the algorithm is derived. Many examples are sketched, including missing value situations, applications to grouped, censored or truncated data, finite mixture models, variance component estimation, hyperparameter estimation, iteratively reweighted least squares and factor analysis.]},
	author = {Dempster, A P and Laird, N M and Rubin, D B},
	issn = {00359246},
	journal = {Journal of the Royal Statistical Society. Series B (Methodological)},
	month = {mar},
	number = {1},
	pages = {1--38},
	publisher = {[Royal Statistical Society, Wiley]},
	title = {{Maximum Likelihood from Incomplete Data via the EM Algorithm}},
	volume = {39},
	year = {1977}
}

@article{yuksel2012twenty,
	abstract = {In this paper, we provide a comprehensive survey of the mixture of experts (ME). We discuss the fundamental models for regression and classification and also their training with the expectation-maximization algorithm. We follow the discussion with improvements to the ME model and focus particularly on the mixtures of Gaussian process experts. We provide a review of the literature for other training methods, such as the alternative localized ME training, and cover the variational learning of ME in detail. In addition, we describe the model selection literature which encompasses finding the optimum number of experts, as well as the depth of the tree. We present the advances in ME in the classification area and present some issues concerning the classification model. We list the statistical properties of ME, discuss how the model has been modified over the years, compare ME to some popular algorithms, and list several applications. We conclude our survey with future directions and provide a list of publicly available datasets and a list of publicly available software that implement ME. Finally, we provide examples for regression and classification. We believe that the study described in this paper will provide quick access to the relevant literature for researchers and practitioners who would like to improve or use ME, and that it will stimulate further studies in ME.},
	author = {Yuksel, S E and Wilson, J N and Gader, P D},
	issn = {2162-2388 VO - 23},
	journal = {IEEE Transactions on Neural Networks and Learning Systems},
	keywords = {Applications,Bayesian,Bayesian methods,Data models,Decision trees,Gaussian processes,Hidden Markov models,ME variational learning,Regression analysis,Support vector machines,classification,classification models,comparison,expectation-maximisation algorithm,expectation-maximization algorithm,expert systems,hierarchical mixture of experts (HME),learning (artificial intelligence),localized ME training,mixture of Gaussian process experts,mixture-of-Gaussian process experts,pattern classification,public domain software,publicly-available datasets,publicly-available software,regression,regression analysis,regression models,statistical properties,survey,trees (mathematics),variational},
	number = {8},
	pages = {1177--1193},
	title = {{Twenty Years of Mixture of Experts}},
	volume = {23},
	year = {2012}
}

@article{nguyen2013convergence,
	author = {Nguyen, XuanLong},
	journal = {The Annals of Statistics},
	keywords = {$f$-divergence,Bayesian nonparametrics,Dirichlet processes,Wasserstein metric,hierarchical models,mixture distributions,rates of convergence,transportation distances},
	number = {1},
	pages = {370--400},
	publisher = {Institute of Mathematical Statistics},
	title = {{Convergence of latent mixing measures in finite and infinite mixture models}},
	volume = {41},
	year = {2013}
}

@article{mendes2012convergence,
	title={On convergence rates of mixtures of polynomial experts},
	author={Mendes, Eduardo F and Jiang, Wenxin},
	journal={Neural computation},
	volume={24},
	number={11},
	pages={3025--3051},
	year={2012},
	publisher={MIT Press}
}

@article{genovese2000rates,
	author = {Genovese, Christopher R and Wasserman, Larry},
	journal = {The Annals of Statistics},
	month = {aug},
	number = {4},
	pages = {1105--1127},
	title = {{Rates of convergence for the Gaussian mixture sieve}},
	volume = {28},
	year = {2000}
}

@article{nguyen2016universal,
	title={A universal approximation theorem for mixture-of-experts models},
	author={Nguyen, Hien D and Lloyd-Jones, Luke R and McLachlan, Geoffrey J},
	journal={Neural computation},
	volume={28},
	number={12},
	pages={2585--2593},
	year={2016},
	publisher={MIT Press}
}

@article{jordan1994hierarchical,
	author = {Jordan, Michael I and Jacobs, Robert A},
	journal = {Neural computation},
	number = {2},
	pages = {181--214},
	publisher = {MIT Press},
	title = {{Hierarchical mixtures of experts and the EM algorithm}},
	volume = {6},
	year = {1994}
}

@article{jacobs1991adaptive,
	title={Adaptive mixtures of local experts},
	author={Jacobs, Robert A and Jordan, Michael I and Nowlan, Steven J and Hinton, Geoffrey E},
	journal={Neural computation},
	volume={3},
	number={1},
	pages={79--87},
	year={1991},
	publisher={MIT Press}
}

@article{nguyen2018practical,
	title={Practical and theoretical aspects of mixture-of-experts modeling: An overview},
	author={Nguyen, Hien D and Chamroukhi, Faicel},
	journal={Wiley Interdisciplinary Reviews: Data Mining and Knowledge Discovery},
	volume={8},
	number={4},
	pages={e1246},
	year={2018},
	publisher={Wiley Online Library}
}

@article{bohning1992multinomial,
  title={Multinomial logistic regression algorithm},
  author={B{\"o}hning, Dankmar},
  journal={Annals of the institute of Statistical Mathematics},
  volume={44},
  number={1},
  pages={197--200},
  year={1992},
  publisher={Springer}
}

@article{bohning1988monotonicity,
  title={Monotonicity of quadratic-approximation algorithms},
  author={B{\"o}hning, Dankmar and Lindsay, Bruce G},
  journal={Annals of the Institute of Statistical Mathematics},
  volume={40},
  number={4},
  pages={641--663},
  year={1988},
  publisher={Springer}
}

@book{hall:heyde,
author = {Hall, P. and Heyde, C. C.},
address = {New York ;},
booktitle = {Martingale limit theory and its applications},
isbn = {0123193508},
language = {eng},
publisher = {Academic Press},
series = {Probability and mathematical statistics},
title = {Martingale limit theory and its applications / P. Hall, C.C. Heyde},
year = {1980}
}

\end{document}